\documentclass[a4paper,11pt]{article}
\usepackage[T1]{fontenc}
\usepackage{tikz}
\usetikzlibrary{arrows}
\usetikzlibrary{calc}
\usetikzlibrary{snakes}
\usepackage{cite}
\usepackage{graphicx}
\usepackage{array}
\usepackage{mdwmath}
\usepackage{mdwtab}
\usepackage{eqparbox}
\usepackage{fixltx2e}
\usepackage{url}
\usepackage{pgf}
\usepackage{lipsum}
\usepackage{multirow}
\usepackage{subcaption}
\usepackage[cmex10]{amsmath}
\usepackage{amsfonts,amssymb,amsthm}
\usepackage{xspace}
\usepackage{xcolor}
\usepackage{algorithmic}
\usepackage{bm}
\usepackage{bbm}
\usepackage{booktabs}
\usepackage{etoolbox}
\usepackage{makecell}
\usepackage{nicefrac}
\usepackage{textcomp}
\usepackage{stmaryrd}
\usepackage{mathrsfs}
\usepackage{paralist}
\usepackage{comment}

\usepackage[font=footnotesize,labelsep=space,labelfont=bf]{caption}

\usepackage{url}

\hyphenation{op-tical net-works semi-conduc-tor}

\usepackage[pdftex,bookmarks=true,pdfstartview=FitH,colorlinks,linkcolor=blue,filecolor=blue,citecolor=blue,urlcolor=blue,pagebackref=false]{hyperref}
    \urlstyle{sf}

\makeatletter

\newcommand{\Rmnum}[1]{\expandafter\@slowromancap\romannumeral #1@}
\makeatother

\newtheorem{theorem}{Theorem}
\newtheorem*{theorem*}{Theorem}
\newtheorem{definition}{Definition}
\newtheorem{lemma}{Lemma}
\newtheorem*{lemma*}{Lemma}
\newtheorem{claim}{Claim}
\newtheorem{corollary}{Corollary}
\newtheorem*{cor*}{Corollary}
\newtheorem{remark}{Remark}

\newcommand{\namedref}[2]{\hyperref[#2]{#1~\ref*{#2}}}

\newcommand{\Algorithmref}[1]{\namedref{Algorithm}{algo:#1}}

\newcommand{\Sectionref}[1]{\namedref{Section}{sec:#1}}
\newcommand{\Subsectionref}[1]{\namedref{Section}{subsec:#1}}

\newcommand{\Appendixref}[1]{\namedref{Appendix}{app:#1}}
\newcommand{\Theoremref}[1]{\namedref{Theorem}{thm:#1}}
\newcommand{\Corollaryref}[1]{\namedref{Corollary}{cor:#1}}

\newcommand{\Definitionref}[1]{\namedref{Definition}{defn:#1}}
\newcommand{\Lemmaref}[1]{\namedref{Lemma}{lem:#1}}
\newcommand{\Remarkref}[1]{\namedref{Remark}{remark:#1}}
\newcommand{\Claimref}[1]{\namedref{Claim}{claim:#1}}

\newcommand{\Figureref}[1]{\namedref{Figure}{fig:#1}}

\newcommand{\Footnoteref}[1]{\namedref{Footnote}{foot:#1}}
\newcommand{\Pageref}[1]{\hyperref[#1]{page~\pageref*{#1}}}


\definecolor{darkred}{rgb}{0.5, 0, 0} 
\definecolor{darkblue}{rgb}{0,0,0.5} 
\hypersetup{
    colorlinks=true,
    linkcolor=darkred,
    citecolor=darkblue,
    urlcolor=darkblue   
}

\newcommand{\bzero}{\ensuremath{{\bf 0}}\xspace}

\newcommand{\doublebarbx}{\ensuremath{\bar{\bar{{\bf x}}}}\xspace}

\newcommand{\bx}{\ensuremath{{\bf x}}\xspace}

\newcommand{\bwx}{\ensuremath{\widehat{\bf x}}\xspace}
\newcommand{\btx}{\ensuremath{\widetilde{\bf x}}\xspace}
\newcommand{\by}{\ensuremath{{\bf y}}\xspace}

\newcommand{\bz}{\ensuremath{{\bf z}}\xspace}

\newcommand{\bu}{\ensuremath{{\bf u}}\xspace}
\newcommand{\bv}{\ensuremath{{\bf v}}\xspace}

\newcommand{\calS}{\ensuremath{\mathcal{S}}\xspace}

\newcommand{\bbE}{\ensuremath{\mathbb{E}}\xspace}

\newcommand{\I}{\ensuremath{\mathcal{I}}\xspace}

\newcommand{\R}{\ensuremath{\mathbb{R}}\xspace}

\newcommand{\bbZ}{\ensuremath{\mathbb{Z}}\xspace}

\renewcommand{\paragraph}[1]{\smallskip\noindent{\bf #1}~}

\usepackage{algorithm}
\usepackage{algorithmic}
\usepackage{fullpage}
\usepackage{authblk}

\usepackage{etoolbox}\AtBeginEnvironment{algorithmic}{\small}

\begin{document}

\title{Qsparse-local-SGD: Distributed SGD with 
Quantization, Sparsification, and Local Computations}

\author{Debraj Basu\thanks{Adobe Inc.; dbasu@adobe.com; work done while at UCLA.},\ \ Deepesh Data\thanks{UCLA; deepesh.data@gmail.com},\ \ Can Karakus\thanks{Amazon Web Services; cakarak@amazon.com; work done while at UCLA.},\ \ Suhas Diggavi\thanks{UCLA; suhasdiggavi@ucla.edu}}

\date{}

\maketitle

\begin{abstract}
Communication bottleneck has been identified as a significant issue in
distributed optimization of large-scale learning models. Recently,
several approaches to mitigate this problem have been proposed,
including different forms of gradient compression or computing
local models and mixing them iteratively. In this paper, we propose
\emph{Qsparse-local-SGD} algorithm, which combines aggressive
sparsification with quantization and local computation along with
error compensation, by keeping track of the difference between the
true and compressed gradients. We propose both synchronous and
asynchronous implementations of \emph{Qsparse-local-SGD}. We analyze
convergence for \emph{Qsparse-local-SGD} in the \emph{distributed} setting for
smooth non-convex and convex objective functions. We demonstrate that
\emph{Qsparse-local-SGD} converges at the same rate as vanilla
distributed SGD for many important classes of sparsifiers and
quantizers. We use \emph{Qsparse-local-SGD} to train ResNet-50 on ImageNet and 
show that it results in significant savings over the state-of-the-art, in the number of bits transmitted to reach target accuracy.
\end{abstract}

{\bf Keywords:} Distributed optimization and learning; stochastic optimization; communication efficient training methods.
\section{Introduction}
\label{sec:intro}
Stochastic Gradient Descent (SGD) \cite{RobbinsMonro51} and its many
variants have become the workhorse for modern large-scale optimization
as applied to machine  learning \cite{Bottou10,bach_nonasymp}.
We consider a setup, in which SGD is applied to the \emph{distributed} setting, where $R$
different nodes compute \emph{local} stochastic gradients on their \emph{own} datasets
$\mathcal{D}_r$. Co-ordination between them is done by aggregating
 these local computations to update the overall parameter
$\mathbf{x}_t$ as,
$$\mathbf{x}_{t+1}= \mathbf{x}_t-\frac{\eta_t}{R}\sum_{r=1}^Rg_t^r,$$
where
$g_t^r$, for $r=1,2,\hdots,R$, is the local stochastic gradient at the $r$'th
machine for a local loss function $f^{(r)}(\mathbf{x})$ of the
parameter vector $\bx$, where $f^{(r)}:\mathbb{R}^d\rightarrow \mathbb{R}$ and $\eta_t$ is the learning rate.

Training of high dimensional models is typically performed at a large
scale over bandwidth limited networks. Therefore, despite the
distributed processing gains, it is well understood by now that
exchange of full-precision gradients between nodes causes
communication to be the bottleneck for many large-scale
models \cite{alistarh-sparsified,terngrad,signsgd1, teertha}.
For example,
consider training the BERT architecture for language models
\cite{BERT-arxiv18} which has about 340 million parameters, implying that each
full precision exchange between workers is over 1.3GB.  Such a
communication bottleneck could be significant in emerging edge
computation architectures suggested by federated
learning \cite{KonecnyThesis,McMahan16,AbadiOSDI16}.  In such an
architecture, data resides on and can even be generated by personal
devices such as smart phones, and other edge (IoT) devices, in
contrast to data-center architectures.  Learning is envisaged with
such an ultra-large scale, heterogeneous environment, with potentially
unreliable or limited communication. These and other applications have
led to many recently proposed methods, which are broadly based on
three major approaches:
\begin{enumerate}
\item \emph{Quantization}
of gradients, where nodes locally quantize the gradient (perhaps with
randomization) to a small number of bits
\cite{QSGD,signsgd1,ecqsgd,terngrad,teertha}.
\item \emph{Sparsification} of gradients, \emph{e.g.,} where nodes
locally select $\mathrm{Top}_k$ values of the gradient in absolute value and transmit
these at full
precision \cite{speech2,AjiHeafield17,memSGD,alistarh-sparsified,ecqsgd,DeepCompICLR18},
while maintaining errors in local nodes for later compensation.
\item \emph{Skipping communication rounds}, whereby nodes average their models after locally updating their models for several steps
\cite{alibaba_local,Coppola15, ZhangDuchiJMLR13,localsgd2,chen2016scalable,coop_sgd}.
\end{enumerate} 

In this work we propose a \emph{Qsparse-local-SGD} algorithm, which
combines aggressive sparsification with quantization and local
computations, along with error compensation, by keeping track of the
difference between the true and compressed gradients. We propose both
synchronous and asynchronous
implementations of \emph{Qsparse-local-SGD} in a \emph{distributed} setting, where the nodes perform computations on their local datasets. In our asynchronous
model, the distributed nodes' iterates evolve at the same rate, but
update the gradients at arbitrary times; see \Sectionref{loc-async}
for more details.
We
analyze convergence for \emph{Qsparse-local-SGD} in the \emph{distributed}
case, for smooth non-convex and smooth strongly-convex objective functions. We
demonstrate that \emph{Qsparse-local-SGD} converges at the same rate
as vanilla distributed SGD for many important classes of sparsifiers
and quantizers. We implement \emph{Qsparse-local-SGD} for ResNet-50
using the ImageNet dataset, and for a softmax multiclass classifier using the MNIST dataset, and we achieve target accuracies with about a factor of 15-20 savings over the state-of-the-art \cite{alistarh-sparsified,memSGD,localsgd2}, in the total number of
bits transmitted.

\subsection{Related Work}
The use of quantization for communication efficient gradient methods
has decades rich history \cite{GitlinMazo73} and its recent use in
training deep neural networks \cite{speech1,speech2} has re-ignited
interest.  Theoretically justified gradient compression using unbiased
stochastic quantizers has been proposed and analyzed
in \cite{QSGD,terngrad,teertha}.  Though methods
in \cite{wangni,atomo} use induced sparsity in the quantized
gradients, explicitly sparsifying the gradients more aggressively by
retaining $\mathrm{Top}_k$ components, \emph{e.g.,} $k<1\%$, has been
proposed \cite{speech2,AjiHeafield17,DeepCompICLR18,alistarh-sparsified,memSGD},
combined with error compensation to ensure that all co-ordinates do
get eventually updated as needed.  \cite{ecqsgd} analyzed error
compensation for QSGD, without $\mathrm{Top}_k$ sparsification while
focusing on quadratic functions. Another approach for mitigating the
communication bottlenecks is by having infrequent communication, which
has been popularly referred to in the literature as \textit{iterative
parameter mixing}, see \cite{Coppola15}, and \textit{model averaging},
see \cite{localsgd2,alibaba_local,zhang_local} and references
therein. Our work is most closely related to and builds on the recent
theoretical results
in \cite{alistarh-sparsified,memSGD,localsgd2,alibaba_local}.  The
analysis for the centralized $\mathrm{Top}_k$ (among other
sparsifiers) was considered in \cite{memSGD},
and \cite{alistarh-sparsified} analyzed a distributed version with the
assumption of closeness of the aggregated $\mathrm{Top}_k$ gradients
to the centralized $\mathrm{Top}_k$ case, see Assumption 1
in \cite{alistarh-sparsified}.  Local-SGD, where several local
iterations are done before sending the \emph{full} gradients, was
studied in \cite{localsgd2,alibaba_local}, without any gradient
compression beyond local iterations. Our work generalizes these works
in several ways. We prove convergence for the \emph{distributed}
sparsification and error compensation algorithm, without the
assumption of \cite{alistarh-sparsified}, by using the perturbed
iterate methods \cite{perturbed,memSGD}. We analyze non-convex as well
as convex objectives for the distributed case with local
computations. A proof of sparsified SGD for convex
objective functions and for the \emph{centralized} case, without local
computations
\footnote{At the completion of our work, we
recently found that in parallel to our work \cite{efsignsgd} examined
use of sign-SGD quantization, \emph{without sparsification} for the
centralized model. Another recent work in \cite{stich_gossip} studies
the decentralized case with sparsification for strongly convex
functions. In contrast, our work, developed independent of these
works, uses quantization, sparsification and local computations for
the distributed case, for both non-convex and strongly convex
objectives.}  was given in \cite{memSGD}.  Our techniques compose a
(stochastic or deterministic $1$-bit sign) quantizer with
sparsification and local computations using error compensation. While
our focus has only been on mitigating the communication bottlenecks in
training high dimensional models over bandwidth limited networks, this
technique works for any compression operator satisfying a regularity
condition (see
\Definitionref{compression}) including our composed operators.

\subsection{Contributions}
We study a distributed set of $R$ worker nodes, each of which perform
computations on locally stored data, denoted by $\mathcal{D}_r$.
Consider the empirical-risk minimization of the loss function
$$f(\mathbf{x})=\frac{1}{R}\sum_{r=1}^Rf^{(r)}(\bx)$$where
$f^{(r)}(\bx)=\underset{i\sim \mathcal{D}_r}{\mathbb{E}}\left[f_i(\bx)\right]$,
where $\underset{i\sim \mathcal{D}_r}{\mathbb{E}}[\cdot]$ denotes
expectation
over a random sample chosen from the local data set
$\mathcal{D}_r$. Our setup can also handle different local
functional forms, beyond dependence on the local data set
$\mathcal{D}_r$, which is not explicitly written for notational
simplicity.
For
$f:\mathbbm{R}^d\rightarrow \mathbbm{R}$, we denote
$\bx^*:=\textrm{arg}\min_{\bx\in\mathbb{R}^d}f(\bx)$ and
$f^*:=f(\bx^*)$.
The distributed nodes perform computations and
provide updates to the master node that is responsible for aggregation
and model update.  We develop \emph{Qsparse-local-SGD}, a distributed
SGD composing gradient quantization and explicit sparsification
(\emph{e.g.,} $\mathrm{Top}_k$ components), along with local
iterations. We develop the algorithms and analysis for both
synchronous as well as asynchronous operations, in which workers can
communicate with the master at arbitrary time intervals.  To the best
of our knowledge, these are the first algorithms which combine
quantization, aggressive sparsification, and local computations for
distributed optimization. With some minor modifications to \emph{Qsparse-local-SGD}, 
it can also be used in a peer-to-peer setting, where the aggregation is done without any help from the master node, 
and each worker exchanges its updates with all other workers.

Our main theoretical results are the convergence analyses
of \emph{Qsparse-local-SGD} for both non-convex as
well as convex objectives; see
\Theoremref{convergence-non-convex-fixed-local-het} and \Theoremref{convergence-strongly-convex-local} for the synchronous
case, as well as \Theoremref{async_convergence-non-convex-local-het2} and \Theoremref{async_convergence-strongly-convex-local-het},
for the asynchronous operation.  Our analyses also demonstrate
natural gains in convergence that distributed, mini-batch operation
affords, and has convergence similar to equivalent vanilla SGD with
local iterations (see \Corollaryref{convergence-non-convex-fixed-local-het-cor} and \Corollaryref{convergence-strongly-convex-local-cor}), for both the non-convex
case (with convergence rate $\sim \frac{1}{\sqrt{T}}$ for fixed
learning rate) as well as
the convex case (with convergence rate $\sim \frac{1}{T}$,
for diminishing learning rate). We also demonstrate that
quantizing and sparsifying the gradient, even after local iterations
asymptotically yields an almost ``free'' efficiency gain (also
observed numerically in \Sectionref{expmt}
non-asymptotically). The numerical results on ImageNet dataset
implemented for a ResNet-50 architecture and for the convex case for multi-class logistic classification on MNIST \cite{mnist} dataset demonstrates that one can get
significant communication savings, while retaining equivalent
state-of-the-art performance. The combination of quantization,
sparsification, and local computations poses several challenges for
theoretical analyses, including the analyses of impact of local
iterations (block updates) of parameters on quantization and
sparsification (see \Lemmaref{bounded-memory-decaying-lr}-\ref{lem:bounded-memory-fixed-lr} in \Sectionref{loc-sync}), as well as asynchronous updates and its combination with distributed compression
(see \Lemmaref{sequence-bound-asynchronous}-\ref{lem:bounded-memory-fixed-lr-asynchronous} in \Sectionref{loc-async}).

\subsection{Paper Organization}
In \Sectionref{operators}, we demonstrate that composing
certain classes of quantizers with sparsifiers satisfies a
certain regularity condition that is needed for several convergence
proofs for our algorithms. We describe the synchronous implementation
of \emph{Qsparse-local-SGD} in \Sectionref{loc-sync}, and
outline the main convergence results for it in
\Sectionref{main_results_sync}, briefly giving the proof ideas in
\Sectionref{pf_outline}.  We describe our asynchronous
implementation of \emph{Qsparse-local-SGD} and provide the theoretical
convergence results in \Sectionref{loc-async}. The
experimental results are given in \Sectionref{expmt}. Many
of the proof details are given in the
appendices, given as part of the supplementary material.

\section{Communication Efficient Operators}
\label{sec:operators}
Traditionally, distributed stochastic gradient descent affords to send full precision (32 or 64 bit) unbiased gradient updates across workers to peers or to a central server that helps with aggregation. However, communication bottlenecks that arise in bandwidth limited networks limit the applicability of such an algorithm at a large scale when the parameter size is massive or the data is widely distributed on a very large number of worker nodes. In such settings, one could think of updates which not only result in convergence, but also require less bandwidth thus making the training process faster. In the following sections we discuss several useful operators from literature and enhance their use by proposing a novel class of composed operators.

We first consider two different techniques used in the literature for mitigating the communication bottleneck in distributed optimization, namely, quantization and sparsification. 
In quantization, we reduce precision of the gradient vector by mapping each of its components by a deterministic \cite{signsgd1,efsignsgd} or randomized \cite{QSGD,terngrad,teertha,ZhangNIPS13} map to a finite number of quantization levels. In sparsification, we sparsify the gradients vector before using it to update the parameter vector, by taking its top $k$ components, denoted by $\mathrm{Top}_k$, or choosing $k$ components uniformly at random, denoted by $\mathrm{Rand}_k$, \cite{memSGD,stich_gossip}.

\subsection{Quantization}
SGD computes an unbiased estimate of the gradient, which can be used to update the model iteratively and is extremely useful in large scale applications. It is well known that the first order terms in the rate of convergence are affected by the variance of the gradients. 
While stochastic quantization of gradients could result in a variance blow up, it preserves the unbiasedness of the gradients at low precision; and, therefore, when training over bandwidth limited networks, the convergence would be much faster; see \cite{QSGD,terngrad,teertha,ZhangNIPS13}. 
\begin{definition}[Randomized quantizer]\label{defn:QuantDef}
We say that $Q_s:\R^d\to\R^d$ is a randomized quantizer with $s$ quantization levels, if the following holds for every $\mathbf{x}\in\R^d$:
\textsf{(i)} $\mathbb{E}_Q[Q_s(\mathbf{x})]=\mathbf{x}$; \textsf{(ii)}
$\mathbb{E}_Q[\|Q_s(\mathbf{x})\|^2]\leq (1+\beta_{d,s})\|\mathbf{x}\|^2$, where $\beta_{d,s}>0$ could be a function of $d$ and $s$. 
Here expectation is taken over the randomness of $Q_s$.
\end{definition}
\noindent Examples of randomized quantizers include
\begin{enumerate}
\item \textit{QSGD}~\cite{QSGD,terngrad}, which independently quantizes components of $\mathbf{x}\in\R^d$ into $s$ levels, with $\beta_{d,s}=\min(\frac{d}{s^2},\frac{\sqrt{d}}{s})$.
\item \textit{Stochastic $s$-level
    Quantization}~\cite{teertha,ZhangNIPS13}, which independently quantizes
every component of $\mathbf{x}\in\R^d$ into $s$ levels between
$\textrm{argmax}_ix_i$ and $\textrm{argmin}_ix_i$, with
$\beta_{d,s}=\frac{d}{2s^2}$.
\item \textit{Stochastic Rotated Quantization}~\cite{teertha}, which is a stochastic quantization,
preprocessed by a random rotation, with $\beta_{d,s}=\frac{2\log_2(2d)}{s^2}$.
\end{enumerate}

Instead of quantizing randomly into $s$ levels, we can take a deterministic approach 
and round off each component of the vector to the nearest level. In particular, we can just take the sign, which has shown promise in \cite{signsgd1,efsignsgd}.
\begin{definition}[Deterministic Sign quantizer]\label{defn:sign_operator}
A deterministic quantizer $Sign:\R^d\to\{+1,-1\}^d$ is defined as follows: 
for every vector $\mathbf{x}\in\R^d$, the $i$'th component of $Sign(\mathbf{x})$, for $i\in[d]$, is defined as $\mathbbm{1}\{x_i\geq 0\}-\mathbbm{1}\{x_i<0\}$.
\end{definition}
Such methods drew interest since \textsc{Rprop} \cite{rprop}, which only used the temporal behavior of the sign of the gradient. This is an example where the biased 1-bit quantizer as in \Definitionref{sign_operator} is used. This further inspired optimizers, such as \textsc{RMSprop} \cite{rmsprop}, \textsc{Adam} \cite{adam}, which incorporate appropriate adaptive scaling with momentum acceleration and have demonstrated empirical superiority in non-convex applications.
\subsection{Sparsification}
As mentioned earlier, we consider 
two important examples of sparsification operators: $\mathrm{Top}_k$ and
$\mathrm{Rand}_k$. For any $\mathbf{x}\in\R^d$, $\mathrm{Top}_k(\mathbf{x})$ is equal to
a $d$-length vector, which has at most $k$ non-zero components whose
indices correspond to the indices of the largest $k$ components (in
absolute value) of $\mathbf{x}$. Similarly, $\mathrm{Rand}_k(\mathbf{x})$ is a
$d$-length (random) vector, which is obtained by selecting $k$ components of $\mathbf{x}$ uniformly at random.
Both of these satisfy a so-called ``compression'' property as defined below, with
$\gamma=k/d$
\cite{memSGD}.

\begin{definition}[Compression operator \cite{memSGD}]\label{defn:compression}
A (randomized) function $Comp_k:\R^d\to\R^d$ is called a {\em
  compression} operator, if there exists a constant
  $\gamma\in(0,1]$ (that may depend on $k$ and $d$), such that for every
  $\mathbf{x}\in \R^d$, we have 
\begin{align}\label{eq:compression}
\mathbb{E}_C[\|\mathbf{x}-Comp_k(\mathbf{x})\|_2^2]\leq (1-\gamma)\|\mathbf{x}\|_2^2,
\end{align}
where expectation is taken over the randomness of the compression operator $Comp_k$.
\end{definition}
Note that stochastic quantizers, as defined in \Definitionref{QuantDef}, also satisfy this regularity condition in \Definitionref{compression} for $\beta_{d,s}\leq 1$.
Now we give a simple but important corollary, which allows us to apply different compression operators to different coordinates of a vector. 
As an application, in the case of training neural networks, we can apply different operators to different layers.
\begin{corollary}[Piecewise compression]\label{cor:piecewise_compression}
Let $C_i:\mathbb{R}^{d_i}\rightarrow\mathbb{R}^{d_i}$ for $i\in[L]$ denote possibly different compression operators with compression coefficients $\gamma_i$. Let $\mathbf{x}=\left[\mathbf{x}_1\,\mathbf{x}_2\,\ldots\mathbf{x}_L\right]$, where $\mathbf{x}_i\in\mathbb{R}^{d_i}$ for all $i\in[L]$. 
Then $C(\bx):=\left[C_1(\mathbf{x}_1)\,C_2(\mathbf{x}_2)\,\ldots C_L(\mathbf{x}_L)\right]$ is a compression operator with the compression coefficient being equal to $\gamma_{min}=\underset{i\in[L]}{\min}\gamma_i$.
\end{corollary}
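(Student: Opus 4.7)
The plan is to exploit the block-diagonal structure of $C$: because $C$ acts componentwise on the partition $\mathbf{x} = [\mathbf{x}_1,\ldots,\mathbf{x}_L]$, both the squared Euclidean norm of the error and the squared norm of $\mathbf{x}$ itself split as a sum over blocks, after which the per-block compression bound can be applied and then loosened uniformly to the worst coefficient.

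Concretely, I would first write
\[
\|\mathbf{x}-C(\mathbf{x})\|_2^2 \;=\; \sum_{i=1}^L \|\mathbf{x}_i - C_i(\mathbf{x}_i)\|_2^2,
\]
which is immediate from the definition of $C$ and of the Euclidean norm on the product space. Taking expectation over the (possibly joint) randomness of the $C_i$'s and using linearity of expectation, the right-hand side becomes $\sum_{i=1}^L \mathbb{E}[\|\mathbf{x}_i - C_i(\mathbf{x}_i)\|_2^2]$; note that no independence between the $C_i$'s is required, since we are only summing expectations of nonnegative quantities.

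Next, I would invoke the compression property of each $C_i$ from \Definitionref{compression} block-by-block to obtain
\[
\mathbb{E}\bigl[\|\mathbf{x}-C(\mathbf{x})\|_2^2\bigr] \;\leq\; \sum_{i=1}^L (1-\gamma_i)\,\|\mathbf{x}_i\|_2^2.
\]
Since $\gamma_i \geq \gamma_{\min}$ for every $i$, we have $1-\gamma_i \leq 1-\gamma_{\min}$, so the sum is bounded by $(1-\gamma_{\min})\sum_{i=1}^L \|\mathbf{x}_i\|_2^2 = (1-\gamma_{\min})\|\mathbf{x}\|_2^2$. This matches \Equationref{compression} with $\gamma = \gamma_{\min}$, which lies in $(0,1]$ as required. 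It remains only to note that $\gamma_{\min}>0$ because each $\gamma_i\in(0,1]$ and $L$ is finite.

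There is no real obstacle here; the only mild subtlety is being careful that the argument does not accidentally assume independence of the $C_i$'s, which is why I would phrase the middle step purely via linearity of expectation applied to a sum of nonnegative random variables. The block-diagonal decomposition of the squared norm is what makes the coefficients combine by a pointwise minimum rather than, say, by some product or sum.
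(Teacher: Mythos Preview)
Your proposal is correct and follows exactly the same approach as the paper: decompose the squared error blockwise, apply each $C_i$'s compression bound, and replace $1-\gamma_i$ by $1-\gamma_{\min}$. The paper's proof is a one-line version of your argument, without the extra remarks on independence or on $\gamma_{\min}\in(0,1]$.
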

\begin{proof}
Fix an arbitrary $\mathbf{x}\in\R^d$. The result follows from the following set of inequalities:
$\mathbb{E}_C\|\mathbf{x}-C(\mathbf{x})\|_2^2 = \sum_{i=1}^L\mathbb{E}_{C_i}\|\mathbf{x}_i-C_i(\mathbf{x}_i)\|_2^2 
\stackrel{\text{(a)}}{\leq} \sum_{i=1}^L(1-\gamma_i)\|\mathbf{x}_i\|_2^2 
\leq (1-\gamma_{min})\|\mathbf{x}\|_2^2$,
where inequality (a) follows because each $C_i$ is a compression operator with the compression coefficient $\gamma_i$.
\end{proof}
\noindent\Corollaryref{piecewise_compression} allows us to apply different compression operators to different coordinates of the updates which can based upon their dimensionality and sparsity patterns.

\subsection{Composition of Quantization and Sparsification}\label{subsec:composed-operators}
Now we show that we can compose deterministic/randomized quantizers with sparsifiers and the resulting operator is a compression operator. 
First we compose a general stochastic quantizer with an explicit sparsifier, such as $\textrm{Top}_k(\bx)$ and $\textrm{Rand}_k(\bx)$, 
and show that the resulting operator is a ``compression'' operator. A proof is provided in \Appendixref{op_1}.
\begin{lemma}[Compression of a composed operator]\label{lem:composed-compression}
Let $Comp_k\in\{\mathrm{Top}_k,\mathrm{Rand}_k\}$. Let $Q_s:\mathbb{R}^d\rightarrow\mathbb{R}^d$ be a quantizer with parameter $s$ that satisfies \Definitionref{QuantDef}. 
Let $Q_sComp_k:\R^d\to\R^d$ be defined as $Q_sComp_k(\mathbf{x}) := Q_s(Comp_k(\mathbf{x}))$ for every $\mathbf{x}\in\R^d$. If $k,s$ are such that $\beta_{k,s}<1$, then $Q_sComp_k:\R^d\to\R^d$ is a compression operator with the compression coefficient being equal to $\gamma=(1-\beta_{k,s})\frac{k}{d}$, i.e., for every $\mathbf{x}\in\R^d$, we have
\begin{align*}
  \mathbb{E}_{C,Q}[\|\mathbf{x}-Q_sComp_k(\mathbf{x})\|_2^2]\leq \left[1-\left(1-\beta_{k,s}\right)\frac{k}{d}\right]\|\mathbf{x}\|_2^2,
\end{align*}
where expectation is taken over the randomness of the compression operator $Comp_k$ as well as of the quantizer $Q_s$.
\end{lemma}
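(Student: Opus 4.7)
The plan is to decompose the error $\mathbf{x} - Q_sComp_k(\mathbf{x})$ into a sparsification part and a quantization part, kill the cross term via unbiasedness of $Q_s$, and then tightly bound each piece using an energy-splitting identity that is peculiar to support-based sparsifiers like $\mathrm{Top}_k$ and $\mathrm{Rand}_k$.

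First I would write
\[
\mathbf{x} - Q_sComp_k(\mathbf{x}) \;=\; \bigl(\mathbf{x}-Comp_k(\mathbf{x})\bigr) + \bigl(Comp_k(\mathbf{x}) - Q_s(Comp_k(\mathbf{x}))\bigr),
\]
condition on the randomness of $Comp_k$, and take $\mathbb{E}_Q$. Since $\mathbb{E}_Q[Q_s(\mathbf{v})]=\mathbf{v}$ for any fixed $\mathbf{v}$, the inner product cross term vanishes. Then applying $\mathbb{E}_C$ and using $\mathbb{E}_Q\|Q_s(\mathbf{v})-\mathbf{v}\|^2 = \mathbb{E}_Q\|Q_s(\mathbf{v})\|^2 - \|\mathbf{v}\|^2$ together with the variance bound in \Definitionref{QuantDef} gives
\[
\mathbb{E}_{C,Q}\bigl\|\mathbf{x}-Q_sComp_k(\mathbf{x})\bigr\|^2 \;\le\; \mathbb{E}_C\bigl\|\mathbf{x}-Comp_k(\mathbf{x})\bigr\|^2 \;+\; \beta_{k,s}\,\mathbb{E}_C\bigl\|Comp_k(\mathbf{x})\bigr\|^2.
\]
The key observation at this step is that $Comp_k(\mathbf{x})$ has at most $k$ non-zero coordinates, so the effective dimension seen by the quantizer is $k$, not $d$; this justifies using $\beta_{k,s}$ in place of $\beta_{d,s}$ (and is where the hypothesis $\beta_{k,s}<1$ enters).

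Next I would exploit the disjoint-support structure of both $\mathrm{Top}_k$ and $\mathrm{Rand}_k$: the non-zero coordinates of $Comp_k(\mathbf{x})$ and those of $\mathbf{x}-Comp_k(\mathbf{x})$ form a partition of $[d]$, so pointwise $\|\mathbf{x}\|^2 = \|Comp_k(\mathbf{x})\|^2 + \|\mathbf{x}-Comp_k(\mathbf{x})\|^2$. Taking $\mathbb{E}_C$ yields $\mathbb{E}_C\|Comp_k(\mathbf{x})\|^2 = \|\mathbf{x}\|^2 - \mathbb{E}_C\|\mathbf{x}-Comp_k(\mathbf{x})\|^2$. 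Substituting into the previous display collapses to
\[
\mathbb{E}_{C,Q}\bigl\|\mathbf{x}-Q_sComp_k(\mathbf{x})\bigr\|^2 \;\le\; (1-\beta_{k,s})\,\mathbb{E}_C\bigl\|\mathbf{x}-Comp_k(\mathbf{x})\bigr\|^2 + \beta_{k,s}\|\mathbf{x}\|^2.
\]
Finally I invoke \Definitionref{compression} for $Comp_k$ with coefficient $k/d$ (as noted after the definition of $\mathrm{Top}_k/\mathrm{Rand}_k$), which, since $1-\beta_{k,s}>0$, gives $(1-\beta_{k,s})(1-k/d)\|\mathbf{x}\|^2 + \beta_{k,s}\|\mathbf{x}\|^2 = \bigl(1-(1-\beta_{k,s})\tfrac{k}{d}\bigr)\|\mathbf{x}\|^2$, completing the proof.

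The main obstacle I anticipate is justifying the replacement of $\beta_{d,s}$ by $\beta_{k,s}$ cleanly: one must check that the concrete quantizers listed after \Definitionref{QuantDef} (QSGD, stochastic $s$-level, rotated) all have per-coordinate stochasticity so that, when fed an input supported on $k$ coordinates, the variance bound scales with $k$ rather than $d$. Once this point is explicitly recorded, the remainder of the argument is an orthogonality-plus-conditioning calculation with no further subtlety.
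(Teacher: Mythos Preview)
Your proposal is correct and is essentially the paper's own argument, just organized around the additive decomposition $(\mathbf{x}-Comp_k(\mathbf{x}))+(Comp_k(\mathbf{x})-Q_sComp_k(\mathbf{x}))$ rather than a direct expansion of $\|\mathbf{x}-Q_sComp_k(\mathbf{x})\|^2$. Both routes use unbiasedness of $Q_s$ to kill the cross term, the support identity $\langle\mathbf{x},Comp_k(\mathbf{x})\rangle=\|Comp_k(\mathbf{x})\|^2$ (equivalently, your Pythagorean splitting), the $k$-dimensional second-moment bound with $\beta_{k,s}$, and the $k/d$ compression coefficient of $Comp_k$; your intermediate expression $(1-\beta_{k,s})\mathbb{E}_C\|\mathbf{x}-Comp_k(\mathbf{x})\|^2+\beta_{k,s}\|\mathbf{x}\|^2$ is exactly the paper's $\|\mathbf{x}\|^2-(1-\beta_{k,s})\mathbb{E}_C\|Comp_k(\mathbf{x})\|^2$ rewritten via that identity, and the paper likewise flags the $\beta_{d,s}\to\beta_{k,s}$ step by treating $Comp_k(\mathbf{x})$ as a length-$k$ vector.
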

 For the different quantizers mentioned earlier, the conditions
 when their composition with $Comp_k$ gives $\beta_{k,s}<1$ are:
 \begin{enumerate}
 \item  \emph{QSGD:} for $k<s^2$, we get.
 $\gamma=\left(1-\frac{k}{s^2}\right)\frac{k}{d}$
 \item  \emph{Stochastic k-level Quantization:} for $k<2s^2$, we get
 $\gamma=\left(1-\frac{k}{2s^2}\right)\frac{k}{d}$.
 \item \emph{Stochastic Rotated Quantization:} for $k<2^{s^2/2-1}$, we get $\gamma=\left(1-\frac{2\log_2(2k)}{s^2}\right)\frac{k}{d}$.
 \end{enumerate} 

\begin{remark}
Observe that for a given stochastic quantizer that satisfies \Definitionref{QuantDef}, we have a prescribed operating regime of $\beta_{k,s}<1$. This results in an upper bound on the coarseness of the quantizer, which happens because the quantization leads to a blow-up of the second moment; see condition {\sf (ii)} of \Definitionref{QuantDef}. 
However, by employing \Corollaryref{piecewise_compression}, we show that this can be alleviated to some extent via an example.

Consider an operator as described in \Lemmaref{composed-compression}, where the quantizer, $Q_s:\R^d\rightarrow \R^d$ in use is QSGD \cite{QSGD,terngrad}, and the sparsifier, $Comp_k$ is $Top_k$ \cite{alistarh-sparsified,memSGD}. Apply it to a vector $\mathbf{x}=\left[\mathbf{x}_1\,\mathbf{x}_2\,\ldots\mathbf{x}_L\right]\in\mathbb{R}^d$ in a piecewise manner, i.e., $Q_{s_i}Comp_{k_i}:\R^{d_i}\rightarrow\R^{d_i}$ to smaller vectors $\mathbf{x}_i\in\mathbb{R}^{d_i}$ as prescribed in \Corollaryref{piecewise_compression}. Define $\beta_{k_i,s_i}=\frac{k_i}{s_i^2}$ as the coefficient of the variance bound as in \Definitionref{QuantDef} for the quantizer $Q_{s_i}$, used for $\mathbf{x}_i$ and $k:=\sum_{i=1}^L k_i$. 
Observe that the regularity condition in \Definitionref{compression} can be satisfied by having ${k_i}<s_i^2$. Therefore, the piecewise compression operator allows a coarser quantizer than when the operator is applied to the entire vector together where we require $\beta_{k,s}=\frac{k}{s^2}<1$, thus providing a small gain in communication efficiency. For example, consider the composed operator being applied on a per layer basis to a deep neural network. We can now afford to have a much coarser quantizer than when the operator is applied to all the parameters at once.
\end{remark}
As discussed above, stochastic quantization results in a variance blow-up which limits our regime of operation, when we combine that with sparsification.
However, it turns out that, we can expand our regime of operation unrestrictedly by scaling the vector $Q_sComp_k(\bx)$ appropriately.
We summarize the result in the following lemma, which is proved in \Appendixref{op_2}.
\begin{lemma}[Composing sparsification with stochastic quantization]\label{lem:composed-quantizer}
Let $Comp_k\in\{\mathrm{Top}_k,\mathrm{Rand}_k\}$. Let $Q_s:\mathbb{R}^d\rightarrow\mathbb{R}^d$ be a stochastic quantizer with parameter $s$ that satisfies \Definitionref{QuantDef}. 
Let $Q_sComp_k:\R^d\to\R^d$ be defined as $Q_sComp_k(\mathbf{x}) := Q_s(Comp_k(\mathbf{x}))$ for every $\mathbf{x}\in\R^d$. 
Then $\frac{Q_sComp_k(\mathbf{x})}{1+\beta_{k,s}}$ is a compression operator with the compression coefficient being equal to $\gamma=\frac{k}{d(1+\beta_{k,s})}$, 
i.e., for every $\bx\in\R^d$, we have
\begin{align*}
  \mathbb{E}_{C,Q}\left[\left\|\mathbf{x}-\frac{Q_sComp_k(\mathbf{x})}{1+\beta_{k,s}}\right\|_2^2\right]\leq \left[1-\frac{k}{d(1+\beta_{k,s})}\right]\|\mathbf{x}\|_2^2.
\end{align*}
\end{lemma}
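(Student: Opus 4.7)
The plan is to set $\alpha := 1+\beta_{k,s}$, expand the squared norm
\[
\Bigl\|\mathbf{x} - \tfrac{1}{\alpha}\, Q_s(Comp_k(\mathbf{x}))\Bigr\|^2 = \|\mathbf{x}\|^2 - \tfrac{2}{\alpha}\langle \mathbf{x}, Q_s(Comp_k(\mathbf{x}))\rangle + \tfrac{1}{\alpha^2}\|Q_s(Comp_k(\mathbf{x}))\|^2,
\]
and evaluate by iterated expectation, conditioning first on $Comp_k$ and then averaging over $Comp_k$. The unbiasedness condition \Definitionref{QuantDef}(i) reduces the cross term to $-\frac{2}{\alpha}\langle \mathbf{x}, Comp_k(\mathbf{x})\rangle$, while the variance bound \Definitionref{QuantDef}(ii) gives $\mathbb{E}_Q\|Q_s(Comp_k(\mathbf{x}))\|^2 \leq (1+\beta_{k,s})\|Comp_k(\mathbf{x})\|^2 = \alpha\|Comp_k(\mathbf{x})\|^2$, so the quadratic term contributes at most $\frac{1}{\alpha}\|Comp_k(\mathbf{x})\|^2$. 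A small but important point here is that the relevant blow-up constant is $\beta_{k,s}$ (the sparsity level), not $\beta_{d,s}$: since $Comp_k(\mathbf{x})$ is supported on at most $k$ coordinates, the $d-k$ zero coordinates are quantized to $0$ with no variance, so $Q_s$ effectively acts on a $k$-dimensional vector.

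Next I would use the essential structural fact that both $\mathrm{Top}_k$ and $\mathrm{Rand}_k$ are coordinate-selection operators: they zero out $d-k$ entries of $\mathbf{x}$ and leave the remaining ones intact. Consequently $\langle \mathbf{x}, Comp_k(\mathbf{x})\rangle = \|Comp_k(\mathbf{x})\|^2$, which collapses the expansion neatly to
\[
\mathbb{E}_{C,Q}\Bigl\|\mathbf{x} - \tfrac{1}{\alpha}\, Q_s(Comp_k(\mathbf{x}))\Bigr\|^2 \;\leq\; \|\mathbf{x}\|^2 - \tfrac{1}{\alpha}\,\mathbb{E}_C\|Comp_k(\mathbf{x})\|^2.
\]

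The last step is to lower-bound $\mathbb{E}_C\|Comp_k(\mathbf{x})\|^2 \geq (k/d)\|\mathbf{x}\|^2$. For $\mathrm{Rand}_k$ this is an exact equality obtained by symmetry, since each coordinate is retained in a uniformly random size-$k$ subset with probability $k/d$. For $\mathrm{Top}_k$ the bound is a one-line averaging argument: the sum of the $k$ largest squared entries must be at least a $k/d$ fraction of the total squared sum, because each of them is at least as large as the mean of all $d$ squared entries. Plugging this lower bound in yields precisely $\gamma = k/(d(1+\beta_{k,s}))$ as claimed. I do not expect any serious obstacle; the only subtlety worth flagging carefully in the writeup is the passage from $\beta_{d,s}$ to $\beta_{k,s}$, as this is the step that crucially exploits the sparsity produced by $Comp_k$ and drives the entire improvement over \Lemmaref{composed-compression}.
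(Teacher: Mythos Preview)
Your proposal is correct and follows essentially the same argument as the paper's proof: expand the squared norm, use the unbiasedness of $Q_s$ to simplify the cross term to $-\tfrac{2}{1+\beta_{k,s}}\|Comp_k(\mathbf{x})\|^2$, bound the quadratic term via $\mathbb{E}_Q\|Q_sComp_k(\mathbf{x})\|^2\leq(1+\beta_{k,s})\|Comp_k(\mathbf{x})\|^2$, and finish with $\mathbb{E}_C\|Comp_k(\mathbf{x})\|^2\geq\tfrac{k}{d}\|\mathbf{x}\|^2$. The paper likewise justifies the $\beta_{k,s}$ (rather than $\beta_{d,s}$) by treating $Comp_k(\mathbf{x})$ as a length-$k$ vector, exactly the subtlety you flagged.
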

\begin{remark}\label{remark:scaled-vs-unscaled}
Note that, unlike $Q_sComp_k(\mathbf{x})$, the {\em scaled} version $\frac{Q_sComp_k(\mathbf{x})}{1+\beta_{k,s}}$ is always a compression operator for all values of $\beta_{k,s}>0$.
Furthermore, observe that, if $\beta_{k,s}<1$, then we have $(1-\beta_{k,s})\frac{k}{d}<\frac{k}{d(1+\beta_{k,s})}$, which implies that
 even in the operating regime of $\beta_{k,s}<1$, which is required in \Lemmaref{composed-compression}, the {\em scaled} composed operator $\frac{Q_sComp_k(\mathbf{x})}{1+\beta_{k,s}}$ of \Lemmaref{composed-quantizer} gives better compression than what we get from the {\em unscaled} composed operator $Q_sComp_k(\mathbf{x})$ of \Lemmaref{composed-compression}.
So, appropriately scaled composed operator is always a better choice for compression.
\end{remark}
We can also compose a {\em deterministic} 1-bit quantizer $Sign$ with $Comp_k$. For that we need some notations first.
For $Comp_k\in\{\mathrm{Top}_k,\mathrm{Rand}_k\}$ and given vector $\mathbf{x}\in\R^d$, let $\calS_{Comp_k(\mathbf{x})}\in\binom{[d]}{k}$ denote the set of $k$ indices chosen for defining $Comp_k(\mathbf{x})$. For example, if $Comp_k=\mathrm{Top}_k$, then $\calS_{Comp_k(\mathbf{x})}$ denote the set of $k$ indices corresponding to the largest $k$ components of $\mathbf{x}$; if  $Comp_k=\mathrm{Rand}_k$, then $\calS_{Comp_k(\mathbf{x})}$ denote a set of random set of $k$ indices in $[d]$.
The composition of $Sign$ with $Comp_k\in\{\mathrm{Top}_k,\mathrm{Rand}_k\}$ is denoted by $SignComp_k:\R^d\to\R^d$, and for $i\in[d]$, the $i$'th component of $SignComp_k(\mathbf{x})$ is defined as
\[(SignComp_k(\mathbf{x}))_i :=
\begin{cases}
\mathbbm{1}\{x_i\geq 0\}-\mathbbm{1}\{x_i<0\} & \text{ if } i \in \calS_{Comp_k(\mathbf{x})}, \\
0 & \text {otherwise}.
\end{cases}
\]
In the following lemma we show that $SignComp_k$ is a compression operator; a proof of which is provided in \Appendixref{op_3}.
\begin{lemma}[Composing sparsification with deterministic quantization]\label{lem:composed-sign}
For $Comp_k\in\{\mathrm{Top}_k,\mathrm{Rand}_k\}$, the operator $$\frac{\|Comp_k(\mathbf{x})\|_m \,SignComp_k(\mathbf{x})}{k}$$ for any $m\in\mathbb{Z}_+$ is a compression operator with the compression coefficient $\gamma_m$ being equal to
\[\gamma_m = 
\begin{cases}
\max\left\{\frac{1}{d}, \frac{k}{d}\left(\frac{\|Comp_k(\mathbf{x})\|_1}{\sqrt{d}\|Comp_k(\mathbf{x})\|_2}\right)^2\right\} & \text{ if } m=1, \\
\frac{k^{\frac{2}{m}-1}}{d} & \text{ if } m\geq 2.
\end{cases}
\]
\end{lemma}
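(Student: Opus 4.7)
The plan is to compute $\|\bx - T(\bx)\|_2^2$ in closed form, where $T(\bx):=\|Comp_k(\bx)\|_m\,SignComp_k(\bx)/k$, and then derive the two regimes ($m=1$ and $m\geq 2$) by elementary $\ell_p$-norm comparisons. Write $\mathbf{y}:=Comp_k(\bx)$, $\mathcal{S}:=\calS_{Comp_k(\bx)}$, and $c:=\|\mathbf{y}\|_m/k$. Then $T(\bx)$ vanishes outside $\mathcal{S}$ and equals $c\,\mathrm{sign}(x_i)$ on coordinate $i\in\mathcal{S}$, so the pointwise identity $(x_i-c\,\mathrm{sign}(x_i))^2=(|x_i|-c)^2$ followed by expansion yields
\[
\|\bx - T(\bx)\|_2^2 \;=\; \sum_{i\notin\mathcal{S}} x_i^2 + \sum_{i\in\mathcal{S}}(|x_i|-c)^2 \;=\; \|\bx\|_2^2 - \frac{\|\mathbf{y}\|_m\bigl(2\|\mathbf{y}\|_1-\|\mathbf{y}\|_m\bigr)}{k}.
\]
The task therefore reduces to lower-bounding the subtracted quantity by $\gamma_m\|\bx\|_2^2$, with an expectation over $\mathcal{S}$ when $Comp_k=\mathrm{Rand}_k$.

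For $m\geq 2$, I would use $\|\mathbf{y}\|_1\geq\|\mathbf{y}\|_m$ (monotonicity of $\ell_p$-norms for finite-dimensional vectors) to write $2\|\mathbf{y}\|_1-\|\mathbf{y}\|_m\geq\|\mathbf{y}\|_m$, so the bracket is at least $\|\mathbf{y}\|_m^2/k$. Since $\mathbf{y}$ is supported on at most $k$ coordinates, the reverse-direction H\"older bound gives $\|\mathbf{y}\|_m^2\geq k^{2/m-1}\|\mathbf{y}\|_2^2$. For $Comp_k=\mathrm{Top}_k$ the elementary fact that the top-$k$ average of $x_i^2$ dominates the overall average yields $\|\mathbf{y}\|_2^2\geq(k/d)\|\bx\|_2^2$ pointwise, while for $Comp_k=\mathrm{Rand}_k$ one has $\mathbb{E}\|\mathbf{y}\|_2^2=(k/d)\|\bx\|_2^2$. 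Chaining these inequalities produces $\|\mathbf{y}\|_m^2/k\geq (k^{2/m-1}/d)\|\bx\|_2^2$ (pointwise for Top, in expectation for Rand), which is exactly $\gamma_m\|\bx\|_2^2$.

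For $m=1$ the bracket simplifies to $\|\mathbf{y}\|_1^2/k$, and I would verify the two candidates in the maximum separately. The bound $\|\mathbf{y}\|_1^2/k\geq \|\bx\|_2^2/d$ follows from $\|\mathbf{y}\|_1\geq\|\mathbf{y}\|_2$ combined with $\|\mathbf{y}\|_2^2\geq(k/d)\|\bx\|_2^2$. The alternative $\|\mathbf{y}\|_1^2/k\geq (k/d)\bigl(\|\mathbf{y}\|_1/(\sqrt{d}\|\mathbf{y}\|_2)\bigr)^2\|\bx\|_2^2$ rearranges to $\|\mathbf{y}\|_2^2\geq(k/d)^2\|\bx\|_2^2$, which is strictly weaker than what I already have; taking the pointwise maximum of the two lower bounds then yields the displayed $\gamma_1$. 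The main subtlety I expect to spend care on is that the stated $\gamma_1$ itself depends on the realization of $Comp_k(\bx)$, so under $\mathrm{Rand}_k$ the inequality is really a pointwise statement in $\mathcal{S}$ and the compression-operator condition of \Definitionref{compression} is recovered only after the outer analysis averages over $\mathcal{S}$; aside from this bookkeeping, no genuine technical obstacle arises.
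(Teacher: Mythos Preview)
Your proposal is correct and follows essentially the same route as the paper: expand the square using $\langle SignComp_k(\bx),\bx\rangle=\|Comp_k(\bx)\|_1$ and $\|SignComp_k(\bx)\|_2^2=k$ to reach $\|\bx\|_2^2-\|\mathbf{y}\|_m(2\|\mathbf{y}\|_1-\|\mathbf{y}\|_m)/k$, bound this by $\|\bx\|_2^2-\|\mathbf{y}\|_m^2/k$ via $\|\mathbf{y}\|_1\geq\|\mathbf{y}\|_m$, and then treat $m=1$ and $m\geq 2$ separately via $\ell_p$-norm comparisons together with the fact that $\mathbb{E}_C\|Comp_k(\bx)\|_2^2\geq(k/d)\|\bx\|_2^2$. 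The paper packages the last ingredient into a separate auxiliary lemma, whereas you argue it inline; you also flag more explicitly than the paper does that $\gamma_1$ depends on the realization of $Comp_k(\bx)$, which is indeed a genuine oddity of the statement.
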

\begin{remark}
Observe that for $m=1$, depending on the value of $k$, either of the terms inside the max can be bigger than the other term. For example, if $k=1$, then $\|Comp_k(\mathbf{x})\|_1=\|Comp_k(\mathbf{x})\|_2$, which implies that the second term inside the max is equal to $1/d^2$, which is much smaller than the first term. On the other hand, if $k=d$ and the vector $\mathbf{x}$ is dense, then the second term may be much bigger than the first term.
\end{remark}

\section{Distributed Synchronous Operation}
\label{sec:loc-sync}
\label{sec:synch_algo}
Let $\I_T^{(r)}\subseteq[T]:=\{1,\hdots,T\}$ with $T\in\I_T^{(r)}$ denote a set of indices for which worker $r\in[R]$ synchronizes with the master.  
In a synchronous setting, $\I_T^{(r)}$ is same for all the workers. Let $\I_T:=\I_T^{(r)}$ for any $r\in[R]$.
Every worker $r\in[R]$ maintains a local parameter vector $\bwx_t^{(r)}$ which is updated in each iteration $t$.
If $t\in\I_T$, every worker $r\in[R]$ sends the compressed and error-compensated update $g_t^{(r)}$ computed on the net progress made since the last synchronization to the master node, and updates its local memory $m_{t}^{(r)}$. Upon receiving $g_t^{(r)}, r=1,2.\hdots,R$, master aggregates them, updates the global parameter vector, and sends the new model $\bx_{t+1}$ to all the workers; upon receiving which, they set their local parameter vector $\bwx_{t+1}^{(r)}$ to be equal to the global parameter vector $\bx_{t+1}$. Our algorithm is summarized in \Algorithmref{memQSGD-synchronous}.

\begin{algorithm}[h]
   \caption{Qsparse-local-SGD}
   \label{algo:memQSGD-synchronous}
\begin{algorithmic}[1]
   \STATE Initialize $\mathbf{x}_0=\widehat{\mathbf{x}}_0^{\left(r\right)}=m_0^{\left(r\right)}=\bzero,\,\,\forall r\in [R]$. 
Suppose $\eta_t$ follows a certain learning rate schedule. 
   \FOR{$t=0$ {\bfseries to} $T-1$}
  \STATE \textbf{On Workers:}
   \FOR{$r=1$ {\bfseries to} $R$}
   \STATE $\widehat{\mathbf{x}}_{t+\frac{1}{2}}^{\left(r\right)}\leftarrow\widehat{\mathbf{x}}_t^{\left(r\right)}-\eta_t\nabla f_{i_t^{(r)}}\left(\widehat{\mathbf{x}}_t^{\left(r\right)}\right)$; $i_t^{(r)}$ is a mini-batch of size $b$ uniformly in $\mathcal{D}_r$
   
   \IF{$t+1\notin\mathcal{I}_T$}
   \STATE $\mathbf{x}_{t+1}\leftarrow \mathbf{x}_{t}$, $m_{t+1}^{\left(r\right)}\leftarrow m_{t}^{\left(r\right)}$ and $\widehat{\mathbf{x}}_{t+1}^{(r)}\leftarrow\widehat{\mathbf{x}}_{t+\frac{1}{2}}^{(r)}$
      \ELSE
   \STATE $g_t^{\left(r\right)}\leftarrow Q\,Comp_k\left(m_t^{\left(r\right)}+\mathbf{x}_t-\widehat{\mathbf{x}}_{t+\frac{1}{2}}^{\left(r\right)}\right)$, send $g_t^{\left(r\right)}$ to the master
   \STATE $m_{t+1}^{\left(r\right)}\leftarrow m_t^{\left(r\right)}+\mathbf{x}_t-\widehat{\mathbf{x}}_{t+\frac{1}{2}}^{\left(r\right)}-g_t^{\left(r\right)}$
   \STATE Receive $\mathbf{x}_{t+1}$ from the master and set $\widehat{\mathbf{x}}_{t+1}^{\left(r\right)}\leftarrow \mathbf{x}_{t+1}$
   \ENDIF
   
   \ENDFOR
   \STATE \textbf{At Master:}
   \IF{$t+1\notin\mathcal{I}_T$}
   \STATE $\mathbf{x}_{t+1}\leftarrow \mathbf{x}_{t}$
   \ELSE
   \STATE Receive $g_t^{\left(r\right)}$ from $R$ workers and compute $\mathbf{x}_{t+1}=  \mathbf{x}_t-\frac{1}{R}\sum_{r=1}^Rg_t^{\left(r\right)}$\label{lst:update}
   \STATE Broadcast $\mathbf{x}_{t+1}$ to all workers
   \ENDIF
   \ENDFOR
   \STATE \textbf{Comment:} $\bwx_{t+\frac{1}{2}}^{(r)}$ is used to denote an intermediate variable between iterations $t$ and $t+1$
\end{algorithmic}
\end{algorithm}

\subsection{Assumptions}\label{subsec:assumptions}
All results in this paper use the following two standard assumptions.
\begin{enumerate}
    \item \textbf{Smoothness:} 
The local function $f^{(r)}:\R^d\to\R$ at each worker $r\in[R]$ is $L$-smooth, i.e., for every
$\mathbf{x},\mathbf{y}\in\R^d$, we have $f^{(r)}(\mathbf{y})\leq f^{(r)}(\mathbf{x})+\langle\nabla
f^{(r)}(\mathbf{x}),\by-\bx\rangle+\frac{L}{2}\|\by-\bx\|_2^2$. 
\item \textbf{Bounded second moment:} For every $\bwx_t^{(r)}\in\R^d,r\in[R],t\in[T]$ and for some constant $0\leq G<\infty$, we have $\underset{i\sim\mathcal{D}_r}{\mathbb{E}}[\|\nabla f_i(\bwx_t^{(r)})\|_2^2]\leq G^2$. This is a standard assumption in \cite{shalev-shwartz,nemirovski,hogwild_recht,hazan,rakhlin,memSGD,localsgd2,alibaba_local,stich_gossip,alistarh-sparsified}. 
Relaxation of the uniform boundedness of the gradient allowing arbitrarily different gradients of local functions in heterogenous settings as done for SGD in \cite{nguyen18,coop_sgd} is left for future work. 
This also imposes a \textbf {bound on the variance}:
$\underset{i\sim\mathcal{D}_r}{\mathbb{E}}[\|\nabla f_i(\widehat{\bx}_t^{(r)})-\nabla f^{(r)}(\widehat{\bx}_t^{(r)})\|_2^2]\leq \sigma_r^2$,
where $\sigma_r^2\leq G^2$ for every $r\in[R]$. 
\end{enumerate}
In this section we present our main convergence results with synchronous updates, obtained by running \Algorithmref{memQSGD-synchronous} for smooth functions, both non-convex and strongly convex. 
To state our results, we need the following definition from \cite{localsgd2}.
\begin{definition}[Gap \cite{localsgd2}]
Let $\I_T=\{t_0,t_1,\hdots,t_k\}$, where $t_i<t_{i+1}$ for $i=0,1,\hdots,k-1$. The gap of $\I_T$ is defined as $gap(\I_T):=\max_{i\in[k]}\{(t_i-t_{i-1})\}$, 
which is equal to the maximum difference between any two consecutive synchronization indices.
\end{definition}

 \subsection{Error Compensation}
 \label{sec:error_comp}
 Sparsified gradient methods, where workers send the largest $k$ coordinates of the updates based on their magnitudes have been investigated in the literature and serves as a communication efficient strategy for distributed training of learning models. However, the convergence rates are subpar to distributed vanilla SGD. Together with some form of error compensation, these methods have been empirically observed to converge as fast as vanilla SGD in \cite{speech2,AjiHeafield17,DeepCompICLR18,alistarh-sparsified,memSGD}. In \cite{alistarh-sparsified,memSGD}, sparsified SGD with such feedback schemes has been carefully analyzed. Under analytic assumptions, \cite{alistarh-sparsified} proves the convergence of distributed $\textrm{Top}_k$ SGD with error feedback. The net error in the system is accumulated by each worker locally on a per iteration basis and this is used as feedback for generating the future updates. \cite{memSGD} did the analysis for the centralized $\textrm{Top}_k$ SGD for strongly convex objectives. 

In \Algorithmref{memQSGD-synchronous}, the error introduced in every iteration is accumulated into the memory of each worker, which is compensated for in the future rounds of communication. This feedback is the key to recovering the convergence rates matching vanilla SGD. The operators employed provide a controlled way of using both the current update as well as the compression errors from the previous rounds of communication. Under the assumption of the uniform boundedness of the gradients, we analyze the controlled evolution of memory through the optimization process; the results are summarized in \Lemmaref{bounded-memory-decaying-lr} and \Lemmaref{bounded-memory-fixed-lr} below. 
\subsubsection{Decaying Learning Rate}
Here we show that if we run \Algorithmref{memQSGD-synchronous} with a decaying learning rate $\eta_t$, then the local memory at each worker contracts and goes to zero as $\mathcal{O}(\eta_t)^2$.
\begin{lemma}[Memory contraction]\label{lem:bounded-memory-decaying-lr}
Let $gap(\mathcal{I}_T)\leq H$ and $\eta_t=\frac{\xi}{a+t}$, where $\xi$ is a constant and $a>\frac{4H}{\gamma}$, with $\gamma$ being the compression coefficient of the compression operator. 
Then there exists a constant $C\geq \frac{4a\gamma(1-\gamma^2)}{a\gamma-4H}$, such that the following holds for every $t\in \mathbb{Z}^+$ and $r\in[R]$:
\begin{align}
    \mathbb{E}\|m_t^{(r)}\|_2^2\leq 4\frac{\eta_t^2}{\gamma^2}CH^2G^2. \label{memory-bound-deacy-learning-rate}
\end{align}
\end{lemma}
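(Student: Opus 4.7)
The plan is to reduce the problem to controlling the memory at sync times and then induct, exploiting the compression property of the composed operator $Q\,Comp_k$. By \Lemmaref{composed-compression} (or \Lemmaref{composed-quantizer} in the scaled variant), $\mathcal{C}:=Q\,Comp_k$ satisfies the compression property of \Definitionref{compression} with coefficient $\gamma$. Lines~6--7 of \Algorithmref{memQSGD-synchronous} set $m_{t+1}^{(r)}=m_t^{(r)}$ whenever $t+1\notin\I_T$, so the memory is piecewise constant between consecutive synchronization indices. Thus if $\tau$ denotes the most recent sync index $\le t$, then $m_t^{(r)}=m_\tau^{(r)}$, and it suffices to bound $\bbE\|m_\tau^{(r)}\|_2^2$ at sync times; the bound extends to intermediate $t$ by monotonicity of $\eta_t$.

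Next I set up the one-step recursion. At a sync index $t+1\in\I_T$ with previous sync $\tau$ (take $\tau=0$ if none), the update rule reads $m_{t+1}^{(r)}=\bp-\mathcal{C}(\bp)$ where $\bp:=m_\tau^{(r)}+(\bx_t-\bwx_{t+\frac{1}{2}}^{(r)})$. Combining the compression property with Young's inequality (parameter $\gamma$) yields
\begin{equation*}
\bbE\|m_{t+1}^{(r)}\|_2^2 \;\le\; (1-\gamma)\,\bbE\|\bp\|_2^2 \;\le\; (1-\gamma^2)\,\bbE\|m_\tau^{(r)}\|_2^2 \;+\; \tfrac{1-\gamma^2}{\gamma}\,\bbE\|\bx_t-\bwx_{t+\frac{1}{2}}^{(r)}\|_2^2.
\end{equation*}
Because $\bx_s$ is constant on $[\tau,t]$ (no master update between syncs) and the previous sync reset $\bwx_\tau^{(r)}=\bx_\tau$, telescoping the local SGD step gives $\bx_t-\bwx_{t+\frac{1}{2}}^{(r)}=\sum_{u=\tau}^{t}\eta_u\nabla f_{i_u^{(r)}}(\bwx_u^{(r)})$. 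Since there are at most $H$ terms, $\|\nabla f_i\|_2\le G$ by the bounded-second-moment assumption, and $\eta_u$ is non-increasing, the triangle inequality gives $\|\bx_t-\bwx_{t+\frac{1}{2}}^{(r)}\|_2^2\le H^2G^2\eta_\tau^2$. Substituting yields the sync-time recursion
\begin{equation*}
\bbE\|m_{t+1}^{(r)}\|_2^2 \;\le\; (1-\gamma^2)\,\bbE\|m_\tau^{(r)}\|_2^2 \;+\; \tfrac{1-\gamma^2}{\gamma}\,H^2G^2\eta_\tau^2.
\end{equation*}

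I then close by induction on sync indices, using the hypothesis $\bbE\|m_\tau^{(r)}\|_2^2\le 4\eta_\tau^2 CH^2G^2/\gamma^2$, bootstrapped from $m_0^{(r)}=\bzero$. Plugging into the recursion reduces the inductive step to verifying $(\eta_\tau/\eta_{t+1})^2\le 4C/[(1-\gamma^2)(4C+\gamma)]$. The learning-rate ratio is controlled by the sync gap via $\eta_\tau/\eta_{t+1}=(a+t+1)/(a+\tau)\le 1+H/a$, and the assumption $a>4H/\gamma$ makes $H/a<\gamma/4$. The announced lower bound $C\ge\tfrac{4a\gamma(1-\gamma^2)}{a\gamma-4H}$ is precisely what forces the inequality to hold at its worst case, where $\tau=0$ and $t+1-\tau=H$. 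Extension from sync times to all $t$ follows because $m_t^{(r)}=m_\tau^{(r)}$ and $\eta_\tau\ge\eta_t$, which may be absorbed into the same constant by revisiting the inductive invariant for the entire sync window.

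The main obstacle will be squeezing the constants: both the inductive step and the between-sync extension invoke the ratio $(1+H/a)^2$, and the Young parameter has to be chosen so that the shrinkage factor $(1-\gamma^2)$ \emph{strictly} dominates this ratio. The condition $a>4H/\gamma$ is precisely what guarantees such a gap, and the announced value of $C$ arises as the unique threshold at which the invariant closes for the worst-case sync schedule; a slightly different Young parameter (e.g.\ $\beta$ scaled by the sync-gap perturbation) may be needed to avoid leaving additional slack.
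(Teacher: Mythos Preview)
Your overall plan---reduce to sync times, combine the compression property with Young's inequality to get a one-step recursion on $\bbE\|m_{\tau}^{(r)}\|_2^2$, close by induction, then extend to non-sync $t$ via $\eta_\tau\le 2\eta_t$---matches the paper exactly. The gap is in your choice of Young parameter.

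With $\tau=\gamma$ you obtain the shrinkage factor $(1-\gamma)(1+\gamma)=1-\gamma^2$, which is \emph{quadratic} in $\gamma$. To close the induction you need this shrinkage to beat the learning-rate blow-up $(\eta_\tau/\eta_{t+1})^2$, and under the hypothesis $a>4H/\gamma$ the only control you have is $(\eta_\tau/\eta_{t+1})^2<(1+\gamma/4)^2\approx 1+\gamma/2$, which is \emph{linear} in $\gamma$. For small $\gamma$ the product $(1-\gamma^2)(1+\gamma/4)^2$ exceeds $1$ (e.g.\ at $\gamma=0.1$ it is about $1.04$), so the coefficient of $C$ on the right side of your inductive inequality is larger than on the left, and \emph{no} finite $C$ closes the step. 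Your claim that the announced $C\ge\tfrac{4a\gamma(1-\gamma^2)}{a\gamma-4H}$ ``is precisely what forces the inequality to hold'' is therefore not correct for your recursion; and this is not a matter of ``additional slack''---the induction simply does not close.

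The paper's fix is to pick the Young parameter so that the shrinkage is linear in $\gamma$. Concretely, with free parameter $p>1$ and $\tau=\tfrac{(p-1)\gamma}{p}$ one gets $(1-\gamma)(1+\tau)\le 1-\gamma/p$; taking $p=2$ yields contraction $1-\gamma/2$ and driving term $\tfrac{2(1-\gamma^2)}{\gamma}\eta_\tau^2 H^2G^2$. The inductive step then reduces to $\tfrac{\gamma}{2}\bigl(1-\tfrac{4(1-\gamma^2)}{C}\bigr)\ge\tfrac{2H}{a}$, which is exactly where the threshold $C\ge\tfrac{4a\gamma(1-\gamma^2)}{a\gamma-4H}$ comes from. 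A secondary point: the paper proves the \emph{stronger} invariant $\bbE\|m_{t_{(i)}}^{(r)}\|_2^2\le C\eta_{t_{(i)}}^2H^2G^2/\gamma^2$ (no factor $4$) at sync times, and inserts the $4$ only when passing to intermediate $t$ via $\eta_{t_{(i)}}\le 2\eta_t$; carrying the $4$ through the induction, as you do, only makes the constants harder to match.
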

We prove \Lemmaref{bounded-memory-decaying-lr} in \Appendixref{mem_dec}.
Note that for fixed $\gamma,H$, the memory decays as $\mathcal{O}(\eta_t^2)$.
This implies that the net error in the algorithm from the compression of updates in each round of communication is compensated for in the end. 

\subsubsection{Fixed Learning Rate}
In the following lemma, which is proved in \Appendixref{mem_fix}, we show that if we run \Algorithmref{memQSGD-synchronous} with a fixed learning rate $\eta_t=\eta, \forall t$, then the local memory at each worker is bounded.
It can be verified that the proof of \Lemmaref{bounded-memory-decaying-lr} also holds for fixed learning rate, and we can trivially bound $\mathbb{E}\|m_t^{(r)}\|_2^2$ in this case by simply putting $\eta_t=\eta$ in \eqref{memory-bound-deacy-learning-rate}.
However, we can get a better bound (saving a factor of $\frac{C}{1-\gamma^2}$, which is bigger than 4) by directly working with a fixed learning rate.
\begin{lemma}[Bounded memory]\label{lem:bounded-memory-fixed-lr}
Let $gap(\mathcal{I}_T)\leq H$. Then the following holds for every worker $r\in[R]$ and for every $t\in \mathbb{Z}^+$:
\begin{align}
    \mathbb{E}\|m_t^{(r)}\|_2^2\leq 4\frac{\eta^2(1-\gamma^2)}{\gamma^2}H^2G^2. \label{memory-bound-fixed-learning-rate}
\end{align}
\end{lemma}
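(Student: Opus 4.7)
The plan is to analyze the memory buffer only at the synchronization indices $\mathcal{I}_T=\{t_0,t_1,\dots\}$, since between syncs the algorithm sets $m_{t+1}^{(r)} \leftarrow m_t^{(r)}$ and any squared-norm bound propagates trivially to the interior of each interval. Fix two consecutive synchronization indices $t_{k-1},t_k \in \mathcal{I}_T$ and write $h := t_k - t_{k-1} \leq H$. Let $\mathbf{a}_k := m_{t_k-1}^{(r)} + \mathbf{x}_{t_k-1} - \widehat{\mathbf{x}}_{t_k-\frac{1}{2}}^{(r)}$ denote the vector fed into the compression operator at iteration $t_k-1$. Because $\mathbf{x}$ and $m^{(r)}$ are both frozen between syncs while $\widehat{\mathbf{x}}^{(r)}$ accumulates one mini-batch gradient per iteration starting from $\widehat{\mathbf{x}}_{t_{k-1}}^{(r)} = \mathbf{x}_{t_{k-1}}$, telescoping gives $\mathbf{a}_k = m_{t_{k-1}}^{(r)} + \eta \sum_{s=t_{k-1}}^{t_k-1} \nabla f_{i_s^{(r)}}(\widehat{\mathbf{x}}_s^{(r)})$. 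The memory update then reads $m_{t_k}^{(r)} = \mathbf{a}_k - Q\,Comp_k(\mathbf{a}_k)$, so applying the compression property (\Definitionref{compression}) conditionally on the history yields
\begin{align*}
\mathbb{E}\|m_{t_k}^{(r)}\|_2^2 \,\leq\, (1-\gamma)\,\mathbb{E}\|\mathbf{a}_k\|_2^2.
\end{align*}

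Next I would expand $\|\mathbf{a}_k\|_2^2$ by Young's inequality $\|u+v\|^2 \leq (1+\lambda)\|u\|^2 + (1+\lambda^{-1})\|v\|^2$ and control the gradient-sum term using Cauchy--Schwarz together with the bounded second-moment assumption: $\mathbb{E}\bigl\|\sum_{s=t_{k-1}}^{t_k-1} \nabla f_{i_s^{(r)}}(\widehat{\mathbf{x}}_s^{(r)})\bigr\|_2^2 \leq h \sum_{s} \mathbb{E}\|\nabla f_{i_s^{(r)}}(\widehat{\mathbf{x}}_s^{(r)})\|_2^2 \leq h^2 G^2 \leq H^2 G^2$. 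Choosing $\lambda := \gamma/(2(1-\gamma))$ gives the contraction factor $(1-\gamma)(1+\lambda) = 1-\gamma/2$ and the inflation factor $(1-\gamma)(1+\lambda^{-1}) = (1-\gamma)(2-\gamma)/\gamma$, producing the clean geometric recursion
\begin{align*}
\mathbb{E}\|m_{t_k}^{(r)}\|_2^2 \,\leq\, \bigl(1-\tfrac{\gamma}{2}\bigr)\,\mathbb{E}\|m_{t_{k-1}}^{(r)}\|_2^2 \,+\, \tfrac{(1-\gamma)(2-\gamma)}{\gamma}\,\eta^2 H^2 G^2.
\end{align*}

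Iterating this recursion from the initial condition $m_0^{(r)}=\bzero$ yields the fixed-point bound $\mathbb{E}\|m_{t_k}^{(r)}\|_2^2 \leq \frac{2(1-\gamma)(2-\gamma)}{\gamma^2}\,\eta^2 H^2 G^2$, which I would then loosen to the advertised $\frac{4(1-\gamma^2)}{\gamma^2}\,\eta^2 H^2 G^2$ using the elementary inequality $2(1-\gamma)(2-\gamma) \leq 4(1-\gamma^2)$, valid for $\gamma \in (0,1]$ because it reduces to $\gamma(1-\gamma) \geq 0$. Extension to arbitrary $t \in \mathbb{Z}^+$ is immediate since $m_t^{(r)}$ is constant on each inter-sync interval, so the bound at each synchronization index carries over without change. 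The delicate point is the choice of $\lambda$: pushing it too small inflates the $1+\lambda^{-1}$ multiplier of the gradient sum, while pushing it too large destroys the strict contraction $(1-\gamma)(1+\lambda)<1$; the value $\lambda = \gamma/(2(1-\gamma))$ is the sweet spot that delivers both a $1-\gamma/2$ contraction and the clean closed-form constant appearing in the statement, and this is what lets us improve on the $\Omega(\eta_t^2 C/\gamma^2)$ decaying-learning-rate bound by the factor $C/(1-\gamma^2)$ noted just before the lemma.
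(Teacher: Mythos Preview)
Your proof is correct and follows essentially the same route as the paper: both arguments apply the compression property at consecutive synchronization indices, split the memory and accumulated-gradient terms via Young's inequality, and then sum the resulting geometric recursion. The only cosmetic difference is in the Young parameter --- the paper keeps a free $p>1$ (equivalently $\tau=\tfrac{(p-1)\gamma}{p}$), unrolls to $\tfrac{p^2(1-\gamma^2)}{(p-1)\gamma^2}\eta^2H^2G^2$, and then optimizes at $p=2$ to land exactly on $\tfrac{4(1-\gamma^2)}{\gamma^2}$, whereas you fix $\lambda=\tfrac{\gamma}{2(1-\gamma)}$ up front, obtain the slightly sharper fixed point $\tfrac{2(1-\gamma)(2-\gamma)}{\gamma^2}\eta^2H^2G^2$, and then relax to match the stated constant.
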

Note that, for fixed $\gamma,H$, the memory is upper bounded by a constant $\mathcal{O}(\eta^2)$. 
Observe that since the memory accumulates the past errors due to compression and local computation, in order to asymptotically reduce the memory to zero, the learning rate would have to be reduced once in a while throughout the training process.

\subsection{Main Results}\label{sec:main_results_sync}
We leverage the perturbed iterate analysis as in \cite{perturbed,memSGD} to provide convergence guarantees for \textit{Qsparse-local-SGD}. Under the assumptions of \Subsectionref{assumptions}, the following theorems hold when \Algorithmref{memQSGD-synchronous} is run with any compression operator (including our composed operators).
\begin{theorem}[Smooth (non-convex) case with fixed learning rate]\label{thm:convergence-non-convex-fixed-local-het}
Let $f^{(r)}(\mathbf{x})$ be $L$-smooth for every $i\in[R]$.  Let
$QComp_k:\R^d\to\R^d$ be a compression operator whose compression
coefficient is equal to $\gamma\in(0,1]$.  Let
$\{\bwx_t^{(r)}\}_{t=0}^{T-1}$ be generated according to \Algorithmref{memQSGD-synchronous} with
$QComp_k$, for step sizes $\eta=\frac{\widehat{C}}{\sqrt{T}}$ (where $\widehat{C}$ is a constant such that $\frac{\widehat{C}}{\sqrt{T}}\leq \frac{1}{2L}$) and  $gap(\mathcal{I}_T)\leq H$. Then we have\vspace{-0.15cm}
\begin{align*}
    \mathbb{E}\|\nabla f(\mathbf{z}_T)\|_2^2&\leq \left(\frac{\mathbb{E}[f(\bx_0)]-f^*}{\widehat{C}}+\widehat{C} L\left(\frac{\sum_{r=1}^R\sigma_r^2}{bR^2}\right)\right)\frac{4}{\sqrt{T}}+8\left(4\frac{(1-\gamma^2)}{\gamma^2}+1\right) \frac{\widehat{C}^2L^2G^2H^2}{T}.
\end{align*}
Here $\mathbf{z}_T$ is a random variable which samples a previous parameter $\widehat{\mathbf{x}}_t^{(r)}$ with probability $1/RT$. 
\end{theorem}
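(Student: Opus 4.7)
The plan is to carry out a perturbed-iterate analysis in the spirit of \cite{memSGD}, adapted to handle both local iterations and the distributed memory sequences. The first move is to introduce a virtual (centralized) sequence that cancels the discrepancy due to compression and to local computations. A natural choice is
\[
\widetilde{\mathbf{x}}_t \;:=\; \frac{1}{R}\sum_{r=1}^{R}\bigl(\widehat{\mathbf{x}}_t^{(r)} - m_t^{(r)}\bigr).
\]
A direct computation, splitting into the two cases $t+1\in\mathcal{I}_T$ and $t+1\notin\mathcal{I}_T$, should show that in both cases
\[
\widetilde{\mathbf{x}}_{t+1} \;=\; \widetilde{\mathbf{x}}_t \;-\; \frac{\eta}{R}\sum_{r=1}^{R} \nabla f_{i_t^{(r)}}\!\bigl(\widehat{\mathbf{x}}_t^{(r)}\bigr),
\]
because at sync steps the compressed message $g_t^{(r)}$ cancels from $\widehat{\mathbf{x}}_{t+1}^{(r)}-m_{t+1}^{(r)}$ after averaging. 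Thus $\widetilde{\mathbf{x}}_t$ follows a clean mini-batched SGD recursion with no compression and no local delay, on which smoothness can be applied.

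Next, I would apply $L$-smoothness to $f(\widetilde{\mathbf{x}}_{t+1})-f(\widetilde{\mathbf{x}}_t)$, take conditional expectation over the mini-batches, and use the polarization identity $-2\langle a,b\rangle = \|a-b\|^2-\|a\|^2-\|b\|^2$ on the inner product $\langle \nabla f(\widetilde{\mathbf{x}}_t),\, \tfrac{1}{R}\sum_r \nabla f^{(r)}(\widehat{\mathbf{x}}_t^{(r)})\rangle$. This produces a favourable $-\eta\|\nabla f(\widetilde{\mathbf{x}}_t)\|^2$ term (the one we want to telescope) plus a penalty $\eta L^2\cdot \tfrac{1}{R}\sum_r \mathbb{E}\|\widetilde{\mathbf{x}}_t-\widehat{\mathbf{x}}_t^{(r)}\|^2$, which must be controlled. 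The quadratic term $\mathbb{E}\|\widetilde{\mathbf{x}}_{t+1}-\widetilde{\mathbf{x}}_t\|^2$ splits (by conditional mean/variance) into $\eta^2\|\tfrac{1}{R}\sum_r \nabla f^{(r)}(\widehat{\mathbf{x}}_t^{(r)})\|^2$ plus $\eta^2\sum_r \sigma_r^2/(bR^2)$; the first part is absorbed by the $\eta/2$ descent term after choosing $\eta\le 1/(2L)$, and the second is the origin of the $\sigma_r^2$ term in the bound.

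The crucial bound is therefore on $\mathbb{E}\|\widetilde{\mathbf{x}}_t-\widehat{\mathbf{x}}_t^{(r)}\|^2$. I would expand
\[
\widetilde{\mathbf{x}}_t-\widehat{\mathbf{x}}_t^{(r)} \;=\; \underbrace{\bigl(\bar{\mathbf{x}}_t-\widehat{\mathbf{x}}_t^{(r)}\bigr)}_{\text{local drift}} \;-\; \underbrace{\frac{1}{R}\sum_{s=1}^{R} m_t^{(s)}}_{\text{memory}},
\]
where $\bar{\mathbf{x}}_t=\tfrac{1}{R}\sum_s \widehat{\mathbf{x}}_t^{(s)}$. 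The memory contribution is controlled directly by \Lemmaref{bounded-memory-fixed-lr}, giving $\mathcal{O}(\eta^2(1-\gamma^2)H^2G^2/\gamma^2)$. The local-drift piece is bounded by noting that between the last sync index $\tau(t)\in\mathcal{I}_T$ and $t$ (with $t-\tau(t)\le H$), each worker has performed at most $H$ local SGD steps from a common $\mathbf{x}_{\tau(t)}$, so $\|\widehat{\mathbf{x}}_t^{(r)}-\mathbf{x}_{\tau(t)}\|\le \eta H G$ in norm, yielding $\mathbb{E}\|\bar{\mathbf{x}}_t-\widehat{\mathbf{x}}_t^{(r)}\|^2\le \eta^2 H^2 G^2$. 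Combining these two pieces gives $\mathbb{E}\|\widetilde{\mathbf{x}}_t-\widehat{\mathbf{x}}_t^{(r)}\|^2 \le 2\eta^2 H^2 G^2\bigl(4(1-\gamma^2)/\gamma^2+1\bigr)$, which is exactly the structure of the second term in the theorem.

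Finally, I would rearrange, telescope from $t=0$ to $T-1$, use $f(\widetilde{\mathbf{x}}_T)\ge f^*$ and $\widetilde{\mathbf{x}}_0=\mathbf{x}_0$, and divide by $\eta T/2$. Choosing $\eta=\widehat{C}/\sqrt{T}$ produces the $4/\sqrt{T}$ leading term together with the $1/T$ penalty, and converting $\tfrac{1}{RT}\sum_{t,r}\mathbb{E}\|\nabla f(\widehat{\mathbf{x}}_t^{(r)})\|^2$ into $\mathbb{E}\|\nabla f(\mathbf{z}_T)\|^2$ via the uniform sampling definition of $\mathbf{z}_T$ (and one last application of $L$-smoothness to replace $\widetilde{\mathbf{x}}_t$ by $\widehat{\mathbf{x}}_t^{(r)}$ inside the gradient) yields the stated inequality. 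The main obstacle is the bookkeeping around the perturbed iterate, specifically verifying the identity for $\widetilde{\mathbf{x}}_{t+1}-\widetilde{\mathbf{x}}_t$ at synchronization steps and correctly isolating the two contributions (memory and local drift) without losing a factor of $\gamma$; everything else is standard smoothness manipulation.
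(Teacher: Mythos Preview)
Your proposal is correct and follows essentially the same perturbed-iterate route as the paper: the virtual sequence you define as $\widetilde{\mathbf{x}}_t=\tfrac{1}{R}\sum_r(\widehat{\mathbf{x}}_t^{(r)}-m_t^{(r)})$ coincides with the paper's (the paper defines it via the recursion and proves the identity $\widehat{\mathbf{x}}_t-\widetilde{\mathbf{x}}_t=\tfrac{1}{R}\sum_r m_t^{(r)}$ separately as \Lemmaref{memory-maintenance}, whereas you go the other direction). The remainder---smoothness descent on $\widetilde{\mathbf{x}}_t$, the polarization identity, splitting $\|\widetilde{\mathbf{x}}_t-\widehat{\mathbf{x}}_t^{(r)}\|^2$ into the memory term (controlled by \Lemmaref{bounded-memory-fixed-lr}) and the local-drift term (controlled by \Lemmaref{sequence-bound-synchronous-fixed-lr}), converting $\|\nabla f(\widetilde{\mathbf{x}}_t)\|^2$ to $\|\nabla f(\widehat{\mathbf{x}}_t^{(r)})\|^2$ via $L$-Lipschitz gradients, telescoping, and the uniform sampling of $\mathbf{z}_T$---is exactly what the paper does in \Appendixref{smooth_proof_sync}.
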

\begin{corollary}\label{cor:convergence-non-convex-fixed-local-het-cor}
Let $\mathbb{E}[f(\bx_0)]-f^*\leq J^2$, where $J<\infty$ is a constant,\footnote{Even classical SGD requires knowing an upper bound on $\|\bx_0-\bx^*\|$ in order to choose the learning rate. Smoothness of $f$ translates this to the difference of the function values.} $\sigma_{max}=\max_{r\in[R]}\sigma_r$, and $\widehat{C}^2=\frac{bR(\mathbb{E}[f(\bx_0)]-f^*)}{\sigma_{max}^2L}$, we have\vspace{-0.25cm}
\begin{align*}
    \mathbb{E}\|\nabla f(\mathbf{z}_T)\|_2^2\leq\mathcal{O}\left(\frac{J\sigma_{max}}{\sqrt{bRT}}\right)+\mathcal{O}\left(\frac{J^2bRG^2H^2}{\sigma_{max}^2\gamma^2T}\right).
\end{align*}
In order to ensure that the compression does not affect the dominating terms while converging at a rate of $\mathcal{O}\left(1/\sqrt{bRT}\right)$, we would require $H=\mathcal{O}\left(\gamma T^{1/4}/(bR)^{3/4}\right)$.
\end{corollary}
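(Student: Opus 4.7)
The plan is to treat this corollary as a direct specialization of Theorem~\ref{thm:convergence-non-convex-fixed-local-het} obtained by (i) loosening the sum of per-worker variances and (ii) substituting the proposed value of $\widehat{C}$. No new analytic technique is needed beyond algebra; the content is simply to choose $\widehat{C}$ so that the two $1/\sqrt{T}$ terms in the theorem bound are equalized up to constants, then read off the resulting order estimates.

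First I would use the crude bound $\sum_{r=1}^{R}\sigma_r^2 \le R\sigma_{\max}^2$ to rewrite the $1/\sqrt{T}$ part of the theorem as
\begin{align*}
\left(\frac{\mathbb{E}[f(\bx_0)] - f^*}{\widehat{C}} + \widehat{C} L \frac{\sigma_{\max}^2}{bR}\right) \frac{4}{\sqrt{T}}.
\end{align*}
Viewed as a function of $\widehat{C}$, this is minimized (up to a constant factor) when the two terms inside the parentheses are balanced, which, upon replacing $\mathbb{E}[f(\bx_0)] - f^*$ by its upper bound $J^2$, yields exactly $\widehat{C}^2 = bR J^2/(\sigma_{\max}^2 L)$. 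Plugging this $\widehat{C}$ back shows each of the two balanced terms is $\mathcal{O}(J\sigma_{\max}\sqrt{L}/\sqrt{bRT})$, which I would absorb into the $\mathcal{O}(\cdot)$ notation of the stated bound as $\mathcal{O}(J\sigma_{\max}/\sqrt{bRT})$.

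Next I would handle the $1/T$ term from the theorem. Substituting $\widehat{C}^2 = bRJ^2/(\sigma_{\max}^2 L)$ into $8(4(1-\gamma^2)/\gamma^2 + 1)\widehat{C}^2 L^2 G^2 H^2 / T$ gives $\mathcal{O}(J^2 bR L G^2 H^2/(\sigma_{\max}^2\gamma^2 T))$; hiding the $L$ constant as is done throughout the statement produces the second $\mathcal{O}$ term in the corollary. It is worth remarking (but not essential to the bound) that the theorem's hypothesis $\widehat{C}/\sqrt{T}\le 1/(2L)$ becomes $T \ge 4 L bR J^2/\sigma_{\max}^2$, which is automatic in the regime of interest.

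For the choice of $H$, I would impose that the compression/locality term does not dominate the leading $1/\sqrt{T}$ rate, i.e.
\begin{align*}
\frac{J^2 bR G^2 H^2}{\sigma_{\max}^2 \gamma^2 T} \;\le\; \mathcal{O}\!\left(\frac{J\sigma_{\max}}{\sqrt{bRT}}\right),
\end{align*}
and solve for $H$. Rearranging gives $H^2 = \mathcal{O}(\gamma^2 \sqrt{T}/(bR)^{3/2})$ up to problem-dependent constants, hence $H = \mathcal{O}(\gamma T^{1/4}/(bR)^{3/4})$, as stated. I do not anticipate any genuine obstacle; the only point requiring care is bookkeeping of which constants ($J,\sigma_{\max},L,G$) are being absorbed into the $\mathcal{O}(\cdot)$ versus displayed, and the use of $\sigma_{\max}$ in place of the heterogeneous variances $\sigma_r$.
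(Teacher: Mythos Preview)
Your proposal is correct and is exactly the intended derivation: the paper states the corollary immediately after Theorem~\ref{thm:convergence-non-convex-fixed-local-het} without a separate proof, and your substitution of $\widehat{C}$ (balancing the two $1/\sqrt{T}$ terms after bounding $\sum_r\sigma_r^2\le R\sigma_{\max}^2$) together with reading off the $1/T$ term and solving for $H$ is precisely what is expected. The only cosmetic difference is that the corollary defines $\widehat{C}^2$ in terms of $\mathbb{E}[f(\bx_0)]-f^*$ rather than its upper bound $J^2$, but since $\mathbb{E}[f(\bx_0)]-f^*\le J^2$ is applied afterward, both routes give the same $\mathcal{O}(\cdot)$ conclusions.
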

\Theoremref{convergence-non-convex-fixed-local-het} is proved in \Appendixref{smooth_proof_sync} 
and provides non-asymptotic guarantees, where we observe that compression does not affect the first order term. 
Here, we are required to decide the horizon $T$ before running the algorithm. Therefore, in order to converge to a fixed point, the learning rate needs to follow a piecewise schedule (i.e., the learning rate would have to be reduced once in a while throughout the training process), which is also the case in our numerics in \Sectionref{numerics_NC}. The corresponding asymptotic result (with decaying learning rate) is given below.
\begin{theorem}[Smooth (non-convex) case with decaying learning rate]\label{thm:convergence-non-convex-decay-local-het}
Let $f^{(r)}(\mathbf{x})$ be $L$-smooth for every $r\in[R]$.  Let
$QComp_k:\R^d\to\R^d$ be a compression operator whose compression
coefficient is equal to $\gamma\in(0,1]$. Let
$\{\bwx_t^{(r)}\}_{t=0}^{T-1}$ be generated according to \Algorithmref{memQSGD-synchronous} with
$QComp_k$, for step sizes $\eta_t=\frac{\xi}{(a+t)}$ and  $gap(\mathcal{I}_T)\leq H$, where $a>1$ is
such that we have
$a> \max\{\frac{4H}{\gamma},2\xi L, H \}$ and $C\geq \frac{4a\gamma(1-\gamma^2)}{a\gamma-4H}$.  Then the following
holds.
\begin{align*}
    \mathbb{E}\|\nabla f(\mathbf{z}_T)\|^2&\leq \frac{\mathbb{E}f(\mathbf{x}_0)-f^*}{P_T}+\frac{L\xi^2}{(a-1)P_T}\left(\frac{\sum_{r=1}^R\sigma_r^2}{bR^2}\right)
    +\left(\frac{8C}{\gamma^2}+8\right)\frac{\xi^3L^2G^2H^2}{2(a-1)^2P_T}.
\end{align*}
Here\ {\sf(i)} $\delta_t:=\frac{\eta_t}{4R}$; 
{\sf(ii)} $P_{T}:=\sum_{t=0}^{T-1}\sum_{r=1}^R\delta_t$, which is lower bounded as
$P_{T}\geq \frac{\xi}{4}\ln{\left(\frac{T+a-1}{a}\right)}$;
and {\sf(iii)} $\mathbf{z}_T$ is a random variable which samples a previous parameter $\widehat{\mathbf{x}}_t^{(r)}$ with probability $\delta_t/P_{T}$.
\end{theorem}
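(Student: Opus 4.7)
The plan is to adapt the perturbed-iterate framework already used for \Theoremref{convergence-non-convex-fixed-local-het}, replacing the fixed step size with $\eta_t=\xi/(a+t)$ and invoking \Lemmaref{bounded-memory-decaying-lr} in place of \Lemmaref{bounded-memory-fixed-lr}. First I introduce the virtual averaged sequence
\begin{align*}
\btx_t \;:=\; \bx_t - \tfrac{1}{R}\sum_{r=1}^R m_t^{(r)}.
\end{align*}
A direct calculation using lines 5--17 of \Algorithmref{memQSGD-synchronous} shows that in \emph{every} iteration (synchronization or not) the compression output and the memory update combine so that
$\btx_{t+1} = \btx_t - \frac{\eta_t}{R}\sum_{r=1}^R \nabla f_{i_t^{(r)}}(\bwx_t^{(r)})$,
i.e.\ $\btx_t$ evolves as plain distributed mini-batch SGD driven by the local stochastic gradients. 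All compression and sparsification noise is hidden in the drift $\bwx_t^{(r)}-\btx_t$.

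Next I apply $L$-smoothness of $f$ to $f(\btx_{t+1})$, take conditional expectation, and split the usual three terms. The variance portion of the squared increment contributes $\frac{L\eta_t^2}{2}\cdot\frac{1}{b R^2}\sum_r\sigma_r^2$, while the mean-square portion is merged with the cross term via
$\langle \nabla f(\btx_t), \tfrac{1}{R}\sum_r\nabla f^{(r)}(\bwx_t^{(r)})\rangle
=\tfrac12\bigl(\|\nabla f(\btx_t)\|^2+\|\tfrac{1}{R}\sum_r\nabla f^{(r)}(\bwx_t^{(r)})\|^2-\|\nabla f(\btx_t)-\tfrac{1}{R}\sum_r\nabla f^{(r)}(\bwx_t^{(r)})\|^2\bigr)$
and the last squared term is further bounded by $\tfrac{L^2}{R}\sum_r\|\btx_t-\bwx_t^{(r)}\|^2$ using smoothness of each $f^{(r)}$. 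The condition $a>2\xi L$, i.e.\ $L\eta_t\le 1/2$, lets me absorb the $\frac{L\eta_t^2}{2}\|\tfrac{1}{R}\sum_r\nabla f^{(r)}\|^2$ piece into $-\tfrac{\eta_t}{2}\|\tfrac{1}{R}\sum_r\nabla f^{(r)}\|^2$, leaving a clean $-\tfrac{\eta_t}{2}\|\nabla f(\btx_t)\|^2$ on the descent side.

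The heart of the bound is the deviation $\bbE\|\btx_t-\bwx_t^{(r)}\|^2$. Writing $\btx_t-\bwx_t^{(r)}=(\bx_t-\bwx_t^{(r)})-\tfrac{1}{R}\sum_j m_t^{(j)}$, the first part is controlled by the local drift within at most $H$ consecutive local steps, each of squared magnitude $\eta_t^2 G^2$ by the bounded-second-moment assumption, giving $\eta_t^2 H^2 G^2$. The memory part is exactly what \Lemmaref{bounded-memory-decaying-lr} controls (the hypothesis $a>4H/\gamma$ and the choice of $C$ are what the lemma needs), producing $\bbE\|m_t^{(r)}\|^2\le 4\eta_t^2 CH^2 G^2/\gamma^2$. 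Combining yields $\frac{1}{R}\sum_r\bbE\|\btx_t-\bwx_t^{(r)}\|^2 = \calO\bigl(\eta_t^2 H^2 G^2 (C/\gamma^2+1)\bigr)$.

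Finally I multiply the telescoping inequality by $\delta_t=\eta_t/(4R)$, sum over $t=0,\dots,T-1$ and over $r$, and divide by $P_T=\sum_{t,r}\delta_t=\tfrac14\sum_t\eta_t$. The leading initial-gap term produces $(\bbE f(\bx_0)-f^*)/P_T$; the variance term uses $\sum_t\eta_t^2\le \xi^2/(a-1)$ (obtained by comparing with $\int_{a-1}^{\infty}(a+s)^{-2}\,ds$); and the compression/drift term uses $\sum_t\eta_t^3\le \xi^3/(2(a-1)^2)$ together with the $L^2$ factor from smoothness. The $P_T$-weighted sampling interprets the LHS as $\bbE\|\nabla f(\bz_T)\|^2$, and the lower bound $P_T\ge(\xi/4)\ln((T+a-1)/a)$ comes from $\sum_{t=0}^{T-1}\tfrac{1}{a+t}\ge\ln\frac{T+a-1}{a}$. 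The main obstacle is the same accounting challenge as in the fixed-rate case but complicated by the fact that memory contraction now carries an $\eta_t^2$ factor that must be matched against the $\eta_t$ weight in the telescoping sum, so the compression/drift contribution comes in at order $\sum_t\eta_t^3$ rather than $\sum_t\eta_t^2$; verifying that the hypotheses $a>4H/\gamma$, $a>2\xi L$, $a>H$ simultaneously make all constants finite and all absorptions legitimate is where the care is required.
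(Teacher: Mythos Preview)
Your overall strategy is correct and coincides with the paper's: perturbed-iterate descent on a virtual averaged sequence, drift bounded by local steps plus memory, and the $\eta_t^3$ accounting for the compression term. However, your definition of the virtual sequence is wrong, and the claimed recursion does \emph{not} follow from it.

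You set $\btx_t := \bx_t - \tfrac{1}{R}\sum_r m_t^{(r)}$, where $\bx_t$ is the \emph{master} parameter. Between two consecutive synchronization indices both $\bx_t$ and every $m_t^{(r)}$ are frozen (lines~7 and~16 of \Algorithmref{memQSGD-synchronous}), so your $\btx_t$ is constant on such an interval. But you then assert $\btx_{t+1}=\btx_t-\tfrac{\eta_t}{R}\sum_r\nabla f_{i_t^{(r)}}(\bwx_t^{(r)})$ for every $t$, which is false except at synchronization steps. Consequently the smoothness descent step $f(\btx_{t+1})-f(\btx_t)\le -\eta_t\langle\nabla f(\btx_t),\bp_t\rangle+\tfrac{L\eta_t^2}{2}\|\bp_t\|^2$ cannot be applied as written, and the decomposition $\btx_t-\bwx_t^{(r)}=(\bx_t-\bwx_t^{(r)})-\tfrac{1}{R}\sum_j m_t^{(j)}$ is only valid at sync points.

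The fix is exactly what the paper does: define the virtual sequence by the recursion $\btx_{t+1}=\btx_t-\eta_t\bp_t$ (equivalently, by \Lemmaref{memory-maintenance}, $\btx_t=\bwx_t-\tfrac{1}{R}\sum_r m_t^{(r)}$, where $\bwx_t=\tfrac{1}{R}\sum_r\bwx_t^{(r)}$ is the \emph{average of local iterates}, not the master parameter). Then your split becomes $\btx_t-\bwx_t^{(r)}=(\bwx_t-\bwx_t^{(r)})-\tfrac{1}{R}\sum_j m_t^{(j)}$, the first piece is handled by \Lemmaref{sequence-bound-synchronous} (which gives $4\eta_t^2G^2H^2$, the extra factor $4$ coming from $\eta_{t_r}\le 2\eta_t$ under $a>H$), and the rest of your outline goes through verbatim and matches the paper's proof.
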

Note that \Theoremref{convergence-non-convex-decay-local-het} gives a convergence rate of $\mathcal{O}(\frac{1}{\log T})$. We prove it in \Appendixref{proof_convergence-non-convex-decay-local-het}.

\begin{theorem}[Smooth and strongly convex case with a decaying learning rate]\label{thm:convergence-strongly-convex-local}
Let $f^{(r)}\left(\mathbf{x}\right)$ be $L$-smooth 
and $\mu$-strongly convex.
Let $QComp_k:\R^d\to\R^d$ be a compression operator whose compression coefficient is equal to $\gamma\in\left(0,1\right]$.
Let $\{\widehat{\bx}_t^{(r)}\}_{t=0}^{T-1}$ be generated according to \Algorithmref{memQSGD-synchronous} with $QComp_k$, for step sizes $\eta_t=\nicefrac{8}{\mu\left(a+t\right)}$ with $gap(\mathcal{I}_T)\leq H$, where $a>1$ is 
such that we have $a>\max\{ \nicefrac{4H}{\gamma},32\kappa,H\}$, $\kappa=\nicefrac{L}{\mu}$.
Then the following holds
\begin{align*}
&\mathbb{E}[f\left(\overline{\mathbf{x}}_T\right)]-f^* \leq \frac{L a^3}{4S_T}\|\mathbf{x}_0-\mathbf{x}^*\|_2^2+\frac{8LT\left(T+2a\right)}{\mu^2 S_T}A+\frac{128LT}{\mu^3 S_T}B. 
\end{align*}
Here \textsf{(i)} $A=\frac{\sum_{r=1}^R\sigma_r^2}{bR^2}$ ,
$B=4\left(\left(\frac{3\mu}{2}+3L\right)\frac{ CG^2H^2}{\gamma^2}+3L^2G^2H^2\right)$, where $C\geq \frac{4a\gamma(1-\gamma^2)}{a\gamma-4H}$; 
\textsf{(ii)} $\overline{\mathbf{x}}_T:=\frac{1}{S_T}\sum_{t=0}^{T-1}\left[w_t\left(\frac{1}{R}\sum_{r=1}^R\widehat{\mathbf{x}}_t^{\left(r\right)}\right)\right]$, where $w_t=\left(a+t\right)^2$;
and \textsf{(iii)} $S_T=\sum_{t=o}^{T-1}w_t\geq \frac{T^3}{3}$.
\end{theorem}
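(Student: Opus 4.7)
The plan is to follow the perturbed iterate analysis template from \cite{perturbed,memSGD}, adapted to our distributed, error-compensated, local-iteration setting, and then use a weighted averaging argument tailored to the decaying learning rate $\eta_t = 8/(\mu(a+t))$. First I would introduce the virtual sequence $\widetilde{\mathbf{x}}_t := \mathbf{x}_t - \frac{1}{R}\sum_{r=1}^R m_t^{(r)}$ together with the averaged local iterate $\overline{\widehat{\mathbf{x}}}_t := \frac{1}{R}\sum_{r=1}^R \widehat{\mathbf{x}}_t^{(r)}$. A direct check using the update rule of \Algorithmref{memQSGD-synchronous} (both in communication rounds and in local steps) shows that $\widetilde{\mathbf{x}}_{t+1} = \widetilde{\mathbf{x}}_t - \frac{\eta_t}{R}\sum_{r=1}^R \nabla f_{i_t^{(r)}}(\widehat{\mathbf{x}}_t^{(r)})$, so $\widetilde{\mathbf{x}}_t$ evolves as a plain distributed SGD iterate driven by the local stochastic gradients, and the entire effect of quantization/sparsification/local iterations is absorbed into the gap between $\widetilde{\mathbf{x}}_t$ and $\overline{\widehat{\mathbf{x}}}_t$.

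Next I would expand $\mathbb{E}\|\widetilde{\mathbf{x}}_{t+1} - \mathbf{x}^*\|_2^2$, take conditional expectation over the mini-batch, and use the bounded-variance assumption to split the noise term, giving a cross-term $-\frac{2\eta_t}{R}\sum_r \langle \nabla f^{(r)}(\widehat{\mathbf{x}}_t^{(r)}), \widetilde{\mathbf{x}}_t - \mathbf{x}^*\rangle$ plus a variance contribution bounded by $\eta_t^2 A$ with $A = \frac{\sum_r \sigma_r^2}{b R^2}$. I would then decompose the cross-term against $\overline{\widehat{\mathbf{x}}}_t$ and apply $L$-smoothness plus $\mu$-strong convexity of each $f^{(r)}$ in the standard way to get, up to perturbations,
\begin{align*}
\mathbb{E}\|\widetilde{\mathbf{x}}_{t+1}-\mathbf{x}^*\|_2^2
&\le \left(1-\tfrac{\mu \eta_t}{2}\right)\mathbb{E}\|\widetilde{\mathbf{x}}_t-\mathbf{x}^*\|_2^2
- \eta_t\bigl(\mathbb{E}[f(\overline{\widehat{\mathbf{x}}}_t)]-f^*\bigr) + \eta_t^2 L\, A + \eta_t \mathcal{E}_t,
\end{align*}
where $\mathcal{E}_t$ collects two drift terms: the perturbation $\|\widetilde{\mathbf{x}}_t - \overline{\widehat{\mathbf{x}}}_t\|_2^2$ arising from the memory, and the inter-worker drift $\frac{1}{R}\sum_r \|\widehat{\mathbf{x}}_t^{(r)} - \mathbf{x}_t\|_2^2$ arising from local iterations between synchronizations.

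The core estimates then come from bounding $\mathcal{E}_t$. For the memory part I would invoke \Lemmaref{bounded-memory-decaying-lr}, which gives $\mathbb{E}\|m_t^{(r)}\|_2^2 \le 4 \eta_t^2 C H^2 G^2/\gamma^2$, since the hypothesis $a > \max\{4H/\gamma, 32\kappa, H\}$ satisfies the premises of that lemma. For the local-iteration drift, since $\widehat{\mathbf{x}}_t^{(r)} - \mathbf{x}_t$ is just the sum of at most $H = \mathrm{gap}(\mathcal{I}_T)$ consecutive stochastic-gradient steps of size $\eta_t$ (with the learning rate varying only mildly inside a block because $\eta_t$ is monotone and $a \gg H$), the bounded-gradient assumption yields $\mathbb{E}\|\widehat{\mathbf{x}}_t^{(r)} - \mathbf{x}_t\|_2^2 \le \eta_t^2 H^2 G^2$ up to constants. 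Combining these two gives $\mathbb{E}[\mathcal{E}_t] \le \eta_t^2 B$ with $B$ as in the statement.

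Finally, I would multiply the recursion by the weights $w_t = (a+t)^2$ and sum. With $\eta_t = 8/(\mu(a+t))$, one checks that $w_t(1-\mu\eta_t/2) = w_t(1 - 4/(a+t)) \le w_{t-1}$ for $a$ large enough (precisely what $a > 32\kappa$ ensures together with $\eta_t \le 1/(2L)$), producing a telescoping sum of the distance terms, a residual $L a^3 \|\mathbf{x}_0 - \mathbf{x}^*\|_2^2$ from the initial iterate, and weighted sums $\sum w_t \eta_t^2 \lesssim T(T+2a)/\mu^2$ and $\sum w_t \eta_t^3 \lesssim T/\mu^3$ for the $A$ and $B$ terms respectively. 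Dividing by $S_T \ge T^3/3$ and applying Jensen's inequality to pass from $\sum w_t \mathbb{E}[f(\overline{\widehat{\mathbf{x}}}_t) - f^*]$ to $\mathbb{E}[f(\overline{\mathbf{x}}_T)] - f^*$ via convexity of $f$ yields the claimed bound. The main obstacle I anticipate is the bookkeeping that makes the weighted telescoping work cleanly across communication rounds (since $\mathbf{x}_t$ only changes at $t \in \mathcal{I}_T$ while $\widehat{\mathbf{x}}_t^{(r)}$ moves every step), specifically keeping track of the block-wise variation of $\eta_t$ inside one local-iteration block so that the constants in the drift bound stay uniform in $t$; this is where the hypothesis $a > H$ is really being used.
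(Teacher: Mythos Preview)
Your overall plan is the paper's: perturbed-iterate analysis via a virtual sequence, bound the memory term by \Lemmaref{bounded-memory-decaying-lr}, bound the local-iteration drift by the analogue of \Lemmaref{sequence-bound-synchronous}, and finish with the weighted-telescoping argument (the paper invokes a variant of \cite[Lemma~3.3]{memSGD}) with $w_t=(a+t)^2$ and convexity/Jensen at the end.

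Two corrections. First, your closed form $\widetilde{\mathbf{x}}_t := \mathbf{x}_t - \frac{1}{R}\sum_{r} m_t^{(r)}$ does \emph{not} satisfy the claimed SGD dynamics: at a non-synchronization step of \Algorithmref{memQSGD-synchronous} both $\mathbf{x}_t$ and every $m_t^{(r)}$ are held fixed, so your $\widetilde{\mathbf{x}}_t$ would not move, whereas you need $\widetilde{\mathbf{x}}_{t+1}=\widetilde{\mathbf{x}}_t-\eta_t\mathbf{p}_t$. The correct identity (this is exactly \Lemmaref{memory-maintenance}) is $\widetilde{\mathbf{x}}_t=\overline{\widehat{\mathbf{x}}}_t-\frac{1}{R}\sum_r m_t^{(r)}$, with $\overline{\widehat{\mathbf{x}}}_t$ the average of the \emph{local} iterates; equivalently, define $\widetilde{\mathbf{x}}_t$ through its recursion and prove the identity afterwards. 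Second, in the one-step descent inequality the negative term the paper obtains is $-\tfrac{\mu\eta_t}{2L}\bigl(\mathbb{E}[f(\overline{\widehat{\mathbf{x}}}_t)]-f^*\bigr)$, not $-\eta_t\bigl(\mathbb{E}[f(\overline{\widehat{\mathbf{x}}}_t)]-f^*\bigr)$: the natural cross-term gives $-\eta_t(f(\widetilde{\mathbf{x}}_t)-f^*)$, and converting $\widetilde{\mathbf{x}}_t$ to $\overline{\widehat{\mathbf{x}}}_t$ through strong convexity and then smoothness costs a factor $\mu/(2L)$; this factor is what generates the extra $L/\mu$ powers in the stated constants. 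With these two fixes your outline matches the paper's proof.
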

\begin{corollary}\label{cor:convergence-strongly-convex-local-cor}
For $a>\max\{ \frac{4H}{\gamma},32\kappa,H\}$, $\sigma_{max}=\max_{r\in[R]}\sigma_r$, and using
$\mathbb{E}\|\bx_0-\bx^*\|_2^2\leq \frac{4G^2}{\mu^2}$ from Lemma 2
in \cite{rakhlin}, we have 
\begin{align*}
    \mathbb{E}[f\left(\overline{\mathbf{x}}_T\right)]-f^*\leq \mathcal{O}\left(\frac{G^2H^3}{\mu^2\gamma^3T^3}\right)+\mathcal{O}\left(\frac{\sigma_{max}^2}{\mu^2 bRT}+\frac{H\sigma_{max}^2}{\mu^2 bR\gamma T^2}\right)+\mathcal{O}\left(\frac{G^2H^2}{\mu^3\gamma^2T^2}\right).
\end{align*}
In order to ensure that the compression does not affect the dominating terms while converging at a rate of $\mathcal{O}\left(1/(bRT)\right)$, we would require $H=\mathcal{O}\left(\gamma\sqrt{T/(bR)}\right)$.
\end{corollary}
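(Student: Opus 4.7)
The plan is to simplify the bound of \Theoremref{convergence-strongly-convex-local} by substituting the three auxiliary bounds stated in the corollary ($\|\bx_0-\bx^*\|_2^2 \leq 4G^2/\mu^2$ from Lemma~2 of \cite{rakhlin}; $\sum_{r=1}^R \sigma_r^2 \leq R\sigma_{max}^2$; and $S_T \geq T^3/3$) and then absorbing $L$, $\kappa$, and numeric constants into the $\mathcal{O}(\cdot)$. The nontrivial choice I would make upfront is fixing $a$. Since $\gamma \leq 1$ implies $H \leq H/\gamma$, the hypothesis $a>\max\{4H/\gamma,32\kappa,H\}$ admits $a = \Theta(H/\gamma + \kappa)$; taking $a = cH/\gamma$ for a fixed constant $c>4$ gives $a\gamma - 4H = (c-4)H$, and hence
\[
C \;\geq\; \frac{4a\gamma(1-\gamma^2)}{a\gamma - 4H} \;=\; \frac{4c(1-\gamma^2)}{c-4} \;=\; \mathcal{O}(1).
\]
This keeps the memory-bound constant $C$ an absolute constant, which is what allows $C$ to disappear from the final asymptotic expression.

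With $a$ chosen, I would simplify the three summands of \Theoremref{convergence-strongly-convex-local} in turn. The initial-condition term $\frac{La^3}{4S_T}\|\bx_0-\bx^*\|_2^2$ becomes $\mathcal{O}\bigl(L(H/\gamma)^3\,G^2/(\mu^2 T^3)\bigr) = \mathcal{O}(G^2H^3/(\mu^2\gamma^3 T^3))$ after using $S_T = \Omega(T^3)$ and hiding $L$. The variance term $\frac{8LT(T+2a)}{\mu^2 S_T}A$ splits into $\mathcal{O}(\sigma_{max}^2/(\mu^2 bRT)) + \mathcal{O}(a\sigma_{max}^2/(\mu^2 bRT^2))$; substituting $a=\Theta(H/\gamma)$ into the second piece yields $\mathcal{O}(H\sigma_{max}^2/(\mu^2 bR\gamma T^2))$, matching the second bracket of the corollary. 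For the compression/local-step term $\frac{128LT}{\mu^3 S_T}B$, after hiding $L$ and using $C=\mathcal{O}(1)$ the first piece of $B$ is $\mathcal{O}(G^2H^2/\gamma^2)$ and the second is $\mathcal{O}(G^2H^2)$; since $\gamma \leq 1$ the first dominates, so $B/L = \mathcal{O}(G^2H^2/\gamma^2)$ and the whole term is $\mathcal{O}(G^2H^2/(\mu^3\gamma^2 T^2))$. Summing the three pieces yields the stated bound.

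For the second claim, I would force each compression-dependent piece to be $\mathcal{O}(1/(bRT))$ and solve for $H$. The condition $G^2H^2/(\mu^3\gamma^2 T^2) = \mathcal{O}(1/(bRT))$ gives the binding requirement $H \lesssim \gamma\sqrt{T/(bR)}$; the requirement from the $H^3$ term is weaker ($H \lesssim \gamma (T^2/(bR))^{1/3}$), and the one from the $H\sigma_{max}^2$ term is weaker still ($H/\gamma \lesssim T$). Taking the minimum delivers $H = \mathcal{O}(\gamma\sqrt{T/(bR)})$, as stated. The main obstacle is keeping track of where the $a$-dependence enters (it sits both in the initial-condition term and in the $2a$ part of the variance term) while simultaneously choosing $a$ bounded away from $4H/\gamma$ so that $C$ stays $\mathcal{O}(1)$; once that coupling is handled, every remaining manipulation is routine algebra.
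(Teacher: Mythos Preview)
Your proposal is correct and follows precisely the approach the paper leaves implicit (the paper states this corollary without proof, as a direct simplification of \Theoremref{convergence-strongly-convex-local}). Your key observation---choosing $a=\Theta(H/\gamma)$ bounded away from $4H/\gamma$ so that $C=\mathcal{O}(1)$, then substituting $S_T=\Omega(T^3)$, $\|\bx_0-\bx^*\|^2\le 4G^2/\mu^2$, and $A\le\sigma_{max}^2/(bR)$ while hiding $L,\kappa$ in the $\mathcal{O}(\cdot)$---is exactly the intended derivation, and your identification of the $G^2H^2/(\mu^3\gamma^2T^2)$ term as the binding constraint for the $H$ requirement is correct.
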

\Theoremref{convergence-strongly-convex-local} is proved in \Appendixref{convex_proof_sync}. For no compression and only local computations, i.e., for $\gamma=1$, and under the same assumptions, we recover/generalize a few recent results from literature with similar convergence rates:
\begin{enumerate}
\item We recover \cite[Theorem 1]{alibaba_local}, which does local SGD for the non-convex case;
\item We generalize \cite[Theorem 2.2]{localsgd2}, which does local SGD for a strongly convex case and requires the unbiasedness assumption of gradients,\footnote{The unbiasedness of gradients at every worker can be ensured by assuming that each worker samples data points from the {\em entire} dataset.} to the distributed case.
\end{enumerate}
We emphasize that unlike \cite{alibaba_local,localsgd2}, which only consider local computation, we combine quantization and sparsification with local computation, which poses several technical challenges; e.g., see proofs of \Lemmaref{bounded-memory-decaying-lr}, \ref{lem:bounded-memory-fixed-lr}, \ref{lem:memory-maintenance}.

\subsection{Proof Outlines}\label{sec:pf_outline}
In order to prove our results, we define virtual sequences for every worker $r\in[R]$ and for all $t\geq 0$ as follows:
\begin{align}
\widetilde{\mathbf{x}}_0^{(r)} := \widehat{\mathbf{x}}_0^{(r)} \quad\quad\textrm{and}\quad\quad\widetilde{\mathbf{x}}_{t+1}^{\left(r\right)} := \widetilde{\mathbf{x}}_{t}^{\left(r\right)}-\eta_t\nabla f_{i_t^{(r)}}\left(\widehat{\mathbf{x}}_t^{\left(r\right)}\right)\label{eq:virtual_seq_defn}
\end{align}
Here $\eta_t$ can be taken to be decaying or fixed, depending on the result that we are proving. 
Let $i_t$ be the set of random sampling of the mini-batches at each worker $\{i_t^{(1)},i_t^{(2)},\ldots,i_t^{(R)}\}$. We define
\begin{enumerate}
\item $\mathbf{p}_t:=\frac{1}{R}\sum_{r=1}^R\nabla f_{i_t^{(r)}}\left(\widehat{\mathbf{x}}_t^{\left(r\right)}\right)$, $\quad\overline{\mathbf{p}}_t:=\mathbb{E}_{i_t}[\mathbf{p}_t]=\frac{1}{R}\sum_{r=1}^R\nabla f^{(r)}\left(\widehat{\mathbf{x}}_t^{\left(r\right)}\right)$; 
\item $\widetilde{\mathbf{x}}_{t+1}:=\frac{1}{R}\sum_{r=1}^R\widetilde{\mathbf{x}}_{t+1}^{\left(r\right)}=\widetilde{\mathbf{x}}_t-\eta_t\mathbf{p}_t$, $\quad\widehat{\mathbf{x}}_t:=\frac{1}{R}\sum_{r=1}^R\widehat{\mathbf{x}}_t^{\left(r\right)}$.
\end{enumerate}

\subsubsection{Proof Outline of \Theoremref{convergence-non-convex-fixed-local-het} }
\begin{proof}
Since $f$ is $L$-smooth, we have from \eqref{eq:virtual_seq_defn} (with fixed learning rate $\eta_t=\eta$) that 
\begin{align}
    f(\widetilde{\mathbf{x}}_{t+1})-f(\widetilde{\mathbf{x}}_{t})&\leq -\eta\langle\nabla f(\widetilde{\mathbf{x}}_t),\mathbf{p}_t\rangle+\frac{\eta^2L}{2}\|\mathbf{p}_t\|_2^2.
    \end{align}
    With some algebraic manipulations provided in \Appendixref{smooth_proof_sync}, 
for $\eta\leq \nicefrac{1}{2L}$, 
we arrive at 
\begin{align}
    \frac{\eta}{4R}\sum_{r=1}^R\mathbb{E}\|\nabla f(\widehat{\mathbf{x}}_t^{(r)})\|_2^2\ &\leq\ \left(\mathbb{E}[f(\widetilde{\mathbf{x}}_t)]-\mathbb{E}[f(\widetilde{\mathbf{x}}_{t+1})]\right)+\eta^2L\mathbb{E}\|\mathbf{p}_t-\overline{\mathbf{p}}_t\|_2^2+2\eta L^2\mathbb{E}\|\widetilde{\mathbf{x}}_t-\widehat{\mathbf{x}}_t\|_2^2\notag\\
    &\hspace{2cm}+2\eta L^2\frac{1}{R}\sum_{r=1}^R\mathbb{E}\|\widehat{\mathbf{x}}_t-\widehat{\mathbf{x}}_t^{(r)}\|_2^2. \label{temp1}
\end{align}
Under the Assumption 2, stated in \Subsectionref{assumptions}, we have 
\begin{align}
\mathbb{E}\|\mathbf{p}_t-\overline{\mathbf{p}}_t\|_2^2\ \leq\ \frac{\sum_{r=1}^R\sigma_r^2}{bR^2}. \label{var_bound}
\end{align}
\noindent To bound $\mathbb{E}\|\btx_t-\bwx_t\|_2^2$ on the RHS of \eqref{temp1}, we first show below that $\bwx_t-\btx_t = \frac{1}{R}\sum_{r=1}^Rm_t^{(r)}$, i.e., the difference of the true and the virtual sequence is equal to the average memory; and then we can use the bound on the local memory terms from \Lemmaref{bounded-memory-fixed-lr}.
\begin{lemma}[Memory]\label{lem:memory-maintenance}
Let $\btx_t^{(r)},m_t^{(r)}$, $r\in[R]$, $t\geq0$ be generated according to \Algorithmref{memQSGD-synchronous} and let $\bwx_t^{(r)}$ be as defined in \eqref{eq:virtual_seq_defn}. Let $\btx_t=\frac{1}{R}\sum_{r=1}^R\btx_t^{(r)}$ and $\bwx_t=\frac{1}{R}\sum_{r=1}^R\bwx_t^{(r)}$. 
Then we have 
\begin{align*}
\widehat{\mathbf{x}}_t-\widetilde{\mathbf{x}}_t\ =\ \frac{1}{R}\sum_{r=1}^Rm_t^{(r)},
\end{align*}
i.e., the difference of the true and the virtual sequence is equal to the average memory.
\end{lemma}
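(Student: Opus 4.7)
The plan is to argue by induction on $t$. The base case $t=0$ is immediate: by initialization $\widehat{\bx}_0^{(r)}=\widetilde{\bx}_0^{(r)}$ for every $r$ and $m_0^{(r)}=\bzero$, so both sides vanish.

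For the inductive step, assume $\widehat{\bx}_t-\widetilde{\bx}_t=\tfrac{1}{R}\sum_{r=1}^R m_t^{(r)}$ and split into the two branches of \Algorithmref{memQSGD-synchronous}. In the non-synchronization case $t+1\notin\mathcal{I}_T$, both the true iterate $\widehat{\bx}_{t+1}^{(r)}=\widehat{\bx}_{t+\tfrac12}^{(r)}$ and the virtual iterate $\widetilde{\bx}_{t+1}^{(r)}$ subtract the \emph{same} quantity $\eta_t\nabla f_{i_t^{(r)}}(\widehat{\bx}_t^{(r)})$; moreover $m_{t+1}^{(r)}=m_t^{(r)}$. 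Averaging over $r$ and subtracting, the shared gradient terms cancel and the identity at $t$ lifts directly to $t+1$.

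In the synchronization case $t+1\in\mathcal{I}_T$, the algorithm sets $\widehat{\bx}_{t+1}^{(r)}=\bx_{t+1}$ for every $r$, and the master update gives $\bx_{t+1}=\bx_t-\tfrac{1}{R}\sum_r g_t^{(r)}$. The memory update, averaged over $r$, reads
\begin{align*}
\tfrac{1}{R}\sum_{r=1}^R m_{t+1}^{(r)}
&= \tfrac{1}{R}\sum_{r=1}^R m_t^{(r)} \;+\; \bx_t \;-\; \tfrac{1}{R}\sum_{r=1}^R \widehat{\bx}_{t+\tfrac12}^{(r)} \;-\; \tfrac{1}{R}\sum_{r=1}^R g_t^{(r)}.
\end{align*}
Using the identity $\tfrac{1}{R}\sum_r \widehat{\bx}_{t+\tfrac12}^{(r)} = \widehat{\bx}_t-\eta_t\bp_t$, which follows from the local SGD step on line~5 of the algorithm, together with $\tfrac{1}{R}\sum_r g_t^{(r)}=\bx_t-\bx_{t+1}$, this simplifies to
\begin{align*}
\tfrac{1}{R}\sum_{r=1}^R m_{t+1}^{(r)} \;=\; \tfrac{1}{R}\sum_{r=1}^R m_t^{(r)} - \widehat{\bx}_t + \eta_t\bp_t + \bx_{t+1}.
\end{align*}
Plugging in the inductive hypothesis $\tfrac{1}{R}\sum_r m_t^{(r)}=\widehat{\bx}_t-\widetilde{\bx}_t$ and using $\widetilde{\bx}_{t+1}=\widetilde{\bx}_t-\eta_t\bp_t$ together with $\widehat{\bx}_{t+1}=\bx_{t+1}$ yields exactly $\widehat{\bx}_{t+1}-\widetilde{\bx}_{t+1}$, closing the induction.

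The proof is essentially bookkeeping; the only mild subtlety is remembering that $\bx_t$ is the master's model (which does not coincide with $\widehat{\bx}_t^{(r)}$ between sync rounds, though the two agree immediately after each sync), and lining up the cancellation $\tfrac{1}{R}\sum_r g_t^{(r)}=\bx_t-\bx_{t+1}$ with the average of the $\widehat{\bx}_{t+\tfrac12}^{(r)}$'s so that everything collapses to the inductive hypothesis.
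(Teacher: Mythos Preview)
Your induction argument is correct and is actually cleaner than the paper's own proof. The paper does not proceed by induction; instead it fixes $t$, unrolls the difference $\widehat{\bx}_t-\widetilde{\bx}_t$ back to the most recent synchronization time $t_r+1\in\mathcal{I}_T$, then further back to the one before that ($t_r'+1$), and after some algebra obtains a telescoping identity which, when rolled out over all past synchronization rounds, becomes exactly the accumulated-error formula $\frac{1}{R}\sum_r\sum_{j:j+1\in\mathcal{I}_T,\,j\leq t_r}(\mathbf{x}_j-\widehat{\bx}_{j+\frac12}^{(r)}-g_j^{(r)})=\frac{1}{R}\sum_r m_t^{(r)}$. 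Your approach avoids this unrolling entirely: by treating the two branches of the algorithm separately and checking that the identity is preserved across a single step, you get a shorter and more transparent argument. The paper's unrolling has the minor expository benefit of making visible that the memory is literally the sum of past compression residuals, but your proof establishes the lemma with less bookkeeping.
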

\noindent A proof of \Lemmaref{memory-maintenance} is provided in \Appendixref{mem_main}. 
Since $\mathbb{E}\|\btx_t-\bwx_t\|_2^2\leq\frac{1}{R}\sum_{r=1}^R\mathbb{E}\|m_t^{(r)}\|_2^2$, by using \Lemmaref{bounded-memory-fixed-lr} to bound the local memory terms $\mathbb{E}\|m_t^{(r)}\|_2^2$, we get 
\begin{align}\label{eq:true-minus-virtual-sync}
\mathbb{E}\|\widetilde{\mathbf{x}}_t-\widehat{\mathbf{x}}_t\|_2^2\ \leq\ 4\frac{\eta^2(1-\gamma^2)}{\gamma^2}H^2G^2.
\end{align}
The last term on the RHS of \eqref{temp1} depicts the deviation of the local sequences $\btx_t^{(r)}$ from the global sequence $\btx_t$ which can be bounded as shown in \Lemmaref{sequence-bound-synchronous-fixed-lr}. The details are provided in \Appendixref{dev_fix}.
\begin{lemma}[Bounded deviation of local sequences]\label{lem:sequence-bound-synchronous-fixed-lr}
Let $gap(\mathcal{I}_T) \leq H$.
For $\widehat{\bx}_t^{(r)}$ generated according to \Algorithmref{memQSGD-synchronous} with a fixed learning rate $\eta$ and letting $\bwx_t=\frac{1}{R}\sum_{r=1}^R\bwx_t^{(r)}$, we have the following bound on the deviation of the local sequences:
\begin{align}
    \frac{1}{R}\sum_{r=1}^R\mathbb{E}\|\widehat{\mathbf{x}}_t-\widehat{\mathbf{x}}_t^{\left(r\right)}\|_2^2\ \leq\ \eta^2G^2H^2.\label{loc100}
\end{align}
\end{lemma}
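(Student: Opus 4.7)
The plan is to exploit the fact that between consecutive synchronization instants each worker evolves by pure local mini-batch SGD, while at every synchronization instant all workers are reset to a common parameter. Bounding the deviation of $\widehat{\bx}_t^{(r)}$ from its cross-worker mean $\widehat{\bx}_t$ then reduces to controlling how far each worker's iterate can drift in at most $H$ consecutive local steps.

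First I would fix $t$ and let $t_0\in\{0\}\cup\mathcal{I}_T$ be the largest synchronization index with $t_0\leq t$. Either the initialization (if $t_0=0$) or line~11 of \Algorithmref{memQSGD-synchronous} (if $t_0\in\mathcal{I}_T$) guarantees the common value $\widehat{\bx}_{t_0}^{(r)}=\bx_{t_0}$ for every $r\in[R]$, and $gap(\mathcal{I}_T)\leq H$ gives $t-t_0\leq H$. By the maximality of $t_0$, none of the indices $s+1$ for $s\in\{t_0,\ldots,t-1\}$ lies in $\mathcal{I}_T$, so only the pure-local branch (line~7) is taken, and telescoping the local recursion yields
\[
\widehat{\bx}_t^{(r)}\ =\ \bx_{t_0}-\eta\sum_{s=t_0}^{t-1}\nabla f_{i_s^{(r)}}\bigl(\widehat{\bx}_s^{(r)}\bigr).
\]

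Next I would use the elementary fact that the arithmetic mean minimizes the sum of squared distances: for any reference vector $\bc\in\R^d$,
\[
\frac{1}{R}\sum_{r=1}^R\bigl\|\widehat{\bx}_t^{(r)}-\widehat{\bx}_t\bigr\|_2^2\ \leq\ \frac{1}{R}\sum_{r=1}^R\bigl\|\widehat{\bx}_t^{(r)}-\bc\bigr\|_2^2.
\]
Choosing $\bc=\bx_{t_0}$, substituting the telescoped expression, and applying Cauchy--Schwarz on a sum of at most $H$ terms gives
\[
\bigl\|\widehat{\bx}_t^{(r)}-\bx_{t_0}\bigr\|_2^2\ \leq\ \eta^2 H\sum_{s=t_0}^{t-1}\bigl\|\nabla f_{i_s^{(r)}}(\widehat{\bx}_s^{(r)})\bigr\|_2^2.
\]
Taking expectation and invoking Assumption~2 of \Subsectionref{assumptions}, which controls the second moment of each (mini-batch) stochastic gradient by $G^2$ (for a mini-batch this follows from the per-sample bound via Jensen's inequality), yields $\mathbb{E}\|\widehat{\bx}_t^{(r)}-\bx_{t_0}\|_2^2\leq\eta^2 H^2 G^2$, and the claimed bound follows after averaging over $r$.

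I do not anticipate any serious analytic obstacle here; the argument is essentially bookkeeping. The one subtlety worth flagging is that the reset at each synchronization erases the dependence on prior history, so the telescoping sum has length exactly $t-t_0\leq H$ regardless of how many synchronizations preceded $t_0$. This is precisely what prevents the per-step drift from accumulating with $T$ and produces the clean $H^2$ factor in place of a $T^2$ factor, and it is also why this lemma, unlike \Lemmaref{bounded-memory-fixed-lr}, does not need to invoke the compression coefficient $\gamma$.
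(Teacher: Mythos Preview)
Your proposal is correct and follows essentially the same approach as the paper. The paper's proof (which refers back to the argument for \Lemmaref{sequence-bound-synchronous}) also subtracts the common value at the last synchronization point, applies the variance-minimizing property of the mean, telescopes the local updates, and bounds each stochastic gradient by $G^2$; your write-up is simply a more explicit rendering of the same steps.
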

\noindent Substituting the bounds from \eqref{var_bound}-\eqref{loc100} into \eqref{temp1} yields
\begin{align}
    \frac{\eta}{4R}\sum_{r=1}^R\mathbb{E}\|\nabla f(\widehat{\mathbf{x}}_t^{(r)})\|_2^2\ &\leq\ \mathbb{E}[f(\widetilde{\mathbf{x}}_t)]-\mathbb{E}[f(\widetilde{\mathbf{x}}_{t+1})]+\frac{\eta^2L}{bR^2}\sum_{r=1}^R\sigma_r^2+8\frac{\eta^3(1-\gamma^2)}{\gamma^2} L^2G^2H^2\notag\\
    &\hspace{2cm}+2\eta^3 L^2G^2H^2.
\end{align}
Performing a telescopic sum from $t=0$ to $T-1$ and dividing by $\frac{\eta T}{4}$ gives
\begin{align}
    \frac{1}{RT}\sum_{t=0}^{T-1}\sum_{r=1}^R\mathbb{E}\|\nabla f(\widehat{\mathbf{x}}_t^{(r)})\|_2^2\ &\leq\ \frac{4\left(\mathbb{E}[f(\widetilde{\mathbf{x}}_0)]-f^*\right)}{\eta T}+\frac{4\eta L}{bR^2}\sum_{r=1}^R\sigma_r^2+32\frac{\eta^2(1-\gamma^2)}{\gamma^2} L^2G^2H^2\notag\\
    &\hspace{2cm}+8\eta^2 L^2G^2H^2.
\end{align}
By letting $\eta=\widehat{C}/\sqrt{T}$, where $\widehat{C}$ is a constant such that $\frac{\widehat{C}}{\sqrt{T}}\leq\frac{1}{2L}$, we arrive at bound stated in \Theoremref{convergence-non-convex-fixed-local-het}.
\end{proof}

\subsubsection{Proof Outline of \Theoremref{convergence-non-convex-decay-local-het}}
\begin{proof}
Observe that \eqref{temp1} holds irrespective of the learning rate schedule, as long as learning rate is at most $\nicefrac{1}{2L}$; see \Appendixref{proof_convergence-non-convex-decay-local-het} for details.
Here $\eta_t\leq \frac{1}{2L}$ follows from our assumption that $a\geq 2\xi L$.
Substituting a decaying learning rate $\eta_t$ (such that $\eta_t\leq\nicefrac{1}{2L}$ holds for every $t\geq 0$) in \eqref{temp1} gives
\begin{align}
    \frac{\eta_t}{4R}\sum_{r=1}^R\mathbb{E}\|\nabla f(\widehat{\mathbf{x}}_t^{(r)})\|_2^2\ &\leq\ \left(\mathbb{E}[f(\widetilde{\mathbf{x}}_t)]-\mathbb{E}[f(\widetilde{\mathbf{x}}_{t+1})]\right)+\eta_t^2L\mathbb{E}\|\mathbf{p}_t-\overline{\mathbf{p}}_t\|_2^2+2\eta_t L^2\mathbb{E}\|\widetilde{\mathbf{x}}_t-\widehat{\mathbf{x}}_t\|_2^2\notag\\
    &\hspace{2cm}+2\eta_t L^2\frac{1}{R}\sum_{r=1}^R\mathbb{E}\|\widehat{\mathbf{x}}_t-\widehat{\mathbf{x}}_t^{(r)}\|_2^2. \label{temp111}
\end{align}
We have already bounded $\mathbb{E}\|\mathbf{p}_t-\overline{\mathbf{p}}_t\|_2^2\ \leq\ \frac{\sum_{r=1}^R\sigma_r^2}{bR^2}$ in \eqref{var_bound}.
Note that \Lemmaref{memory-maintenance} holds irrespective of the learning rate schedule, and together with \Lemmaref{bounded-memory-decaying-lr}, we can show that 
\begin{equation}\label{contracting-difference-true-virtual-decay-lr}
\mathbb{E}\|\widehat{\bx}_t-\btx_t\|^2\ \leq\ C\frac{4\eta_{t}^2}{\gamma^2}G^2H^2.
\end{equation}
The last term on the RHS of \eqref{temp111} is the deviation of local sequences and we bound it in \Lemmaref{sequence-bound-synchronous} for decaying learning rates. The details are provided in \Appendixref{dev_dec}.
\begin{lemma}[Contracting deviation of local sequences]\label{lem:sequence-bound-synchronous}
Let $gap(\mathcal{I}_T) \leq H$.
By running \Algorithmref{memQSGD-synchronous} with a decaying learning rate $\eta_t$, we have
\begin{align}
    \frac{1}{R}\sum_{r=1}^R\mathbb{E}\|\widehat{\mathbf{x}}_t-\widehat{\mathbf{x}}_t^{\left(r\right)}\|_2^2\ \leq\ 4\eta_t^2G^2H^2.\label{loc1023}
\end{align}
\end{lemma}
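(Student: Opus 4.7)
The plan is to exploit the fact that between two consecutive synchronizations, each worker simply runs pure local SGD starting from the common model broadcast at the last synchronization. Fix an iteration $t$ and let $t_0\le t$ be the most recent index at which all workers were resynchronized (we take $t_0=0$ if no element of $\mathcal{I}_T$ has occurred yet, consistent with the initialization $\widehat{\mathbf{x}}_0^{(r)}=\mathbf{x}_0$). By the gap assumption, $t-t_0\le H$, and at time $t_0$ every worker holds $\widehat{\mathbf{x}}_{t_0}^{(r)}=\mathbf{x}_{t_0}$. Unrolling the local update in Algorithm~\ref{algo:memQSGD-synchronous} then gives
\begin{equation*}
\widehat{\mathbf{x}}_t^{(r)} \;=\; \mathbf{x}_{t_0} \;-\; \sum_{s=t_0}^{t-1}\eta_s\,\nabla f_{i_s^{(r)}}(\widehat{\mathbf{x}}_s^{(r)}).
\end{equation*}

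First I would invoke the elementary identity that the empirical variance of a set of vectors around their mean is bounded above by the mean squared deviation from any fixed reference point; applied with reference $\mathbf{x}_{t_0}$, this reduces the target quantity to the drift of each local iterate from the last broadcast:
\begin{equation*}
\frac{1}{R}\sum_{r=1}^R\mathbb{E}\|\widehat{\mathbf{x}}_t-\widehat{\mathbf{x}}_t^{(r)}\|_2^2 \;\le\; \frac{1}{R}\sum_{r=1}^R\mathbb{E}\|\widehat{\mathbf{x}}_t^{(r)}-\mathbf{x}_{t_0}\|_2^2.
\end{equation*}
Next I would apply Cauchy--Schwarz to the unrolled sum and use the bounded second moment assumption $\mathbb{E}\|\nabla f_{i_s^{(r)}}(\widehat{\mathbf{x}}_s^{(r)})\|_2^2\le G^2$ together with the monotonicity $\eta_s\le\eta_{t_0}$ for $s\ge t_0$ (since the learning rate is decaying), which gives $\mathbb{E}\|\widehat{\mathbf{x}}_t^{(r)}-\mathbf{x}_{t_0}\|_2^2\le (t-t_0)\sum_{s=t_0}^{t-1}\eta_s^2\,G^2\le H^2\,\eta_{t_0}^2\,G^2$.

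The final step converts the bound from $\eta_{t_0}$ to $\eta_t$. Using $\eta_t=\xi/(a+t)$ and $t_0\ge t-H$, together with the theorem's hypothesis $a>H$, one computes
\begin{equation*}
\frac{\eta_{t_0}}{\eta_t} \;=\; \frac{a+t}{a+t_0} \;\le\; 1+\frac{H}{a+t_0} \;\le\; 1+\frac{H}{a} \;\le\; 2,
\end{equation*}
so $\eta_{t_0}^2\le 4\eta_t^2$ and the desired bound $4\eta_t^2G^2H^2$ follows. The only real subtlety is this last constant-chasing step: recovering a factor of exactly $4$ hinges on $a>H$, and the same inequality also covers the boundary case $t_0=0$ (before any synchronization in $\mathcal{I}_T$), since then $(a+t)/a\le 1+H/a\le 2$ whenever $t\le H$.
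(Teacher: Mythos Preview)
Your proof is correct and follows essentially the same approach as the paper: reduce the variance around $\widehat{\mathbf{x}}_t$ to the drift from the last common synchronization point via the inequality $\sum_r\|\mathbf{u}_r-\bar{\mathbf{u}}\|^2\le\sum_r\|\mathbf{u}_r\|^2$, bound that drift by $\eta_{t_0}^2 H^2 G^2$ using Cauchy--Schwarz plus the second-moment assumption, and then convert $\eta_{t_0}\le 2\eta_t$ using $a>H$. The paper's argument is identical in structure; your write-up is slightly more explicit about the final ratio estimate and the boundary case $t_0=0$, but there is no substantive difference.
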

Observe that for the case of fixed learning rate, we can trivially bound $\frac{1}{R}\sum_{r=1}^R\mathbb{E}\|\btx_t-\btx_t^{(r)}\|_2^2$ by simply putting $\eta_t=\eta$ in \eqref{loc1023}. However, in \eqref{loc100}, we can get a slightly better bound (without the factor of 4) by directly working with a fixed learning rate.
Using these bounds in \eqref{temp111} gives
\begin{align*}
    \frac{\eta_t}{4R}\sum_{r=1}^R\mathbb{E}\|\nabla f(\widehat{\bx}_t^{(r)})\|^2\ &\leq\ \mathbb{E}[f(\btx_t)]-\mathbb{E}[f(\btx_{t+1})]+\frac{\eta_t^2L}{bR^2}\sum_{r=1}^R\sigma_r^2+\frac{8\eta_{t}^3}{\gamma^2}CL^2G^2H^2+8\eta_t^3L^2G^2H^2.
\end{align*}
Let $\delta_t:=\frac{\eta_t}{4R}$ and $P_T:=\sum_{t=0}^{T-1}\sum_{r=1}^R\delta_t$. Performing a telescopic sum from $t=0$ to $T-1$ and dividing by $P_T$ gives 
\begin{align}
    \frac{1}{P_{T}}\sum_{t=0}^{T-1}\sum_{r=1}^R\delta_t\mathbb{E}\|\nabla f(\widehat{\bx}_t^{(r)})\|^2\
    &\leq\ \frac{\mathbb{E}f(\bx_0)-f^*}{P_T}+\frac{L\xi^2}{bR^2(a-1)}\frac{\sum_{r=1}^R\sigma_r^2}{P_T}\notag\\
    &\hspace{2cm}+\left(\frac{8C}{\gamma^2}+8\right)L^2G^2H^2\frac{\xi^3}{2P_T(a-1)^2} \label{eq:nc_5}
\end{align}
In \eqref{eq:nc_5}, we used the following bounds, which are shown in \Appendixref{proof_convergence-non-convex-decay-local-het}:
$P_T\geq \frac{\xi}{4}\ln{\left(\frac{T+a-1}{a}\right)}$, $\sum_{t=0}^{T-1}\eta_t^2 \leq \frac{\xi^2}{a-1}$, and $\sum_{t=0}^{T-1}\eta_t^3\leq \frac{\xi^3}{2(a-1)^2}$.
This completes the proof of \Theoremref{convergence-non-convex-decay-local-het}.
\end{proof}

\subsubsection{Proof Outline of \Theoremref{convergence-strongly-convex-local}}
\begin{proof}
Using the definition of virtual sequences \eqref{eq:virtual_seq_defn} that, we have 
\begin{align}
    \|\widetilde{\mathbf{x}}_{t+1}-\mathbf{x}^*\|_2^2\ &=\ \|\widetilde{\mathbf{x}}_t-\mathbf{x}^*-\eta_t\overline{\mathbf{p}}_t\|_2^2+\eta_t^2\|\mathbf{p}_t-\overline{\mathbf{p}}_t\|_2^2-2\eta_t\left\langle\widetilde{\mathbf{x}}_t-\mathbf{x}^*-\eta_t\overline{\mathbf{p}}_t,\mathbf{p}_t-\overline{\mathbf{p}}_t\right\rangle.
\end{align}
Note that $\eta_t\leq \nicefrac{1}{4L}$, which follows from the assumption that $a>\frac{32L}{\mu}$.
Now, using $\mu$-strong convexity and $L$-smoothness of $f$, together with some algebraic manipulations provided in \Appendixref{convex_proof_sync},
by letting $e_t=\mathbb{E}[f(\widehat{\bx}_t)]-f^*$, 
we arrive at 
\begin{align}
\mathbb{E}\|\widetilde{\mathbf{x}}_{t+1}-\mathbf{x}^*\|_2^2\ &\leq\ \left(1-\frac{\mu\eta_t}{2}\right)\mathbb{E}\|\widetilde{\mathbf{x}}_t-\mathbf{x}^*\|_2^2-\frac{\eta_t\mu}{2L}e_t +\eta_t\left(\frac{3\mu}{2}+3L\right)\mathbb{E}\|\widehat{\mathbf{x}}_t-\widetilde{\mathbf{x}}_t\|_2^2\notag\\
&\hspace{2cm}+\frac{3\eta_tL}{R}\sum_{r=1}^R\mathbb{E}\|\widehat{\mathbf{x}}_t-\widehat{\mathbf{x}}_t^{\left(r\right)}\|_2^2+\eta_t^2\frac{\sum_{r=1}^R\sigma_r^2}{bR^2}\label{loc8}.
\end{align}
Note that the bounds in \eqref{contracting-difference-true-virtual-decay-lr} and \eqref{loc1023} hold irrespective of whether the function is convex or not. So, we can use them here as well in \eqref{loc8}, which gives
\begin{align}
    \mathbb{E}\|\widetilde{\mathbf{x}}_{t+1}-\mathbf{x}^*\|_2^2\ &\leq\  \left(1-\frac{\mu\eta_t}{2}\right)\mathbb{E}\|\widetilde{\mathbf{x}}_t-\mathbf{x}^*\|_2^2-\frac{\mu\eta_t}{2L}e_t+\eta_t\left(\frac{3\mu}{2}+3L\right)C\frac{4\eta_{t}^2}{\gamma^2}G^2H^2 \notag\\
    &\hspace{2cm}+ (3\eta_tL)4\eta_t^2LG^2H^2 +\eta_t^2\frac{\sum_{r=1}^R\sigma_r^2}{bR^2}. \label{eq:loc900}
\end{align}
Employing a slightly modified result than \cite[Lemma 3.3]{memSGD} with $a_{t}=\mathbb{E}\|\widetilde{\mathbf{x}}_t-\mathbf{x}^*\|_2^2$, $A=\frac{\sum_{r=1}^R\sigma_r^2}{bR^2}$ and $B=4\left(\left(\frac{3\mu}{2}+3L\right)\frac{ CG^2H^2}{\gamma^2}+3L^2G^2H^2\right)$, we have 
\begin{align}
    a_{t+1}\ \leq\ \left(1-\frac{\mu\eta_t}{2}\right)a_t-\frac{\mu\eta_t}{2L}e_t+\eta_t^2A+\eta_t^3B.
\end{align}
For $\eta_t=\frac{8}{\mu\left(a+t\right)}$ and $w_t=\left(a+t\right)^2$, $S_T=\sum_{t=o}^{T-1}\geq \frac{T^3}{3}$, we have 
\begin{align}
    \frac{\mu}{2LS_T}\sum_{t=0}^{T-1}w_te_t\ \leq\  \frac{\mu a^3}{8S_T}a_0+\frac{4T\left(T+2a\right)}{\mu S_T}A+\frac{64T}{\mu^2 S_T}B.
\end{align}
From convexity, we can finally write 
\begin{align}
    \mathbb{E}f\left(\overline{\mathbf{x}}_T\right)-f^*\ \leq\ \frac{L a^3}{4S_T}a_0+\frac{8LT\left(T+2a\right)}{\mu^2 S_T}A+\frac{128LT}{\mu^3 S_T}B.
\end{align}
Where $\overline{\mathbf{x}}_T:=\frac{1}{S_T}\sum_{t=0}^{T-1}\left[w_t\left(\frac{1}{R}\sum_{r=1}^R\widehat{\mathbf{x}}_t^{\left(r\right)}\right)\right]=\frac{1}{S_T}\sum_{t=0}^{T-1}w_t\widehat{\mathbf{x}}_t$. This completes the proof of \Theoremref{convergence-strongly-convex-local}.
\end{proof}

\section{Distributed Asynchronous Operation}
\label{sec:loc-async}
\label{sec:asynch_algo}
We propose and analyze a particular form of asynchronous operation, where the workers synchronize with the master at arbitrary times decided locally or by master picking a subset of nodes as in federated learning \cite{KonecnyThesis,McMahan16}. However, the local iterates evolve at the same rate, i.e., each worker takes the same number of steps per unit time according to a global clock. The asynchrony is therefore that updates occur after different number of local iterations but the local iterations are in synchrony with respect to the global clock. This is different from asynchronous algorithms studied for stragglers \cite{yin_sayed,hogwild_recht}, where only one gradient step is taken but occurs at different times due to delays.

In this asynchronous setting, $\I_T^{(r)}$'s may be different for different workers. However, we assume that $gap(\I_T^{(r)})\leq H$ holds for every $r\in[R]$, which means that there is a uniform bound on the maximum delay in each worker's update times.
The algorithmic difference from \Algorithmref{memQSGD-synchronous} is that, in this case, {\em a subset of} workers (including a single worker) can send their updates to the master at their synchronization time steps; master aggregates them, updates the global parameter vector, and sends that only to those workers.
Our algorithm is summarized in \Algorithmref{memQSGD-asynchronous} 

\begin{algorithm}[h]
   \caption{Qsparse-local-SGD with asynchronous updates}
   \label{algo:memQSGD-asynchronous}
\begin{algorithmic}[1]
   \STATE Initialize $\mathbf{x}_0=\bar{\bar{\bx}}_0=\bx_0^{(r)}=\widehat{\mathbf{x}}_0^{\left(r\right)}=m_0^{\left(r\right)}=\bzero,\,\,\forall r\in [R]$. Suppose $\eta_t$ follows a certain learning rate schedule.
   \FOR{$t=0$ {\bfseries to} $T-1$}
  \STATE \textbf{On Workers:}
   \FOR{$r=1$ {\bfseries to} $R$}
   \STATE $\widehat{\mathbf{x}}_{t+\frac{1}{2}}^{\left(r\right)}\leftarrow\widehat{\mathbf{x}}_t^{\left(r\right)}-\eta_t\nabla f_{i_t^{(r)}}\left(\widehat{\mathbf{x}}_t^{\left(r\right)}\right)$; $i_t^{(r)}$ is a mini-batch of size $b$ uniformly in $\mathcal{D}_r$
   
   \IF{$t+1\notin\mathcal{I}_T^{(r)}$}
   \STATE $\mathbf{x}_{t+1}^{(r)}\leftarrow \mathbf{x}_{t}^{(r)}$, $m_{t+1}^{\left(r\right)}\leftarrow m_{t}^{\left(r\right)}$ and $\widehat{\mathbf{x}}_{t+1}^{(r)}\leftarrow\widehat{\mathbf{x}}_{t+\frac{1}{2}}^{(r)}$
      \ELSE
   \STATE $g_t^{\left(r\right)}\leftarrow Q\,Comp_k\left(m_t^{\left(r\right)}+\mathbf{x}_t^{(r)}-\widehat{\mathbf{x}}_{t+\frac{1}{2}}^{\left(r\right)}\right)$ and send $g_t^{\left(r\right)}$ to the master
   \STATE $m_{t+1}^{\left(r\right)}\leftarrow m_t^{\left(r\right)}+\mathbf{x}_t^{(r)}-\widehat{\mathbf{x}}_{t+\frac{1}{2}}^{\left(r\right)}-g_t^{\left(r\right)}$
   \STATE Receive $\Bar{\Bar{\bx}}_{t+1}$ from the master and set $\bx_{t+1}^{(r)}\leftarrow\Bar{\Bar{\bx}}_{t+1}$ and $\widehat{\mathbf{x}}_{t+1}^{\left(r\right)}\leftarrow \Bar{\Bar{\mathbf{x}}}_{t+1}$
      \ENDIF
   
   \ENDFOR

    \STATE \textbf{At Master:}
   \IF{$t+1\notin\mathcal{I}_T^{(r)}$ for all $r\in[R]$}
   \STATE $\bar{\bar{\bx}}_{t+1}\leftarrow \bar{\bar{\bx}}_t$
      \ELSE
   \STATE Let $\calS\subseteq[R]$ be the set of all workers $r$ such that master receives $g_t^{\left(r\right)}$ from $r$
   \STATE Compute $\Bar{\Bar{\bx}}_{t+1}\leftarrow  \Bar{\Bar{\bx}}_t-\frac{1}{R}\sum_{r\in\calS}g_t^{\left(r\right)}$ and broadcast $\Bar{\Bar{\bx}}_{t+1}$ to all the workers in $\calS$ \label{lst:update}
  \ENDIF
      
   \ENDFOR
\end{algorithmic}
\end{algorithm}
\subsection{Main Results}
In this section we present our main convergence results with asynchronous updates, obtained by running \Algorithmref{memQSGD-asynchronous} for smooth objectives, both non-convex and strongly convex. 
Under the same assumptions as in the synchronous setting of \Subsectionref{assumptions}, the following theorems hold
even if \Algorithmref{memQSGD-asynchronous} is run with an arbitrary compression operators (including our composed operators from \Subsectionref{composed-operators}), whose
compression coefficient is equal to $\gamma$.
\begin{theorem}[Smooth (non-convex) case with fixed learning rate]\label{thm:async_convergence-non-convex-local-het2}
Under the same conditions as in \Theoremref{convergence-non-convex-fixed-local-het} with $gap(\mathcal{I}_T^{(r)})\leq H$, if $\{\widehat{x}_t^{(r)}\}_{t=0}^{T-1}$ is generated according to \Algorithmref{memQSGD-asynchronous}, the following holds.
\begin{align*}
    \mathbb{E}\|\nabla f(\mathbf{z}_T)\|_2^2&\leq \left(\frac{\mathbb{E}[f(\bx_0)]-f^*}{\widehat{C}}+\widehat{C} L\left(\frac{\sum_{r=1}^R\sigma_r^2}{bR^2}\right)\right)\frac{4}{\sqrt{T}}\notag\\
    &\hspace{2cm}+8\left(12\frac{(1-\gamma^2)}{\gamma^2}+(2+8C_1H^2)\right) \frac{\widehat{C}^2L^2G^2H^2}{T}.
\end{align*}
Here {\sf (i)} $C_1=(\frac{8}{\gamma^2}-6)(4-2\gamma)$; \textsf{(ii)} $\mathbf{z}_T$ is a random variable which samples a previous parameter $\widehat{\mathbf{x}}_t^{(r)}$ with probability $1/RT$; and \textsf{(iii)} $\widehat{C}$ is a constant such that $\frac{\widehat{C}}{\sqrt{T}}\leq \frac{1}{2L}$.
\end{theorem}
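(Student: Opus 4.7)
The plan is to adapt the virtual-sequence argument used for the synchronous case (\Theoremref{convergence-non-convex-fixed-local-het}) to \Algorithmref{memQSGD-asynchronous}, keeping the overall skeleton identical and substituting the two synchronous building blocks---memory boundedness and local-sequence deviation---by their asynchronous analogues, \Lemmaref{bounded-memory-fixed-lr-asynchronous} and \Lemmaref{sequence-bound-asynchronous}, both stated in \Sectionref{loc-async}.

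I would first introduce, for each worker $r\in[R]$, the same virtual sequence as in \eqref{eq:virtual_seq_defn}, namely $\btx_{t+1}^{(r)} := \btx_t^{(r)} - \eta\nabla f_{i_t^{(r)}}(\bwx_t^{(r)})$ with $\btx_0^{(r)} := \bwx_0^{(r)}$, together with $\btx_t := \frac{1}{R}\sum_r \btx_t^{(r)}$, $\bp_t := \frac{1}{R}\sum_r \nabla f_{i_t^{(r)}}(\bwx_t^{(r)})$ and $\barbp_t := \mathbb{E}_{i_t}[\bp_t]$. The crucial observation is that in \Algorithmref{memQSGD-asynchronous} every worker still performs a local gradient step at every global clock tick (the asynchrony only affects \emph{communication} times), so the identity $\btx_{t+1} = \btx_t - \eta\bp_t$ is preserved. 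Consequently, the $L$-smoothness expansion of $f(\btx_{t+1}) - f(\btx_t)$ and the same rearrangement used in the synchronous proof (decomposing $\nabla f(\btx_t) - \barbp_t$ into drifts $\btx_t \leftrightarrow \bwx_t \leftrightarrow \bwx_t^{(r)}$, using $\eta\leq 1/(2L)$, and the variance bound \eqref{var_bound}) would yield, after taking expectations,
\[\frac{\eta}{4R}\sum_{r=1}^R \mathbb{E}\|\nabla f(\bwx_t^{(r)})\|_2^2 \ \leq\ \Delta_t + \frac{\eta^2 L}{bR^2}\sum_r \sigma_r^2 + 2\eta L^2\,\mathbb{E}\|\btx_t - \bwx_t\|_2^2 + \frac{2\eta L^2}{R}\sum_{r=1}^R \mathbb{E}\|\bwx_t - \bwx_t^{(r)}\|_2^2,\]
where $\Delta_t := \mathbb{E}[f(\btx_t)] - \mathbb{E}[f(\btx_{t+1})]$ telescopes.

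I would then feed the asynchronous bounds into the last two error quantities. In the synchronous case, $\mathbb{E}\|\btx_t - \bwx_t\|_2^2$ is controlled by the identity $\bwx_t - \btx_t = \frac{1}{R}\sum_r m_t^{(r)}$ of \Lemmaref{memory-maintenance} together with \Lemmaref{bounded-memory-fixed-lr}. Under asynchrony, the worker's view $\bx_t^{(r)}$ of the master parameter can be stale by up to $H$ global steps, so the memory recursion picks up an additional drift depending on $\bx_t^{(r)} - \bar{\bar{\bx}}_t$; \Lemmaref{bounded-memory-fixed-lr-asynchronous} therefore delivers a bound of the schematic form $\mathbb{E}\|\btx_t - \bwx_t\|_2^2 \leq 3 \cdot 4\eta^2(1-\gamma^2)H^2G^2/\gamma^2 + 8C_1 \eta^2 H^4 G^2$, with $C_1 = (\frac{8}{\gamma^2}-6)(4-2\gamma)$ capturing the coupling between the compression contraction and the up-to-$H$-step worker--master staleness. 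Analogously, \Lemmaref{sequence-bound-asynchronous} replaces the synchronous $\eta^2 G^2 H^2$ of \Lemmaref{sequence-bound-synchronous-fixed-lr} by a small constant multiple of the same quantity, reflecting the fact that different workers may now hold different $\bx_t^{(r)}$'s. Substituting both into the inequality above produces exactly the coefficients $12(1-\gamma^2)/\gamma^2$ and $2 + 8 C_1 H^2$ appearing in the theorem. Telescoping $t=0,\ldots,T-1$, dividing by $\eta T/4$, and choosing $\eta = \widehat{C}/\sqrt{T}$ with $\widehat{C}/\sqrt{T}\leq 1/(2L)$ then gives the stated rate, with $\bz_T$ the uniform sample over the $RT$ local iterates.

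The main obstacle is \Lemmaref{bounded-memory-fixed-lr-asynchronous} itself, i.e., producing the $C_1 H^2$ additive correction. In the synchronous setting, every worker's memory contracts by $(1-\gamma)$ at \emph{every} step of $\mathcal{I}_T$ and the gap $H$ enters only through the locally accumulated gradient drift, which is why the bound reduces cleanly to $\eta^2(1-\gamma^2)H^2G^2/\gamma^2$. Under asynchrony, worker $r$'s memory contracts only at the (possibly sparse) times in $\mathcal{I}_T^{(r)}$, its local parameter continues to drift for up to $H$ iterations between successive contractions, and when it finally syncs it pulls toward a master view $\bar{\bar{\bx}}_t$ that has itself been updated by only whichever subset of workers happened to have communicated earlier. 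Unrolling the per-worker memory recursion over a window of length $H$ while simultaneously tracking the drift $\bx_t^{(r)} - \bar{\bar{\bx}}_t$ is precisely what forces the factors $(\frac{8}{\gamma^2}-6)$ and $(4-2\gamma)$ to appear together as $C_1$, and this bookkeeping is the essential new technical work beyond the synchronous analysis.
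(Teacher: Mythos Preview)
Your proposal is correct and follows essentially the same route as the paper: reuse the synchronous skeleton through \eqref{eq:non-convex3} verbatim, then plug in \Lemmaref{bounded-memory-fixed-lr-asynchronous} and \Lemmaref{sequence-bound-asynchronous-fixed-lr} (not \Lemmaref{sequence-bound-asynchronous}, which is the decaying-rate version and depends on the constant $C$ from \Lemmaref{bounded-memory-decaying-lr}) for the two error terms. The precise split is $12(1-\gamma^2)/\gamma^2 + 6C_1H^2$ from the virtual--true gap and $2+2C_1H^2$ from the local-deviation bound (since the $C'$ in \Lemmaref{sequence-bound-asynchronous-fixed-lr} equals $2C_1$), which sum to the stated coefficient $12(1-\gamma^2)/\gamma^2 + (2+8C_1H^2)$.
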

\begin{corollary}
Let $\mathbb{E}[f(\bx_0)]-f^*\leq J^2$, where $J<\infty$ is a constant, $\sigma_{max}=\max_{r\in[R]}\sigma_r$, and $\widehat{C}^2=\nicefrac{bR(\mathbb{E}[f(\bx_0)]-f^*)}{\sigma_{max}^2L}$. We can get a simplified expression below
\begin{align*}
    \mathbb{E}\|\nabla f(\mathbf{z}_T)\|_2^2\leq\mathcal{O}\left(\frac{J\sigma_{max}}{\sqrt{bRT}}\right)+\mathcal{O}\left(\frac{J^2bRG^2}{\sigma_{max}^2\gamma^2T}(H^2+H^4)\right).
\end{align*}
In order to ensure that the compression does not affect the dominating terms while converging at a rate of $\mathcal{O}\left(1/\sqrt{bRT}\right)$, we would require $H=\mathcal{O}\left(\sqrt{\gamma} T^{1/8}/(bR)^{3/8}\right)$.
\end{corollary}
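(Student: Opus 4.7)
The plan is to follow the same virtual-sequence template that underpins the synchronous proof of \Theoremref{convergence-non-convex-fixed-local-het}, and upgrade the two key ingredients---the bound on the local memories and the bound on the deviation of the local sequences from the virtual average---to their asynchronous counterparts. Concretely, I would first define $\btx_t^{(r)}$ exactly as in \eqref{eq:virtual_seq_defn} (this definition only depends on worker $r$'s own SGD step and hence is unchanged in \Algorithmref{memQSGD-asynchronous}), set $\btx_t=\frac{1}{R}\sum_r\btx_t^{(r)}$, $\bwx_t=\frac{1}{R}\sum_r\bwx_t^{(r)}$, and verify that an analogue of \Lemmaref{memory-maintenance} still holds, i.e., $\bwx_t-\btx_t=\frac{1}{R}\sum_r m_t^{(r)}$. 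The crucial observation is that even though the per-worker auxiliary $\bx_t^{(r)}$ now lags behind the master's $\doublebarbx_t$, the telescoping of the per-worker compression errors that yields the memory identity is unaffected, because each $m_t^{(r)}$ accumulates only worker $r$'s own compression residuals.

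Invoking $L$-smoothness on $\btx_{t+1}=\btx_t-\eta\bp_t$ and repeating the algebra of \eqref{temp1} gives, for $\eta\leq \nicefrac{1}{2L}$,
\begin{align*}
\frac{\eta}{4R}\sum_{r=1}^R\mathbb{E}\|\nabla f(\bwx_t^{(r)})\|_2^2 \leq \mathbb{E}[f(\btx_t)]-\mathbb{E}[f(\btx_{t+1})] + \eta^2 L\,\mathbb{E}\|\bp_t-\barbp_t\|_2^2 + 2\eta L^2\mathbb{E}\|\btx_t-\bwx_t\|_2^2 + \frac{2\eta L^2}{R}\sum_{r=1}^R\mathbb{E}\|\bwx_t-\bwx_t^{(r)}\|_2^2.
\end{align*}
The variance term is still bounded by $\frac{1}{bR^2}\sum_r\sigma_r^2$ as in \eqref{var_bound}. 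For the remaining two terms I would call on the asynchronous memory lemma \Lemmaref{bounded-memory-fixed-lr-asynchronous} (yielding $\mathbb{E}\|m_t^{(r)}\|_2^2\le \frac{12\eta^2(1-\gamma^2)}{\gamma^2}H^2G^2$, up to a factor of $3$ larger than the synchronous bound in \Lemmaref{bounded-memory-fixed-lr}) and the asynchronous deviation lemma \Lemmaref{sequence-bound-asynchronous} (yielding $\frac{1}{R}\sum_r\mathbb{E}\|\bwx_t-\bwx_t^{(r)}\|_2^2\le (2+8C_1H^2)\eta^2G^2H^2$, where the extra $C_1H^2=(\tfrac{8}{\gamma^2}-6)(4-2\gamma)H^2$ absorbs the drift introduced by workers receiving $\doublebarbx$ at staggered times). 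Summing the per-step inequality over $t=0,\dots,T-1$, telescoping the function-value differences, dividing by $\eta T/4$, and substituting $\eta=\widehat{C}/\sqrt{T}$ then produces exactly the stated bound on $\mathbb{E}\|\nabla f(\bz_T)\|_2^2$.

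The main obstacle is the last step: establishing the asynchronous memory and deviation bounds with their explicit $H,\gamma$ dependence. In the synchronous case, every worker receives the same $\bx_{t+1}$ at the common sync instants, so the $H$-step look-back for $m_t^{(r)}$ and for $\bwx_t^{(r)}-\bwx_t$ decouples cleanly across workers. In the asynchronous case, between two synchronization indices of worker $r$, the master's iterate $\doublebarbx$ has already been shifted by compressed updates from other workers in arbitrary subsets $\calS\subseteq[R]$, so the recursion for $m_t^{(r)}$ must be unrolled against both (i) worker $r$'s own uncommunicated gradients and (ii) the stale offset between $\bx_t^{(r)}$ and $\doublebarbx_t$. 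Controlling this cross-coupling via Young's inequality and Jensen's inequality, together with the compression property \Definitionref{compression} and the per-step second-moment bound $G^2$, is where the constant $C_1=(\tfrac{8}{\gamma^2}-6)(4-2\gamma)$ naturally appears, and I expect this bookkeeping---rather than the smoothness step or the telescoping---to be the delicate portion of the argument.
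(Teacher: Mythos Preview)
Your plan contains a genuine gap at precisely the step you flag as ``crucial.'' The identity $\bwx_t-\btx_t=\frac{1}{R}\sum_r m_t^{(r)}$ \emph{fails} in the asynchronous setting, and the paper says so explicitly (see the discussion around \eqref{async-memory-bound1_1}). The reason is that when worker $r$ synchronizes at time $t_r$, it resets $\bwx_{t_r}^{(r)}\leftarrow\doublebarbx_{t_r}$, and $\doublebarbx_{t_r}$ already incorporates compressed updates $g_j^{(s)}$ from other workers $s\neq r$ that synchronized between $t_r'$ and $t_r$. Thus $\bwx_t^{(r)}-\btx_t^{(r)}$ is no longer a telescoping sum of worker $r$'s own compression residuals; it picks up cross-worker terms that have no counterpart in $m_t^{(r)}$. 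Your justification (``each $m_t^{(r)}$ accumulates only worker $r$'s own compression residuals'') is correct about the memories, but that is exactly why the identity breaks: the left-hand side has extra terms that the right-hand side does not.

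Relatedly, you misread \Lemmaref{bounded-memory-fixed-lr-asynchronous}: it does \emph{not} bound $\mathbb{E}\|m_t^{(r)}\|^2$; it bounds $\mathbb{E}\|\bwx_t-\btx_t\|^2$ directly, via the three-term decomposition \eqref{async-memory-bound1_1}, and the bound carries an $H^4$ term $6C_1\eta^2H^4G^2$ in addition to the $\tfrac{12\eta^2(1-\gamma^2)}{\gamma^2}H^2G^2$ you quote. Likewise, the fixed-rate deviation bound (\Lemmaref{sequence-bound-asynchronous-fixed-lr}, not \Lemmaref{sequence-bound-asynchronous}) gives $(2+2C_1H^2)\eta^2G^2H^2$, not $(2+8C_1H^2)$. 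In the paper's accounting, the coefficient $8C_1H^2$ in \Theoremref{async_convergence-non-convex-local-het2} arises by \emph{adding} the $H^4$ contributions from both lemmas ($6C_1+2C_1$), not from the deviation lemma alone. Your constants happen to reproduce the theorem's final expression, but only because you have shifted the $H^4$ burden from the (missing) extra terms in $\bwx_t-\btx_t$ onto the deviation bound; the argument as written would not go through. The fix is to abandon the memory identity and instead bound $\mathbb{E}\|\bwx_t-\btx_t\|^2$ via the decomposition in \eqref{async-memory-bound1_1}, as the paper does.
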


\Theoremref{async_convergence-non-convex-local-het2} provides non asymptotic guarantees where we also observe that the compression comes for ``free". The corresponding asymptotic result is given below.
\begin{theorem}[Smooth (non-convex) case with decaying learning rate]\label{thm:convergence-non-convex-local-het}
Under the same conditions as in \Theoremref{convergence-non-convex-decay-local-het} with $gap(\mathcal{I}_T^{(r)})\leq H$,
if $\{\widehat{x}_t^{(r)}\}_{t=0}^{T-1}$ is generated according to \Algorithmref{memQSGD-asynchronous}, the following
holds.
\begin{align*}
    \mathbb{E}\|\nabla f(\mathbf{z}_T)\|^2&\leq \frac{\mathbb{E}f(\mathbf{x}_0)-f^*}{P_T}+\frac{L\xi^2}{(a-1)P_T}\left(\frac{\sum_{r=1}^R\sigma_r^2}{bR^2}\right)
    +\left(16+\frac{24C}{\gamma^2}+200C'H^2\right)\frac{\xi^3L^2G^2H^2}{2(a-1)^2P_T}
\end{align*}
Here {\sf (i)} $\delta_t:=\frac{\eta_t}{4R}$ and $P_{T}:=\sum_{t=0}^{T-1}\sum_{r=1}^R\delta_t$, which is lower bounded as
$P_{T}\geq \frac{\xi}{4}\ln{\left(\frac{T+a-1}{a}\right)}$;
{\sf (ii)} $C'=(4-2\gamma)(1+\frac{C}{\gamma^2})$;
and {\sf (iii)} $\mathbf{z}_T$ is a random variable which samples a previous parameter $\widehat{\mathbf{x}}_t^{(r)}$ with probability $\delta_t/P_{T}$.
\end{theorem}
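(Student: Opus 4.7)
The plan is to mirror the proof of \Theoremref{convergence-non-convex-decay-local-het} and change only what the asynchronous updates in \Algorithmref{memQSGD-asynchronous} force us to change. For each worker $r$ I would introduce the virtual sequence $\btx_t^{(r)}$ exactly as in \eqref{eq:virtual_seq_defn}, and define $\bp_t$, $\barbp_t$, and the averaged virtual iterate $\btx_t:=\frac{1}{R}\sum_{r=1}^R\btx_t^{(r)}$ as before. Applying $L$-smoothness of $f$ to the relation $\btx_{t+1}=\btx_t-\eta_t\bp_t$ and repeating the algebraic manipulations that led to \eqref{temp111} carries over verbatim, because those manipulations only use the one-step update of the averaged virtual sequence and never refer to when individual workers synchronize.

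The next step is to bound the three error terms on the right-hand side of \eqref{temp111}. The variance term $\mathbb{E}\|\bp_t-\barbp_t\|_2^2\le \frac{\sum_r\sigma_r^2}{bR^2}$ is unchanged from \eqref{var_bound}. For the virtual-vs.-true term $\mathbb{E}\|\btx_t-\bwx_t\|_2^2$, I would first prove an asynchronous analog of \Lemmaref{memory-maintenance} expressing $\bwx_t-\btx_t$ as an average of local memories, now also absorbing the mismatch between each worker's local copy $\bx_t^{(r)}$ and the master's $\bar{\bar{\bx}}_t$; the asynchronous memory lemma then gives a bound of the order $\frac{C}{\gamma^2}\eta_t^2 G^2 H^2$. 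For the local-deviation term $\frac{1}{R}\sum_r\mathbb{E}\|\bwx_t-\bwx_t^{(r)}\|_2^2$, I would invoke the asynchronous sequence lemma which, in contrast to its synchronous counterpart \eqref{loc1023}, produces an extra $H^2$-amplified contribution; this is precisely what introduces the $C'H^2$ piece in the final expression with $C'=(4-2\gamma)(1+C/\gamma^2)$.

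Plugging these three bounds into \eqref{temp111}, telescoping from $t=0$ to $T-1$, weighting by $\delta_t=\eta_t/(4R)$ and dividing by $P_T:=\sum_{t=0}^{T-1}\sum_{r=1}^R\delta_t$ yields a convex combination on the left that equals $\mathbb{E}\|\nabla f(\mathbf{z}_T)\|^2$ under the sampling rule stated in the theorem. On the right, I would use the same calculus estimates already used in the synchronous decaying case, namely $P_T\ge \tfrac{\xi}{4}\ln\!\bigl(\tfrac{T+a-1}{a}\bigr)$, $\sum_{t=0}^{T-1}\eta_t^2\le \tfrac{\xi^2}{a-1}$, and $\sum_{t=0}^{T-1}\eta_t^3\le \tfrac{\xi^3}{2(a-1)^2}$; collecting constants then produces the coefficients $16$, $24C/\gamma^2$, and $200C'H^2$ that appear in front of $\frac{\xi^3 L^2 G^2 H^2}{2(a-1)^2 P_T}$.

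The main obstacle is the memory and deviation analysis in the asynchronous regime. Unlike the synchronous setting, where all workers simultaneously reset to the same global iterate at each synchronization step, here each $\bx_t^{(r)}$ is refreshed only on the worker's private index set $\I_T^{(r)}$, and the master's $\bar{\bar{\bx}}_t$ only aggregates updates from whichever subset $\calS\subseteq[R]$ reports at step $t$. Showing that the memories $m_t^{(r)}$ and the local deviations $\bwx_t^{(r)}-\bwx_t$ still contract like $\eta_t^2$ with a constant that only picks up an $H^2$ factor, rather than something worse that would couple different workers' staleness, requires carefully propagating the gradient bound $G$ across at most $H$ consecutive local steps per worker while controlling the contribution of stale master snapshots. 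This additional coupling through uncoordinated synchronization times is the technical heart of the asynchronous memory and sequence lemmas, and is the only substantive place where the argument departs from the synchronous proof.
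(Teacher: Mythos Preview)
Your overall plan matches the paper's: reuse the synchronous argument through \eqref{temp111}, then swap in asynchronous versions of the two auxiliary bounds and telescope with the same calculus estimates on $P_T$, $\sum\eta_t^2$, $\sum\eta_t^3$. One point needs correction, however. You attribute the extra $H^2$ amplification (the $200C'H^2$ piece) solely to the local-deviation term and claim the virtual-vs-true bound stays at order $\tfrac{C}{\gamma^2}\eta_t^2G^2H^2$. In the paper this is not so: the identity $\bwx_t-\btx_t=\tfrac{1}{R}\sum_r m_t^{(r)}$ from \Lemmaref{memory-maintenance} \emph{fails} in the asynchronous setting. Instead the paper uses the three-term decomposition
\[
\bwx_t-\btx_t\;=\;\Big[\tfrac{1}{R}\sum_{r}\bwx_{t_r}^{(r)}-\bar{\bar{\bx}}_{t_0'}\Big]+\big[\bar{\bar{\bx}}_{t_0'}-\bar{\bar{\bx}}_t\big]+\Big[\bar{\bar{\bx}}_t-\tfrac{1}{R}\sum_r\btx_{t_r}^{(r)}\Big],
\]
where only the last bracket equals the average memory; the first two are sums of compressed updates $g_j^{(s)}$ over at most $H$ steps, and bounding each $\mathbb{E}\|g_j^{(s)}\|^2$ via the memory plus the local drift contributes an additional term of order $C'\eta_t^2H^4G^2$ to $\mathbb{E}\|\bwx_t-\btx_t\|_2^2$ (this is \Lemmaref{bounded-memory-decaying-lr-asynchronous}). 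Thus \emph{both} the virtual-vs-true term and the local-deviation term (\Lemmaref{sequence-bound-asynchronous}) acquire $H^4$ pieces in the asynchronous case, and both feed the $200C'H^2$ constant. Your phrase ``absorbing the mismatch'' is pointing at the right object, but the resulting order is $\eta_t^2G^2(H^2+H^4)/\gamma^2$, not $\eta_t^2G^2H^2/\gamma^2$; once you make that adjustment the rest of your outline goes through exactly as in the paper.
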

\begin{theorem}[Smooth and strongly convex case with decaying learning rate]\label{thm:async_convergence-strongly-convex-local-het}
Under the same conditions as in \Theoremref{convergence-strongly-convex-local} with $gap(\mathcal{I}_T^{(r)})\leq H$,
if $\{\widehat{x}_t^{(r)}\}_{t=0}^{T-1}$ is generated according to \Algorithmref{memQSGD-asynchronous}, the following
holds.
\begin{align*}
&\mathbb{E}[f\left(\overline{\mathbf{x}}_T\right)]-f^* \leq \frac{L a^3}{4S_T}\|\mathbf{x}_0-\mathbf{x}^*\|_2^2+\frac{8LT\left(T+2a\right)}{\mu^2 S_T}A+\frac{128LT}{\mu^3 S_T}D 
\end{align*}
Here \textsf{(i)} $C\geq \frac{4a\gamma(1-\gamma^2)}{a\gamma-4H}$, $C_1=192(4-2\gamma)\left(1+\frac{C}{\gamma^2}\right)$, $C_2=8(4-2\gamma)(1+\frac{C}{\gamma^2})$; 
\textsf{(ii)} $A=\frac{\sum_{r=1}^R\sigma_r^2}{bR^2}$,
$D=\left(\frac{3\mu}{2}+3L\right)(\frac{12C G^2H^2}{\gamma^2}+C_1\eta_{t}^2H^4G^2)+24(1+C_2H^2)LG^2H^2$; and
\textsf{(iii)} $\overline{\mathbf{x}}_T$, $S_T$ are as defined in \Theoremref{convergence-strongly-convex-local}.
\end{theorem}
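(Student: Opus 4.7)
\textbf{Proof proposal for \Theoremref{async_convergence-strongly-convex-local-het}.}

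The plan is to mirror the strongly-convex synchronous argument of \Theoremref{convergence-strongly-convex-local}, with two ingredients that must be re-derived for \Algorithmref{memQSGD-asynchronous}: a bound on $\mathbb{E}\|\widehat{\bx}_t-\widetilde{\bx}_t\|_2^2$ (average memory) and a bound on $\frac{1}{R}\sum_r\mathbb{E}\|\widehat{\bx}_t-\widehat{\bx}_t^{(r)}\|_2^2$ (local-sequence deviation). Both will pick up extra factors in $H$ because, under the asynchronous schedule, each worker holds its own stale view $\bx_t^{(r)}$ rather than the common $\bx_t$, so compressed updates from different workers are applied to different reference points. I would define the virtual sequences exactly as in \eqref{eq:virtual_seq_defn}, set $\widetilde{\bx}_t:=\frac{1}{R}\sum_r\widetilde{\bx}_t^{(r)}$ and $\widehat{\bx}_t:=\frac{1}{R}\sum_r\widehat{\bx}_t^{(r)}$, and verify that the identity $\widehat{\bx}_t-\widetilde{\bx}_t=\frac{1}{R}\sum_r m_t^{(r)}$ from \Lemmaref{memory-maintenance} still holds for \Algorithmref{memQSGD-asynchronous} (this requires the averaged master update and the memory recursion on lines 9--10 to telescope consistently, now driven by $\bx_t^{(r)}$ rather than $\bx_t$).

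Next, I would derive the per-step recursion. Using $\widetilde{\bx}_{t+1}=\widetilde{\bx}_t-\eta_t\bp_t$, strong convexity, $L$-smoothness, $\eta_t\leq 1/(4L)$ (which holds since $a>32\kappa$), and the same algebraic manipulations behind \eqref{loc8}, one obtains
\begin{align*}
\mathbb{E}\|\widetilde{\bx}_{t+1}-\bx^*\|_2^2 &\leq \Bigl(1-\tfrac{\mu\eta_t}{2}\Bigr)\mathbb{E}\|\widetilde{\bx}_t-\bx^*\|_2^2-\tfrac{\eta_t\mu}{2L}e_t +\eta_t\Bigl(\tfrac{3\mu}{2}+3L\Bigr)\mathbb{E}\|\widehat{\bx}_t-\widetilde{\bx}_t\|_2^2\\
&\qquad +\tfrac{3\eta_tL}{R}\sum_{r=1}^R\mathbb{E}\|\widehat{\bx}_t-\widehat{\bx}_t^{(r)}\|_2^2+\eta_t^2\,A,
\end{align*}
with $e_t=\mathbb{E}[f(\widehat{\bx}_t)]-f^*$ and $A=\frac{\sum_r\sigma_r^2}{bR^2}$. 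This recursion is schedule-agnostic; only the last two error terms depend on async vs.\ sync.

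To close the recursion I would invoke the two asynchronous lemmas referenced by the authors in \Sectionref{loc-async}: \Lemmaref{bounded-memory-fixed-lr-asynchronous} applied with a decaying learning rate, which I expect to yield $\mathbb{E}\|m_t^{(r)}\|_2^2\leq \frac{12 C\eta_t^2 G^2H^2}{\gamma^2}+C_1\eta_t^2H^4G^2$ (with $C_1$ as in the theorem statement), and \Lemmaref{sequence-bound-asynchronous}, which I expect to give $\frac{1}{R}\sum_r\mathbb{E}\|\widehat{\bx}_t-\widehat{\bx}_t^{(r)}\|_2^2 \leq 8(1+C_2H^2)\eta_t^2 G^2H^2$. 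The extra $H^2$, $H^4$ factors are exactly the overhead for workers being out of sync: between two global syncs that some worker misses, its local parameter can drift by an extra $H$ SGD steps relative to the average, and this drift re-enters the memory recursion. Plugging these bounds into the recursion above produces
\begin{align*}
\mathbb{E}\|\widetilde{\bx}_{t+1}-\bx^*\|_2^2 \leq \Bigl(1-\tfrac{\mu\eta_t}{2}\Bigr)\mathbb{E}\|\widetilde{\bx}_t-\bx^*\|_2^2-\tfrac{\mu\eta_t}{2L}e_t+\eta_t^2 A+\eta_t^3 D,
\end{align*}
with $D$ precisely the quantity defined in the theorem.

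Finally, I would apply the same auxiliary recursion used in the synchronous case (the slightly-modified form of \cite[Lemma 3.3]{memSGD}) with $a_t=\mathbb{E}\|\widetilde{\bx}_t-\bx^*\|_2^2$, $\eta_t=8/(\mu(a+t))$ and weights $w_t=(a+t)^2$, $S_T=\sum_t w_t\geq T^3/3$, to get $\frac{\mu}{2LS_T}\sum_t w_t e_t \leq \frac{\mu a^3}{8S_T}\|\bx_0-\bx^*\|^2+\frac{4T(T+2a)}{\mu S_T}A+\frac{64T}{\mu^2 S_T}D$, and then invoke convexity on $\overline{\bx}_T=\frac{1}{S_T}\sum_t w_t\widehat{\bx}_t$ (noting $f(\widehat{\bx}_t)\leq \frac{1}{R}\sum_r f(\widehat{\bx}_t^{(r)})$ would only be needed if we averaged individual worker values; here the stated $\overline{\bx}_T$ averages the worker iterates and convexity of $f$ gives the required inequality). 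The main obstacle is the careful derivation of the two asynchronous bounds (\Lemmaref{bounded-memory-fixed-lr-asynchronous} and \Lemmaref{sequence-bound-asynchronous}): controlling the memory requires tracking a $\gamma$-contraction that is now perturbed both by the added compressed gradients of up to $H$ steps and by the fact that different workers reference different stale $\bx_t^{(r)}$, which is what forces the additional $H^2$ term inside $C_1$ and the $(1+C_2H^2)$ factor in the local-deviation bound. Once those lemmas are in hand, the rest of the proof is a direct adaptation of the synchronous argument with $B$ replaced by $D$.
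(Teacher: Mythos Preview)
Your overall plan is right and matches the paper's approach: reuse the strongly-convex recursion \eqref{loc8} (which is indeed schedule-agnostic), substitute asynchronous bounds for the two error terms, and close with the same weighted-sum argument. However, there is a genuine gap in your first step that would derail the attempt.

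You propose to ``verify that the identity $\widehat{\bx}_t-\widetilde{\bx}_t=\frac{1}{R}\sum_r m_t^{(r)}$ from \Lemmaref{memory-maintenance} still holds for \Algorithmref{memQSGD-asynchronous}.'' It does not. In the asynchronous algorithm, different workers synchronize at different times $t_r$, so at time $t$ the master's parameter $\bar{\bar{\bx}}_t$ already incorporates compressed updates $g_j^{(s)}$ from workers $s$ with $t_s>t_r$, while worker $r$'s local state does not see those. The paper handles this by the three-term decomposition
\[
\widehat{\bx}_t-\widetilde{\bx}_t = \Bigl[\tfrac{1}{R}\textstyle\sum_r\widehat{\bx}_{t_r}^{(r)}-\bar{\bar{\bx}}_{t_0'}\Bigr]+\Bigl[\bar{\bar{\bx}}_{t_0'}-\bar{\bar{\bx}}_t\Bigr]+\Bigl[\bar{\bar{\bx}}_t-\tfrac{1}{R}\textstyle\sum_r\widetilde{\bx}_{t_r}^{(r)}\Bigr],
\]
where $t_0'=\min_r t_r$, and only the \emph{third} term equals $\frac{1}{R}\sum_r m_t^{(r)}$. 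The first two terms, each bounded via $\mathbb{E}\|g_j^{(s)}\|^2$, are precisely what generate the extra $C_1\eta_t^2H^4G^2$ contribution in $\mathbb{E}\|\widehat{\bx}_t-\widetilde{\bx}_t\|_2^2$; this is \Lemmaref{bounded-memory-decaying-lr-asynchronous}, not the fixed-rate lemma you cite. Relatedly, the per-worker memory bound $\mathbb{E}\|m_t^{(r)}\|_2^2\leq 4C\eta_t^2H^2G^2/\gamma^2$ from \Lemmaref{bounded-memory-decaying-lr} is \emph{unchanged} in the asynchronous case (its proof is entirely local to worker $r$ and does not depend on synchrony), so the $H^4$ term you wrote down is not a memory bound at all---it arises from the two extra summands above. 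Once you use the correct decomposition and invoke \Lemmaref{bounded-memory-decaying-lr-asynchronous} directly for $\mathbb{E}\|\widehat{\bx}_t-\widetilde{\bx}_t\|_2^2$, together with \Lemmaref{sequence-bound-asynchronous} for the local-deviation term (your stated bound there is correct, with $C_2=C''$), the remainder of your argument goes through exactly as you describe.
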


\begin{corollary}
Under the same conditions as in \Theoremref{convergence-strongly-convex-local} with $gap(\mathcal{I}_T^{(r)})\leq H$, 
$a>\max\{ \frac{4H}{\gamma},32\kappa,H\}$, $\sigma_{max}=\max_{r\in[R]}\sigma_r$, 
if $\{\widehat{\bx}_t^{(r)}\}_{t=0}^{T-1}$ is generated according to \Algorithmref{memQSGD-asynchronous}, the following
holds:
\begin{align*}
    \mathbb{E}[f\left(\overline{\mathbf{x}}_T\right)]-f^*\leq \mathcal{O}\left(\frac{G^2H^3}{\mu^2 \gamma^3T^3}\right)+\mathcal{O}\left(\frac{\sigma_{max}^2}{\mu^2 bRT}+\frac{H\sigma_{max}^2}{\mu^2 bR\gamma T^2}\right)+\mathcal{O}\left(\frac{G^2}{\mu^3\gamma^2T^2}(H^2+H^4)\right),
\end{align*}
where $\overline{\mathbf{x}}_T$, $S_T$ are as defined in \Theoremref{convergence-strongly-convex-local}.
In order to ensure that the compression does not affect the dominating terms while converging at a rate of $\mathcal{O}\left(1/(bRT)\right)$, we would require $H=\mathcal{O}\left(\sqrt{\gamma}(T/(bR))^{1/4}\right)$.
\end{corollary}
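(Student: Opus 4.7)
This corollary is a direct asymptotic simplification of \Theoremref{async_convergence-strongly-convex-local-het}, so the plan is to start from that bound and plug in the appropriate choices of constants. Writing the bound as
\begin{align*}
\mathbb{E}[f(\overline{\mathbf{x}}_T)]-f^* \;\leq\; \underbrace{\tfrac{L a^3}{4S_T}\|\mathbf{x}_0-\mathbf{x}^*\|_2^2}_{(\mathrm{I})}\;+\;\underbrace{\tfrac{8LT(T+2a)}{\mu^2 S_T}A}_{(\mathrm{II})}\;+\;\underbrace{\tfrac{128LT}{\mu^3 S_T}D}_{(\mathrm{III})},
\end{align*}
I would first use the lower bound $S_T\geq T^3/3$ and the $L$-smooth, $\mu$-strongly-convex initialization bound $\mathbb{E}\|\bx_0-\bx^*\|_2^2\leq 4G^2/\mu^2$ from Lemma~2 of \cite{rakhlin}, exactly as in \Corollaryref{convergence-strongly-convex-local-cor}. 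Under the hypothesis $a>\max\{4H/\gamma,32\kappa,H\}$, the dominant scale of $a$ is $\Theta(H/\gamma+\kappa)$, so for fixed $\mu,L$ we can take $a=\mathcal{O}(H/\gamma)$ when simplifying, giving $(\mathrm{I})=\mathcal{O}(G^2 H^3/(\mu^2\gamma^3 T^3))$, which accounts for the first $\mathcal{O}(\cdot)$ in the statement.

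For term $(\mathrm{II})$, I would bound $A=\sum_{r=1}^R\sigma_r^2/(bR^2)\leq\sigma_{max}^2/(bR)$ and expand $T+2a$ to get two pieces: $8LT\cdot T/(\mu^2(T^3/3))\cdot A=\mathcal{O}(\sigma_{max}^2/(\mu^2 bR T))$, and $8LT\cdot 2a/(\mu^2(T^3/3))\cdot A=\mathcal{O}(a\sigma_{max}^2/(\mu^2 bR T^2))=\mathcal{O}(H\sigma_{max}^2/(\mu^2 bR\gamma T^2))$ after substituting $a=\mathcal{O}(H/\gamma)$. These match the middle $\mathcal{O}(\cdot)$ group. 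For term $(\mathrm{III})$, I would first check that the constant $C\geq 4a\gamma(1-\gamma^2)/(a\gamma-4H)$ is $\mathcal{O}(1)$ under $a>4H/\gamma$, so that $C_1,C_2$ are also $\mathcal{O}(1)$. Then the dominant pieces of $D$ are $\mathcal{O}(LG^2H^2/\gamma^2)$ (from the $12CG^2H^2/\gamma^2$ term scaled by $L$) and $\mathcal{O}(LG^2H^4)$ (from the $C_2H^2$ factor); the $\eta_t^2 H^4$ piece is lower order because $\eta_t=\mathcal{O}(1/\mu a)$ is bounded. Pushing through, $128LT\cdot D/(\mu^3(T^3/3))=\mathcal{O}(G^2(H^2+H^4)/(\mu^3\gamma^2 T^2))$, which gives the last $\mathcal{O}(\cdot)$ group.

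To determine the tolerated growth of $H$, I would compare each compression/locality-induced term against the statistical rate $1/(bRT)$. The tightest constraint comes from the $H^4$ piece in $(\mathrm{III})$: setting $G^2H^4/(\mu^3\gamma^2 T^2)\lesssim 1/(bRT)$ rearranges to $H^4\lesssim \gamma^2 T/(bR)$, i.e., $H=\mathcal{O}(\sqrt{\gamma}(T/(bR))^{1/4})$; one then verifies that under this choice the other compression terms ($H^3/T^3$, $H/T^2$, $H^2/T^2$) are strictly smaller than $1/(bRT)$, so no tighter constraint is imposed.

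\paragraph{Main obstacle.} The proof itself is bookkeeping, so the only delicate point is justifying that $C$ (and hence $C_1,C_2$) can be treated as $\mathcal{O}(1)$ when collapsing constants into big-$\mathcal{O}$: this requires the standing hypothesis $a>4H/\gamma$, which is exactly the condition ensuring $a\gamma-4H$ is bounded below by a constant fraction of $a\gamma$. Once this is established, the rest is straightforward substitution and comparison of polynomial rates in $T$ and $H$.
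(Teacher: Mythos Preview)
Your proposal is correct and follows essentially the same approach as the paper. The corollary is stated in the paper without an explicit proof, but it is the direct asynchronous analogue of \Corollaryref{convergence-strongly-convex-local-cor}, and your derivation---substituting $S_T\geq T^3/3$, $\mathbb{E}\|\bx_0-\bx^*\|_2^2\leq 4G^2/\mu^2$, $a=\mathcal{O}(H/\gamma)$, and checking that $C,C_1,C_2=\mathcal{O}(1)$ under $a>4H/\gamma$---is exactly the intended bookkeeping; your identification of the $H^4$ term in $(\mathrm{III})$ as the binding constraint for $H$ is also the right comparison.
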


\subsection{Proof Outlines}
\label{sec:proof_asynch}
Our proofs of these results follow the same outlines of the corresponding proofs in the synchronous setting, but some technical details change significantly, which arise because, in our asynchronous setting, workers are allowed to update the global parameter vector in between two consecutive synchronization time steps of other workers. 
Specifically, in the asynchronous setting, we have to bound the deviation of local sequences $\frac{1}{R}\sum_{r=1}^R\mathbb{E}\|\btx_t-\btx_t^{(r)}\|_2^2$ and the difference between the virtual and true sequences $\mathbb{E}\|\btx_t-\bwx_t\|_2^2$, both with a fixed learning rate as well as with decaying learning rate.
We show these below in \Lemmaref{sequence-bound-asynchronous}-\ref{lem:sequence-bound-asynchronous-fixed-lr} and \Lemmaref{bounded-memory-decaying-lr-asynchronous}-\ref{lem:bounded-memory-fixed-lr-asynchronous}.
\begin{lemma}[Contracting local sequence deviation]\label{lem:sequence-bound-asynchronous}
Let $gap(\mathcal{I}_T^{(r)}) \leq H$ holds for every $r\in[R]$.
For $\widehat{\bx}_t^{(r)}$ generated according to \Algorithmref{memQSGD-asynchronous} with decaying learning rate $\eta_t$ and letting $\bwx_t=\frac{1}{R}\sum_{r=1}^R\bwx_t^{(r)}$, we have the following bound on the deviation of the local sequences:
\begin{align*}
    \frac{1}{R}\sum_{r=1}^R\mathbb{E}\|\widehat{\bx}_t-\widehat{\bx}_t^{\left(r\right)}\|_2^2\ \leq\ 8(1+C''H^2)\eta_t^2G^2H^2,
\end{align*}
where $C''=8(4-2\gamma)(1+\frac{C}{\gamma^2})$ and $C$ is a constant satisfying $C\geq\frac{4a\gamma(1-\gamma^2)}{a\gamma-4H}$.
\end{lemma}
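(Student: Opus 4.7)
The plan is to mirror the synchronous proof of \Lemmaref{sequence-bound-synchronous} for decaying learning rate, but to account for the additional drift introduced by the fact that in \Algorithmref{memQSGD-asynchronous} different workers synchronize at different global time steps. For each worker $r$ and time $t$, let $\tau^{(r)}:=\max\{t'\in\I_T^{(r)}:t'\leq t\}$ denote its most recent synchronization time; the gap assumption gives $t-\tau^{(r)}\leq H$. Unrolling the local update rule of \Algorithmref{memQSGD-asynchronous} from $\tau^{(r)}$ to $t$ yields
\begin{align*}
\widehat{\mathbf{x}}_t^{(r)}\ =\ \bar{\bar{\mathbf{x}}}_{\tau^{(r)}}-\sum_{s=\tau^{(r)}}^{t-1}\eta_s\nabla f_{i_s^{(r)}}(\widehat{\mathbf{x}}_s^{(r)}).
\end{align*}
Averaging over $r$ to form $\widehat{\mathbf{x}}_t$ and subtracting, I would decompose $\widehat{\mathbf{x}}_t-\widehat{\mathbf{x}}_t^{(r)}=(\mathrm{I})+(\mathrm{II})$, where $(\mathrm{I}):=\tfrac{1}{R}\sum_{r'=1}^R\bar{\bar{\mathbf{x}}}_{\tau^{(r')}}-\bar{\bar{\mathbf{x}}}_{\tau^{(r)}}$ captures the mismatch between the master states seen by different workers at their synchronization times, while $(\mathrm{II})$ is the difference of accumulated local stochastic gradients (which is exactly what appears in the synchronous proof).

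For $(\mathrm{II})$ I would apply Jensen's inequality and the bounded second-moment assumption as in the synchronous case to obtain $\mathbb{E}\|(\mathrm{II})\|_2^2=\mathcal{O}(\eta_t^2H^2G^2)$, using that $\eta_s\leq 2\eta_t$ for $s\in[t-H,t]$ (since the learning rate is nonincreasing and $a>H$). For $(\mathrm{I})$ I would write each difference $\bar{\bar{\mathbf{x}}}_{\tau^{(r)}}-\bar{\bar{\mathbf{x}}}_{\tau^{(r')}}$ as the telescoping sum of master updates $-\tfrac{1}{R}\sum_{r''\in\calS_s}g_s^{(r'')}$ over $s$ between the two synchronization times, containing at most $2H$ summands since both times lie in $[t-H,t]$. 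Each $g_s^{(r'')}=Q\,Comp_k(m_s^{(r'')}+\mathbf{x}_s^{(r'')}-\widehat{\mathbf{x}}_{s+\frac{1}{2}}^{(r'')})$, so the compression property (\Definitionref{compression}) together with the triangle/Young inequalities controls $\mathbb{E}\|g_s^{(r'')}\|_2^2$ by a $(4-2\gamma)$-multiple of $\mathbb{E}\|m_s^{(r'')}\|_2^2$ plus the squared norm of the local virtual update accumulated over $[\tau^{(r'')},s]$. Invoking \Lemmaref{bounded-memory-decaying-lr-asynchronous} to bound $\mathbb{E}\|m_s^{(r'')}\|_2^2=\mathcal{O}(\eta_s^2H^2G^2/\gamma^2)$, and then squaring the $2H$-term sum via Jensen's inequality, gives $\mathbb{E}\|(\mathrm{I})\|_2^2=\mathcal{O}\bigl(H^2\cdot(4-2\gamma)(1+C/\gamma^2)\,\eta_t^2H^2G^2\bigr)$. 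Combining the two contributions, pulling $\eta_t^2G^2H^2$ out as a common factor, and keeping track of the numerical constants produced by the repeated Young splits yields exactly the claimed bound $8(1+C''H^2)\eta_t^2G^2H^2$ with $C''=8(4-2\gamma)(1+C/\gamma^2)$.

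The main obstacle will be the careful accounting in $(\mathrm{I})$. One must avoid a circular use of the sequence-deviation bound we are proving when controlling $\mathbb{E}\|g_s^{(r'')}\|_2^2$; this is resolved by the fact that \Lemmaref{bounded-memory-decaying-lr-asynchronous} bounds the memory using only the bounded-gradient assumption and the compression property, independently of the local sequence deviation itself. One also has to handle the interplay between the (possibly strict) subset $\calS_s\subseteq[R]$ of workers actually sending at time $s$ and the $\tfrac{1}{R}$ normalization appearing in both the master update and the definition of $\widehat{\mathbf{x}}_t$; this is routine bookkeeping but must be done consistently so that the $2H$ summands in $(\mathrm{I})$ are each dominated by the per-worker memory bound. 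Once these points are in place, the remaining manipulations are the direct asynchronous analogue of the synchronous proof.
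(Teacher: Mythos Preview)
Your overall strategy matches the paper's: both decompose each $\widehat{\mathbf{x}}_t^{(r)}$ into (a) local stochastic-gradient accumulation since the last synchronization, and (b) the drift of the master state $\bar{\bar{\mathbf{x}}}$ between different workers' synchronization times, then bound the latter via $\mathbb{E}\|g_j^{(s)}\|^2\leq (4-2\gamma)\,\mathbb{E}\|m_j^{(s)}+\mathbf{x}_j^{(s)}-\widehat{\mathbf{x}}_{j+\frac12}^{(s)}\|^2$. The paper streamlines the bookkeeping by first applying the variance-reduction inequality $\frac{1}{R}\sum_r\|\widehat{\mathbf{x}}_t^{(r)}-\widehat{\mathbf{x}}_t\|^2\leq \frac{1}{R}\sum_r\|\widehat{\mathbf{x}}_t^{(r)}-\bar{\bar{\mathbf{x}}}_{t_0'}\|^2$ with the single reference $t_0':=\min_r\tau^{(r)}$, which avoids the cross-terms your direct $(\mathrm{I})+(\mathrm{II})$ split would otherwise carry; but your route works once you apply the same variance-reduction trick componentwise to $(\mathrm{I})$ and $(\mathrm{II})$.

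There is, however, a genuine problem in your circularity discussion. You invoke \Lemmaref{bounded-memory-decaying-lr-asynchronous} to bound $\mathbb{E}\|m_s^{(r'')}\|^2$, but that lemma is \emph{not} a memory bound --- it bounds $\mathbb{E}\|\widehat{\mathbf{x}}_t-\widetilde{\mathbf{x}}_t\|^2$ --- and its proof in the paper actually reuses the estimate on $\mathbb{E}\|\widehat{\mathbf{x}}_{t_r}^{(r)}-\bar{\bar{\mathbf{x}}}_{t_0'}\|^2$ derived inside the proof of the very lemma you are proving. So citing it here is circular. The correct input is \Lemmaref{bounded-memory-decaying-lr}: the paper observes that its proof uses only $gap(\mathcal{I}_T^{(r)})\leq H$ and the compression property, hence applies verbatim in the asynchronous setting, giving $\mathbb{E}\|m_s^{(r'')}\|^2\leq 4C\eta_s^2H^2G^2/\gamma^2$ --- which is exactly the bound you wrote down. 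With that substitution your argument goes through and yields the stated constants.
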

\begin{lemma}[Bounded local sequence deviation]\label{lem:sequence-bound-asynchronous-fixed-lr}
Let $gap(\mathcal{I}_T^{(r)}) \leq H$ holds for every $r\in[R]$.
By running \Algorithmref{memQSGD-asynchronous} with fixed learning rate $\eta$, we have
\begin{align*}
    \frac{1}{R}\sum_{r=1}^R\mathbb{E}\|\widehat{\bx}_t-\widehat{\bx}_t^{\left(r\right)}\|_2^2\ \leq\ (2+H^2C')\eta^2G^2H^2,
\end{align*}
where $C'=(\frac{16}{\gamma^2}-12)(4-2\gamma)$.
\end{lemma}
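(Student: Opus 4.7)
The plan is to mirror the synchronous argument of \Lemmaref{sequence-bound-synchronous-fixed-lr} but carefully track the additional drift introduced by asynchrony. First I would fix a time $t$ and a worker $r$, and let $t_r\leq t$ denote the most recent synchronization index for $r$, so that $t-t_r\leq H$ by hypothesis. Unrolling the worker update in \Algorithmref{memQSGD-asynchronous} between $t_r$ and $t$ gives
\begin{align*}
\bwx_t^{(r)}\ =\ \doublebarbx_{t_r}-\eta\sum_{\ell=t_r}^{t-1}\nabla f_{i_\ell^{(r)}}(\bwx_\ell^{(r)}).
\end{align*}
Averaging this identity over $r\in[R]$ and subtracting from $\bwx_t^{(r)}$ yields a decomposition of $\bwx_t-\bwx_t^{(r)}$ into two pieces: a \emph{local drift} involving sums of at most $H$ stochastic gradients (exactly the term present in the synchronous case), and a \emph{master drift} capturing differences $\doublebarbx_{t_{r'}}-\doublebarbx_{t_r}$ caused by the fact that different workers last synchronized at different times.

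Second, I would bound the local drift by the same argument as in \Lemmaref{sequence-bound-synchronous-fixed-lr}: by Jensen applied to the $H$-term sum and the second moment bound $\mathbb{E}\|\nabla f_i(\cdot)\|_2^2\leq G^2$, this contributes at most $\eta^2G^2H^2$ after averaging over $r$, which accounts for the leading $2$ in the stated bound once we pair it with the master drift via $(a+b)^2\leq 2a^2+2b^2$.

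Third---and this is the main new step---I would bound the master drift. Over any window of length at most $H$, $\doublebarbx$ moves by $\frac{1}{R}$ times a sum of compressed updates $g_\ell^{(s)}=Q\,Comp_k(m_\ell^{(s)}+\bx_\ell^{(s)}-\bwx_{\ell+\nicefrac{1}{2}}^{(s)})$, one per worker $s\in\calS_\ell$ that synchronizes at time $\ell$. Using Young's inequality $(a+b)^2\leq(1+\alpha)a^2+(1+\nicefrac{1}{\alpha})b^2$ with the choice $\alpha=\gamma/(2-\gamma)$ (which produces the $(4-2\gamma)$ factor) together with the reverse compression inequality $\mathbb{E}\|Q\,Comp_k(\mathbf{v})\|_2^2\leq(2-\gamma)\|\mathbf{v}\|_2^2$ implied by \Definitionref{compression}, each $\mathbb{E}\|g_\ell^{(s)}\|_2^2$ is bounded by a constant multiple of $\mathbb{E}\|m_\ell^{(s)}\|_2^2+\eta^2G^2H^2$. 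Plugging in the fixed-learning-rate memory bound $\mathbb{E}\|m_\ell^{(s)}\|_2^2=\mathcal{O}\!\left(\tfrac{1-\gamma^2}{\gamma^2}\eta^2H^2G^2\right)$ from \Lemmaref{bounded-memory-fixed-lr-asynchronous} and summing over $\leq H$ time steps produces the $H^2C'\eta^2G^2H^2$ piece, which together with the local drift yields $(2+H^2C')\eta^2G^2H^2$.

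The main obstacle is the bookkeeping for the master drift: because the set $\calS_\ell\subseteq[R]$ of workers synchronizing in round $\ell$ is arbitrary and $\doublebarbx$ is only updated from those workers, I must carefully account for which $g_\ell^{(s)}$'s actually contribute between $t_r$ and $t_{r'}$ without over-counting. A clean way to handle this is to control $\|\doublebarbx_{t''}-\doublebarbx_{t'}\|_2^2$ for $|t''-t'|\leq H$ by $H\sum_{\ell=t'}^{t''-1}\bigl\|\frac{1}{R}\sum_{s\in\calS_\ell}g_\ell^{(s)}\bigr\|_2^2$ via Jensen, and then upper-bound the inner norm by $\frac{1}{R}\sum_{s\in[R]}\|g_\ell^{(s)}\|_2^2$, producing a per-interval bound scaling as $H^2$ that, after the per-step bound above, matches the $H^2C'$ factor in the statement with $C'=(\tfrac{16}{\gamma^2}-12)(4-2\gamma)$.
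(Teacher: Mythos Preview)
Your overall strategy matches the paper's: anchor at the last synchronization times, split into a local gradient drift and a master-side drift $\doublebarbx_{t_{r'}}-\doublebarbx_{t_r}$, and control the latter via per-step bounds on $\mathbb{E}\|g_j^{(s)}\|_2^2$ that feed in the memory bound. Two specific steps need correction, however.

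First, the ``reverse compression inequality'' $\mathbb{E}\|Q\,Comp_k(\mathbf{v})\|_2^2\leq(2-\gamma)\|\mathbf{v}\|_2^2$ does \emph{not} follow from \Definitionref{compression}: take the deterministic operator $C(\mathbf{v})=\tfrac{3}{2}\mathbf{v}$, for which $\|\mathbf{v}-C(\mathbf{v})\|_2^2=\tfrac{1}{4}\|\mathbf{v}\|_2^2$ (so $\gamma=\tfrac{3}{4}$) yet $\|C(\mathbf{v})\|_2^2=\tfrac{9}{4}\|\mathbf{v}\|_2^2>\tfrac{5}{4}\|\mathbf{v}\|_2^2$. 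The paper instead uses the cruder but valid estimate $\mathbb{E}\|C(\mathbf{u})\|_2^2\leq 2\|\mathbf{u}\|_2^2+2\,\mathbb{E}\|\mathbf{u}-C(\mathbf{u})\|_2^2\leq(4-2\gamma)\|\mathbf{u}\|_2^2$ directly, and then splits $\|m+\Delta\|_2^2\leq 2\|m\|_2^2+2\|\Delta\|_2^2$; no Young-with-tuned-$\alpha$ maneuver is needed, and yours is built on a false premise. Second, you invoke \Lemmaref{bounded-memory-fixed-lr-asynchronous} for the bound on $\mathbb{E}\|m_\ell^{(s)}\|_2^2$, but that lemma bounds $\mathbb{E}\|\widehat{\bx}_t-\btx_t\|_2^2$, not the memory, and its proof \emph{uses} the present lemma---so the citation is both wrong and circular. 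The correct reference is \Lemmaref{bounded-memory-fixed-lr}, whose proof depends only on $gap(\mathcal{I}_T^{(r)})\leq H$ and therefore carries over to the asynchronous algorithm unchanged. As a side remark on your ``main obstacle'': the paper avoids the pairwise bookkeeping by anchoring every worker at the single point $\doublebarbx_{t_0'}$ with $t_0':=\min_r t_r$ and applying the variance-around-mean inequality $\sum_r\|\bwx_t^{(r)}-\bwx_t\|_2^2\leq\sum_r\|\bwx_t^{(r)}-\doublebarbx_{t_0'}\|_2^2$ first, so every master drift is over a single interval $[t_0',t_r]$ of length at most $H$.
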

We prove these above two lemmas in \Appendixref{asy1} and \Appendixref{asy2}, respectively.
Note that the bound in \Lemmaref{sequence-bound-asynchronous} is $\frac{1}{R}\sum_{r=1}^R\mathbb{E}\|\widehat{\mathbf{x}}_t-\widehat{\mathbf{x}}_t^{\left(r\right)}\|_2^2 \leq \mathcal{O}(\eta_t^2G^2(H^2+\nicefrac{H^4}{\gamma^2})$, which is weaker than the corresponding bound $\mathcal{O}(\eta_t^2G^2H^2)$ for the synchronous setting in \Lemmaref{sequence-bound-synchronous}. 
See \Lemmaref{sequence-bound-asynchronous-fixed-lr} and \Lemmaref{sequence-bound-synchronous-fixed-lr} for a similar comparison for the case of fixed learning rate. \\

Now we bound $\mathbb{E}\|\btx_t-\bwx_t\|_2^2$.
Fix a time $t$ and consider any worker $r\in[R]$. Let $t_r\in\I_T^{(r)}$ denote the last synchronization step until time $t$ for the $r$'th worker. Define $t_0':=\min_{r\in [R]}t_r$. We want to bound $\bbE\|\bwx_t-\btx_t\|_2^2$. 
Note that in the synchronous case, we have shown in \Lemmaref{memory-maintenance} that $\bwx_t-\bwx_t=\frac{1}{R}\sum_{r=1}^Rm_t^{(r)}$. This does not hold in the asynchronous setting, which makes upper-bounding $\bbE\|\bwx_t-\btx_t\|_2^2$ a bit more involved.
By definition $\widehat{\mathbf{x}}_t-\widetilde{\mathbf{x}}_t=\frac{1}{R}\sum_{r=1}^R\left(\widehat{\mathbf{x}}_t^{(r)}-\widetilde{\mathbf{x}}_t^{(r)}\right)$. By the definition of virtual sequences and the update rule for $\bwx_t^{(r)}$, 
we also have $\widehat{\bx}_t-\btx_t =\frac{1}{R}\sum_{r=1}^R\left(\widehat{\bx}_{t_r}^{(r)}-\btx_{t_r}^{(r)}\right)$.
This can be written as
\begin{align}
\widehat{\bx}_t-\btx_t\ &=\ \left[\frac{1}{R}\sum_{r=1}^R\widehat{\bx}_{t_r}^{(r)}-\bar{\bar{\bx}}_{t_0'}\right]+\left[\bar{\bar{\bx}}_{t_0'}-\bar{\bar{\bx}}_t\right]+\left[\bar{\bar{\bx}}_t-\frac{1}{R}\sum_{r=1}^R\btx_{t_r}^{(r)}\right]. \label{async-memory-bound1_1}
\end{align}
In \eqref{async-memory-bound1_1}, the third term on the RHS is equal to the average memory as shown in \eqref{eq:async_memory-equivalence} in \Appendixref{asy3}, and unlike \Lemmaref{memory-maintenance} in the synchronous setting, which states that $\widehat{\mathbf{x}}_t-\widetilde{\mathbf{x}}_t = \frac{1}{R}\sum_{r=1}^Rm_t^{(r)}$, does not hold here. 
However, we can show that $\widehat{\mathbf{x}}_t-\widetilde{\mathbf{x}}_t$ is equal to the sum of $\frac{1}{R}\sum_{r=1}^Rm_t^{(r)}$ and an additional term, which leads to potentially a weaker bound $\mathbb{E}\|\widehat{\mathbf{x}}_t-\widetilde{\mathbf{x}}_t\|_2^2 \leq \mathcal{O}\left(\nicefrac{\eta_{t}^2}{\gamma^2}G^2(H^2+H^4)\right)$, proved in \Lemmaref{bounded-memory-decaying-lr-asynchronous}-\ref{lem:bounded-memory-fixed-lr-asynchronous} in \Appendixref{asy3} and \Appendixref{asy4}, in comparison to~$\mathcal{O}\left(\nicefrac{\eta_{t}^2}{\gamma^2}G^2H^2\right)$ for the synchronous setting.
\begin{lemma}[Contracting distance between virtual and true sequence]\label{lem:bounded-memory-decaying-lr-asynchronous}
Let $gap(\mathcal{I}_T^{(r)}) \leq H$ holds for every $r\in[R]$.
If we run \Algorithmref{memQSGD-asynchronous} with a decaying learning rate $\eta_t$, then we have the following bound on the difference between the true and virtual sequences:
\begin{align*}
\mathbb{E}    \|\widehat{\bx}_t-\btx_t\|_2^2\ &\leq\ C'\eta_{t}^2H^4G^2 + 12C\frac{\eta_{t}^2}{\gamma^2}G^2H^2,
\end{align*}
where $C'=192(4-2\gamma)\left(1+\frac{C}{\gamma^2}\right)$ and $C$ is a constant satisfying $C\geq\frac{4a\gamma(1-\gamma^2)}{a\gamma-4H}$.
\end{lemma}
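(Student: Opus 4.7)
The plan is to start from the three-way decomposition given in \eqref{async-memory-bound1_1},
\[
\widehat{\bx}_t - \btx_t \;=\; T_1 + T_2 + T_3,
\]
with $T_1 := \frac{1}{R}\sum_{r=1}^R \widehat{\bx}_{t_r}^{(r)} - \bar{\bar{\bx}}_{t_0'}$, $T_2 := \bar{\bar{\bx}}_{t_0'} - \bar{\bar{\bx}}_t$, and $T_3 := \bar{\bar{\bx}}_t - \frac{1}{R}\sum_{r=1}^R \btx_{t_r}^{(r)}$. Passing to expected squared norms and applying $\|a+b+c\|_2^2 \leq 3(\|a\|_2^2+\|b\|_2^2+\|c\|_2^2)$, it suffices to bound each piece separately. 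The two summands in the target bound correspond exactly to the two qualitatively different sources of error: compression memory (giving the $\eta_t^2 H^2/\gamma^2$ contribution) and the drift of the master parameter during the window $[t_0',t]$ (giving the $\eta_t^2 H^4$ contribution).

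\textbf{The memory piece $T_3$.} The paper notes (with the identity to be established in \Appendixref{asy3}) that $T_3 = \frac{1}{R}\sum_r m_t^{(r)}$, the natural asynchronous analogue of \Lemmaref{memory-maintenance}. Jensen's inequality and \Lemmaref{bounded-memory-decaying-lr} then yield $\mathbb{E}\|T_3\|_2^2 \leq \frac{1}{R}\sum_r \mathbb{E}\|m_t^{(r)}\|_2^2 \leq 4C\eta_t^2 H^2 G^2/\gamma^2$, and multiplying by the factor of $3$ from the splitting delivers the $12 C\eta_t^2 H^2 G^2/\gamma^2$ summand in the stated bound.

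\textbf{The local-progress piece $T_1$.} For each worker $r$, I would unroll $\widehat{\bx}_{t_r}^{(r)}$ from the most recent master handoff, which by the update rule of \Algorithmref{memQSGD-asynchronous} equals $\bar{\bar{\bx}}_{t_0'}$ (or an earlier synchronization with at most $H$ intervening local steps, using $\mathrm{gap}(\mathcal{I}_T^{(r)})\leq H$ for every $r$). Each such local step has stochastic gradient bounded by $G$ under the second-moment assumption of \Subsectionref{assumptions}, so Jensen across at most $H$ terms gives $\mathbb{E}\|T_1\|_2^2 \lesssim \eta_t^2 H^2 G^2$, of the same order as the synchronous local-deviation bound in \Lemmaref{sequence-bound-synchronous}. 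This contribution is absorbed into the $C'\eta_t^2 H^4 G^2$ term since $H\geq 1$.

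\textbf{The master-drift piece $T_2$ (main obstacle).} Between $t_0'$ and $t$, the master parameter moves by aggregating all compressed messages received in that window, so $T_2 = -\frac{1}{R}\sum_{s=t_0'}^{t-1}\sum_{r\in\calS_s} g_s^{(r)}$ where $g_s^{(r)} = QComp_k\!\bigl(m_s^{(r)} + \bx_s^{(r)} - \widehat{\bx}_{s+\frac{1}{2}}^{(r)}\bigr)$. Using \Definitionref{compression} in the form $\mathbb{E}\|Comp(v)\|_2^2 \leq (2-\gamma)\|v\|_2^2$ (immediate from expanding $\|v-Comp(v)\|_2^2 \leq (1-\gamma)\|v\|_2^2$), one gets $\mathbb{E}\|g_s^{(r)}\|_2^2 \leq (2-\gamma)\bigl(2\mathbb{E}\|m_s^{(r)}\|_2^2 + 2\eta_s^2 H^2 G^2\bigr)$; plugging in \Lemmaref{bounded-memory-decaying-lr} once more yields $\mathbb{E}\|g_s^{(r)}\|_2^2 \lesssim (4-2\gamma)(1+C/\gamma^2)\eta_t^2 H^2 G^2$. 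Finally, since $t-t_0' \leq H$ and $|\calS_s|\leq R$, Cauchy--Schwarz on the (at most $HR$) summands produces $\mathbb{E}\|T_2\|_2^2 \lesssim (4-2\gamma)(1+C/\gamma^2)\eta_t^2 H^4 G^2$. Tracking the numerical constants through the factor-of-$3$ splitting then recovers the claimed $C' = 192(4-2\gamma)(1+C/\gamma^2)$. The hard part is precisely this step: the coupling between compression variance and the decaying-step-size memory bound is what inflates the dependence from $H^2$ to $H^4$, and one has to be careful to apply the uniform bound $t-t_0'\leq H$ rather than any single worker's gap, because different workers have different last-sync times $t_r$ inside the window.
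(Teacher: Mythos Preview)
Your three-way split and your treatment of $T_3$ (the memory piece) are exactly what the paper does, and the $12C\eta_t^2 H^2 G^2/\gamma^2$ term falls out correctly.

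The gap is in your handling of $T_1$. You describe $T_1$ as a ``local-progress piece'' and claim that $\widehat{\bx}_{t_r}^{(r)}$ equals $\bar{\bar{\bx}}_{t_0'}$ up to at most $H$ local gradient steps, yielding an $O(\eta_t^2 H^2 G^2)$ bound. This is not what $T_1$ is. The index $t_r$ is a \emph{synchronization} time for worker $r$, so by line 11 of \Algorithmref{memQSGD-asynchronous} we have $\widehat{\bx}_{t_r}^{(r)} = \bar{\bar{\bx}}_{t_r}$, the master parameter at time $t_r$, with no local steps involved. Hence each summand of $T_1$ is $\bar{\bar{\bx}}_{t_r} - \bar{\bar{\bx}}_{t_0'}$, which is the master drift over the sub-window $[t_0',t_r]$ caused by compressed messages $g_j^{(s)}$ from \emph{other} workers. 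In other words, $T_1$ is the same kind of object as $T_2$ and must be bounded the same way: Jensen over at most $H$ messages per worker, each of squared norm $\leq 8(4-2\gamma)(1+C/\gamma^2)\eta_{t_0'}^2 H^2 G^2$, followed by $\eta_{t_0'}\leq 2\eta_t$. This gives $\mathbb{E}\|\widehat{\bx}_{t_r}^{(r)} - \bar{\bar{\bx}}_{t_0'}\|^2 \leq 32(4-2\gamma)(1+C/\gamma^2)\eta_t^2 H^4 G^2$, exactly matching the bound for $T_2$. The claimed constant $C'=192(4-2\gamma)(1+C/\gamma^2)$ is precisely $3\cdot(32+32)$; without the $T_1$ contribution you cannot recover it from $T_2$ alone.

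A smaller point: the compression inequality $\mathbb{E}\|Comp(v)\|_2^2 \leq (2-\gamma)\|v\|_2^2$ does not follow from \Definitionref{compression}. The bound the paper (and standard arguments) use is $\mathbb{E}\|Comp(v)\|_2^2 \leq 2\|v\|_2^2 + 2\mathbb{E}\|v-Comp(v)\|_2^2 \leq (4-2\gamma)\|v\|_2^2$, which is where the factor $B = 4-2\gamma$ originates.
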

\begin{lemma}[Bounded distance between virtual and true sequence]\label{lem:bounded-memory-fixed-lr-asynchronous}
Let $gap(\mathcal{I}_T^{(r)}) \leq H$ holds for every $r\in[R]$.
If we run \Algorithmref{memQSGD-asynchronous} with a fixed learning rate $\eta$, we have
\begin{align*}
\mathbb{E}    \|\widehat{\bx}_t-\btx_t\|_2^2\ &\leq\ 6C'\eta^2H^4G^2+\frac{12\eta^2(1-\gamma^2)}{\gamma^2}G^2H^2,
\end{align*}
where $C'=(4-2\gamma)\left(\frac{8}{\gamma^2}-6\right)$.
\end{lemma}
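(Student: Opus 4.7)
The plan is to mirror the proof of \Lemmaref{bounded-memory-decaying-lr-asynchronous}, but substitute the fixed-learning-rate memory bound from \Lemmaref{bounded-memory-fixed-lr} in place of the decaying-rate bound from \Lemmaref{bounded-memory-decaying-lr}. I would start from the three-term decomposition already recorded in \eqref{async-memory-bound1_1}:
\begin{align*}
\widehat{\bx}_t-\btx_t\ =\ \underbrace{\Big[\tfrac{1}{R}\sum_{r=1}^R\widehat{\bx}_{t_r}^{(r)}-\bar{\bar{\bx}}_{t_0'}\Big]}_{T_1}+\underbrace{\Big[\bar{\bar{\bx}}_{t_0'}-\bar{\bar{\bx}}_t\Big]}_{T_2}+\underbrace{\Big[\bar{\bar{\bx}}_t-\tfrac{1}{R}\sum_{r=1}^R\btx_{t_r}^{(r)}\Big]}_{T_3},
\end{align*}
apply $\|a+b+c\|_2^2\le 3(\|a\|_2^2+\|b\|_2^2+\|c\|_2^2)$ under expectation, and bound each term separately.

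For $T_3$, I would invoke the identity proved in \Appendixref{asy3} showing $T_3=\tfrac{1}{R}\sum_r m_t^{(r)}$, apply Jensen to get $\mathbb{E}\|T_3\|_2^2\le \tfrac{1}{R}\sum_r \mathbb{E}\|m_t^{(r)}\|_2^2$, and then use \Lemmaref{bounded-memory-fixed-lr} to conclude $\mathbb{E}\|T_3\|_2^2\le 4\eta^2(1-\gamma^2)H^2G^2/\gamma^2$; after the factor of $3$, this produces the $12\eta^2(1-\gamma^2)H^2G^2/\gamma^2$ summand in the statement. For $T_1$, note that just after worker $r$ synchronizes at time $t_r$ we have $\widehat{\bx}_{t_r}^{(r)}=\bar{\bar{\bx}}_{t_r}$, so $T_1$ reduces to a weighted difference of master iterates at various times in $[t_0',t]$; telescoping across at most $H$ master updates in this window and using $\|\nabla f_{i_t^{(r)}}\|_2\le G$ together with the compression property gives a bound of order $\eta^2G^2H^2$ plus a memory-variance contribution of order $\eta^2G^2H^4/\gamma^2$ (via the fixed-learning-rate memory bound). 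For $T_2$, each of the at most $H$ master updates in $[t_0',t]$ consists of a compressed update $g_s^{(r)}=QComp_k(m_s^{(r)}+\bx_s^{(r)}-\widehat{\bx}_{s+\frac{1}{2}}^{(r)})$, so by the compression property $\|g_s^{(r)}\|_2^2\le 2\|m_s^{(r)}\|_2^2+2\|\bx_s^{(r)}-\widehat{\bx}_{s+\frac{1}{2}}^{(r)}\|_2^2$; the first factor is bounded by $4\eta^2(1-\gamma^2)H^2G^2/\gamma^2$ via \Lemmaref{bounded-memory-fixed-lr}, and the second by $H^2\eta^2 G^2$ via telescoping the worker's local SGD steps under the bounded gradient assumption. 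Summing these $H$ contributions and applying the inequality $(\sum_{s}x_s)^2\le H\sum_s x_s^2$ yields a bound on $\mathbb{E}\|T_2\|_2^2$ of the same order as $T_1$.

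Combining the three bounds and collecting constants gives a bound of the form $6(4-2\gamma)(8/\gamma^2-6)\eta^2H^4G^2 + 12\eta^2(1-\gamma^2)H^2G^2/\gamma^2$, matching the statement with $C'=(4-2\gamma)(8/\gamma^2-6)$. The main obstacle is the clean bookkeeping needed to bound $T_1$ and $T_2$: the master iterate $\bar{\bar{\bx}}_s$ in this window depends on compressed updates from a changing subset of workers, and each worker's compressed update recursively depends on its own memory state. The key is to split $\|g_s^{(r)}\|_2^2$ cleanly into a memory term and a local-SGD drift term (using the compression-plus-triangle inequality $(4-2\gamma)$-factor that already appears in the decaying-rate proof), which avoids recursion and allows one to invoke the fixed-learning-rate memory bound from \Lemmaref{bounded-memory-fixed-lr} uniformly in time, finally producing the stated $(1-\gamma^2)/\gamma^2$ factor instead of the $C/\gamma^2$ factor appearing in \Lemmaref{bounded-memory-decaying-lr-asynchronous}.
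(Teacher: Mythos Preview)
Your proposal is correct and follows essentially the same route as the paper: the same three-term decomposition of $\widehat{\bx}_t-\btx_t$, the factor-$3$ Jensen bound, the memory identity $T_3=\frac{1}{R}\sum_r m_t^{(r)}$ combined with \Lemmaref{bounded-memory-fixed-lr} for the $12\eta^2(1-\gamma^2)H^2G^2/\gamma^2$ term, and bounds on $T_1,T_2$ via $\mathbb{E}\|g_j^{(s)}\|_2^2\le (4-2\gamma)\big(2\mathbb{E}\|m_j^{(s)}\|_2^2+2\mathbb{E}\|\bx_j^{(s)}-\widehat{\bx}_{j+\frac{1}{2}}^{(s)}\|_2^2\big)$ together with the fixed-rate memory bound. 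One minor slip: where you wrote $\|g_s^{(r)}\|_2^2\le 2\|m_s^{(r)}\|_2^2+2\|\bx_s^{(r)}-\widehat{\bx}_{s+\frac{1}{2}}^{(r)}\|_2^2$ you dropped the $(4-2\gamma)$ prefactor (which you correctly reinstate later); with that factor in place the constants collapse exactly to $6C'$ as stated.
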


\paragraph{Summary of our results.} Now we give a brief summary of our convergence results in the synchronous as well as asynchronous settings.
\begin{enumerate}
\item In the synchronous setting, \emph{Qsparse-local-SGD} asymptotically converges as fast as distributed vanilla SGD for $H=\mathcal{O}\left(\gamma T^{1/4}/(bR)^{3/4}\right)$ in the smooth and non-convex case and for $H=\mathcal{O}\left(\gamma\sqrt{T/(bR)}\right)$ in the strongly convex case.
\item In the asynchronous setting, \emph{Qsparse-local-SGD} asymptotically converges as fast as distributed vanilla SGD for $H=\mathcal{O}(\sqrt{\gamma} T^{1/8}/(bR)^{3/8})$ in the smooth and non-convex case and for $H=\mathcal{O}(\sqrt{\gamma}(T/(bR))^{1/4})$ in the strongly convex case.
\end{enumerate}
Therefore, our algorithm provides a lot of flexibility in terms of different ways of mitigating the communication bottleneck. For example, by increasing the batch size on each node, or by increasing the maximum synchronization period $H$ up to allowable limits. Furthermore, one could also choose to opt for different values of $k$ for the $\textrm{Top}_k$ sparsifier, as well as adjust the configurations of the quantizer.
We present numerics in \Sectionref{expmt} demonstrating significant savings in the number of bits exchanged over the state-of-the-art.

\section{Experimental Results}
\label{sec:expmt}
In this section we give extensive experimental results for validating our theoretical findings.
\subsection{Non-Convex Objective} \label{sec:numerics_NC} 
\subsubsection{Experiment Setup}
We train ResNet-50 \cite{resnet50} (which has $d=25,610,216$
parameters) on ImageNet dataset, using 8 NVIDIA Tesla V100 GPUs. We
use a learning rate schedule consisting of 5 epochs of linear warmup,
followed by a piecewise decay of 0.1 at epochs 30, 60 and 80, with a batch size of
256 per GPU. For the purpose of experiments, we focus on SGD with
momentum of 0.9, applied on the local iterations of the workers. We build our compression scheme into the Horovod framework
\cite{sergeev2018hvd}.
We use $SignTop_k$ as defined in \Lemmaref{composed-sign} and $QTop_k$ as defined in \Lemmaref{composed-compression} (which has an operating regime $\beta_{k,s}<1$), where $Q$ is from \cite{QSGD}, as our composed operators.\footnote{\label{foot:scaled-vs-unscaled}Even though the ``scaled'' $QTop_k$ from \Lemmaref{composed-quantizer} (with a scaling factor of $(1+\beta_{k,s})$) works with all values of $\beta_{k,s}$, and also does better than the ``unscaled'' $QTop_k$ from \Lemmaref{composed-compression} even when $\beta_{k,s}<1$ (see \Remarkref{scaled-vs-unscaled}), we report our experimental results in the non-convex setting only with unscaled $QTop_k$. We give some plots for a comparison on both these operators in \Appendixref{supp_expmts} and observe that our algorithm with the unscaled operator gives at least as good performance as it gives with the scaled operator. We can attribute this to the fact that scaling the composed operator is a sufficient condition to obtain better convergence results, which does not necessarily mean that in practice also it does better.}
In $Top_k$, we only update $k_t = \min(d_t, 1000)$ elements per step for each tensor $t$, where $d_t$ is
the number of elements in the tensor. For ResNet-50 architecture, this
amounts to updating a total of $k=99,400$ elements per step.

\begin{figure}[h]
\centering
\begin{subfigure}{0.49\textwidth}
\includegraphics[scale=0.54]{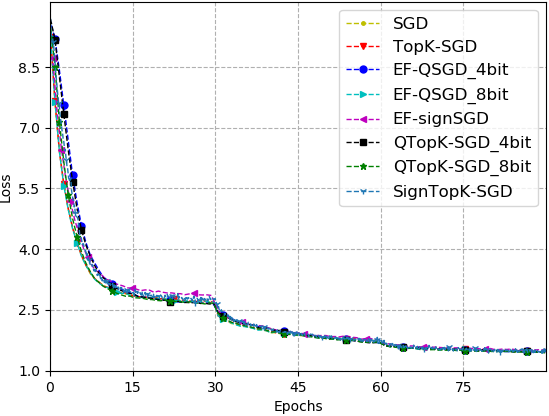}
\captionsetup{justification=centering}
\caption{Training loss vs epochs }
\label{fig:mid_lo-iter_a}
\centering
\end{subfigure}
\begin{subfigure}{0.49\textwidth}
\includegraphics[scale=0.54]{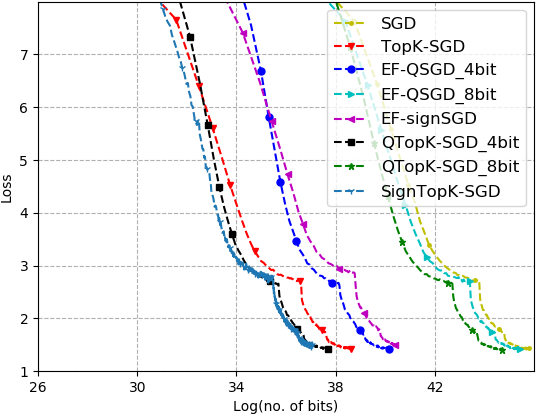}
\captionsetup{justification=centering}
\caption{Training loss vs $\textrm{log}_2$ of communication budget }
\label{fig:mid_lo-bit_a}
\centering
\end{subfigure}\\ \vspace{0.5cm}
\begin{subfigure}{0.49\textwidth}
\includegraphics[scale=0.54]{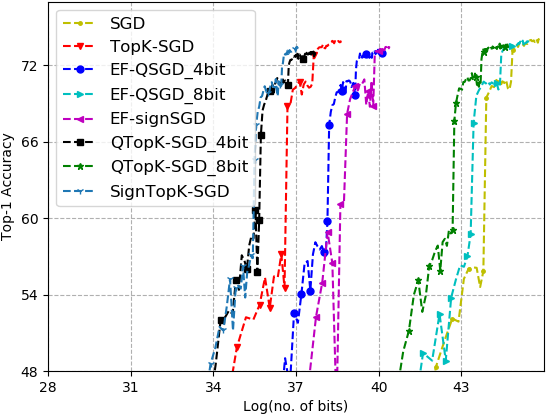}
\captionsetup{justification=centering}
\caption{top-1 accuracy \cite{lapin} for schemes in \Figureref{mid_lo-iter_a} }
\label{fig:te_1_a}
\centering
\end{subfigure}
\begin{subfigure}{0.49\textwidth}
\includegraphics[scale=0.54]{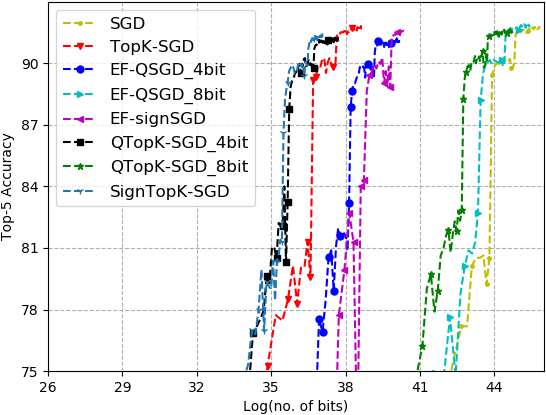}
\captionsetup{justification=centering}
\caption{top-5 accuracy \cite{lapin} for schemes in \Figureref{mid_lo-iter_a} }
\label{fig:te_5_a}
\centering
\end{subfigure}
\caption{\Figureref{mid_lo-iter_a}-\ref{fig:te_5_a} demonstrate the gains in performance achieved by our $Qsparse$ operators in the non-convex setting.}
\end{figure}

\subsubsection{Results}
From \Figureref{mid_lo-iter_a}, we observe that quantization and sparsification, both individually and combined, when
error compensation is enabled through accumulating errors, has almost
no penalty in terms of convergence rate, with respect to vanilla
SGD. We observe that both $QTop_k$-$SGD$, which employs a 4 bit quantizer and the $Top_k$ sparsifier, as well as $SignTop_k$-$SGD$, which employs the 1 bit sign quantizer and the $Top_k$ sparsifier, demonstrate superior performance over other schemes, both in terms of the required number of communicated bits for achieving certain target loss as well as test accuracy. This is because, in $QTop_k$, the $Q$ operator from \cite{QSGD} further induces sparsity, which results in fewer than $k$ coordinates being transmitted, and in $SignTop_k$, we send only 1 bit for each $Top_k$ coordinate.

\begin{figure*}[ht!]
\begin{subfigure}{0.49\textwidth}
\centering
\includegraphics[scale=0.55]{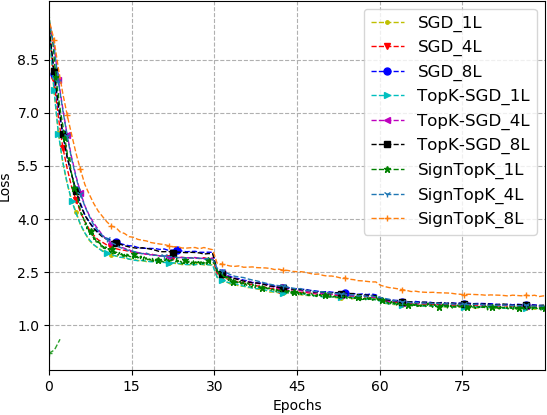}
\captionsetup{justification=centering}
\caption{Training loss vs epochs }
\label{fig:mid_lo-iter_d}
\centering
\end{subfigure}
\begin{subfigure}{0.49\textwidth}
\centering
\includegraphics[scale=0.55]{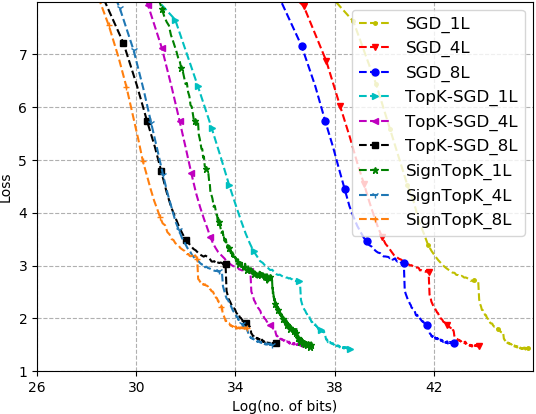}
\captionsetup{justification=centering}
\caption{Training loss vs $\textrm{log}_2$ of communication budget }
\label{fig:mid_lo-bit_d}
\centering
\end{subfigure}\\ \vspace{0.5cm}
\begin{subfigure}{0.49\textwidth}
\centering
\includegraphics[scale=0.55]{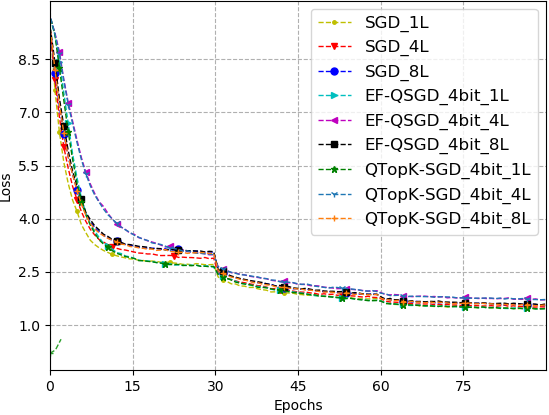}
\captionsetup{justification=centering}
\caption{Training loss vs epochs }
\label{fig:mid_lo-iter_c}
\centering
\end{subfigure}
\begin{subfigure}{0.49\textwidth}
\centering
\includegraphics[scale=0.55]{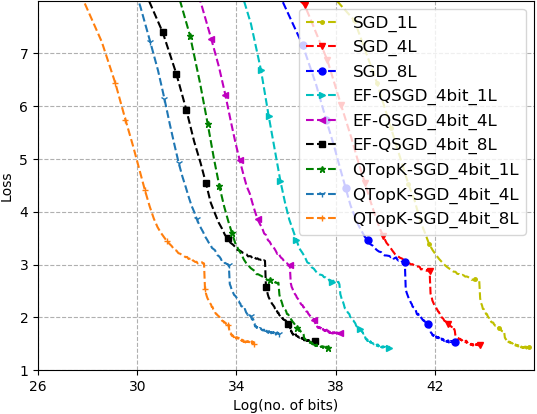}
\captionsetup{justification=centering}
\caption{Training loss vs $\textrm{log}_2$ of communication budget }
\label{fig:mid_lo-bit_c}
\centering
\end{subfigure}
\caption{\Figureref{mid_lo-iter_d}-\ref{fig:mid_lo-bit_d} demonstrate the effect of incorporating local iterations and compare these effects across vanilla SGD, the sparsifier $Top_k$, as well as its composition with the \emph{Sign} operator. Similar comparisons are also made between vanilla SGD, the quantizer QSGD with error accumulation, as well as its composition with the $Top_k$ sparsifier. $SignTopK\_hL$, for $h=1,4,8$, corresponds to running \Algorithmref{memQSGD-synchronous} with $SignTopK$ being the composed operator with a synchronization period of at most $h$.}
\end{figure*}

In Figure~\ref{fig:mid_lo-iter_d}-\ref{fig:mid_lo-bit_c}, we show how the performance of different methods (used in \Figureref{mid_lo-iter_a}-\ref{fig:te_5_a}) change when we incorporate local iterations on top of them.
Observe that the incorporation of local iterations in \Figureref{mid_lo-iter_d} and \ref{fig:mid_lo-iter_c} has very little impact on the convergence rates, as compared to vanilla SGD with the corresponding number of local iterations. 
Furthermore, this provides an added advantage over the \emph{Qsparse} operator, in terms of savings in communicated bits for achieving target loss as seen in \Figureref{mid_lo-bit_d} and \ref{fig:mid_lo-bit_c}, by a factor of 6 to 8 times on average. 

\begin{figure*}[ht!]
\centering
\begin{subfigure}{0.49\textwidth}
\includegraphics[scale=0.55]{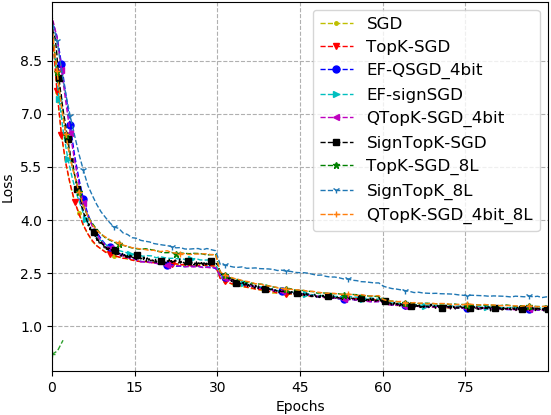}
\captionsetup{justification=centering}
\caption{Training loss vs against epochs }
\label{fig:mid_lo-iter}
\centering
\end{subfigure}
\begin{subfigure}{0.49\textwidth}
\includegraphics[scale=0.55]{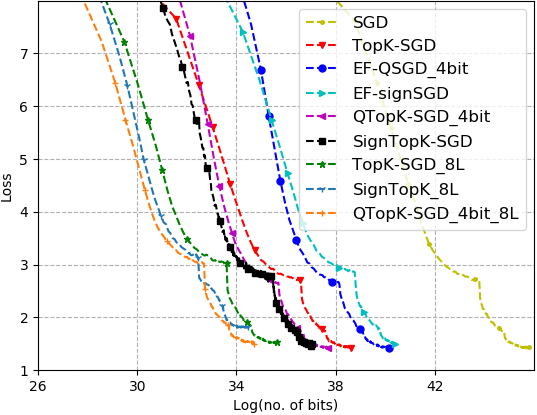}
\captionsetup{justification=centering}
\caption{Training loss vs $\textrm{log}_2$ of communication budget }
\label{fig:mid_lo-bit}
\centering
\end{subfigure}\\ \vspace{0.5cm}
\begin{subfigure}{0.49\textwidth}
\includegraphics[scale=0.55]{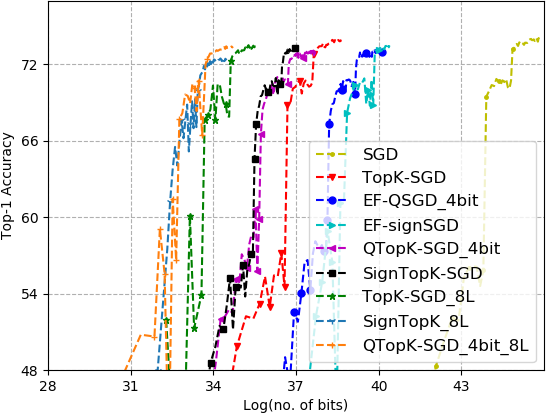}
\captionsetup{justification=centering}
\caption{top-1 accuracy \cite{lapin} for schemes in \Figureref{mid_lo-iter} }
\label{fig:te_1}
\centering
\end{subfigure}
\begin{subfigure}{0.49\textwidth}
\includegraphics[scale=0.55]{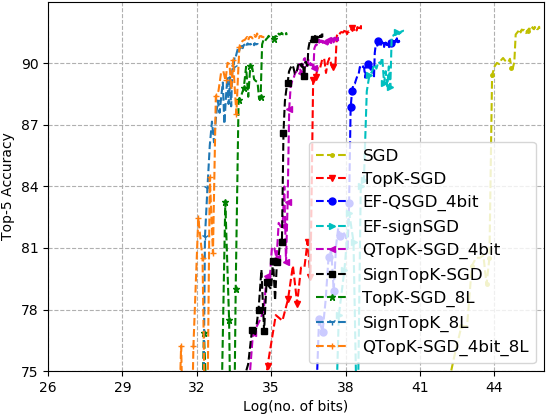}
\captionsetup{justification=centering}
\caption{top-5 accuracy \cite{lapin} for schemes in \Figureref{mid_lo-iter} }
\label{fig:te_5}
\centering
\end{subfigure}
\caption{\Figureref{mid_lo-iter}-\ref{fig:te_5} demonstrate the performance of our scheme in comparison with \textsc{ef}-\textsc{sign}SGD \cite{efsignsgd}, TopK-SGD \cite{memSGD,alistarh-sparsified} and local SGD \cite{localsgd2,alibaba_local} in the non-convex setting.}
\end{figure*}

\Figureref{mid_lo-bit}, \Figureref{te_1}, and \Figureref{te_5} show the training loss, top-1, and top-5
convergence rates\footnote{Here top-i refers to the accuracy of the top i predictions by the model from the list of possible classes, see \cite{lapin}.}
respectively, with respect to the total number of bits of
communication used.
We observe that \emph{Qsparse-local-SGD} combines the bit
savings of either the deterministic sign based operator or the stochastic quantizer (QSGD), and aggressive sparsifier along with infrequent communication, thereby, outperforming the cases where
these techniques are individually used. 
In particular, the required
number of bits to achieve the same loss or top-1 accuracy in the case of \emph{Qsparse-local-SGD} is around 1/16 in comparison with $Top_k$-$SGD$ and over 1000$\times$ less than vanilla SGD.
This also verifies that error compensation
through memory can be used to mitigate not only the missing components
from updates in previous synchronization rounds, but also explicit quantization error.

\subsection{Convex Objective} \label{sec:numerics_C}
The experiments in \Figureref{convex_sync_composed}-\ref{fig:convex_sync_performance} are in a synchronous distributed setting with 15 worker nodes, each processing a mini-batch size of 8 samples per iteration using the \textit{MNIST} \cite{mnist} handwritten digits dataset. The corresponding experiments for the asynchronous operation (as in \Algorithmref{memQSGD-asynchronous}) are shown in \Figureref{convex_async}.

\begin{figure*}[ht!]
\centering
\begin{subfigure}{0.49\textwidth}
\includegraphics[scale=0.58]{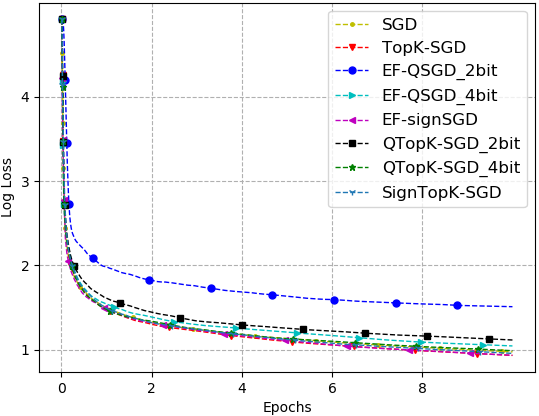}
\captionsetup{justification=centering}
\caption{Training loss vs epochs }
\label{fig:mid_lo-iter_j}
\end{subfigure}\hfill
\begin{subfigure}{0.49\textwidth}
\centering
\includegraphics[scale=0.58]{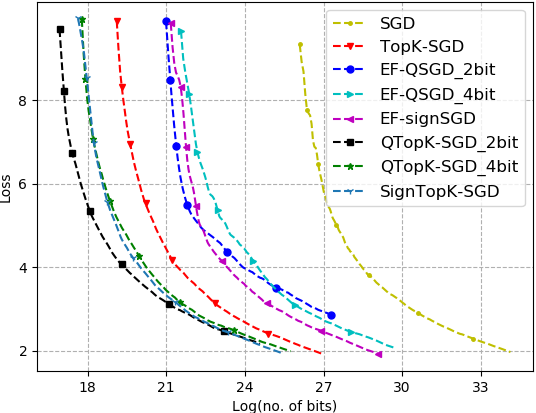}
\captionsetup{justification=centering}
\caption{Training loss vs $\textrm{log}_2$ of communication budget }
\label{fig:mid_lo-bit_k}
\end{subfigure}\\ \vspace{0.5cm}
\begin{subfigure}{0.49\textwidth}
\centering
\includegraphics[scale=0.58]{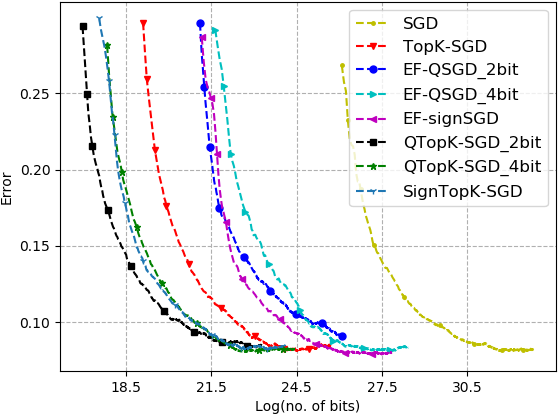}
\captionsetup{justification=centering}
\caption{top-1 accuracy \cite{lapin} for schemes in \Figureref{mid_lo-iter} }
\label{fig:te_1_l}
\end{subfigure}
\caption{\Figureref{mid_lo-iter_j}-\ref{fig:te_1_l} demonstrate the gains in performance achieved by our $Qsparse$ operators in the convex setting.}
\label{fig:convex_sync_composed}
\end{figure*}

\subsubsection{Model Architecture}
Define the softmax function as 
\begin{align}
    h_{\bx,z}\left(a^{(i)}\right)=\frac{\exp{\left(\bx_j^Ta^{(i)}+z^{(i)}\right)}}{\sum_{l=1}^L\exp{\left(\bx_l^Ta^{(i)}+z^{(l)}\right)}}.\notag
\end{align}
Our experiments are all for softmax regression with a standard $\ell_2$ regularizer. The cost function is 
\[-\frac{1}{n}\left(\sum_{i=1}^n\sum_{j=1}^L\mathbbm{1}\{b^{(i)}=j\}\log h_{\bx,z}\left(a^{(i)}\right)\right)+\frac{\lambda}{2}\|\bx\|^2\]
where $a^{(i)}\in\mathbb{R}^d$, $b^{(i)}\in [L]$ are the data points, which can belong to one of the $L$ classes, and $\bx_j\in\mathbb{R}^d$ for every $j\in[L]$, are columns of the parameter structured as follows 
\[\bx=
\begin{bmatrix}
\bx_1&\bx_2&\ldots&\bx_L
\end{bmatrix},\quad \bx_j\in\mathbb{R}^d,\,\,\forall j\in[L],\]
and $z^{(i)}$ for every $i\in [L]$ are the biases to be learnt corresponding to every class.
We set $\lambda$ to be $1/n$.

\subsubsection{Parameter Selection and Learning Rates}
We use the deterministic operator as in  \Lemmaref{composed-sign} and the stochastic operator $QSGD$ denoted by $Q$, as defined in \cite{QSGD}, as our quantizers and ${Top}_k$ with error compensation as the sparsifier. The schemes with which we compare our composed operators $QTop_k$ \Lemmaref{composed-quantizer}, and $Sign{Top}_k$ \Lemmaref{composed-sign}, are \textsc{ef-QSGD}\cite{ecqsgd}, \textsc{ef-signSGD}\cite{efsignsgd}, TopK-SGD \cite{memSGD,alistarh-sparsified}, and local SGD \cite{localsgd2}. The learning rate used for training is of the form $\frac{c}{\lambda (a+t)}$, where {\sf(i)} $\lambda$ is the regularization parameter; {\sf(ii)} $c$ is set with a careful hyperparameter sweep; {\sf(iii)} $w_t=(a+t)^2$ as in \Theoremref{convergence-strongly-convex-local}, where $a$ is set as $\frac{dH}{k}$ with $d$ being the dimension of the gradient vector (7850 for \textit{MNIST}); {\sf(iv)} $k=40$ is the sparsity;
{\sf(v)} $H$ is the synchronization period; {\sf(vi)} $t$ is the iteration index; {\sf(vii)} $b= 8$ is the batch size; and {\sf(viii)} $R=15$ is the number of workers.

\begin{figure*}[ht!]
\centering
\begin{subfigure}{0.49\textwidth}
\includegraphics[scale=0.55]{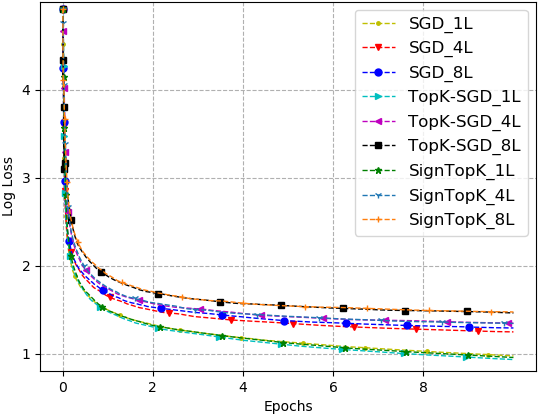}
\captionsetup{justification=centering}
\caption{Training loss vs epochs }
\label{fig:mid_lo-iter_m}
\centering
\end{subfigure}
\begin{subfigure}{0.49\textwidth}
\includegraphics[scale=0.55]{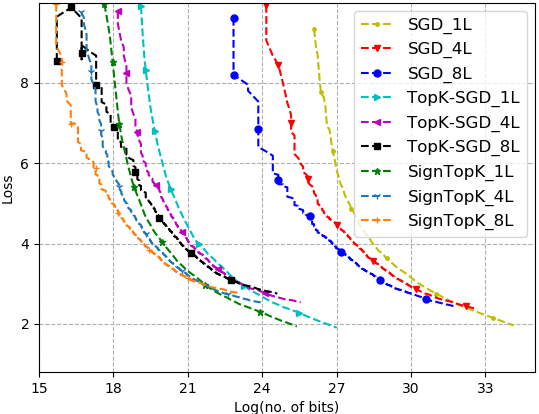}
\captionsetup{justification=centering}
\caption{Training loss vs $\textrm{log}_2$ of communication budget }
\label{fig:mid_lo-bit_n}
\centering
\end{subfigure}\\ \vspace{0.25cm}
\begin{subfigure}{0.49\textwidth}
\includegraphics[scale=0.55]{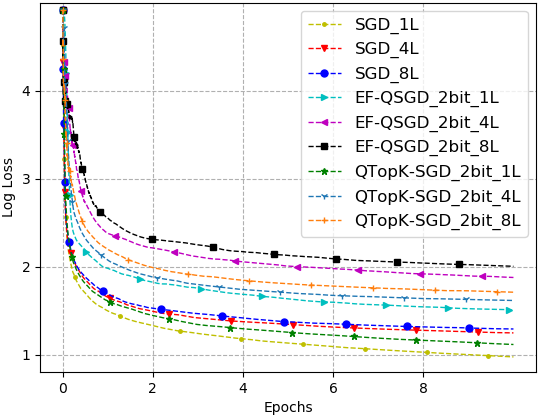}
\captionsetup{justification=centering}
\caption{Training loss vs epochs }
\label{fig:mid_lo-iter_o}
\centering
\end{subfigure}
\begin{subfigure}{0.49\textwidth}
\includegraphics[scale=0.55]{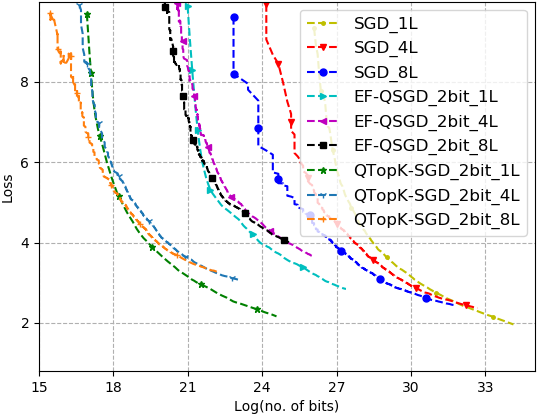}
\captionsetup{justification=centering}
\caption{Training loss vs $\textrm{log}_2$ of communication budget }
\label{fig:mid_lo-bit_p}
\centering
\end{subfigure}\\ \vspace{0.25cm}
\begin{subfigure}{0.49\textwidth}
\includegraphics[scale=0.55]{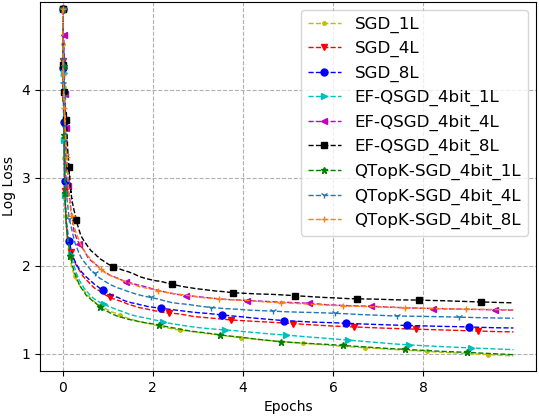}
\captionsetup{justification=centering}
\caption{Training loss vs epochs }
\label{fig:mid_lo-iter_q}
\centering
\end{subfigure}
\begin{subfigure}{0.49\textwidth}
\includegraphics[scale=0.55]{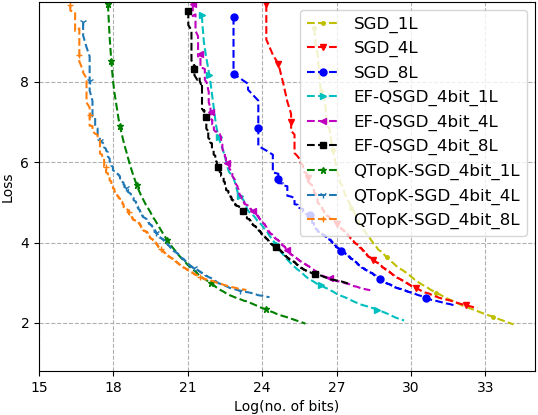}
\captionsetup{justification=centering}
\caption{Training loss vs $\textrm{log}_2$ of communication budget }
\label{fig:mid_lo-bit_r}
\centering
\end{subfigure}
\caption{\Figureref{mid_lo-iter_m}-\ref{fig:mid_lo-bit_n} demonstrate the effect of incorporating local iterations and compare these effects across vanilla SGD, the sparsifier \emph{TopK}, as well as its composition with the \emph{Sign} operator. Similar comparisons are also made between vanilla SGD, the quantizer QSGD with error accumulation, as well as its composition with the \emph{TopK} sparsifier.}
\label{fig:convex_sync_local}
\end{figure*}

\subsubsection{Results}
In \Figureref{mid_lo-iter_j}, we observe that the composition of a quantizer with a sparsifier has very little effect on the rate of convergence as compared to when the techniques are used individually. Observe that the algorithm run with the 2 bit $QSGD$ is slower than the 4 bit quantizer, both with or without sparsification, which can be attributed to the reduction in the compression coefficient $\gamma$ in going from 4 to 2 bits; see \Theoremref{convergence-strongly-convex-local}.
 From \Figureref{mid_lo-bit_k} and \ref{fig:te_1_l}, we see that our composed operators achieve gains in communicated bits by a factor of 6-8 times over the state-of-the-art.

\Figureref{mid_lo-iter_m}, demonstrates the effect of incorporating local iterations together with \emph{Qsparse} operators, and we see that the rate of convergence is not significantly affected as we go from 1 to 8 local iterations. Furthermore, observe that for a fixed number of local iterations, the \emph{Qsparse} operator maintains the same rates as vanilla $SGD$ or $Top_k$-$SGD$. In doing so, it is able to achieve gains in communicated bits as seen in \Figureref{mid_lo-bit_n}, simply by communicating infrequently with the master. On comparing \Figureref{mid_lo-iter_o} and \ref{fig:mid_lo-iter_q}, we observe that the $QTop_k$ operator is more sensitive to the increase in local computations for coarser quantizers (smaller values of $s$, in this case $s=2^{\#-bits}-1=3$). This can be verified from \Figureref{mid_lo-iter_q} which uses a 4 bit quantizer (which implies $s=15$ instead), and the corresponding effect of local iterations on the convergence rate is less prominent. We make comparisons between vanilla $SGD$, $QSGD$ with error accumulation (\textsc{ef-QSGD}) and our $QTop_k$ operator in \Figureref{mid_lo-iter_o} and \ref{fig:mid_lo-bit_p}, for which we do not observe much difference in performance between a finer and coarser quantizer, even though the convergence rates with respect to iterations are affected. This can be attributed to the precision of the quantizer itself.

\begin{figure*}[ht!]
\centering
\begin{subfigure}{0.49\textwidth}
\includegraphics[scale=0.58]{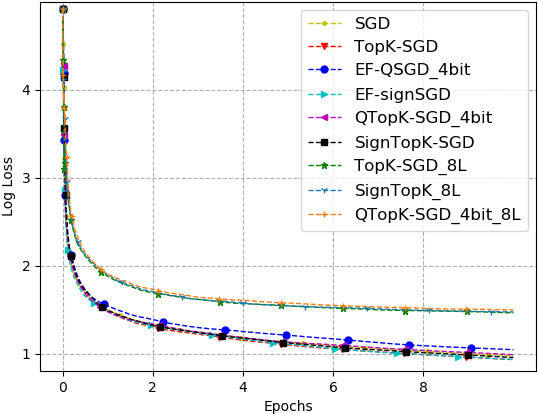}
\captionsetup{justification=centering}
\caption{Training loss vs epochs }
\label{fig:mid_lo-iterC}
\centering
\end{subfigure}\hfill
\begin{subfigure}{0.49\textwidth}
\includegraphics[scale=0.58]{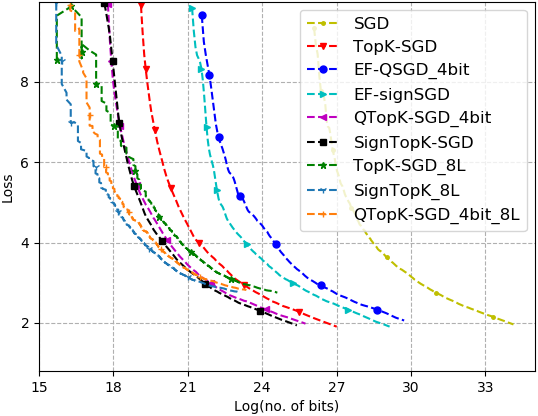}
\captionsetup{justification=centering}
\caption{Training loss vs $\textrm{log}_2$ of communication budget }
\label{fig:mid_lo-bitC}
\centering
\end{subfigure}\\ \vspace{0.5cm}
\begin{subfigure}{0.49\textwidth}
\includegraphics[scale=0.58]{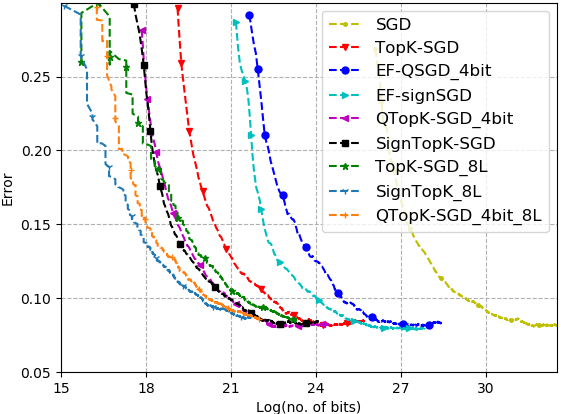}
\captionsetup{justification=centering}
\caption{top-1 accuracy \cite{lapin} for schemes in \Figureref{mid_lo-iterC} }
\label{fig:te_1C}
\centering
\end{subfigure}
\caption{\Figureref{mid_lo-iterC}-\ref{fig:te_1C} demonstrate the performance of our scheme in comparison with \textsc{ef-QSGD}, \textsc{ef}-\textsc{sign}SGD \cite{efsignsgd} and TopK-SGD \cite{memSGD,alistarh-sparsified} in a convex setting for synchronous updates. 
}
\label{fig:convex_sync_performance}
\end{figure*}

In \Figureref{mid_lo-iterC} and \Figureref{mid_lo-bitC}, we compare
the convergence of our proposed scheme in \Algorithmref{memQSGD-synchronous} with $QTop_k$ and $Sign{Top}_k$ being the composed operators, with
vanilla SGD (32 bit floating point), \textsc{ef-QSGD}, \textsc{ef-signSGD} \cite{efsignsgd}, and TopK-SGD
\cite{memSGD,alistarh-sparsified}. Both figures follow a similar trend,
where we observe $QTop_k$, $Sign{Top}_k$ and TopK-SGD to be converging
at the same rate as that of vanilla SGD,
which is similar to the observations in \cite{memSGD}. This implies
that the composition of quantization with sparsification does not
affect the convergence while achieving improved communication
efficiency, as can be seen in \Figureref{te_1C} and \Figureref{mid_lo-bitC}.
\Figureref{te_1C} shows that for test error approximately 0.1,
\emph{Qsparse-local-SGD} combines the benefits of the composed operator $Sign{Top}_k$ or $QTop_k$, with local computations, 
and needs 10-15 times less bits than TopK-SGD and 1000$\times$ less bits than vanilla SGD.

We observe similar trends in \Figureref{mid_lo-bitC1}-\ref{fig:te_1C1} for our asynchronous operation, where workers synchronize with the master at arbitrary time intervals as per \Algorithmref{memQSGD-asynchronous}. 
Specifically, in our experiments, for each $r\in[R]$, the time interval for the $r$th worker is decided uniformly at random from $[H]$ after every synchronization by that worker. 
This ensures that $gap(\I_T^{(r)})\leq H$ holds for every worker $r\in[R]$ and the schedule $\I_T^{(r)}$ is different for each of them.
\begin{figure*}[ht!]
\begin{subfigure}{0.49\textwidth}
\centering
\includegraphics[scale=0.57]{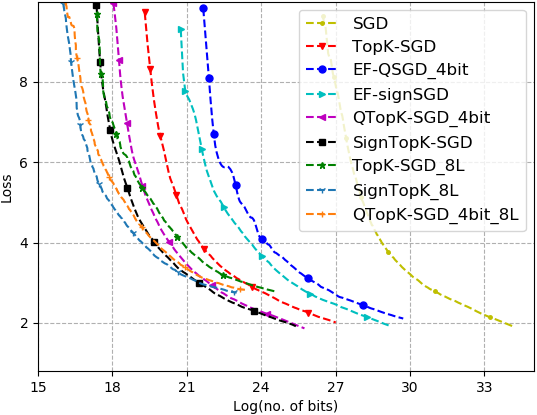}
\captionsetup{justification=centering}
\caption{Training loss with the communication budget for our schemes against baselines }
\label{fig:mid_lo-bitC1}
\end{subfigure}
\begin{subfigure}{0.49\textwidth}
\centering
\includegraphics[scale=0.57]{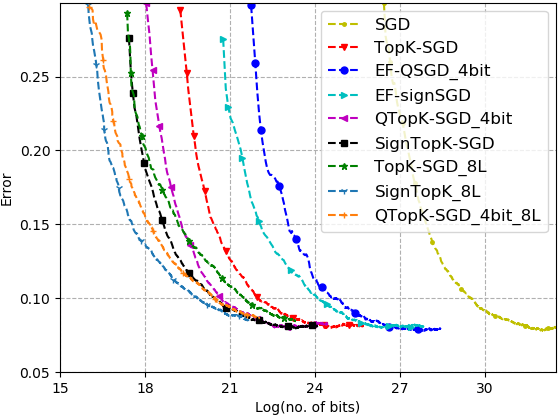}
\captionsetup{justification=centering}
\caption{Test error using a model trained for given number of iterations, as seen in \Figureref{mid_lo-bitC1} }
\label{fig:te_1C1}
\end{subfigure}
\caption{\Figureref{mid_lo-bitC1}-\ref{fig:te_1C1} demonstrate the performance of our scheme in comparison with \textsc{ef}-\textsc{sign}SGD \cite{efsignsgd} and TopK-SGD \cite{memSGD,alistarh-sparsified} in a convex setting for asynchronous operation.}
\label{fig:convex_async}
\end{figure*}

\section{Conclusion}
\label{sec:conclusion}
In this paper, we propose a gradient compression scheme that composes both unbiased and biased quantization
with aggressive sparsification. Furthermore, we incorporate local computations, which, when combined with quantization and explicit sparsification, results in a highly communication efficient distributed algorithm, which we call \emph{Qsparse-local-SGD}. We developed convergence analyses of our
scheme in both synchronous as well as asynchronous settings and for both convex and non-convex objectives, and we show that our proposed algorithm achieves the same rate as that of distributed vanilla SGD in each of these cases.
Our schemes provide flexibility in terms of different options for mitigating the communication bottlenecks that arise in training high-dimensional learning models over bandwidth limited networks. When run without compression, this also subsumes/generalizes several recent results from the literature on local SGD, with similar convergence rates, as mentioned at the end of \Sectionref{main_results_sync}.

Our numerics incorporate momentum acceleration, whose
analysis is a topic for future research ({\em e.g.}, potentially by incorporating
ideas from \cite{yu_momentum}). Although we use momentum for each local iteration,
our preliminary results suggest that 
our method works with momentum applied to a block of updates as well though it was not the main focus of this paper.

\section*{Acknowledgement}
The authors gratefully thank Navjot Singh for his help with experiments in the early stages of this work.
This work was partially supported by NSF grant \#1514531, by UC-NL grant LFR-18-548554 and by Army Research Laboratory under Cooperative Agreement W911NF-17-2-0196.
The views and conclusions contained in this document are those of the authors and should not be interpreted as representing the official policies, either expressed or implied, of the Army Research Laboratory or the U.S. Government. The U.S. Government is authorized to reproduce and distribute reprints for Government purposes notwithstanding any copyright notation here on.

\appendix

\section{Omitted Details from \Sectionref{operators}}
\label{app:supp_operators}
\subsection{Proof of \Lemmaref{composed-compression}}\label{app:op_1}
\begin{lemma*}[Restating \Lemmaref{composed-compression}]
Let $Comp_k\in\{\mathrm{Top}_k,\mathrm{Rand}_k\}$. Let $Q_s:\mathbb{R}^d\rightarrow\mathbb{R}^d$ be a quantizer with parameter $s$ that satisfies \Definitionref{QuantDef}. 
Let $Q_sComp_k:\R^d\to\R^d$ be defined as $Q_sComp_k(\mathbf{x}) := Q_s(Comp_k(\mathbf{x}))$ for every $\mathbf{x}\in\R^d$. If $k,s$ are such that $\beta_{k,s}<1$. then $Q_sComp_k:\R^d\to\R^d$ is a compression operator with the compression coefficient being equal to $\gamma=(1-\beta_{k,s})\frac{k}{d}$, i.e., for every $\mathbf{x}\in\R^d$, we have
\begin{align*}
  \mathbb{E}_{C,Q}[\|\mathbf{x}-Q_sComp_k(\mathbf{x})\|_2^2]\leq \left[1-\left(1-\beta_{k,s}\right)\frac{k}{d}\right]\|\mathbf{x}\|_2^2,
\end{align*}
where expectation is taken over the randomness of the compression operator $Comp_k$ as well as the quantizer $Q_s$.
\end{lemma*}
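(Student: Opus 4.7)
The plan is to expand the squared error by adding and subtracting $Comp_k(\mathbf{x})$, and use the unbiasedness of the quantizer (conditional on the sparsified vector) to kill the cross term. Concretely, I would write
\begin{align*}
\|\mathbf{x} - Q_s(Comp_k(\mathbf{x}))\|_2^2 &= \|\mathbf{x} - Comp_k(\mathbf{x})\|_2^2 + \|Comp_k(\mathbf{x}) - Q_s(Comp_k(\mathbf{x}))\|_2^2 \\
&\quad + 2\langle \mathbf{x} - Comp_k(\mathbf{x}),\ Comp_k(\mathbf{x}) - Q_s(Comp_k(\mathbf{x}))\rangle,
\end{align*}
and then take $\mathbb{E}_Q[\cdot]$ with $Comp_k(\mathbf{x})$ held fixed. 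The inner product vanishes by part \textsf{(i)} of \Definitionref{QuantDef}, leaving only the squared sparsification error and the conditional quantization variance.

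For the quantization variance, I would apply part \textsf{(ii)} of \Definitionref{QuantDef} to the vector $Comp_k(\mathbf{x})$, which has at most $k$ non-zero coordinates; effectively $Q_s$ is quantizing a $k$-dimensional vector, which is why the variance blow-up factor is $\beta_{k,s}$ rather than $\beta_{d,s}$. This yields $\mathbb{E}_Q\|Comp_k(\mathbf{x}) - Q_s(Comp_k(\mathbf{x}))\|_2^2 \leq \beta_{k,s}\|Comp_k(\mathbf{x})\|_2^2$. The next step is to exploit the coordinate-selection structure of $Comp_k$: both $\mathrm{Top}_k$ and $\mathrm{Rand}_k$ simply keep some subset of $k$ coordinates of $\mathbf{x}$ exactly and zero out the rest, so pointwise $\|\mathbf{x}\|_2^2 = \|\mathbf{x} - Comp_k(\mathbf{x})\|_2^2 + \|Comp_k(\mathbf{x})\|_2^2$. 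Substituting and taking $\mathbb{E}_C$ then gives
\begin{align*}
\mathbb{E}_{C,Q}\|\mathbf{x} - Q_s(Comp_k(\mathbf{x}))\|_2^2 \ \leq\ (1-\beta_{k,s})\,\mathbb{E}_C\|\mathbf{x} - Comp_k(\mathbf{x})\|_2^2 + \beta_{k,s}\|\mathbf{x}\|_2^2.
\end{align*}

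The last ingredient is $\mathbb{E}_C\|\mathbf{x} - Comp_k(\mathbf{x})\|_2^2 \leq (1 - k/d)\|\mathbf{x}\|_2^2$. For $\mathrm{Rand}_k$ this is an equality, since each coordinate survives with probability $k/d$. For $\mathrm{Top}_k$ it follows from the elementary averaging fact that the sum of the smallest $d-k$ squared entries is at most the $(d-k)/d$-fraction of $\|\mathbf{x}\|_2^2$. Plugging in gives a coefficient of $(1-\beta_{k,s})(1-k/d) + \beta_{k,s} = 1 - (1-\beta_{k,s})k/d$, hence $\gamma = (1-\beta_{k,s})k/d$, which is strictly positive exactly when $\beta_{k,s} < 1$. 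I do not expect any genuine obstacle; the only point that needs care is the bookkeeping of the conditional expectations (first $\mathbb{E}_Q$ with $Comp_k(\mathbf{x})$ frozen, then $\mathbb{E}_C$) together with the justification that the sparsity of $Comp_k(\mathbf{x})$ lets us invoke the variance bound with parameter $\beta_{k,s}$ rather than $\beta_{d,s}$.
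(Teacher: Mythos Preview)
Your proposal is correct and takes essentially the same approach as the paper. The only cosmetic difference is that you split the error by adding and subtracting $Comp_k(\mathbf{x})$, whereas the paper expands $\|\mathbf{x}-Q_sComp_k(\mathbf{x})\|_2^2$ directly and uses $\langle\mathbf{x},Comp_k(\mathbf{x})\rangle=\|Comp_k(\mathbf{x})\|_2^2$; both routes rely on the unbiasedness of $Q_s$, the $\beta_{k,s}$ second-moment bound applied to the $k$-sparse vector, and the same sparsifier inequality (your $\mathbb{E}_C\|\mathbf{x}-Comp_k(\mathbf{x})\|_2^2\le(1-k/d)\|\mathbf{x}\|_2^2$ is exactly the paper's $\mathbb{E}_C\|Comp_k(\mathbf{x})\|_2^2\ge(k/d)\|\mathbf{x}\|_2^2$ via the orthogonality you noted).
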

\begin{proof}
Fix an arbitrary $\mathbf{x}\in\R^d$.
\begin{align}
&\mathbb{E}_{C,Q}[\|\mathbf{x}-Q_sComp_k(\mathbf{x})\|_2^2] \nonumber\\
&= \mathbb{E}_{C,Q}[\|\mathbf{x}\|_2^2]+\mathbb{E}_{C,Q}[\|Q_sComp_k(\mathbf{x})\|_2^2]\nonumber\\
&\hspace{2cm}-2\mathbb{E}_C[\langle\mathbf{x},\mathbb{E}_{Q}[Q_sComp_k(\mathbf{x})]\rangle] \notag \nonumber\\
&= \|\mathbf{x}\|_2^2+\mathbb{E}_{C,Q}[\|Q_sComp_k(\mathbf{x})\|_2^2]-2\mathbb{E}_C[\langle\mathbf{x},Comp_k(\mathbf{x})\rangle] \nonumber
\end{align}
In the last equality, we used that $\mathbf{x}$ is constant with respect to the randomness of $Q_s$ and $Comp_k$, and that $\mathbb{E}_Q[Q_sComp_k(\mathbf{x})]=Comp_k(\mathbf{x})$, which follows from {\sf (i)} of Definition~\ref{defn:QuantDef}.
Observe that, for any $Comp_k\in\{\mathrm{Top}_k,\mathrm{Rand}_k\}$, we have $\langle\mathbf{x},Comp_k(\mathbf{x})\rangle=\|Comp_k(\mathbf{x})\|_2^2$.
Continuing from above, we get
\begin{align}
&\mathbb{E}_{C,Q}[\|\mathbf{x}-Q_sComp_k(\mathbf{x})\|_2^2] = \|\mathbf{x}\|_2^2-2\mathbb{E}_C[\|Comp_k(\mathbf{x})\|_2^2]\nonumber\\
&\hspace{3cm}+\mathbb{E}_{C,Q}[\|Q_sComp_k(\mathbf{x})\|_2^2] \label{eq:composed-interim2}
\end{align}
Observe that for any $Comp_k\in\{\mathrm{Top}_k,\mathrm{Rand}_k\}$, $Comp_k(\mathbf{x})$ is a length-$d$ vector, but only (at most) $k$ of its components are non-zero. 
This implies that, by treating $Comp_k(\mathbf{x})$ a length-$k$ vector whose entries correspond to the $k$ non-zero entries of $\mathbf{x}$, we can write $\mathbb{E}_Q[\|Q_sComp_k(\mathbf{x})\|_2^2]\leq(1+\beta_{k,s})\|Comp_k(\mathbf{x})\|_2^2$; see {\sf (ii)} of Definition~\ref{defn:QuantDef}.
Putting this back in \eqref{eq:composed-interim2}, we get
\begin{align}
&\mathbb{E}_{C,Q}[\|\mathbf{x}-Q_sComp_k(\mathbf{x})\|_2^2] \notag\\
&\leq \|\mathbf{x}\|_2^2-\mathbb{E}_{C}[\|Comp_k(\mathbf{x})\|_2^2]+\beta_{k,s}\mathbb{E}_{C}[\|Comp_k(\mathbf{x})\|_2^2] \notag \\
&= \|\mathbf{x}\|_2^2-\left(1-\beta_{k,s}\right)\mathbb{E}_{C}[\|Comp_k(\mathbf{x})\|_2^2] \label{eq:composed-interim3}
\end{align}
Using $\mathbb{E}_{C}[\|Comp_k(\mathbf{x})\|_2^2]\geq \frac{k}{d}\|\mathbf{x}\|_2^2$ (see \eqref{eq:comp-2norm-bound} in \Lemmaref{comp-moment-bound-appx})  in
\eqref{eq:composed-interim3} gives
\begin{align*}
\mathbb{E}_{C,Q}[\|\mathbf{x}-Q_sComp_k(\mathbf{x})\|_2^2] &\leq \|\mathbf{x}\|^2-\left(1-\beta_{k,s}\right)\frac{k}{d}\|\mathbf{x}\|_2^2 \\
&= \left[1-\left(1-\beta_{k,s}\right)\frac{k}{d}\right]\|\mathbf{x}\|_2^2.
\end{align*}
This completes the proof of \Lemmaref{composed-compression}.
\end{proof}

\subsection{Proof of \Lemmaref{composed-quantizer}}\label{app:op_2}
\begin{lemma*}[Restating \Lemmaref{composed-quantizer}]
Let $Comp_k\in\{\mathrm{Top}_k,\mathrm{Rand}_k\}$. Let $Q_s:\mathbb{R}^d\rightarrow\mathbb{R}^d$ be a stochastic quantizer with parameter $s$ that satisfies \Definitionref{QuantDef}. 
Let $Q_sComp_k:\R^d\to\R^d$ be defined as $Q_sComp_k(\mathbf{x}) := Q_s(Comp_k(\mathbf{x}))$ for every $\mathbf{x}\in\R^d$. 
Then $\frac{Q_sComp_k(\mathbf{x})}{1+\beta_{k,s}}$ is a compression operator with the compression coefficient being equal to $\gamma=\frac{k}{d(1+\beta_{k,s})}$, 
i.e., for every $\bx\in\R^d$
\begin{align*}
  \mathbb{E}_{C,Q}\left[\left\|\mathbf{x}-\frac{Q_sComp_k(\mathbf{x})}{1+\beta_{k,s}}\right\|_2^2\right]\ \leq\ \left[1-\frac{k}{d(1+\beta_{k,s})}\right]\|\mathbf{x}\|_2^2,
\end{align*}
\end{lemma*}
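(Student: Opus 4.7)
The plan is to follow the same outline as the proof of \Lemmaref{composed-compression}, but to exploit the scaling factor $1/(1+\beta_{k,s})$ to absorb the variance blow-up of the quantizer. Concretely, I would fix $\mathbf{x}\in\R^d$, write $\alpha := 1+\beta_{k,s}$, and expand
\begin{align*}
\mathbb{E}_{C,Q}\!\left\|\mathbf{x}-\tfrac{1}{\alpha}Q_sComp_k(\mathbf{x})\right\|_2^2
  &= \|\mathbf{x}\|_2^2 + \tfrac{1}{\alpha^2}\mathbb{E}_{C,Q}\|Q_sComp_k(\mathbf{x})\|_2^2 \\
  &\qquad - \tfrac{2}{\alpha}\mathbb{E}_{C}\langle \mathbf{x}, \mathbb{E}_{Q}[Q_sComp_k(\mathbf{x})]\rangle.
\end{align*}

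Next I would simplify the two non-trivial terms using the properties of $Q_s$ and $Comp_k$ already used in \Lemmaref{composed-compression}. First, unbiasedness of $Q_s$ (part \textsf{(i)} of \Definitionref{QuantDef}) gives $\mathbb{E}_Q[Q_sComp_k(\mathbf{x})]=Comp_k(\mathbf{x})$, and the defining property of $\mathrm{Top}_k$/$\mathrm{Rand}_k$ gives $\langle \mathbf{x},Comp_k(\mathbf{x})\rangle=\|Comp_k(\mathbf{x})\|_2^2$, so the cross term becomes $-\tfrac{2}{\alpha}\mathbb{E}_C\|Comp_k(\mathbf{x})\|_2^2$. Second, viewing $Comp_k(\mathbf{x})$ as a $k$-dimensional vector on its support and applying part \textsf{(ii)} of \Definitionref{QuantDef} yields $\mathbb{E}_Q\|Q_sComp_k(\mathbf{x})\|_2^2\leq \alpha\|Comp_k(\mathbf{x})\|_2^2$. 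Substituting these two bounds, the coefficient in front of $\mathbb{E}_C\|Comp_k(\mathbf{x})\|_2^2$ becomes $\tfrac{1}{\alpha^2}\cdot\alpha - \tfrac{2}{\alpha} = -\tfrac{1}{\alpha}$, so
\begin{align*}
\mathbb{E}_{C,Q}\!\left\|\mathbf{x}-\tfrac{1}{\alpha}Q_sComp_k(\mathbf{x})\right\|_2^2 \leq \|\mathbf{x}\|_2^2 - \tfrac{1}{\alpha}\mathbb{E}_C\|Comp_k(\mathbf{x})\|_2^2.
\end{align*}

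Finally I would invoke the same sparsifier lower bound $\mathbb{E}_C\|Comp_k(\mathbf{x})\|_2^2\geq \tfrac{k}{d}\|\mathbf{x}\|_2^2$ that is used at the end of the proof of \Lemmaref{composed-compression} (i.e., the auxiliary \Lemmaref{comp-moment-bound-appx}), to obtain the claimed bound $\bigl[1-\tfrac{k}{d(1+\beta_{k,s})}\bigr]\|\mathbf{x}\|_2^2$. I do not anticipate any real obstacle here: the proof is essentially algebraic, and the only subtle point worth emphasizing is that the exact cancellation $\tfrac{1}{\alpha^2}\cdot\alpha-\tfrac{2}{\alpha}=-\tfrac{1}{\alpha}$ is precisely what removes the $\beta_{k,s}<1$ restriction required in \Lemmaref{composed-compression}, and also what makes the scaled operator strictly better whenever $\beta_{k,s}\in(0,1)$, as noted in \Remarkref{scaled-vs-unscaled}.
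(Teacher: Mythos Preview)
Your proposal is correct and follows essentially the same approach as the paper's proof: expand the squared norm, use unbiasedness of $Q_s$ and the identity $\langle \mathbf{x},Comp_k(\mathbf{x})\rangle=\|Comp_k(\mathbf{x})\|_2^2$ for the cross term, apply the second-moment bound of \Definitionref{QuantDef} to get the $-\tfrac{1}{1+\beta_{k,s}}\mathbb{E}_C\|Comp_k(\mathbf{x})\|_2^2$ term, and finish with the sparsifier lower bound from \Lemmaref{comp-moment-bound-appx}. The only cosmetic difference is your introduction of the shorthand $\alpha=1+\beta_{k,s}$.
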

\begin{proof}
Fix an arbitrary $\mathbf{x}\in\R^d$.
{\allowdisplaybreaks
\begin{align}
\mathbb{E}_{C,Q}\left[\left\|\mathbf{x}-\frac{Q_sComp_k(\mathbf{x})}{(1+\beta_{k,s})}\right\|_2^2\right] &= \|\mathbf{x}\|_2^2-2\mathbb{E}_{C}\left[\left\langle \bx,\mathbb{E}_Q\left[\frac{Q_sComp_k(\mathbf{x})}{(1+\beta_{k,s})}\right]\right\rangle\right]+\mathbb{E}_{C,Q}\left[\frac{\|Q_sComp_k(\mathbf{x})\|_2^2}{(1+\beta_{k,s})^2}\right] \notag \\
&\stackrel{\text{(a)}}{=} \|\mathbf{x}\|_2^2 - \frac{2}{(1+\beta_{k,s})}\mathbb{E}_{C}\left[\left\langle \bx,Comp_k(\mathbf{x})\right\rangle\right] \notag \\
&\hspace{4cm}+\frac{1}{(1+\beta_{k,s})^2}\mathbb{E}_{C,Q}\left[\|Q_sComp_k(\mathbf{x})\|_2^2\right] \notag \\
&\stackrel{\text{(b)}}{=} \|\mathbf{x}\|_2^2 - \frac{2}{(1+\beta_{k,s})}\mathbb{E}_{C}\left[\|Comp_k(\mathbf{x})\|_2^2\right] \notag \\
&\hspace{4cm}+\frac{1}{(1+\beta_{k,s})^2}\mathbb{E}_{C,Q}\left[\|Q_sComp_k(\mathbf{x})\|_2^2\right] \notag \\
&\stackrel{\text{(c)}}{\leq} \|\mathbf{x}\|_2^2 - \frac{2}{1+\beta_{k,s}}\mathbb{E}_{C}\left[\|Comp_k(\mathbf{x})\|_2^2\right] \notag \\
&\hspace{4cm}+\frac{1}{(1+\beta_{k,s})}\mathbb{E}_{C}\left[\|Comp_k(\mathbf{x})\|_2^2\right] \notag \\
&= \|\mathbf{x}\|_2^2 - \frac{1}{(1+\beta_{k,s})}\mathbb{E}_{C}\left[\|Comp_k(\mathbf{x})\|_2^2\right] \notag \\
&\stackrel{\text{(d)}}{\leq} \left[1-\frac{k}{d(1+\beta_{k,s})}\right]\|\mathbf{x}\|_2^2.
\end{align}
In (a) we used $\mathbb{E}_Q[Q_sComp_k(\mathbf{x})]=Comp_k(\mathbf{x})$, in (b) we used $\langle\mathbf{x},Comp_k(\mathbf{x})\rangle=\|Comp_k(\mathbf{x})\|_2^2$; in (c) we used $\mathbb{E}_Q[\|Q_sComp_k(\mathbf{x})\|_2^2]\leq(1+\beta_{k,s})\|Comp_k(\mathbf{x})\|_2^2$; and in (d) we used $\mathbb{E}_{C}[\|Comp_k(\mathbf{x})\|_2^2]\geq \frac{k}{d}\|\mathbf{x}\|_2^2$. 
This completes the proof of \Lemmaref{composed-quantizer}.
}
\end{proof}

\subsection{Proof of \Lemmaref{composed-sign}}\label{app:op_3}
\begin{lemma*}[Restating \Lemmaref{composed-sign}]
For $Comp_k\in\{\mathrm{Top}_k,\mathrm{Rand}_k\}$, $\frac{\|Comp_k(\mathbf{x})\|_m \,SignComp_k(\mathbf{x})}{k}$, for any $m\in\mathbb{Z}_+$ is a compression operator with the compression coefficient $\gamma_m$ being equal to
\[\gamma_m = 
\begin{cases}
\max\left\{\frac{1}{d}, \frac{k}{d}\left(\frac{\|Comp_k(\mathbf{x})\|_1}{\sqrt{d}\|Comp_k(\mathbf{x})\|_2}\right)^2\right\} & \text{ if } m=1, \\
\frac{k^{\frac{2}{m}-1}}{d} & \text{ if } m\geq 2.
\end{cases}
\]
\end{lemma*}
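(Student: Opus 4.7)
The plan is to first reduce the squared error to a closed-form expression involving only $\|Comp_k(\mathbf{x})\|_m$ and $\|Comp_k(\mathbf{x})\|_1$, and then bound this expression from below by $\gamma_m\|\mathbf{x}\|_2^2$ using elementary norm inequalities on the $k$-sparse vector $Comp_k(\mathbf{x})$ together with the standard ``moment bound'' $\|Comp_k(\mathbf{x})\|_2^2 \geq \tfrac{k}{d}\|\mathbf{x}\|_2^2$ (used earlier, e.g., in the proof of \Lemmaref{composed-compression}). Fix $\mathbf{x}\in\R^d$, let $\calS := \calS_{Comp_k(\mathbf{x})}$, and set $\alpha := \|Comp_k(\mathbf{x})\|_m / k \geq 0$, so that the operator $T(\mathbf{x})$ coincides with $\alpha\,\mathrm{sign}(x_i)$ on $\calS$ and is zero off $\calS$.

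Splitting the squared error according to $i\in\calS$ and $i\notin\calS$, and using $\sum_{i\in\calS}|x_i|=\|Comp_k(\mathbf{x})\|_1$ together with $|\calS|=k$, I compute
\begin{align*}
\|\mathbf{x}-T(\mathbf{x})\|_2^2 &= \sum_{i\notin\calS}x_i^2 + \sum_{i\in\calS}(x_i-\alpha\,\mathrm{sign}(x_i))^2 \\
&= \|\mathbf{x}\|_2^2 - 2\alpha\|Comp_k(\mathbf{x})\|_1 + k\alpha^2 \\
&= \|\mathbf{x}\|_2^2 - \tfrac{\|Comp_k(\mathbf{x})\|_m}{k}\bigl(2\|Comp_k(\mathbf{x})\|_1 - \|Comp_k(\mathbf{x})\|_m\bigr).
\end{align*}
So the problem reduces to showing that the subtracted quantity is at least $\gamma_m\|\mathbf{x}\|_2^2$ for each choice of $m$.

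For $m=1$ the expression collapses beautifully to $\|Comp_k(\mathbf{x})\|_1^2/k$, so I need $\|Comp_k(\mathbf{x})\|_1^2/(k\|\mathbf{x}\|_2^2)\geq \gamma_1$, which I verify against both arguments of the $\max$: (a) the bound $\geq 1/d$ follows from $\|Comp_k(\mathbf{x})\|_1\geq\|Comp_k(\mathbf{x})\|_2$ combined with $\|Comp_k(\mathbf{x})\|_2^2\geq (k/d)\|\mathbf{x}\|_2^2$; (b) the bound $\geq \tfrac{k}{d}\,\tfrac{\|Comp_k(\mathbf{x})\|_1^2}{d\|Comp_k(\mathbf{x})\|_2^2}$ is equivalent, after cancelling $\|Comp_k(\mathbf{x})\|_1^2$, to $\|Comp_k(\mathbf{x})\|_2^2\geq (k^2/d^2)\|\mathbf{x}\|_2^2$, which again follows from the moment bound since $k\le d$. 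For $m\ge 2$, I use the elementary vector-norm inequality $\|v\|_1\ge\|v\|_m$ to get $2\|Comp_k(\mathbf{x})\|_1-\|Comp_k(\mathbf{x})\|_m\ge\|Comp_k(\mathbf{x})\|_m$, which reduces the problem to showing $\|Comp_k(\mathbf{x})\|_m^2/k\ge \tfrac{k^{2/m-1}}{d}\|\mathbf{x}\|_2^2$, i.e., $\|Comp_k(\mathbf{x})\|_m^2\ge k^{2/m}/d\cdot\|\mathbf{x}\|_2^2$. Since $Comp_k(\mathbf{x})$ is supported on at most $k$ coordinates, the power-mean/H\"older inequality gives $\|Comp_k(\mathbf{x})\|_m\ge k^{1/m-1/2}\|Comp_k(\mathbf{x})\|_2$, and combining with the moment bound yields exactly the needed estimate.

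The main obstacle I anticipate is bookkeeping rather than conceptual: ensuring that the correct direction of each $\ell_p$--$\ell_q$ inequality is applied (the bound $\|v\|_m\le\|v\|_2$ holds for $m\ge 2$ on general vectors, but on a $k$-sparse vector one recovers a matching lower bound $\|v\|_m\ge k^{1/m-1/2}\|v\|_2$, which is the key step for $m\ge 2$), and making sure that for $m=1$ both arguments of the $\max$ are dominated — in particular, checking that the second argument is never vacuous when $\|Comp_k(\mathbf{x})\|_2=0$, which can only occur when $\mathbf{x}=\mathbf{0}$ so the compression inequality holds trivially. For $Comp_k=\mathrm{Rand}_k$ the same argument applies per realization, since the only use of randomness is through the set $\calS$ and the displayed identity holds for any choice of $\calS$.
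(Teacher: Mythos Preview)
Your expansion of the squared error and the subsequent reduction via $\|\cdot\|_1\ge\|\cdot\|_m$ to the quantity $\|Comp_k(\mathbf{x})\|_m^2/k$ (and $\|Comp_k(\mathbf{x})\|_1^2/k$ for $m=1$) is exactly what the paper does; the paper then invokes its \Lemmaref{comp-moment-bound-appx} (the expected moment bounds) together with the $k$-sparse norm comparison $\|v\|_m\ge k^{1/m-1/2}\|v\|_2$, matching your plan step for step.

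There is one genuine slip, though: your final sentence asserts that for $Comp_k=\mathrm{Rand}_k$ ``the same argument applies per realization''. The displayed identity does hold per realization, but the moment bound you rely on, $\|Comp_k(\mathbf{x})\|_2^2\ge\tfrac{k}{d}\|\mathbf{x}\|_2^2$, does \emph{not}: if $\mathrm{Rand}_k$ happens to select coordinates on which $\mathbf{x}$ is small or zero, the left side can be arbitrarily smaller than the right. The bound only holds in expectation (indeed with equality for $\mathrm{Rand}_k$), and that is precisely why the paper states and proves \Lemmaref{comp-moment-bound-appx} for $\mathbb{E}_C[\|Comp_k(\mathbf{x})\|_m^2]$ before substituting. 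The fix is immediate---take $\mathbb{E}_C$ of your identity first and then apply the expected moment bound---but as written, your per-realization claim for $\mathrm{Rand}_k$ is false.
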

\noindent For proving \Lemmaref{composed-sign} we first state and prove \Lemmaref{comp-moment-bound-appx} below.
\begin{lemma}\label{lem:comp-moment-bound-appx}
Let $Comp_k\in\{\mathrm{Top}_k,\mathrm{Rand}_k\}$. For any $\mathbf{x}\in\R^d$, we have 
\begin{align}
\mathbb{E}[\|Comp_k(\mathbf{x})\|_1^2] &\geq \max\left\{\frac{k}{d}\|\mathbf{x}\|_2^2, \frac{k^2}{d^2}\|\mathbf{x}\|_1^2\right\} \label{eq:comp-1norm-bound}\\
\mathbb{E}[\|Comp_k(\mathbf{x})\|_2^2] &\geq \frac{k}{d}\|\mathbf{x}\|_2^2. \label{eq:comp-2norm-bound}
\end{align}
\end{lemma}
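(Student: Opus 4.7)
The plan is to prove both inequalities by handling the two operators $\mathrm{Top}_k$ and $\mathrm{Rand}_k$ separately, leveraging a common ``the top-$k$ mean dominates the overall mean'' observation for the deterministic operator and a linearity-of-expectation plus Jensen argument for the randomized one. I would first dispatch the $\ell_2$ bound \eqref{eq:comp-2norm-bound}. For $\mathrm{Rand}_k$, since each coordinate $i$ is included in the selected set $S$ with probability $k/d$, linearity of expectation immediately gives $\mathbb{E}[\|\mathrm{Rand}_k(\mathbf{x})\|_2^2]=\sum_{i=1}^d x_i^2\Pr[i\in S]=(k/d)\|\mathbf{x}\|_2^2$ (in fact with equality). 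For $\mathrm{Top}_k$, the operator selects the indices with the $k$ largest values of $|x_i|$, which coincide with the indices of the $k$ largest values of $x_i^2$; the deterministic inequality then reduces to showing that the sum of the top $k$ entries of a nonnegative sequence is at least $k/d$ times the total sum.

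Next I would derive the $\ell_1$ bound \eqref{eq:comp-1norm-bound}, splitting into the two terms of the max. The first term, $(k/d)\|\mathbf{x}\|_2^2$, is essentially free from the $\ell_2$ bound: for any $\mathbf{y}\in\R^d$ we have $\|\mathbf{y}\|_1\geq\|\mathbf{y}\|_2$ (each nonzero entry contributes more in $\ell_1$), hence $\|\mathbf{y}\|_1^2\geq\|\mathbf{y}\|_2^2$, and applying this to $\mathbf{y}=Comp_k(\mathbf{x})$ followed by taking expectations reduces the claim to \eqref{eq:comp-2norm-bound}. For the second term $(k^2/d^2)\|\mathbf{x}\|_1^2$, in the $\mathrm{Top}_k$ case I would re-use the top-$k$-mean-dominates-overall-mean fact with the sequence $a_i=|x_i|$, obtaining $\|\mathrm{Top}_k(\mathbf{x})\|_1\geq(k/d)\|\mathbf{x}\|_1$ deterministically, which squares to the claim. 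In the $\mathrm{Rand}_k$ case, the same linearity-of-expectation argument gives $\mathbb{E}[\|\mathrm{Rand}_k(\mathbf{x})\|_1]=(k/d)\|\mathbf{x}\|_1$, and then Jensen's inequality $\mathbb{E}[Z^2]\geq(\mathbb{E} Z)^2$ applied to $Z=\|\mathrm{Rand}_k(\mathbf{x})\|_1$ supplies the required bound on the second moment.

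The only substantive step is the deterministic claim underlying the $\mathrm{Top}_k$ arguments: if $a_{(1)}\geq a_{(2)}\geq\cdots\geq a_{(d)}\geq 0$ is the nonincreasing rearrangement of nonnegative reals, then $\tfrac{1}{k}\sum_{i=1}^k a_{(i)}\geq\tfrac{1}{d}\sum_{i=1}^d a_i$. I would justify this in one line by noting that each of $a_{(1)},\ldots,a_{(k)}$ is $\geq a_{(k)}$ while each of $a_{(k+1)},\ldots,a_{(d)}$ is $\leq a_{(k)}$, so the mean of the top-$k$ block is at least $a_{(k)}$ and the overall mean is at most $a_{(k)}$. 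Instantiating with $a_i=x_i^2$ and with $a_i=|x_i|$ then supplies both $\mathrm{Top}_k$ inequalities simultaneously, completing the proof.
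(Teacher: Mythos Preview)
Your proposal is correct and uses essentially the same ingredients as the paper: linearity of expectation for $\mathrm{Rand}_k$, the pointwise inequality $\|\cdot\|_1\geq\|\cdot\|_2$ to get the $(k/d)\|\mathbf{x}\|_2^2$ term of \eqref{eq:comp-1norm-bound} from \eqref{eq:comp-2norm-bound}, and Jensen's inequality for the $(k^2/d^2)\|\mathbf{x}\|_1^2$ term. The only organizational difference is that the paper first observes $\|\mathrm{Top}_k(\mathbf{x})\|_m^2\geq\mathbb{E}[\|\mathrm{Rand}_k(\mathbf{x})\|_m^2]$ (the maximum over $k$-subsets dominates the average) and then works only with $\mathrm{Rand}_k$, whereas you treat $\mathrm{Top}_k$ directly via the ``top-$k$ mean dominates overall mean'' inequality; these are two phrasings of the same fact.
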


\begin{proof}
{\allowdisplaybreaks
Let $m\in\{1,2\}$. Observe that for any $\mathbf{x}\in\R^d$, we have $\mathbb{E}[\|\mathrm{Top}_k(\mathbf{x})\|_m^2]=\|\mathrm{Top}_k(\mathbf{x})\|_m^2$ and that $\|\mathrm{Top}_k(\mathbf{x})\|_m^2\geq \mathbb{E}[\|\mathrm{Rand}_k(\mathbf{x})\|_m^2]$.
So, in order to prove the lemma, it suffices to show that $\mathbb{E}[\|\mathrm{Rand}_k(\mathbf{x})\|_m^2]\geq\frac{k}{d}\|\mathbf{x}\|_m^2$ holds for any $m\in\{1,2\}$,
and that $\mathbb{E}[\|\mathrm{Rand}_k(\mathbf{x})\|_1^2]\geq\frac{k^2}{d^2}\|\mathbf{x}\|_1^2$.
Let $\Omega_k$ be the set of all the $k$-elements subsets of $[d]$.
\begin{align*}
\mathbb{E}[\|\mathrm{Rand}_k(\mathbf{x})\|_m^2] &= \sum_{\omega\in\Omega_k}\frac{1}{|\Omega_k|}\left(\sum_{i=1}^d|x_i|^m\cdot\mathbbm{1}{\{i\in\omega\}}\right)^{2/m} \\
&\stackrel{\text{(a)}}{\geq} \sum_{\omega\in\Omega_k}\frac{1}{|\Omega_k|}\sum_{i=1}^d|x_i|^2\cdot\mathbbm{1}{\{i\in\omega\}} \\
&= \sum_{i=1}^d x_i^2\cdot \frac{1}{|\Omega_k|}\sum_{\omega\in\Omega_k}\mathbbm{1}{\{i\in\omega\}} \\
&= \sum_{i=1}^d x_i^2\cdot \frac{1}{|\Omega_k|} \binom{d-1}{k-1} \\
&= \frac{k}{d}\|\mathbf{x}\|_2^2
\end{align*}
Note that (a) holds only for $m\in\{1,2\}$, and it is equality for $m=2$. Now we show that $\mathbb{E}[\|\mathrm{Rand}_k(\mathbf{x})\|_1^2]\geq\frac{k^2}{d^2}\|\mathbf{x}\|_1^2$.
\begin{align*}
\mathbb{E}[\|\mathrm{Rand}_k(\mathbf{x})\|_1^2] &\geq \left(\mathbb{E}[\|\mathrm{Rand}_k(\mathbf{x})\|_1]\right)^2 \\
&= \left(\sum_{\omega\in\Omega_k}\frac{1}{|\Omega_k|}\sum_{i=1}^d|x_i|\cdot\mathbbm{1}{\{i\in\omega\}}\right)^2 \\
&= \left(\sum_{i=1}^d |x_i|\cdot \frac{1}{|\Omega_k|}\sum_{\omega\in\Omega_k}\mathbbm{1}{\{i\in\omega\}}\right)^2 \\
&= \left(\sum_{i=1}^d |x_i|\cdot \frac{1}{|\Omega_k|} \binom{d-1}{k-1}\right)^2 \\
&= \frac{k^2}{d^2}\|\mathbf{x}\|_1^2
\end{align*}
This completes the proof of \Lemmaref{comp-moment-bound-appx}.
}
\end{proof}

\begin{proof}[Proof of \Lemmaref{composed-sign}]
Fix an arbitrary $\bx\in\R^d$ and consider the following:
{\allowdisplaybreaks
\begin{align}
    &\mathbb{E}_C\left\|\frac{\|Comp_k(\mathbf{x})\|_m \,SignComp_k(\mathbf{x})}{k}-\mathbf{x}\right\|_2^2\notag\\
    &= \mathbb{E}_C\left[\frac{\|Comp_k(\mathbf{x})\|_m^2}{k}-2\left\langle\frac{\|Comp_k(\mathbf{x})\|_m \,SignComp_k(\mathbf{x})}{k},\mathbf{x}\right\rangle+\|\mathbf{x}\|_2^2\right]\notag\\
        &= \mathbb{E}_C\left[\frac{\|Comp_k(\mathbf{x})\|_m^2}{k}-2\frac{\|Comp_k(\mathbf{x})\|_m\|Comp_k(\mathbf{x})\|_1}{k}+\|\mathbf{x}\|_2^2\right]\notag\\
    &\leq\|\mathbf{x}\|_2^2-\frac{\mathbb{E}_C\|Comp_k(\mathbf{x})\|_m^2}{k} \label{eq:composed-sign-interim1}
    \end{align}
    In \eqref{eq:composed-sign-interim1} we used the fact that $\|\cdot\|_1 \geq \|\cdot\|_m$ for every $m\geq 1$. \\
}
\noindent{Case 1.} When $m=1$: Substituting $\mathbb{E}_C\|Comp_k(\mathbf{x})\|_1^2 \geq \max\left\{\frac{k}{d}\|\mathbf{x}\|_2^2, \frac{k^2}{d^2}\|\mathbf{x}\|_1^2\right\}$ (from 
\eqref{eq:comp-1norm-bound}) in \eqref{eq:composed-sign-interim1} gives 
\begin{align*}
\mathbb{E}_C\left\|\frac{\|Comp_k(\mathbf{x})\|_1 \,SignComp_k(\mathbf{x})}{k}-\mathbf{x}\right\|_2^2 &\leq 
\|\mathbf{x}\|_2^2 - \frac{1}{k}\max\left\{\frac{k}{d}\|\mathbf{x}\|_2^2, \frac{k^2}{d^2}\|\mathbf{x}\|_1^2\right\} \\
&\leq \left[1- \max\left\{\frac{1}{d}, \frac{k}{d}\left(\frac{\|Comp_k(\mathbf{x})\|_1}{\sqrt{d}\|Comp_k(\mathbf{x})\|_2}\right)^2\right\} \right]\|\mathbf{x}\|_2^2. \\
\end{align*}

\noindent{Case 2.} When $m\geq 2$: 
    Since $\|\mathbf{u}\|_p \leq k^{\frac{1}{p}-\frac{1}{q}}\|\mathbf{u}\|_q$ holds for every $\mathbf{u}\in\R^k$, whenever $p\leq q$, using this in 
    \eqref{eq:composed-sign-interim1} with $q=m$ and $p=2$ gives
    \begin{align}
    &\mathbb{E}_C\left\|\frac{\|Comp_k(\mathbf{x})\|_m \,SignComp_k(\mathbf{x})}{k}-\mathbf{x}\right\|_2^2\notag\\
    &\hspace{2cm}\leq \|\mathbf{x}\|_2^2-\frac{1}{k}k^{\frac{2}{m}-1}\mathbb{E}_C[\|Comp_k(\mathbf{x})\|_2^2]\notag\\
    &\hspace{2cm}\leq \|\mathbf{x}\|_2^2-\frac{1}{k}k^{\frac{2}{m}-1}(k/d)\|\mathbf{x}\|_2^2 \quad \text{(By \Lemmaref{comp-moment-bound-appx})}\notag\\
    &\hspace{2cm}=\left[1-\frac{k^{\frac{2}{m}-1}}{d}\right]\|\mathbf{x}\|_2^2.
\end{align}
This completes the proof of \Lemmaref{composed-sign}.
\end{proof}

\section{Omitted Details from \Sectionref{loc-sync}}
\label{app:supp_sync}
\subsection{\noindent Proof of \Lemmaref{bounded-memory-decaying-lr}}\label{app:mem_dec}
\begin{lemma*}[Restating \Lemmaref{bounded-memory-decaying-lr}]
Let $gap(\mathcal{I}_T)\leq H$ and $\eta_t=\frac{\xi}{a+t}$, where $\xi$ is a constant and $a>\frac{4H}{\gamma}$. Then there exists a constant $C\geq \frac{4a\gamma(1-\gamma^2)}{a\gamma-4H}$, such that the following holds for every worker $r\in[R]$ and for every $t\in \mathbb{Z}^+$:
\begin{align*}
    \mathbb{E}\|m_t^{(r)}\|_2^2\ \leq\ 4\frac{\eta_t^2}{\gamma^2}CH^2G^2.
\end{align*}
\end{lemma*}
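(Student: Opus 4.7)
The plan is to argue by induction on the synchronization times of worker $r$. Let $\tau_0<\tau_1<\cdots$ be the elements of $\mathcal{I}_T$, so by assumption $\tau_{k+1}-\tau_k\leq H$; between consecutive sync times the memory $m_t^{(r)}$ is frozen by the algorithm. It therefore suffices to bound $M_k:=\mathbb{E}\|m_{\tau_k}^{(r)}\|_2^2$ and then translate to a per-$t$ bound, because for $t\in[\tau_k,\tau_{k+1})$ we have $m_t^{(r)}=m_{\tau_k}^{(r)}$ while $\eta_t\geq \eta_{\tau_k+H-1}$, so a bound of the form $M_k\leq \tfrac{4CH^2G^2}{\gamma^2}\eta_{\tau_k+H-1}^2$ immediately yields $\mathbb{E}\|m_t^{(r)}\|_2^2\leq \tfrac{4CH^2G^2}{\gamma^2}\eta_t^2$ for all such $t$.

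The first step is to turn the memory update into a contractive recursion for $M_k$. At a sync step, the update rule gives $m_{\tau_{k+1}}^{(r)} = v_k-QComp_k(v_k)$ where $v_k := m_{\tau_k}^{(r)}+\mathbf{x}_{\tau_k}-\widehat{\mathbf{x}}_{\tau_{k+1}-\frac{1}{2}}^{(r)}$ (using that $\mathbf{x}_t$ is frozen at $\mathbf{x}_{\tau_k}$ between syncs). The compression property in \Definitionref{compression} gives $\mathbb{E}_Q\|m_{\tau_{k+1}}^{(r)}\|_2^2 \leq (1-\gamma)\|v_k\|_2^2$. Applying Young's inequality $\|a+b\|_2^2\leq (1+\gamma)\|a\|_2^2+(1+\tfrac{1}{\gamma})\|b\|_2^2$ and then the full expectation yields
\begin{align*}
M_{k+1}\ \leq\ (1-\gamma^2)\,M_k + \tfrac{1-\gamma^2}{\gamma}\,\mathbb{E}\bigl\|\mathbf{x}_{\tau_k}-\widehat{\mathbf{x}}_{\tau_{k+1}-\frac{1}{2}}^{(r)}\bigr\|_2^2.
\end{align*}
Unrolling the local SGD updates between $\tau_k$ and $\tau_{k+1}$ and combining Cauchy--Schwarz with the bounded-gradient assumption from \Subsectionref{assumptions}, the accumulated-gradient term is bounded by $H^2\eta_{\tau_k}^2G^2$, so
\begin{align*}
M_{k+1}\ \leq\ (1-\gamma^2)\,M_k + \tfrac{(1-\gamma^2)H^2G^2}{\gamma}\,\eta_{\tau_k}^2.
\end{align*}

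The second step is the induction itself, with hypothesis $M_k\leq \tfrac{4CH^2G^2}{\gamma^2}\eta_{\tau_k+H-1}^2$ (base case $M_0=0$ is immediate since $m_0^{(r)}=\mathbf{0}$). Substituting the hypothesis into the recursion, the inductive step reduces to the scalar inequality
\begin{align*}
4C(1-\gamma^2)\,\eta_{\tau_k+H-1}^2 + (1-\gamma^2)\gamma\,\eta_{\tau_k}^2\ \leq\ 4C\,\eta_{\tau_{k+1}+H-1}^2.
\end{align*}
Plugging in $\eta_t=\xi/(a+t)$, the two ratios $\eta_{\tau_k}^2/\eta_{\tau_{k+1}+H-1}^2$ and $\eta_{\tau_k+H-1}^2/\eta_{\tau_{k+1}+H-1}^2$ can be controlled in terms of $H/a$ using $\tau_{k+1}-\tau_k\leq H$. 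The hypothesis $a>4H/\gamma$ ensures $H/a<\gamma/4$, and elementary manipulation reduces the condition on $C$ to the algebraic inequality $C(a\gamma-4H)\geq 4a\gamma(1-\gamma^2)$, which is exactly the stated lower bound. The reduction from $M_k$ to the per-$t$ bound in the first paragraph then completes the proof.

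The main obstacle is the tension between the one-step contraction factor $1-\gamma^2$ (obtained from compression plus Young's inequality) and the multiplicative blow-up $\eta_{\tau_k}^2/\eta_{\tau_{k+1}+H-1}^2\approx (1+\tfrac{H}{a})^2$ that the decreasing learning rate introduces over a single synchronization period of length up to $H$. Ensuring that the contraction strictly dominates this blow-up is where the quantitative hypothesis $a\gamma>4H$ is used, and this is precisely what makes the geometric induction close with the specific constant $C\geq 4a\gamma(1-\gamma^2)/(a\gamma-4H)$; a weaker choice of $a$ would make the ``floor'' of the recursion exceed $\eta_t^2$ and break the induction.
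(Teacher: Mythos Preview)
Your overall structure---induct on synchronization times, use the compression property and Young's inequality to get a contractive recursion, then translate to all $t$---is exactly right and mirrors the paper. But there is a genuine gap in the choice of Young parameter, and it breaks the induction.

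You apply Young's inequality with splitting parameter $\gamma$, obtaining the contraction factor $(1-\gamma)(1+\gamma)=1-\gamma^2$. The inductive step then requires
\[
4C(1-\gamma^2)\,\eta_{\tau_k+H-1}^2 + (1-\gamma^2)\gamma\,\eta_{\tau_k}^2\ \leq\ 4C\,\eta_{\tau_{k+1}+H-1}^2,
\]
and in particular needs $(1-\gamma^2)\,\eta_{\tau_k+H-1}^2 < \eta_{\tau_{k+1}+H-1}^2$ for the left-hand side to ever be dominated by the right for large $C$. Since $\tau_{k+1}\leq\tau_k+H$, the ratio $\eta_{\tau_k+H-1}^2/\eta_{\tau_{k+1}+H-1}^2$ can be as large as $(1+H/a')^2$ with $a'\geq a$, and under the hypothesis $a>4H/\gamma$ you only know $H/a<\gamma/4$. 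But then
\[
(1-\gamma^2)\Bigl(1+\tfrac{\gamma}{4}\Bigr)^2 = 1+\tfrac{\gamma}{2}-\tfrac{15\gamma^2}{16}-\tfrac{\gamma^3}{2}-\tfrac{\gamma^4}{16},
\]
which exceeds $1$ for all sufficiently small $\gamma$ (e.g.\ $\gamma\leq 1/2$). Hence $\eta_{\tau_{k+1}+H-1}^2 - (1-\gamma^2)\eta_{\tau_k+H-1}^2$ can be negative, and no finite $C$ closes the induction. Your claimed ``elementary manipulation'' to $C(a\gamma-4H)\geq 4a\gamma(1-\gamma^2)$ is therefore incorrect: the quadratic contraction $1-\gamma^2$ cannot beat a per-period blowup that is linear in $\gamma$.

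The paper's fix is to choose the Young parameter as $\tfrac{(p-1)\gamma}{p}$ (and later $p=2$), giving contraction $(1-\gamma)(1+\tfrac{\gamma}{2})\leq 1-\tfrac{\gamma}{2}$, which \emph{is} linear in $\gamma$ and does dominate the blowup $(1+H/a)^2\leq(1+\gamma/4)^2$; the resulting requirement is exactly $C\geq\tfrac{4a\gamma(1-\gamma^2)}{a\gamma-4H}$. Changing your Young parameter from $\gamma$ to $\gamma/2$ (and adjusting the additive term accordingly to $\tfrac{2(1-\gamma^2)}{\gamma}\eta_{\tau_k}^2H^2G^2$) repairs your argument with the same constant.
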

\begin{proof}
Fix an arbitrary worker $r\in[R]$.
In order to prove the lemma, we need to show that $\mathbb{E}\|m_{t}^{\left(r\right)}\|^2 \leq 4\tfrac{\eta_{t}^2}{\gamma^2}CH^2G^2$ holds for every $t\in[T]$, where $C\geq \tfrac{4a\gamma(1-\gamma^2)}{a\gamma-4H}$.
We show this separately for two cases, depending on whether or not $t\in\I_T$. First consider the case when $t\in\I_T$.
Let $\mathcal{I}_T=\{t_{(1)},t_{(2)},\ldots,t_{(l)}=T\}$. Fix any $i=1,2,\hdots,l$ and consider $\mathbb{E}\|m_{t_{(i+1)}}^{(r)}\|^2$.
Note that local memory $m_t^{(r)}$ at any worker $r$ and the global parameter vector $\bx_t$ do not change in between the synchronization indices.
We define $m_{t_{(0)}}^{(r)}:=\bzero$ for every $r\in[R]$.
{\allowdisplaybreaks
\begin{align}
\mathbb{E}\|m_{t_{(i+1)}}^{(r)}\|^2 &= \mathbb{E}\|m_{t_{(i+1)}-1}^{(r)}+\mathbf{x}_{t_{(i+1)}-1}-\widehat{\mathbf{x}}_{t_{(i+1)}-\tfrac{1}{2}}^{(r)} - g_{t_{(i+1)}-1}^{(r)}\|^2\notag\\
&\stackrel{\text{(a)}}{\leq} (1-\gamma)\mathbb{E}\|m_{t_{(i+1)}-1}^{(r)}+\mathbf{x}_{t_{(i+1)}-1}-\widehat{\mathbf{x}}_{t_{(i+1)}-\tfrac{1}{2}}^{(r)}\|^2\notag\\
&\stackrel{\text{(b)}}{=} \left(1-\gamma\right)\mathbb{E}\|m_{t_{(i)}}^{(r)}+\mathbf{x}_{t_{(i)}}-\widehat{\mathbf{x}}_{t_{(i+1)}-\tfrac{1}{2}}^{(r)}\|^2\notag\\
&\stackrel{\text{(c)}}{=} \left(1-\gamma\right)\mathbb{E}\|m_{t_{(i)}}^{(r)}+\bwx_{t_{(i)}}^{(r)}-\widehat{\mathbf{x}}_{t_{(i+1)}-\tfrac{1}{2}}^{(r)}\|^2 \label{eq:bounded-memory-interim0}
\end{align}
Here (a) is due to the compression property, (b) holds since the memory and master parameter remain unchanged between two rounds of synchronization, and in (c) we used that $\bwx_{t_{(i)}}^{(r)}=\bx_{t_{(i)}}$, which holds for every $r$. 
Using the inequality $\|{\bf a} + {\bf b}\|^2\leq(1+\tau)\|{\bf a}\|^2+(1+\tfrac{1}{\tau})\|{\bf b}\|^2$, which holds for every $\tau>0$, in \eqref{eq:bounded-memory-interim0} gives (take any $p>1$ in the following):
\begin{align}
\mathbb{E}\|m_{t_{(i+1)}}^{(r)}\|^2&\leq \left(1-\gamma\right)\left[\left(1+\tfrac{(p-1)\gamma}{p}\right)\mathbb{E}\|m_{t_{(i)}}^{(r)}\|^2+\left(1+\tfrac{p}{(p-1)\gamma}\right)\mathbb{E}\|\bwx_{t_{(i)}}^{(r)}-\widehat{\mathbf{x}}_{t_{(i+1)}-\tfrac{1}{2}}^{(r)}\|^2\right]\notag\\
&\leq \left(1-\tfrac{\gamma}{p}\right)\mathbb{E}\|m_{t_{(i)}}^{(r)}\|^2+\tfrac{(1-\gamma)(p\gamma+p)}{(p-1)\gamma}\mathbb{E}\|\bwx_{t_{(i)}}^{(r)}-\widehat{\mathbf{x}}_{t_{(i+1)}-\tfrac{1}{2}}^{(r)}\|^2\notag\\
&= \left(1-\tfrac{\gamma}{p}\right)\mathbb{E}\|m_{t_{(i)}}^{(r)}\|^2+\tfrac{p(1-\gamma^2)}{(p-1)\gamma}\mathbb{E}\|\bwx_{t_{(i)}}^{(r)}-\widehat{\mathbf{x}}_{t_{(i+1)}-\tfrac{1}{2}}^{(r)}\|^2\notag\\
&=\left(1-\tfrac{\gamma}{p}\right)\mathbb{E}\|m_{t_{(i)}}^{(r)}\|^2+\tfrac{p(1-\gamma^2)}{(p-1)\gamma}\mathbb{E}\|\sum_{j=t_{(i)}}^{t_{(i+1)}-1}\eta_j\nabla f_{i_j^{(r)}}\left(\widehat{\mathbf{x}}_j^{\left(r\right)}\right)\|^2\notag\\
&\leq \left(1-\tfrac{\gamma}{p}\right)\mathbb{E}\|m_{t_{(i)}}^{(r)}\|^2+\tfrac{p(1-\gamma^2)}{(p-1)\gamma}\eta_{t_{(i)}}^2H^2G^2 \label{eq:bounded-memory-interim1}
\end{align}
In the last inequality \eqref{eq:bounded-memory-interim1} we used $\mathbb{E}\|\sum_{j=t_{(i)}}^{t_{(i+1)}-1}\eta_j\nabla f_{i_j^{(r)}}\left(\widehat{\mathbf{x}}_j^{\left(r\right)}\right)\|^2\leq \eta_{t_{(i)}}^2H^2G^2$, which can be seen as follows:
\begin{align*}
\mathbb{E}\|\sum_{j={t_{(i)}}}^{t_{(i+1)}-1}\eta_j\nabla^{\left(r\right)}f_{\left(i_j\right)}\left(\widehat{\mathbf{x}}_j^{\left(r\right)}\right)\|^2 &= (t_{(i+1)}-{t_{(i)}})^2\mathbb{E}\|\tfrac{1}{(t_{(i+1)}-{t_{(i)}})}\sum_{j={t_{(i)}}}^{t_{(i+1)}-1}\eta_j\nabla f_{i_j^{(r)}}\left(\widehat{\mathbf{x}}_j^{\left(r\right)}\right)\|^2 \\
&\hspace{-2cm}\stackrel{\text{(a)}}{\leq} (t_{(i+1)}-{t_{(i)}})\sum_{j={t_{(i)}}}^{t_{(i+1)}-1}\mathbb{E}\|\eta_j\nabla f_{i_j^{(r)}}\left(\widehat{\mathbf{x}}_j^{\left(r\right)}\right)\|^2 
\\
&\hspace{-2cm}\stackrel{\text{(b)}}{\leq} (t_{(i+1)}-{t_{(i)}})\eta_{{t_{(i)}}}^2\sum_{j={t_{(i)}}}^{t_{(i+1)}-1}\mathbb{E}\|\nabla f_{i_j^{(r)}}\left(\widehat{\mathbf{x}}_j^{\left(r\right)}\right)\|^2 
\\
&\hspace{-2cm}\leq (t_{(i+1)}-{t_{(i)}})\eta_{{t_{(i)}}}^2(t_{(i+1)}-{t_{(i)}})G^2 \\
&\hspace{-2cm}\stackrel{\text{(c)}}{\leq} \eta_{{t_{(i)}}}^2H^2G^2 
\end{align*}
}
Here (a) holds by Jensen's inequality, (b) holds since \text{since }$\eta_t \leq \eta_{t_{(i)}} \forall t\geq t_{(i)}$ and (c) holds because $(t_{(i+1)}-t_{(i)})\leq H$.
Define $\Tilde{\eta}_t=\tfrac{1}{a+t}$ and $A=\xi^2H^2G^2$. Using this in \eqref{eq:bounded-memory-interim1} gives
\begin{align}
    \mathbb{E}\|m_{t_{(i+1)}}^{\left(r\right)}\|^2&\leq\left(1-\tfrac{\gamma}{p}\right)\mathbb{E}\|m_{t_{(i)}}^{\left(r\right)}\|^2+\tfrac{p(1-\gamma^2)}{(p-1)\gamma}\Tilde{\eta}_{t_{(i)}}^2A. \label{eq:bounded-memory-interim2}
\end{align}

We want to show that $\mathbb{E}\|m_{t_{(i)}}^{\left(r\right)}\|^2 \leq 4C\tfrac{\tilde{\eta}_{t_{(i)}}^2}{\gamma^2}A$ holds for every $i=1,2,\hdots$, 
where $C\geq \tfrac{4a\gamma(1-\gamma^2)}{a\gamma-4H}$. In fact we prove a slightly stronger bound that $\mathbb{E}\|m_{t_{(i)}}^{\left(r\right)}\|^2 \leq C\tfrac{\tilde{\eta}_{t_{(i)}}^2}{\gamma^2}A$ holds for every $i=1,2,\hdots$.
We prove this using induction on $i$. \\

\textit{ Base case $(i=1)$:} Note that $m_{t_{(1)}-1}^{(r)}=m_0^{(r)}=\bzero$. Consider the following:
\begin{align*}
\mathbb{E}\|m_{t_{(1)}}^{(r)}\|^2 &= \mathbb{E}\|\bx_{t_{(1)}-1} - \bwx_{t_{(1)}-\tfrac{1}{2}} - g_{t_{(1)}-1}^{(r)}\|^2 \\
&\leq (1-\gamma)\mathbb{E}\|\bx_{t_{(1)}-1} - \bwx_{t_{(1)}-\tfrac{1}{2}}\|^2 \\
&\stackrel{\text{(a)}}{=} (1-\gamma)\mathbb{E}\|\bwx_{0}^{(r)} - \bwx_{t_{(1)}-\tfrac{1}{2}}\|^2 
\\
&= (1-\gamma)\mathbb{E}\|\sum_{j=0}^{t_{(1)}-1}\eta_j\nabla f_{i_j^{(r)}}\left(\bwx_j^{(r)}\right)\|^2 \\
&\leq (1-\gamma)\eta_0^2H^2G^2 \\
&= (1-\gamma)\tilde{\eta}_0^2A
\end{align*}
Here (a) holds \text{since} $\bx_{t_{(1)}-1}=\bx_0=\bwx_{0}^{(r)}$.
It is easy to verify that $(1-\gamma)\tilde{\eta}_0^2A \leq \tfrac{4a\gamma(1-\gamma^2)}{a\gamma-4H}\tfrac{\tilde{\eta}_{t_{(1)}}^2}{\gamma^2}A$. 
To show this, we use $\tfrac{\tilde{\eta}_0}{\tilde{\eta}_{t_{(1)}}}=\tfrac{a+t_{(1)}}{a}\leq\tfrac{a+H}{a}\leq 2$, where the first inequality follows from $t_{(1)}\leq H$ and the second inequality follows from $a\geq H$.
Now, since $C\geq \tfrac{4a\gamma(1-\gamma^2)}{a\gamma-4H}$, it follows that $\mathbb{E}\|m_{t_{(1)}}^{(r)}\|^2\leq C\tfrac{\tilde{\eta}_{t_{(1)}}^2}{\gamma^2}A$. \\

\textit{ Inductive case:}
Assume $\mathbb{E}\|m_{(i)}^{(r)}\|^2\leq C\tfrac{\Tilde{\eta}_{t_{(i)}}^2}{\gamma^2}A$ for some $i\in\mathbb{Z}^+$. We need to show that
$\mathbb{E}\|m_{(i+1)}^{(r)}\|^2\leq C\tfrac{\Tilde{\eta}_{t_{(i+1)}}^2}{\gamma^2}A$. Using the inductive hypothesis in \eqref{eq:bounded-memory-interim2}, we get
\begin{align}
    \mathbb{E}\|m_{(i+1)}^{\left(r\right)}\|^2&\leq\left(1-\tfrac{\gamma}{p}\right)C\tfrac{\Tilde{\eta}_{t_{(i)}}^2}{\gamma^2}A+\tfrac{p(1-\gamma^2)}{(p-1)\gamma}\Tilde{\eta}_{t_{(i)}}^2A\notag\\
    &=C\tfrac{\Tilde{\eta}_{t_{(i)}}^2}{\gamma^2}A\left(1-\tfrac{\gamma}{p}+\tfrac{p(1-\gamma^2)}{p-1}\tfrac{\gamma}{C}\right)\notag\\
    &=C\tfrac{\Tilde{\eta}_{t_{(i)}}^2}{\gamma^2}A\left(1-\tfrac{\gamma}{p}\left(1-\tfrac{p^2(1-\gamma^2)}{(p-1)C}\right)\right) \label{eq:bounded-memory-induction-interim1}
\end{align}
\begin{claim}\label{claim:bounded-memory_interim-claim}
For any $p>1$, if $\tfrac{\gamma}{p}\left(1-\tfrac{p^2(1-\gamma^2)}{(p-1)C}\right)\geq\tfrac{2H}{a}$, then $\Tilde{\eta}_{t_{(i)}}^2\left(1-\tfrac{\gamma}{p}\left(1-\tfrac{p^2(1-\gamma^2)}{(p-1)C}\right)\right)\leq \Tilde{\eta}_{t_{(i+1)}}^2$ holds.
\end{claim}
\begin{proof}
Let $\tfrac{\gamma}{p}\left(1-\tfrac{p^2(1-\gamma^2)}{(p-1)C}\right)=\tfrac{\beta}{a}$.
Since $t_{(i+1)}\leq t_{(i)}+H$ (which implies that $\tilde{\eta}_{t_{(i)}+H}^2\leq\tilde{\eta}_{t_{(i+1)}}^2$), it suffices to show that
$\Tilde{\eta}_{t_{(i)}}^2\left(1-\tfrac{\beta}{a}\right)\leq \Tilde{\eta}_{t_{(i)}+H}^2$ holds whenever $\beta\geq2H$. 
For simplicity of notation, let $t=t_{(i)}$.
Note that $\tilde{\eta}_t^2\left(1-\tfrac{\beta}{a}\right)=\tfrac{(a-\beta)}{a(a+t)^2}$. We show below that if $\beta>2H$, then $a(a+t)^2\geq(a+t+H)^2(a-\beta)$. This proves our claim, because now we have $\tfrac{(a-\beta)}{a(a+t)^2}\leq\tfrac{(a-\beta)}{(a+t+H)^2(a-\beta)}=\tfrac{1}{(a+t+H)^2}=\tilde{\eta}_{t+H}^2$. It only remains to show that $a(a+t)^2\leq(a+t+H)^2(a-\beta)$ holds if $\beta\geq 2H$.
\begin{align*}
    (a+t+H)^2(a-\beta)&= \left((a+t)^2+H^2+2H(a+t)\right)(a-\beta)\notag\\
    &=a(a+t)^2+aH^2+2Ha^2+2Hat-\beta(a+t)^2-\beta H^2-2H\beta(a+t)\notag\\
    &= a(a+t)^2+a(H^2+2Ht-2\beta t-2H\beta)+a^2(2H-\beta)\notag\\
    &\hspace{2cm}-\beta t^2-\beta H^2-2H\beta t \notag \\
    &\leq a(a+t)^2.
\end{align*}
The last inequality holds whenever $\beta\geq2H$.
\end{proof}

Therefore we need $\tfrac{\gamma}{p}\left(1-\tfrac{p^2(1-\gamma^2)}{(p-1)C}\right)\geq \tfrac{2H}{a}$, which is equivalent to requiring $C\geq \tfrac{\gamma a p^2(1-\gamma^2)}{(p-1)(a\gamma-2pH)}$, where $a>\tfrac{2pH}{\gamma}$. Since this holds for every $p>1$, by substituting $p=2$, we get $C\geq \tfrac{4\gamma a(1-\gamma^2)}{(a\gamma-4H)}$. This together with \eqref{eq:bounded-memory-induction-interim1} and \Claimref{bounded-memory_interim-claim} implies that if $C\geq \tfrac{4\gamma a(1-\gamma^2)}{(a\gamma-4H)}$, where $a>4H/\gamma$, then $\mathbb{E}\|m_{(i+1)}^{(r)}\|^2\leq C\tfrac{\Tilde{\eta}_{t_{(i+1)}}^2}{\gamma^2}A$ holds. This proves our inductive step.

We have shown that $\mathbb{E}\|m_{t}^{\left(r\right)}\|^2 \leq 4C\tfrac{\tilde{\eta}_{t}^2}{\gamma^2}A$ holds when $t\in\I_T$.
It only remains to show that $\mathbb{E}\|m_{t}^{\left(r\right)}\|^2 \leq 4C\tfrac{\tilde{\eta}_{t}^2}{\gamma^2}A$ also holds when $t\in[T]\setminus\I_T$.
Let $i\in\bbZ_+$ be such that $t_{(i)}\leq t< t_{(i+1)}$, which implies that $\tilde{\eta}_{t_{(i)}} \leq 2\tilde{\eta}_t$. Since local memory does not change in between the synchronization indices, we have that $m_t^{(r)}=m_{t_{(i)}}^{(r)}$. 
Thus we have $\mathbb{E}\|m_t^{(r)}\|^2 = \mathbb{E}\|m_{t_{(i)}}^{(r)}\|^2 \leq C\tfrac{\Tilde{\eta}_{t_{(i)}}^2}{\gamma^2}A \leq 4C\tfrac{\Tilde{\eta}_{{t}}^2}{\gamma^2}A$. This concludes the proof of \Lemmaref{bounded-memory-decaying-lr}.
\end{proof}

\subsection{\noindent Proof of \Lemmaref{bounded-memory-fixed-lr}}\label{app:mem_fix}
\begin{lemma*}[Restating \Lemmaref{bounded-memory-fixed-lr}]
Let $gap(\mathcal{I}_T)\leq H$. Then the following holds for every worker $r\in[R]$ and for every $t\in \mathbb{Z}^+$:
\begin{align*}
    \mathbb{E}\|m_t^{(r)}\|_2^2\leq 4\frac{\eta^2(1-\gamma^2)}{\gamma^2}H^2G^2. 
\end{align*}
\end{lemma*}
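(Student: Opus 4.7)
The plan is to follow the same template as the proof of \Lemmaref{bounded-memory-decaying-lr}, but exploit the fact that a constant learning rate turns the one-step recursion into a plain geometric recursion whose fixed point is a constant rather than a decaying term. As before, I would first reduce to showing the bound at synchronization times $t \in \I_T$, since between synchronizations the local memory $m_t^{(r)}$ is unchanged and the bound then trivially extends to all $t \in [T] \setminus \I_T$.

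Fix a worker $r$ and let $\I_T = \{t_{(1)}, t_{(2)}, \dots, t_{(l)}\}$ with $t_{(0)} = 0$ and $m_{t_{(0)}}^{(r)} = \bzero$. Repeating verbatim the derivation leading to (B.2)-(B.3) in the decaying case, one uses in order (i) the update rule for $m_{t_{(i+1)}}^{(r)}$, (ii) the compression property to pull out the factor $(1-\gamma)$, (iii) the fact that memory and master parameter are constant between syncs so that $\bx_{t_{(i+1)}-1} = \bx_{t_{(i)}} = \bwx_{t_{(i)}}^{(r)}$, and (iv) the inequality $\|\mathbf{a}+\mathbf{b}\|^2 \leq (1+\tau)\|\mathbf{a}\|^2 + (1+\tfrac{1}{\tau})\|\mathbf{b}\|^2$ with $\tau = \tfrac{(p-1)\gamma}{p}$ for some free parameter $p>1$. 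Since the learning rate is now the constant $\eta$, the gradient-sum bound gives $\mathbb{E}\|\bwx_{t_{(i)}}^{(r)}-\bwx_{t_{(i+1)}-\tfrac{1}{2}}^{(r)}\|^2 \leq \eta^2 H^2 G^2$ uniformly in $i$, so the resulting recursion is
\begin{equation*}
\mathbb{E}\|m_{t_{(i+1)}}^{(r)}\|^2 \;\leq\; \left(1 - \tfrac{\gamma}{p}\right) \mathbb{E}\|m_{t_{(i)}}^{(r)}\|^2 \;+\; \tfrac{p(1-\gamma^2)}{(p-1)\gamma}\, \eta^2 H^2 G^2 .
\end{equation*}

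Unlike the decaying-rate case, the inhomogeneous term here is independent of $i$, so the recursion is a standard linear contraction with ratio $1 - \gamma/p \in (0,1)$. Iterating from $m_{t_{(0)}}^{(r)} = \bzero$ and summing the geometric series gives, for every $i \geq 0$,
\begin{equation*}
\mathbb{E}\|m_{t_{(i)}}^{(r)}\|^2 \;\leq\; \tfrac{p(1-\gamma^2)}{(p-1)\gamma}\,\eta^2 H^2 G^2 \cdot \tfrac{p}{\gamma} \;=\; \tfrac{p^2(1-\gamma^2)}{(p-1)\gamma^2}\,\eta^2 H^2 G^2 .
\end{equation*}
The coefficient $\tfrac{p^2}{p-1}$ is minimized at $p = 2$, where it equals $4$; choosing $p = 2$ therefore yields the claimed bound $\mathbb{E}\|m_{t_{(i)}}^{(r)}\|^2 \leq \tfrac{4(1-\gamma^2)}{\gamma^2}\eta^2 H^2 G^2$ at every synchronization index. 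For $t \in [T]\setminus\I_T$, pick $i$ with $t_{(i)} \leq t < t_{(i+1)}$; then $m_t^{(r)} = m_{t_{(i)}}^{(r)}$ by the algorithm, and the same bound carries over.

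The proof presents no real obstacle: the algebraic manipulations are exactly those already handled in \Lemmaref{bounded-memory-decaying-lr}, and the inductive argument of \Claimref{bounded-memory_interim-claim} (which was needed there to absorb the decay factor $\tilde\eta_{t_{(i+1)}}^2 / \tilde\eta_{t_{(i)}}^2$) is simply not needed here. The only thing to notice is that the optimization over $p$ gives exactly $p=2$, which is what produces the factor $4$ in the stated bound and justifies the claimed saving of a factor of $C/(1-\gamma^2)$ over naively specializing \Lemmaref{bounded-memory-decaying-lr} to $\eta_t \equiv \eta$.
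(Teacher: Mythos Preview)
Your proposal is correct and follows essentially the same approach as the paper: reuse the recursion \eqref{eq:bounded-memory-interim1} with $\eta_t=\eta$, unroll it as a geometric series summing to $\tfrac{p^2(1-\gamma^2)}{(p-1)\gamma^2}\eta^2H^2G^2$, optimize at $p=2$, and carry the bound to non-synchronization times via $m_t^{(r)}=m_{t_{(i)}}^{(r)}$. The paper's proof is virtually identical, differing only in presentation (it explicitly writes out the geometric sum $\sum_{j=0}^\infty(1-\gamma/p)^j$ rather than phrasing it as iterating a linear contraction).
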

\begin{proof}
Observe that \eqref{eq:bounded-memory-interim1} holds irrespective of the learning rate schedule. 
In particular, using a fixed learning rate $\eta_t = \eta$ for every $t$ gives
\begin{align}
    \mathbb{E}\|m_{t_{(i+1)}}^{\left(r\right)}\|^2&\leq\left(1-\frac{\gamma}{p}\right)\mathbb{E}\|m_{t_{(i)}}^{\left(r\right)}\|^2+\frac{p(1-\gamma^2)}{(p-1)\gamma}\eta^2H^2G^2\notag
\end{align}
When rolled out we see that the memory is upper bounded by a geometric sum. 
\begin{align}
    \mathbb{E}\|m_{t_{(i+1)}}^{\left(r\right)}\|^2&\leq\frac{p(1-\gamma^2)}{(p-1)\gamma}\eta^2H^2G^2\sum_{j=0}^{\infty}\left(1-\frac{\gamma}{p}\right)^j\notag\\
    &\leq \frac{p^2(1-\gamma^2)}{(p-1)}\frac{\eta^2}{\gamma^2}H^2G^2.\notag
\end{align}
Note that the last inequality holds for every $p>1$, and is minimized when $p=2$. By plugging $p=2$, we get
\begin{align*}
    \mathbb{E}\|m_{t_{(i+1)}}^{\left(r\right)}\|^2    &\leq \frac{4(1-\gamma^2)\eta^2}{\gamma^2}H^2G^2.
\end{align*}
Since the RHS does not depend on $t$, it follows that $\mathbb{E}\|m_{t}^{\left(r\right)}\|^2 \leq \frac{4(1-\gamma^2)\eta^2}{\gamma^2}H^2G^2$ holds for every $t\in[T]$.
This completes the proof of \Lemmaref{bounded-memory-fixed-lr}.
\end{proof}

\subsection{\noindent Proof of \Lemmaref{memory-maintenance}}\label{app:mem_main}
\begin{lemma*}[Restating \Lemmaref{memory-maintenance}]
Let $\btx_t^{(r)},m_t^{(r)}$, $r\in[R]$, $t\geq0$ be generated according to \Algorithmref{memQSGD-synchronous} and let $\bwx_t^{(r)}$ be as defined in \eqref{eq:virtual_seq_defn}. Let $\btx_t=\frac{1}{R}\sum_{r=1}^R\btx_t^{(r)}$ and $\bwx_t=\frac{1}{R}\sum_{r=1}^R\bwx_t^{(r)}$. 
Then we have 
\begin{align*}
\widehat{\mathbf{x}}_t-\widetilde{\mathbf{x}}_t\ =\ \frac{1}{R}\sum_{r=1}^Rm_t^{(r)},
\end{align*}
i.e., the difference of the true and the virtual sequence is equal to the average memory.
\end{lemma*}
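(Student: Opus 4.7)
My plan is to prove the identity by induction on $t$, with the two cases arising from whether $t{+}1$ is a synchronization index or not mirroring the two branches of \Algorithmref{memQSGD-synchronous}. The base case $t=0$ is immediate: the algorithm initializes $\widehat{\mathbf{x}}_0^{(r)}=\bzero$ and $m_0^{(r)}=\bzero$ for every $r$, and by \eqref{eq:virtual_seq_defn} $\widetilde{\mathbf{x}}_0^{(r)}=\widehat{\mathbf{x}}_0^{(r)}$, so both sides of the claimed identity vanish.

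For the inductive step, I assume $\widehat{\mathbf{x}}_t-\widetilde{\mathbf{x}}_t=\frac{1}{R}\sum_{r=1}^R m_t^{(r)}$ and consider two cases. In the non-synchronization case $t{+}1\notin\mathcal{I}_T$, by the algorithm $m_{t+1}^{(r)}=m_t^{(r)}$ and $\widehat{\mathbf{x}}_{t+1}^{(r)}=\widehat{\mathbf{x}}_{t+1/2}^{(r)}=\widehat{\mathbf{x}}_t^{(r)}-\eta_t\nabla f_{i_t^{(r)}}(\widehat{\mathbf{x}}_t^{(r)})$, while the virtual sequence also advances by the same stochastic gradient step. Averaging over $r$, the gradient terms cancel in the difference $\widehat{\mathbf{x}}_{t+1}-\widetilde{\mathbf{x}}_{t+1}$, which then equals $\widehat{\mathbf{x}}_t-\widetilde{\mathbf{x}}_t$, and the inductive hypothesis closes this case.

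The substantive case is $t{+}1\in\mathcal{I}_T$. Here every worker overwrites its local iterate with the master's parameter, so $\widehat{\mathbf{x}}_{t+1}=\mathbf{x}_{t+1}=\mathbf{x}_t-\frac{1}{R}\sum_r g_t^{(r)}$. The key observation is that the memory update rule rearranges into
\[
g_t^{(r)}\;=\;m_t^{(r)}+\mathbf{x}_t-\widehat{\mathbf{x}}_{t+1/2}^{(r)}-m_{t+1}^{(r)}.
\]
Averaging and substituting, $\mathbf{x}_t$ cancels and I obtain
\[
\widehat{\mathbf{x}}_{t+1}\;=\;\frac{1}{R}\sum_{r=1}^R\widehat{\mathbf{x}}_{t+1/2}^{(r)}+\frac{1}{R}\sum_{r=1}^R m_{t+1}^{(r)}-\frac{1}{R}\sum_{r=1}^R m_t^{(r)}.
\]
Since $\frac{1}{R}\sum_r\widehat{\mathbf{x}}_{t+1/2}^{(r)}=\widehat{\mathbf{x}}_t-\eta_t\frac{1}{R}\sum_r\nabla f_{i_t^{(r)}}(\widehat{\mathbf{x}}_t^{(r)})$ and $\widetilde{\mathbf{x}}_{t+1}=\widetilde{\mathbf{x}}_t-\eta_t\frac{1}{R}\sum_r\nabla f_{i_t^{(r)}}(\widehat{\mathbf{x}}_t^{(r)})$, subtracting yields
\[
\widehat{\mathbf{x}}_{t+1}-\widetilde{\mathbf{x}}_{t+1}\;=\;\bigl(\widehat{\mathbf{x}}_t-\widetilde{\mathbf{x}}_t\bigr)+\frac{1}{R}\sum_{r=1}^R m_{t+1}^{(r)}-\frac{1}{R}\sum_{r=1}^R m_t^{(r)},
\]
and applying the inductive hypothesis to the first bracket gives $\widehat{\mathbf{x}}_{t+1}-\widetilde{\mathbf{x}}_{t+1}=\frac{1}{R}\sum_r m_{t+1}^{(r)}$, completing the induction.

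The proof is essentially bookkeeping rather than analysis, so there is no real obstacle; the only subtle point is recognizing that the compressed update $g_t^{(r)}$ never needs to be analyzed explicitly, because the memory definition encodes the exact relation $m_t^{(r)}+\mathbf{x}_t-\widehat{\mathbf{x}}_{t+1/2}^{(r)}=g_t^{(r)}+m_{t+1}^{(r)}$, and it is this algebraic identity between pre-compression input and post-compression output plus memory that lets the $g_t^{(r)}$'s cancel cleanly when one compares the master's aggregated step to the corresponding average of virtual updates.
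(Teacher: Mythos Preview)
Your proof is correct. It differs from the paper's argument in style: the paper does not induct on $t$ but instead unrolls the recursion over consecutive synchronization indices. Specifically, letting $t_r+1\in\mathcal{I}_T$ be the last synchronization no later than $t$ and $t_r'+1$ the one before that, the paper writes $\widehat{\mathbf{x}}_t-\widetilde{\mathbf{x}}_t$ in terms of $\widehat{\mathbf{x}}_{t_r'+1}-\widetilde{\mathbf{x}}_{t_r'+1}$ plus the contribution $(\mathbf{x}_{t_r}-\widehat{\mathbf{x}}_{t_r+1/2})-\frac{1}{R}\sum_r g_{t_r}^{(r)}$ from the most recent synchronization, then iterates this to obtain a telescoping sum $\frac{1}{R}\sum_r\sum_{j:j+1\in\mathcal{I}_T,\,j\leq t_r}(\mathbf{x}_j^{(r)}-\widehat{\mathbf{x}}_{j+1/2}^{(r)}-g_j^{(r)})$, which is recognized as $\frac{1}{R}\sum_r m_{t_r+1}^{(r)}=\frac{1}{R}\sum_r m_t^{(r)}$.

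Your step-by-step induction is cleaner and more elementary: it avoids indexing over the sparse set $\mathcal{I}_T$ and instead handles every $t$ uniformly, with the key algebraic identity $g_t^{(r)}=m_t^{(r)}+\mathbf{x}_t-\widehat{\mathbf{x}}_{t+1/2}^{(r)}-m_{t+1}^{(r)}$ doing all the work in the synchronization case. The paper's unrolling, on the other hand, makes explicit that the memory is the accumulated sum of per-round compression errors, which is conceptually informative and foreshadows the analogous (more delicate) computation in the asynchronous setting. Either route is perfectly valid here.
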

\begin{proof}
Now consider $\widehat{\mathbf{x}}_t-\widetilde{\mathbf{x}}_t=\frac{1}{R}\sum_{r=1}^R\widehat{\mathbf{x}}_t^{(r)}-\widetilde{\mathbf{x}}_t^{(r)}$.
For the nearest $t_r+1\in\mathcal{I}_T$ such that $t_r+1\leq t$ and the nearest $t_r'+1\in\mathcal{I}_T$ such that $t_r'+1\leq t_r$
\begin{align}
\widehat{\mathbf{x}}_t-\widetilde{\mathbf{x}}_t&=\frac{1}{R}\sum_{r=1}^R\left(\widehat{\mathbf{x}}_{t_r+1}^{(r)}-\widetilde{\mathbf{x}}_{t_r+1}^{(r)}\right)\notag\\
&=\frac{1}{R}\sum_{r=1}^R\left(\mathbf{x}_{t_r}-\frac{1}{R}\sum_{r=1}^Rg_{t_r}^{(r)}-(\widetilde{\mathbf{x}}_{t_r'+1}^{(r)}-(\widehat{\mathbf{x}}_{t_r'+1}^{(r)}-\widehat{\mathbf{x}}_{t_r+\frac{1}{2}}^{(r)}))\right)
\end{align}
Here we used that $\widehat{\mathbf{x}}_{t_r'+1}^{(r)}-\widehat{\mathbf{x}}_{t_r+\frac{1}{2}}^{(r)}=\overset{t_r}{\underset{j=t_r'+1}{\sum}}\eta_j\nabla^{\left(r\right)}f_{\left(i_j\right)}\left(\widehat{\mathbf{x}}_j^{\left(r\right)}\right) $. Substituting $\widehat{\mathbf{x}}_{t_r'+1}^{(r)}=\mathbf{x}_{t_r'+1}$ we get
\begin{align}
\widehat{\mathbf{x}}_t-\widetilde{\mathbf{x}}_t&=\frac{1}{R}\sum_{r=1}^R\left(\mathbf{x}_{t_r}-\frac{1}{R}\sum_{r=1}^Rg_{t_r}^{(r)}-(\widetilde{\mathbf{x}}_{t_r'+1}^{(r)}-(\mathbf{x}_{t_r'+1}-\widehat{\mathbf{x}}_{t_r+\frac{1}{2}}^{(r)}))\right)\notag\\
&=\mathbf{x}_{t_r'+1}-\frac{1}{R}\sum_{r=1}^Rg_{t_r}^{(r)}-(\widetilde{\mathbf{x}}_{t_r'+1}-(\mathbf{x}_{t_r'+1}-\widehat{\mathbf{x}}_{t_r+\frac{1}{2}}))\notag\\
&=\widehat{\mathbf{x}}_{t_r'+1}-\widetilde{\mathbf{x}}_{t_r'+1}+(\mathbf{x}_{t_r'+1}-\widehat{\mathbf{x}}_{t_r+\frac{1}{2}})-\frac{1}{R}\sum_{r=1}^Rg_{t_r}^{(r)}
\end{align}
Now since $\mathbf{x}_{t_r'+1}=\mathbf{x}_{t_r}$ we have 
\begin{align}
\widehat{\mathbf{x}}_t-\widetilde{\mathbf{x}}_t&=\widehat{\mathbf{x}}_{t_r'+1}-\widetilde{\mathbf{x}}_{t_r'+1}+(\mathbf{x}_{t_r}-\widehat{\mathbf{x}}_{t_r+\frac{1}{2}})-\frac{1}{R}\sum_{r=1}^Rg_{t_r}^{(r)}\label{eq:sync_memory}
\end{align}
On rolling out the expression in \eqref{eq:sync_memory} we get 
\begin{align}
    \widehat{\mathbf{x}}_t-\widetilde{\mathbf{x}}_t&=\frac{1}{R}\sum_{r=1}^R\left[\sum_{\substack{j:j+1\in\mathcal{I}_T\\j\leq t_r}}\left(\mathbf{x}_j^{(r)}-\widehat{\mathbf{x}}_{j+\frac{1}{2}}^{(r)}-g_j^{(r)}\right)\right]\notag\\
    &=\frac{1}{R}\sum_{r=1}^Rm_{t_r+1}^{(r)} \notag\\
    &=\frac{1}{R}\sum_{r=1}^R m_t^{(r)}
\end{align}
Therefore $\widehat{\mathbf{x}}_t-\widetilde{\mathbf{x}}_t=\frac{1}{R}\sum_{r=1}^Rm_t^{\left(r\right)}$ is the average memory.
This completes the proof of \Lemmaref{memory-maintenance}.
\end{proof}

\subsection{Proof of \Lemmaref{sequence-bound-synchronous-fixed-lr}}\label{app:dev_fix}
\begin{lemma*}[Restating \Lemmaref{sequence-bound-synchronous-fixed-lr}]
Let $gap(\mathcal{I}_T) \leq H$.
For $\widehat{\bx}_t^{(r)}$ generated according to \Algorithmref{memQSGD-synchronous} with a fixed learning rate $\eta$ and letting $\bwx_t=\frac{1}{R}\sum_{r=1}^R\bwx_t^{(r)}$, we have the following bound on the deviation of the local sequences:
\begin{align*}
    \frac{1}{R}\sum_{r=1}^R\mathbb{E}\|\widehat{\mathbf{x}}_t-\widehat{\mathbf{x}}_t^{\left(r\right)}\|_2^2\ \leq\ \eta^2G^2H^2.
\end{align*}
\end{lemma*}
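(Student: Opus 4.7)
The plan is to exploit the fact that in the synchronous setting, all local sequences are reset to the same global parameter at each synchronization index, so the local deviation is just the sum of local gradient steps taken since the last sync.

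First, I would fix $t$ and let $t_0\in\mathcal{I}_T$ be the largest synchronization index with $t_0\leq t$. By the gap assumption, $t-t_0\leq H$. Because all workers synchronize at $t_0$, we have $\widehat{\bx}_{t_0}^{(r)}=\bx_{t_0}$ for every $r\in[R]$, and then for $j\in[t_0,t)$ the update rule in \Algorithmref{memQSGD-synchronous} gives $\widehat{\bx}_{t}^{(r)}=\bx_{t_0}-\eta\sum_{j=t_0}^{t-1}\nabla f_{i_j^{(r)}}(\widehat{\bx}_j^{(r)})$ (no communication happens between $t_0$ and $t$, so the intermediate and new iterates coincide).

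Next, I would use the standard fact that for any finite collection of vectors $\{\mathbf{u}_r\}$ with mean $\bar{\mathbf{u}}$ and any vector $\mathbf{v}$, $\sum_r\|\mathbf{u}_r-\bar{\mathbf{u}}\|_2^2\leq \sum_r\|\mathbf{u}_r-\mathbf{v}\|_2^2$. Applying this with $\mathbf{u}_r=\widehat{\bx}_t^{(r)}$, $\bar{\mathbf{u}}=\widehat{\bx}_t$, and $\mathbf{v}=\bx_{t_0}$, I obtain
\begin{align*}
\frac{1}{R}\sum_{r=1}^R\mathbb{E}\|\widehat{\bx}_t-\widehat{\bx}_t^{(r)}\|_2^2\ \leq\ \frac{1}{R}\sum_{r=1}^R\mathbb{E}\|\widehat{\bx}_t^{(r)}-\bx_{t_0}\|_2^2\ =\ \frac{\eta^2}{R}\sum_{r=1}^R\mathbb{E}\Bigl\|\sum_{j=t_0}^{t-1}\nabla f_{i_j^{(r)}}(\widehat{\bx}_j^{(r)})\Bigr\|_2^2.
\end{align*}

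Then I would bound the inner norm squared by Jensen's inequality: since the sum has at most $H$ terms, $\|\sum_{j=t_0}^{t-1}\mathbf{g}_j\|_2^2\leq (t-t_0)\sum_{j=t_0}^{t-1}\|\mathbf{g}_j\|_2^2\leq H\sum_{j=t_0}^{t-1}\|\mathbf{g}_j\|_2^2$. Combined with Assumption~2 from \Subsectionref{assumptions} ($\mathbb{E}\|\nabla f_{i_j^{(r)}}(\widehat{\bx}_j^{(r)})\|_2^2\leq G^2$) and the fact that there are at most $H$ summands, this immediately yields $\eta^2 G^2 H^2$, proving the lemma. There is no real obstacle here beyond correctly identifying $t_0$ and invoking the "mean minimizes sum of squared deviations" inequality; the rest is a textbook Jensen-plus-bounded-gradients computation.
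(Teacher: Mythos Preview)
Your proof is correct and follows essentially the same approach as the paper: both use the variance-minimization inequality $\sum_r\|\mathbf{u}_r-\bar{\mathbf{u}}\|^2\leq\sum_r\|\mathbf{u}_r-\mathbf{v}\|^2$ with $\mathbf{v}$ equal to the common value at the last synchronization index, then bound the sum of at most $H$ gradient steps via Jensen and the bounded second-moment assumption. The paper phrases the variance inequality as $\sum_r\|\mathbf{u}_r-\bar{\mathbf{u}}\|^2\leq\sum_r\|\mathbf{u}_r\|^2$ (after first subtracting the common sync point), but this is the same argument with a trivial shift.
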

\begin{proof}
To prove this, we follow the proof of \Lemmaref{sequence-bound-synchronous} until \eqref{eq:deviation_bound_pre} and put $\eta_{t_r}=\eta$ to get $\frac{1}{R}\sum_{r=1}^R\mathbb{E}\|\widehat{\mathbf{x}}_t-\widehat{\mathbf{x}}_t^{\left(r\right)}\|^2 \leq \eta^2G^2H^2$.
\end{proof}

\subsection{Proof of \Lemmaref{sequence-bound-synchronous}}\label{app:dev_dec}
\begin{lemma*}[Restating \Lemmaref{sequence-bound-synchronous}]
Let $gap(\mathcal{I}_T) \leq H$.
By running \Algorithmref{memQSGD-synchronous} with a decaying learning rate $\eta_t$, we have
\begin{align*}
    \frac{1}{R}\sum_{r=1}^R\mathbb{E}\|\widehat{\mathbf{x}}_t-\widehat{\mathbf{x}}_t^{\left(r\right)}\|_2^2\ \leq\ 4\eta_t^2G^2H^2.
\end{align*}
\end{lemma*}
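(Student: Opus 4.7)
}
The plan is to reduce the local-sequence deviation to the drift accumulated since the last synchronization step, and then bound that drift using the bounded second moment assumption together with the decay schedule $\eta_t=\xi/(a+t)$. First I would let $t_0\in\I_T$ denote the largest synchronization index with $t_0\leq t$; by $gap(\I_T)\leq H$ we have $t-t_0\leq H$. At step $t_0$, \Algorithmref{memQSGD-synchronous} resets every worker's local parameter to the common global parameter, so $\widehat{\mathbf{x}}_{t_0}^{(r)}=\mathbf{x}_{t_0}$ for every $r\in[R]$. For each $j=t_0,\dots,t-1$ no further synchronization occurs, so the worker update reduces to $\widehat{\mathbf{x}}_{j+1}^{(r)}=\widehat{\mathbf{x}}_j^{(r)}-\eta_j\nabla f_{i_j^{(r)}}(\widehat{\mathbf{x}}_j^{(r)})$. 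Unrolling gives
\[
\widehat{\mathbf{x}}_t^{(r)}-\mathbf{x}_{t_0}\ =\ -\sum_{j=t_0}^{t-1}\eta_j\,\nabla f_{i_j^{(r)}}(\widehat{\mathbf{x}}_j^{(r)}).
\]

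Next I would centre at $\mathbf{x}_{t_0}$. Since $\widehat{\mathbf{x}}_t=\frac{1}{R}\sum_{r=1}^R\widehat{\mathbf{x}}_t^{(r)}$ is the Euclidean mean, it minimizes $z\mapsto\sum_{r=1}^R\|z-\widehat{\mathbf{x}}_t^{(r)}\|_2^2$, and in particular
\[
\frac{1}{R}\sum_{r=1}^R\bbE\|\widehat{\mathbf{x}}_t-\widehat{\mathbf{x}}_t^{(r)}\|_2^2\ \leq\ \frac{1}{R}\sum_{r=1}^R\bbE\|\mathbf{x}_{t_0}-\widehat{\mathbf{x}}_t^{(r)}\|_2^2.
\]
Applying Jensen's inequality to the sum of at most $H$ terms and using Assumption~2 ($\bbE\|\nabla f_{i_j^{(r)}}(\widehat{\mathbf{x}}_j^{(r)})\|_2^2\leq G^2$) together with monotonicity of $\eta_j$, I obtain
\[
\bbE\|\mathbf{x}_{t_0}-\widehat{\mathbf{x}}_t^{(r)}\|_2^2\ \leq\ (t-t_0)\sum_{j=t_0}^{t-1}\eta_j^2\,\bbE\|\nabla f_{i_j^{(r)}}(\widehat{\mathbf{x}}_j^{(r)})\|_2^2\ \leq\ H^2\eta_{t_0}^2G^2.
\]

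Finally, I would convert $\eta_{t_0}$ to $\eta_t$ via the schedule: $\eta_{t_0}/\eta_t=(a+t)/(a+t_0)\leq(a+t)/(a+t-H)\leq 2$, where the last inequality follows from $a\geq H$, which is guaranteed by the hypothesis $a>4H/\gamma\geq 4H$. Hence $\eta_{t_0}^2\leq 4\eta_t^2$ and we conclude $\frac{1}{R}\sum_{r=1}^R\bbE\|\widehat{\mathbf{x}}_t-\widehat{\mathbf{x}}_t^{(r)}\|_2^2\leq 4\eta_t^2G^2H^2$, as desired. I do not anticipate a substantive obstacle: the only subtle point is the step-size ratio, and the factor of $4$ in the lemma is precisely the cost of trading $\eta_{t_0}$ for $\eta_t$. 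This is exactly where the proof for decaying $\eta_t$ differs from the fixed learning rate case in \Lemmaref{sequence-bound-synchronous-fixed-lr}, which loses no such factor.
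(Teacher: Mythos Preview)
Your proof is correct and follows essentially the same route as the paper: center at the last synchronization index, use the variance-minimization property of the mean, bound the accumulated drift by Jensen and the second-moment assumption to get $\eta_{t_0}^2H^2G^2$, and then trade $\eta_{t_0}$ for $\eta_t$ at the cost of the factor $4$. One small imprecision: the step $(a+t)/(a+t-H)\leq 2$ requires $a+t\geq 2H$, not merely $a\geq H$; but since you already record $a>4H/\gamma\geq 4H$, this is immaterial (the paper instead writes $\eta_{t_0}\leq 2\eta_{t_0+H}\leq 2\eta_t$, which only needs $a\geq H$).
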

\begin{proof}
We show this along the lines of the proof of \cite[Lemma 3.3]{localsgd2}.
We need to upper-bound $\frac{1}{R}\sum_{r=1}^R\mathbb{E}\|\widehat{\mathbf{x}}_t-\widehat{\mathbf{x}}_t^{(r)}\|^2$. 
Note that for any $R$ vectors $\mathbf{u}_1,\hdots,\mathbf{u}_R$, if we let $\bar{\mathbf{u}}=\frac{1}{R}\sum_{i=1}^r\mathbf{u}_i$, then
$\sum_{i=1}^n\|\mathbf{u}_i-\bar{\mathbf{u}}\|^2\leq\sum_{i=1}^R\|\mathbf{u}_i\|^2$. We use this in the first inequality below.
\begin{align}
    \frac{1}{R}\sum_{r=1}^R\mathbb{E}\|\widehat{\mathbf{x}}_t-\widehat{\mathbf{x}}_t^{\left(r\right)}\|^2 &= \frac{1}{R}\sum_{r=1}^R\mathbb{E}\|\widehat{\mathbf{x}}_t^{(r)}-\widehat{\mathbf{x}}_{t_r}^{(r)}-(\widehat{\mathbf{x}}_t-\widehat{\mathbf{x}}_{t_r}^{(r)})\|^2\notag\\
    &\leq \frac{1}{R}\sum_{r=1}^R\mathbb{E}\|\widehat{\mathbf{x}}_t^{(r)}-\widehat{\mathbf{x}}_{t_r}^{(r)}\|^2\notag\\
    &\leq \eta_{t_r}^2G^2H^2 \label{eq:deviation_bound_pre} \\
    &\leq 4\eta_t^2G^2H^2\label{eq:deviation_bound}
\end{align}
The last inequality \eqref{eq:deviation_bound} uses $\eta_{t_r}\leq 2\eta_{t_r+H}\leq2\eta_t$ and $t-t_r\leq H$.
\end{proof}

\subsection{Proof of \Theoremref{convergence-non-convex-fixed-local-het}}\label{app:smooth_proof_sync}
\begin{proof}
Let $\bx^*$ be the minimizer of $f(\bx)$, therefore we denote $f(\bx^*)$ by $f^*$. For the purpose of reusing the proof later while proving \Theoremref{convergence-non-convex-decay-local-het}, we start off with the decaying learning rate $\eta_t$ until \eqref{eq:non-convex3} and then switch to the fixed learning rate $\eta$. Note that the proof remains the same until \eqref{eq:non-convex3} irrespective of the learning rate schedule; in particular, we can take $\eta_t=\eta$ and the same proof holds until \eqref{eq:non-convex3}. 

\noindent By the definition of $L$-smoothness, we have
{\allowdisplaybreaks
\begin{align*}
    f(\widetilde{\mathbf{x}}_{t+1})-f(\widetilde{\mathbf{x}}_{t})&\leq \langle\nabla f(\widetilde{\mathbf{x}}_t),\widetilde{\mathbf{x}}_{t+1}-\widetilde{\mathbf{x}}_t\rangle+\frac{L}{2}\|\widetilde{\mathbf{x}}_{t+1}-\widetilde{\mathbf{x}}_t\|^2\\
    &=-\eta_t\langle\nabla f(\widetilde{\mathbf{x}}_t),\mathbf{p}_t\rangle+\frac{\eta_t^2L}{2}\|\mathbf{p}_t\|^2\\
    &=-\eta_t\langle\nabla f(\widetilde{\mathbf{x}}_t),\mathbf{p}_t\rangle+\frac{\eta_t^2L}{2}\|\mathbf{p}_t-\overline{\mathbf{p}}_t+\overline{\mathbf{p}}_t\|^2\\
    &\leq-\eta_t\langle\nabla f(\widetilde{\mathbf{x}}_t),\mathbf{p}_t\rangle+\eta_t^2L\|\mathbf{p}_t-\overline{\mathbf{p}}_t\|^2+\eta_t^2L\|\overline{\mathbf{p}}_t\|^2\quad\text{(Using Jensen's Inequality)}\\
    &=-\frac{\eta_t}{R}\sum_{r=1}^{R}\langle\nabla f(\widetilde{\mathbf{x}}_t),\nabla f_{i_t^{(r)}}(\widehat{\mathbf{x}}_t^{(r)})\rangle 
    +\eta_t^2L\|\frac{1}{R}\sum_{r=1}^{R}\nabla f^{(r)}(\widehat{\mathbf{x}}_t^{(r)})\|^2+\eta_t^2L\|\mathbf{p}_t-\overline{\mathbf{p}}_t\|^2
    \end{align*}
    Define $i_t$ as the set of random sampling of the mini-batches at each worker $\{i_t^{(1)},i_t^{(2)},\ldots,i_t^{(R)}\}$. Taking expectation w.r.t.~the sampling at time $t$ (conditioned on the past) and using the lipschitz continuity of the gradients of local functions gives
    \begin{align}
  \mathbb{E}_{i_t}[f(\widetilde{\mathbf{x}}_{t+1})]-f(\widetilde{\mathbf{x}}_t)&\leq -\frac{\eta_t}{2}\left(\|\nabla f(\widetilde{\mathbf{x}}_t)\|^2+\|\frac{1}{R}\sum_{r=1}^{R}\nabla f^{(r)}(\widehat{\mathbf{x}}_t^{(r)})\|^2-\|\nabla f(\widetilde{\mathbf{x}}_t)-\frac{1}{R}\sum_{r=1}^{R}\nabla f^{(r)}(\widehat{\mathbf{x}}_t^{(r)})\|^2\right)\notag\\
    &\hspace{2cm}+\eta_t^2L\|\frac{1}{R}\sum_{r=1}^{R}\nabla f^{(r)}(\widehat{\mathbf{x}}_t^{(r)})\|^2+\frac{\eta_t^2L}{bR^2}\sum_{r=1}^R\sigma_r^2\notag\\
    &\leq -\frac{\eta_t}{2R}\sum_{r=1}^{R}\left(\|\nabla f(\widetilde{\mathbf{x}}_t)\|^2-L^2\|\widetilde{\mathbf{x}}_t-\widehat{\mathbf{x}}_t^{(r)}\|^2\right)+\frac{2\eta_t^2L-\eta_t}{2}\|\frac{1}{R}\sum_{r=1}^{R}\nabla f^{(r)}(\widehat{\mathbf{x}}_t^{(r)})\|^2\notag\\&\hspace{2cm}+\frac{\eta_t^2L}{bR^2}\sum_{r=1}^R\sigma_r^2\notag\\
    &= -\frac{\eta_t}{2R}\sum_{r=1}^{R}\left(\|\nabla f(\btx_t)\|^2+L^2\|\btx_t-\widehat{\bx}_t^{(r)}\|^2\right)+\frac{2\eta_t^2L-\eta_t}{2R}\sum_{r=1}^R\|\nabla f(\widehat{\bx}_t^{(r)})\|^2 \notag \\
    &\hspace{2cm}+\frac{\eta_t^2L}{bR^2}\sum_{r=1}^R\sigma_r^2+\frac{\eta_t L^2}{R}\sum_{r=1}^R\|\btx_t-\widehat{\bx}_t^{(r)}\|^2. \label{eq:non-convex1}
    \end{align}
    }
We bound the first term in terms of $\|\nabla f(\widehat{\bx}_t^{(r)})\|^2$ as follows:
\begin{align}
    \|\nabla f(\widehat{\bx}_t^{(r)})\|^2 &\leq 2\|\nabla f(\widehat{\bx}_t^{(r)})-\nabla f(\btx_t)\|^2 + 2\|\nabla f(\btx_t)\|^2 \notag \\
    &\leq 2L^2\|\widehat{\bx}_t^{(r)}-\btx_t\|^2+2\|\nabla f(\btx_t)\|^2 \label{eq:non-convex2},
\end{align}
where the 2nd inequality follows from the smoothness ($L$-Lipschitz gradient) assumption.
Using this and that $\eta_t\leq \frac{1}{2L}$ in \eqref{eq:non-convex1} and rearranging terms give
\begin{align}
    \frac{\eta_t}{4R}\sum_{r=1}^R\|\nabla f(\widehat{\bx}_t^{(r)})\|^2\leq f(\btx_t)-\mathbb{E}_{(i_t)}[f(\btx_{t+1})]+\frac{\eta_t^2L}{bR^2}\sum_{r=1}^R\sigma_r^2+\frac{\eta_tL^2}{R}\sum_{r=1}^R\|\btx_t-\widehat{\bx}_t^{(r)}\|^2
\end{align}
Taking expectation w.r.t.~to the entire process and using the inequality $\|\bu+\bv\|^2\leq 2\|\bu\|^2 + 2\|\bv\|^2$ gives
\begin{align}
    \frac{\eta_t}{4R}\sum_{r=1}^R\mathbb{E}\|\nabla f(\widehat{\bx}_t^{(r)})\|^2 &\leq 
\mathbb{E}[f(\btx_t)]-\mathbb{E}[f(\btx_{t+1})]+\frac{\eta_t^2L}{bR^2}\sum_{r=1}^R\sigma_r^2+2\eta_t L^2\mathbb{E}\|\btx_t-\widehat{\bx}_t\|^2 \notag \\
&\hspace{2cm} +2\eta_t L^2\frac{1}{R}\sum_{r=1}^R\mathbb{E}\|\widehat{\bx}_t-\widehat{\bx}_t^{(r)}\|^2 \label{eq:non-convex3}
\end{align}
Observe that \eqref{eq:non-convex3} holds irrespective of the learning rate schedule. In particular, 
if we take a fixed learning rate $\eta_t=\eta\leq\frac{1}{2L}$ in \eqref{eq:non-convex3}, we get
\begin{align}
    \frac{\eta}{4R}\sum_{r=1}^R\mathbb{E}\|\nabla f(\widehat{\bx}_t^{(r)})\|^2&\leq \mathbb{E}[f(\btx_t)]-\mathbb{E}[f(\btx_{t+1})]+\frac{\eta^2L}{bR^2}\sum_{r=1}^R\sigma_r^2+2\eta L^2\mathbb{E}\|\btx_t-\widehat{\bx}_t\|^2 \notag \\
    &+2\eta L^2\frac{1}{R}\sum_{r=1}^R\mathbb{E}\|\widehat{\bx}_t-\widehat{\bx}_t^{(r)}\|^2\label{eq:loc22}
\end{align}
\Lemmaref{memory-maintenance} and \Lemmaref{bounded-memory-fixed-lr} together imply $\mathbb{E}\|\widehat{\bx}_t-\btx_t\|^2 \leq \frac{4\eta^2(1-\gamma^2)}{\gamma^2}G^2H^2$.
We also have from \Lemmaref{sequence-bound-synchronous-fixed-lr} that $\frac{1}{R}\sum_{r=1}^R\mathbb{E}\|\widehat{\bx}_t-\widehat{\bx}_t^{\left(r\right)}\|^2 \leq \eta^2G^2H^2$. Substituting these  in \eqref{eq:loc22} gives
\begin{align}
    \frac{\eta}{4R}\sum_{r=1}^R\mathbb{E}\|\nabla f(\widehat{\bx}_t^{(r)})\|^2&\leq \mathbb{E}[f(\btx_t)]-\mathbb{E}[f(\btx_{t+1})]+\frac{\eta^2L}{bR^2}\sum_{r=1}^R\sigma_r^2+8\frac{\eta^3(1-\gamma^2)}{\gamma^2} L^2G^2H^2\notag\\
    &+2\eta^3 L^2G^2H^2
\end{align}
By taking a telescopic sum from $t=0$ to $t=T-1$, we get 
\begin{align}
    \frac{1}{4RT}\sum_{t=0}^{T-1}\sum_{r=1}^R\mathbb{E}\|\nabla f(\widehat{\bx}_t^{(r)})\|^2&\leq \frac{\mathbb{E}[f(\btx_0)]-f^*}{\eta T}+\frac{\eta L}{bR^2}\sum_{r=1}^R\sigma_r^2+8\frac{\eta^2(1-\gamma^2)}{\gamma^2} L^2G^2H^2\notag\\
    &+2\eta^2 L^2G^2H^2
\end{align}
Take $\eta=\frac{\widehat{C}}{\sqrt{T}}$, where $\widehat{C}$ is a constant (that satisfies $\widehat{C}<\frac{\sqrt{T}}{2L}$). 
For example, we can take $\widehat{C}=\frac{1}{2L}$. This gives
\begin{align}
    \frac{1}{RT}\sum_{t=0}^{T-1}\sum_{r=1}^R\mathbb{E}\|\nabla f(\widehat{\bx}_t^{(r)})\|^2&\leq \left(\frac{\mathbb{E}[f(\bx_0)]-f^*}{\widehat{C}}+\frac{\widehat{C} L}{bR^2}\sum_{r=1}^R\sigma_r^2\right)\frac{4}{\sqrt{T}}+8\left(4\frac{(1-\gamma^2)}{\gamma^2}+1\right) \frac{\widehat{C}^2L^2G^2H^2}{T}. \label{eq:non-convex-fixed-lr23}
\end{align}
Sample a parameter $\bz_T$ from $\left\{\bwx_t^{(r)}\right\}$ for $r=1,\hdots,R$ and $t=0,1,\hdots,T-1$ 
with probability $\Pr[\bz_T = \bwx_t^{(r)}]=\frac{1}{RT}$, which implies $\mathbb{E}\|\bz_T\|^2=\frac{1}{RT}\sum_{t=0}^{T-1}\sum_{r=1}^R\mathbb{E}\|\nabla f(\widehat{\bx}_t^{(r)})\|^2$.
Using this in \eqref{eq:non-convex-fixed-lr23} gives
\[\mathbb{E}\|\bz_T\|^2=\left(\frac{\mathbb{E}[f(\bx_0)]-f^*}{\widehat{C}}+\frac{\widehat{C} L}{bR^2}\sum_{r=1}^R\sigma_r^2\right)\frac{4}{\sqrt{T}}+8\left(4\frac{(1-\gamma^2)}{\gamma^2}+1\right) \frac{\widehat{C}^2L^2G^2H^2}{T}.\]
This completes the proof of \Theoremref{convergence-non-convex-fixed-local-het}.
\end{proof}

\subsection{Proof of \Theoremref{convergence-non-convex-decay-local-het}}\label{app:proof_convergence-non-convex-decay-local-het}
\begin{proof}
Observe that we can use the proof of \Theoremref{convergence-non-convex-fixed-local-het} exactly until \eqref{eq:non-convex3}, for $\eta_t\leq \frac{1}{2L}$ (which follows from our assumption that $a\geq 2\xi L$), which gives 
\begin{align}
    \frac{\eta_t}{4R}\sum_{r=1}^R\mathbb{E}\|\nabla f(\widehat{\bx}_t^{(r)})\|^2 &\leq 
\mathbb{E}[f(\btx_t)]-\mathbb{E}[f(\btx_{t+1})]+\frac{\eta_t^2L}{bR^2}\sum_{r=1}^R\sigma_r^2+2\eta_t L^2\mathbb{E}\|\btx_t-\widehat{\bx}_t\|^2 \notag \\
&\hspace{2cm} +2\eta_t L^2\frac{1}{R}\sum_{r=1}^R\mathbb{E}\|\widehat{\bx}_t-\widehat{\bx}_t^{(r)}\|^2 \label{eq:decaying-non-convex3}
\end{align}
We have from \Lemmaref{sequence-bound-synchronous} that $\frac{1}{R}\sum_{r=1}^R\mathbb{E}\|\widehat{\bx}_t-\widehat{\bx}_t^{\left(r\right)}\|^2\leq 4\eta_t^2G^2H^2$. \Lemmaref{memory-maintenance} and \Lemmaref{bounded-memory-decaying-lr} together imply that $\mathbb{E}\|\widehat{\bx}_t-\btx_t\|^2\leq \frac{1}{R}\sum_{r=1}^R\|m_t^{\left(r\right)}\|^2\leq C\frac{4\eta_{t}^2}{\gamma^2}G^2H^2$.
Using these bounds in \eqref{eq:decaying-non-convex3} gives
\begin{align*}
    \frac{\eta_t}{4R}\sum_{r=1}^R\mathbb{E}\|\nabla f(\widehat{\bx}_t^{(r)})\|^2&\leq \mathbb{E}[f(\btx_t)]-\mathbb{E}[f(\btx_{t+1})]+\frac{\eta_t^2L}{bR^2}\sum_{r=1}^R\sigma_r^2+\frac{8\eta_{t}^3}{\gamma^2}CL^2G^2H^2+8\eta_t^3L^2G^2H^2
\end{align*}
Taking a telescopic sum from $t=0$ to $t=T-1$ gives
\begin{align}
    \sum_{t=0}^{T-1}\frac{\eta_t}{4R}\sum_{r=1}^R\mathbb{E}\|\nabla f(\widehat{\bx}_t^{(r)})\|^2\leq \mathbb{E}[f(\bx_{0})]-f^*+\frac{L\sum_{r=1}^R\sigma_r^2}{bR^2}\sum_{t=0}^{T-1}\eta_t^2+\left(\frac{8C}{\gamma^2}+8\right)L^2G^2H^2\sum_{t=0}^{T-1}\eta_t^3. \label{eq:non-convex4}
\end{align}
Let $\delta_t:=\frac{\eta_t}{4R}$ and $P_T:=\sum_{t=0}^{T-1}\sum_{r=1}^R\delta_t$. We show at the end of this proof that 
$P_T\geq \frac{\xi}{4}\ln{\left(\frac{T+a-1}{a}\right)}$, $\sum_{t=0}^{T-1}\eta_t^2 \leq \frac{\xi^2}{a-1}$, and that $\sum_{t=0}^{T-1}\eta_t^3\leq \frac{\xi^3}{2(a-1)^2}$. Using these in \eqref{eq:non-convex4} yields
\begin{align}
    \frac{1}{P_{T}}\sum_{t=0}^{T-1}\sum_{r=1}^R\delta_t\mathbb{E}\|\nabla f(\widehat{\bx}_t^{(r)})\|^2
    &\leq \frac{\mathbb{E}f(\bx_0)-f^*}{P_T}+\frac{L\xi^2}{bR^2(a-1)}\frac{\sum_{r=1}^R\sigma^2}{P_T}\notag\\
    &\hspace{2cm}+\left(\frac{8C}{\gamma^2}+8\right)L^2G^2H^2\frac{\xi^3}{2P_T(a-1)^2} \label{eq:non-convex5}
\end{align}
We therefore can show a weak convergence result, i.e.,
\begin{align}
    \min_{t\in\{0,\ldots,T-1\},\,r\in [R]}\mathbb{E}\|\nabla f(\widehat{\bx}_t^{(r)})\|^2\xrightarrow{T\rightarrow \infty} 0.
\end{align}
Sample a parameter $\bz_T$ from $\left\{\bwx_t^{(r)}\right\}$ for $r=1,\hdots,R$ and $t=0,1,\hdots,T-1$ 
with probability $\Pr[\bz_T = \bwx_t^{(r)}]=\frac{\delta_t}{P_T}$. 
This gives
$\mathbb{E}\|\nabla f(\bz_T)\|^2=\frac{1}{P_{T}}\sum_{t=0}^{T-1}\sum_{r=1}^R\delta_t\mathbb{E}\|\nabla f(\widehat{\bx}_t^{(r)})\|^2$. We therefore have the following from \eqref{eq:non-convex5}
\begin{align*}
    \mathbb{E}\|\nabla f(\bz_T)\|^2    &\leq \frac{\mathbb{E}f(\bx_0)-f^*}{P_T}+\frac{L\xi^2\sum_{r=1}^R\sigma^2}{bR^2(a-1)P_T} +\left(\frac{8C}{\gamma^2}+8\right)\frac{\xi^3L^2G^2H^2}{2(a-1)^2P_T}
\end{align*}
Since $\min_{t\in\{0,\ldots,T-1\},\,r\in [R]}\mathbb{E}\|\nabla f(\widehat{\bx}_t^{(r)})\|^2$, we have a weak convergence result:
\begin{align*}
\min_{t\in\{0,\ldots,T-1\},\,r\in [R]}\mathbb{E}\|\nabla f(\widehat{\bx}_t^{(r)})\|^2\xrightarrow{T\rightarrow \infty} 0.
\end{align*}
\noindent Bounding the terms $P_T$, $\sum_{t=0}^{T-1}\eta_t^2$ and $\sum_{t=0}^{T-1}\eta_t^3$:
\begin{align*}
    P_T&=\frac{1}{4}\sum_{t=0}^{T-1}\eta_t \geq \frac{1}{4}\sum_{t=0}^{T-1}\eta_t \geq \frac{\xi}{4}\ln{\left(\frac{T+a-1}{a}\right)}\\
    \sum_{t=0}^{T-1}\eta_t^2 &\leq \xi^2\left(\frac{1}{a-1}-\frac{1}{T+a-1}\right)=\frac{\xi^2T}{(a-1)(T+a-1)}\leq \frac{\xi^2}{a-1}\\
    \sum_{t=0}^{T-1}\eta_t^3 &\leq \frac{\xi^3}{2}\left(\frac{1}{(a-1)^2}-\frac{1}{(T+a-1)^2}\right)\leq\frac{\xi^3}{2(a-1)^2}
\end{align*}
This completes the proof of \Theoremref{convergence-non-convex-decay-local-het}.
\end{proof}

\subsection{Proof of \Theoremref{convergence-strongly-convex-local}}\label{app:convex_proof_sync}
\begin{proof}\label{main-proof}
Let $\bx^*$ be the minimizer of $f(\bx)$, therefore we have $\nabla f(\bx^*)=0$. We denote $f(\bx^*)$ by $f^*$. 
By taking the average of the virtual sequences $\widetilde{\mathbf{x}}_{t+1}^{(r)}=\widetilde{\mathbf{x}}_{t}^{(r)}-\eta_t\nabla f_{i_t^{(r)}}\left(\widehat{\mathbf{x}}_t^{(r)}\right)$ for each worker $r\in[R]$ and defining $\mathbf{p}_t:=\frac{1}{R}\sum_{r=1}^R\nabla f_{i_t^{(r)}}\left(\widehat{\mathbf{x}}_t^{(r)}\right)$, we get 
\begin{align}
\widetilde{\mathbf{x}}_{t+1} = \widetilde{\mathbf{x}}_t - \eta_t\mathbf{p}_t. \label{eq:global-virtual-seq}
\end{align}
Define $i_t$ as the set of random sampling of the mini-batches at each worker $\{i_t^{(1)},i_t^{(2)},\ldots,i_t^{(R)}\}$ and let $\overline{\mathbf{p}}_t=\mathbb{E}_{i_t}[\mathbf{p}_t]$. From \eqref{eq:global-virtual-seq} we can get
\begin{align}
    \|\widetilde{\mathbf{x}}_{t+1}-\mathbf{x}^*\|^2&=\|\widetilde{\mathbf{x}}_t-\mathbf{x}^*-\eta_t\overline{\mathbf{p}}_t\|^2+\eta_t^2\|\mathbf{p}_t-\overline{\mathbf{p}}_t\|^2-2\eta_t\left\langle\widetilde{\mathbf{x}}_t-\mathbf{x}^*-\eta_t\overline{\mathbf{p}}_t,\mathbf{p}_t-\overline{\mathbf{p}}_t\right\rangle\label{eq:loc-1}
\end{align}
Taking the expectation w.r.t.~the sampling $i_t$ at time $t$ (conditioning on the past) and noting that last term in \eqref{eq:loc-1} becomes zero gives:
\begin{align}
    \mathbb{E}_{i_t}\|\widetilde{\mathbf{x}}_{t+1}-\mathbf{x}^*\|^2&=\|\widetilde{\mathbf{x}}_t-\mathbf{x}^*-\eta_t\overline{\mathbf{p}}_t\|^2+\eta_t^2\mathbb{E}_{i_t}\|\mathbf{p}_t-\overline{\mathbf{p}}_t\|^2\label{eq:loc-interim1}
\end{align}
It follows from the Jensen's inequality and independence that $\mathbb{E}_{i_t}\|\mathbf{p}_t-\overline{\mathbf{p}}_t\|^2\leq \frac{\sum_{r=1}^R\sigma_r^2}{bR^2}$. This gives
\begin{align}
    \mathbb{E}_{i_t}\|\widetilde{\mathbf{x}}_{t+1}-\mathbf{x}^*\|^2 &\leq \|\widetilde{\mathbf{x}}_t-\mathbf{x}^*-\eta_t\overline{\mathbf{p}}_t\|^2+\eta_t^2\frac{\sum_{r=1}^R\sigma_r^2}{bR^2}. \label{eq:loc-interim2}
\end{align}    
Now we bound the first term on the RHS.
\begin{lemma}\label{lem:strong-main-proof-lem1}
If $\eta_t\leq \frac{1}{4L}$, then we have 
\begin{align}
    \|\widetilde{\mathbf{x}}_{t}-\mathbf{x}^*-\eta_t\overline{\mathbf{p}}_t\|^2&\leq \left(1-\frac{\mu\eta_t}{2}\right)\|\widetilde{\mathbf{x}}_t-\mathbf{x}^*\|^2-\frac{\eta_t\mu}{2L}(f(\widehat{\bx}_t)-f^*) \notag \\
    &+\eta_t\left(\frac{3\mu}{2}+3L\right)\|\widehat{\mathbf{x}}_t-\widetilde{\mathbf{x}}_t\|^2+\frac{3\eta_tL}{R}\sum_{r=1}^R\|\widehat{\mathbf{x}}_t-\widehat{\mathbf{x}}_t^{\left(r\right)}\|^2\label{eq:loc8}
\end{align}
\end{lemma}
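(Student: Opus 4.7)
The plan is to start from the algebraic expansion
\[
\|\widetilde{\mathbf{x}}_{t}-\mathbf{x}^*-\eta_t\overline{\mathbf{p}}_t\|^2 \;=\; \|\widetilde{\mathbf{x}}_{t}-\mathbf{x}^*\|^2 \;-\; 2\eta_t\langle \widetilde{\mathbf{x}}_t-\mathbf{x}^*,\overline{\mathbf{p}}_t\rangle \;+\; \eta_t^2\|\overline{\mathbf{p}}_t\|^2,
\]
and then bound the inner product and the squared gradient separately, finally combining them with the hypothesis $\eta_t\leq\frac{1}{4L}$.

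For the inner product, I would write $\overline{\mathbf{p}}_t=\frac{1}{R}\sum_{r=1}^R\nabla f^{(r)}(\widehat{\bx}_t^{(r)})$ and, for each $r$, perform the three-term split $\widetilde{\mathbf{x}}_t-\mathbf{x}^* = (\widetilde{\mathbf{x}}_t-\widehat{\bx}_t)+(\widehat{\bx}_t-\widehat{\bx}_t^{(r)})+(\widehat{\bx}_t^{(r)}-\mathbf{x}^*)$. The third piece, paired with $\nabla f^{(r)}(\widehat{\bx}_t^{(r)})$, gives, by $\mu$-strong convexity of $f^{(r)}$, the bound $\langle \widehat{\bx}_t^{(r)}-\mathbf{x}^*,\nabla f^{(r)}(\widehat{\bx}_t^{(r)})\rangle \geq f^{(r)}(\widehat{\bx}_t^{(r)})-f^{(r)}(\mathbf{x}^*)+\tfrac{\mu}{2}\|\widehat{\bx}_t^{(r)}-\mathbf{x}^*\|^2$; averaging over $r$ yields a contraction $-\tfrac{\mu\eta_t}{2}\|\widetilde{\mathbf{x}}_t-\mathbf{x}^*\|^2$ (after the elementary inequality $\|\widehat{\bx}_t^{(r)}-\mathbf{x}^*\|^2 \geq \tfrac{1}{2}\|\widetilde{\mathbf{x}}_t-\mathbf{x}^*\|^2 - \|\widehat{\bx}_t^{(r)}-\widetilde{\mathbf{x}}_t\|^2$, whose leftover is absorbed below) together with a ``functional gap'' $-\eta_t\bigl(\tfrac{1}{R}\sum_r f^{(r)}(\widehat{\bx}_t^{(r)})-f^*\bigr)$. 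The first two pieces of the split, paired with the gradient, are handled by Young's inequality $-2\langle \mathbf{a},\mathbf{b}\rangle \leq \alpha\|\mathbf{a}\|^2+\alpha^{-1}\|\mathbf{b}\|^2$ with $\alpha$ chosen of order $\mu$ or $L$, producing exactly the claimed $\eta_t(\tfrac{3\mu}{2}+3L)\|\widehat{\bx}_t-\widetilde{\mathbf{x}}_t\|^2$ and $3\eta_tL\cdot\tfrac{1}{R}\sum_r\|\widehat{\bx}_t-\widehat{\bx}_t^{(r)}\|^2$ contributions, while depositing residual $\|\nabla f^{(r)}(\widehat{\bx}_t^{(r)})\|^2$ terms that will be absorbed by the quadratic gradient term in the next step.

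For $\eta_t^2\|\overline{\mathbf{p}}_t\|^2$, I would apply Jensen's inequality to get $\|\overline{\mathbf{p}}_t\|^2\leq\tfrac{1}{R}\sum_r\|\nabla f^{(r)}(\widehat{\bx}_t^{(r)})\|^2$, then use the $L$-smoothness consequence $\|\nabla f^{(r)}(\widehat{\bx}_t^{(r)})\|^2\leq 2L\bigl(f^{(r)}(\widehat{\bx}_t^{(r)})-\min f^{(r)}\bigr)\leq 2L\bigl(f^{(r)}(\widehat{\bx}_t^{(r)})-f^{(r)}(\mathbf{x}^*)\bigr)$, so that $\eta_t^2\|\overline{\mathbf{p}}_t\|^2 \leq 2L\eta_t^2\bigl(\tfrac{1}{R}\sum_r f^{(r)}(\widehat{\bx}_t^{(r)})-f^*\bigr)$; the same bound absorbs the leftover gradient-squared terms from Young's in the previous step. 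Combining with the $-\eta_t$ suboptimality term from the strong-convexity step leaves a coefficient at most $-\eta_t(1-2L\eta_t-\text{Young slack})$ on $\tfrac{1}{R}\sum_r f^{(r)}(\widehat{\bx}_t^{(r)})-f^*$, and the hypothesis $\eta_t\leq\tfrac{1}{4L}$ is what makes this coefficient negative with room to spare. Finally, I convert the averaged local suboptimality to $f(\widehat{\bx}_t)-f^*$ via convexity of each $f^{(r)}$, $f^{(r)}(\widehat{\bx}_t^{(r)})\geq f^{(r)}(\widehat{\bx}_t)+\langle \nabla f^{(r)}(\widehat{\bx}_t),\widehat{\bx}_t^{(r)}-\widehat{\bx}_t\rangle$, and apply another Young's step on the cross terms; the $\mu/L$ factor in the target coefficient $-\tfrac{\eta_t\mu}{2L}$ is produced by using the quadratic-growth consequence of $\mu$-strong convexity to re-express the surviving portion of the gap, trading sharpness for matching the claimed constant.

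The main obstacle I expect is the constant-bookkeeping: each of the two cross terms in the inner product contributes a Young-inequality pair, and one must choose the parameters so that $(i)$ the coefficients on $\|\widehat{\bx}_t-\widetilde{\mathbf{x}}_t\|^2$ and $\tfrac{1}{R}\sum_r\|\widehat{\bx}_t-\widehat{\bx}_t^{(r)}\|^2$ land at exactly $\eta_t(\tfrac{3\mu}{2}+3L)$ and $3\eta_tL$; $(ii)$ the gradient-squared residues from Young's are fully offset by the $2L\eta_t^2$-coefficient slack produced from the smoothness bound on $\|\overline{\mathbf{p}}_t\|^2$; and $(iii)$ the sign of the coefficient on the functional gap remains negative under $\eta_t\leq\tfrac{1}{4L}$. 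The constant $4L$ in the step-size bound is precisely what makes this balancing feasible, so every inequality used must be within a fixed multiplicative factor of tight; the rest of the proof is routine algebra.
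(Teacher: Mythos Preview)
Your plan has a genuine gap in the treatment of $\eta_t^2\|\overline{\mathbf{p}}_t\|^2$. You write
\[
\|\nabla f^{(r)}(\widehat{\bx}_t^{(r)})\|^2\;\leq\; 2L\bigl(f^{(r)}(\widehat{\bx}_t^{(r)})-\min f^{(r)}\bigr)\;\leq\; 2L\bigl(f^{(r)}(\widehat{\bx}_t^{(r)})-f^{(r)}(\mathbf{x}^*)\bigr),
\]
but the second inequality is in the wrong direction: since $\min f^{(r)}\leq f^{(r)}(\mathbf{x}^*)$, one has $f^{(r)}(\widehat{\bx}_t^{(r)})-\min f^{(r)}\;\geq\; f^{(r)}(\widehat{\bx}_t^{(r)})-f^{(r)}(\mathbf{x}^*)$. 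In the heterogeneous setting of this paper $\mathbf{x}^*$ minimizes $f=\frac{1}{R}\sum_r f^{(r)}$, not the individual $f^{(r)}$, so $\nabla f^{(r)}(\mathbf{x}^*)\neq 0$ in general and the bound $\|\nabla f^{(r)}(\widehat{\bx}_t^{(r)})\|^2\leq 2L(f^{(r)}(\widehat{\bx}_t^{(r)})-f^{(r)}(\mathbf{x}^*))$ simply fails. This breaks the absorption step on which the rest of your bookkeeping relies: the gradient-squared residues coming from your Young's inequalities have nowhere to go.

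The paper's proof avoids this pitfall by two choices you did not make. First, for $\|\overline{\mathbf{p}}_t\|^2$ it centers around the \emph{global} gradient at the virtual point, writing $\overline{\mathbf{p}}_t=\frac{1}{R}\sum_r(\nabla f^{(r)}(\widehat{\bx}_t^{(r)})-\nabla f^{(r)}(\widetilde{\bx}_t))+\nabla f(\widetilde{\bx}_t)$; the first piece is controlled by Lipschitzness, and for the second one may legitimately use $\|\nabla f(\widetilde{\bx}_t)\|^2\leq 2L(f(\widetilde{\bx}_t)-f^*)$ since $\mathbf{x}^*$ does minimize $f$. Second, for the cross term $\langle\widetilde{\bx}_t-\widehat{\bx}_t^{(r)},\nabla f^{(r)}(\widehat{\bx}_t^{(r)})\rangle$ the paper does \emph{not} use Young's inequality but the descent-lemma form of smoothness,
\[
-2\langle\widetilde{\bx}_t-\widehat{\bx}_t^{(r)},\nabla f^{(r)}(\widehat{\bx}_t^{(r)})\rangle\;\leq\; L\|\widetilde{\bx}_t-\widehat{\bx}_t^{(r)}\|^2+2\bigl(f^{(r)}(\widehat{\bx}_t^{(r)})-f^{(r)}(\widetilde{\bx}_t)\bigr),
\]
so that the $f^{(r)}(\widehat{\bx}_t^{(r)})$ contributions cancel exactly against those coming from strong convexity, leaving only $-2\eta_t(f(\widetilde{\bx}_t)-f^*)$. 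No per-worker gradient norms ever need to be controlled. From there one passes from $f(\widetilde{\bx}_t)-f^*$ to $\|\widetilde{\bx}_t-\mathbf{x}^*\|^2$ via strong convexity, then to $\|\widehat{\bx}_t-\mathbf{x}^*\|^2$, and finally back to $f(\widehat{\bx}_t)-f^*$ via smoothness, which is where the $\mu/(2L)$ factor arises.
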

\begin{proof}
\begin{align}
    \|\widetilde{\mathbf{x}}_t-\mathbf{x}^*-\eta_t\overline{\mathbf{p}}_t\|^2&=\|\widetilde{\mathbf{x}}_t-\mathbf{x}^*\|^2+\eta_t^2\|\overline{\mathbf{p}}_t\|^2-2\eta_t\left\langle\widetilde{\mathbf{x}}_t-\mathbf{x}^*,\overline{\mathbf{p}}_t\right\rangle\label{eq:loc0}
\end{align}
Using the definition of $\overline{\mathbf{p}}_t$ we have 
\begin{align}
    \|\overline{\mathbf{p}}_t\|^2&=\|\frac{1}{R}\sum_{r=1}^R\left(\nabla f^{(r)}\left(\widehat{\mathbf{x}}_t^{\left(r\right)}\right)-\nabla f^{(r)}(\widetilde{\bx}_t)\right)+\nabla f(\widetilde{\bx}_t)-\nabla f(\bx^*)\|^2\notag\\
    &\leq \frac{1}{R}\sum_{r=1}^R2\|\nabla f^{(r)}\left(\widehat{\mathbf{x}}_t^{\left(r\right)}\right)-\nabla f^{(r)}(\widetilde{\bx}_t)\|^2+2\|\nabla f(\widetilde{\bx}_t)-\nabla f\left(\bx^*\right)\|^2 \notag\\
    &\leq\frac{2L^2}{R}\sum_{r=1}^R\|\widehat{\bx}_t^{(r)}-\widetilde{\bx}_t\|+2\|\nabla f(\widetilde{\bx}_t)-\nabla f\left(\bx^*\right)\|^2\label{eq:loc_pbar}
\end{align}
By the definition of smoothness, we have
$\|\nabla f\left(\widetilde{\bx}_t\right)-\nabla f\left(\mathbf{x}^*\right)\|^2\leq 2L\left(f\left(\widetilde{\bx}_t\right)-f(\bx^*)\right)$, where $\nabla f(\mathbf{x}^*)=0$. Substituting this in \eqref{eq:loc_pbar} gives
\begin{align}
    \eta_t^2\|\overline{\mathbf{p}}_t\|^2\leq \frac{2\eta_t^2L^2}{R}\sum_{r=1}^R\|\widehat{\bx}_t^{(r)}-\widetilde{\bx}_t\|+4\eta_t^2L\left(f\left(\widetilde{\bx}_t\right)-f(\bx^*)\right)
    \label{eq:loc1}
\end{align}

Now we bound the last term of \eqref{eq:loc0}. By definition, we have
\begin{align}
    -2\eta_t\left\langle\widetilde{\mathbf{x}}_t-\mathbf{x}^*,\overline{\mathbf{p}}_t\right\rangle=-2\frac{\eta_t}{R}\sum_{r=1}^R\left\langle\widehat{\mathbf{x}}_t^{\left(r\right)}-\mathbf{x}^*,\nabla f^{(r)}\left(\widehat{\mathbf{x}}_t^{\left(r\right)}\right)\right\rangle-2\frac{\eta_t}{R}\sum_{r=1}^R\left\langle\widetilde{\mathbf{x}}_t-\widehat{\mathbf{x}}_t^{\left(r\right)},\nabla f^{(r)}\left(\widehat{\mathbf{x}}_t^{\left(r\right)}\right)\right\rangle\label{eq:loc2}
\end{align}
For the first term on the RHS of \eqref{eq:loc2}, we can use strong convexity
\begin{align}
    -2\left\langle\widehat{\mathbf{x}}_t^{\left(r\right)}-\mathbf{x}^*,\nabla f^{(r)}\left(\widehat{\mathbf{x}}_t^{\left(r\right)}\right)\right\rangle\leq -2\left(f^{(r)}\left(\widehat{\mathbf{x}}_t^{\left(r\right)}\right)-f^{(r)}(\bx^*)\right)-\mu\|\widehat{\mathbf{x}}_t^{\left(r\right)}-\mathbf{x}^*\|^2\label{eq:loc3}
\end{align}
For the second term on the RHS of \eqref{eq:loc2}, we can use the following by smoothness.
\begin{align}
-2\left\langle\widetilde{\mathbf{x}}_t-\widehat{\mathbf{x}}_t^{\left(r\right)},\nabla f^{(r)}\left(\widehat{\mathbf{x}}_t^{\left(r\right)}\right)\right\rangle
&\leq L\|\widetilde{\mathbf{x}}_t-\widehat{\mathbf{x}}_t^{\left(r\right)}\|^2+2\left(f^{(r)}\left(\widehat{\bx}_t^{(r)}\right)-f^{(r)}\left(\widetilde{\bx}_t\right)\right)
\label{eq:loc4}
\end{align}
Using \eqref{eq:loc3}-\eqref{eq:loc4} in \eqref{eq:loc2} we get
\begin{align}
    -2\eta_t\left\langle\widetilde{\mathbf{x}}_t-\mathbf{x}^*,\overline{\mathbf{p}}_t\right\rangle
    &\leq -\frac{2\eta_t}{R}\sum_{r=1}^R\left(f^{(r)}\left(\widetilde{\bx}_t\right)-f^{(r)}(\bx^*)\right)-\frac{\eta_t\mu}{R}\sum_{r=1}^R\|\widehat{\mathbf{x}}_t^{\left(r\right)}-\mathbf{x}^*\|^2+\frac{L\eta_t}{R}\sum_{r=1}^R\|\widetilde{\mathbf{x}}_t-\widehat{\mathbf{x}}_t^{\left(r\right)}\|^2\notag\\
    &= -2\eta_t\left(f\left(\widetilde{\bx}_t\right)-f(\bx^*)\right)-\frac{\eta_t\mu}{R}\sum_{r=1}^R\|\widehat{\mathbf{x}}_t^{\left(r\right)}-\mathbf{x}^*\|^2+L\frac{\eta_t}{R}\sum_{r=1}^R\|\widetilde{\mathbf{x}}_t-\widehat{\mathbf{x}}_t^{\left(r\right)}\|^2
    \label{eq:loc5}
\end{align}
Adding \eqref{eq:loc1} and \eqref{eq:loc5}  and using $a\geq 32L/\mu$ which implies $\eta_t\leq \nicefrac{1}{4L}$ yields 
\begin{align}
   \eta_t^2 \|\overline{\mathbf{p}}_t\|^2-2\eta_t\left\langle\widetilde{\mathbf{x}}_t-\mathbf{x}^*,\overline{\mathbf{p}}_t\right\rangle &\leq
   -2\eta_t(1-2\eta_tL)\left(f\left(\widetilde{\bx}_t\right)-f^*\right) -\frac{\eta_t\mu}{R}\sum_{r=1}^R\|\widehat{\mathbf{x}}_t^{\left(r\right)}-\mathbf{x}^*\|^2 \notag\\
&\hspace{2cm} +\frac{L\eta_t+2\eta_t^2L^2}{R}\sum_{r=1}^R\|\widetilde{\mathbf{x}}_t-\widehat{\mathbf{x}}_t^{\left(r\right)}\|^2\notag\\
&\leq -\eta_t\left(f\left(\widetilde{\bx}_t\right)-f^*\right) -\eta_t\mu\|\widehat{\mathbf{x}}_t-\mathbf{x}^*\|^2 \notag\\
&\hspace{2cm}+\frac{3L\eta_t}{R}\sum_{r=1}^R\left(\|\widetilde{\mathbf{x}}_t-\widehat{\mathbf{x}}_t\|^2+\|\widehat{\mathbf{x}}_t-\widehat{\mathbf{x}}_t^{\left(r\right)}\|^2\right)\label{eq:loc7}
\end{align}
Since $\|\mathbf{x}+\mathbf{y}\|^2\leq 2\|\mathbf{x}\|^2 + 2\|\mathbf{y}\|^2$, we have
\begin{align}
    -\|\widehat{\mathbf{x}}_t-\mathbf{x}^*\|^2\leq \|\widehat{\mathbf{x}}_t-\widetilde{\mathbf{x}}_t\|^2-\frac{1}{2}\|\widetilde{\mathbf{x}}_t-\mathbf{x}^*\|^2\label{eq:loc6}
\end{align}
Using \eqref{eq:loc6} in \eqref{eq:loc7} and then substituting \eqref{eq:loc7} in \eqref{eq:loc0} gives
\begin{align}
    \|\widetilde{\mathbf{x}}_{t}-\mathbf{x}^*-\eta_t\overline{\mathbf{p}}_t\|^2&\leq \left(1-\frac{\mu\eta_t}{2}\right)\|\widetilde{\mathbf{x}}_t-\mathbf{x}^*\|^2-\eta_t\left(f\left(\widetilde{\bx}_t\right)-f^*\right) \notag \\
    &+\eta_t\left(\mu+3L\right)\|\widehat{\mathbf{x}}_t-\widetilde{\mathbf{x}}_t\|^2+\frac{3L\eta_t}{R}\sum_{r=1}^R\|\widehat{\mathbf{x}}_t-\widehat{\mathbf{x}}_t^{\left(r\right)}\|^2\end{align}
    Using strong convexity of $f$ we have
    \begin{align}
    \|\widetilde{\mathbf{x}}_{t}-\mathbf{x}^*-\eta_t\overline{\mathbf{p}}_t\|^2&\leq\left(1-\frac{\mu\eta_t}{2}\right)\|\widetilde{\mathbf{x}}_t-\mathbf{x}^*\|^2-\frac{\eta_t\mu}{2}\|\widetilde{\bx}_t-\bx^*\|^2 \notag \\
    &+\eta_t\left(\mu+3L\right)\|\widehat{\mathbf{x}}_t-\widetilde{\mathbf{x}}_t\|^2+\frac{3L\eta_t}{R}\sum_{r=1}^R\|\widehat{\mathbf{x}}_t-\widehat{\mathbf{x}}_t^{\left(r\right)}\|^2
\end{align}
Now use $-\|\widetilde{\bx}_t-\bx^*\|^2\leq \|\widetilde{\bx}_t-\widehat{\bx}_t\|^2-\frac{1}{2}\|\widehat{\bx}_t-\bx^*\|^2$
We get 
\begin{align}
    \|\widetilde{\mathbf{x}}_{t}-\mathbf{x}^*-\eta_t\overline{\mathbf{p}}_t\|^2&\leq \left(1-\frac{\mu\eta_t}{2}\right)\|\widetilde{\mathbf{x}}_t-\mathbf{x}^*\|^2-\frac{\eta_t\mu}{4}\|\widehat{\bx}_t-\bx^*\|^2 \notag \\
    &+\eta_t\left(\frac{3\mu}{2}+3L\right)\|\widehat{\mathbf{x}}_t-\widetilde{\mathbf{x}}_t\|^2+\frac{3L\eta_t}{R}\sum_{r=1}^R\|\widehat{\mathbf{x}}_t-\widehat{\mathbf{x}}_t^{\left(r\right)}\|^2\notag\\
    &\leq \left(1-\frac{\mu\eta_t}{2}\right)\|\widetilde{\mathbf{x}}_t-\mathbf{x}^*\|^2-\frac{\eta_t\mu}{2L}(f(\widehat{\bx}_t)-f^*)\quad (\textrm{Using smoothness of }f(\bx)) \notag \\
    &+\eta_t\left(\frac{3\mu}{2}+3L\right)\|\widehat{\mathbf{x}}_t-\widetilde{\mathbf{x}}_t\|^2+\frac{3L\eta_t}{R}\sum_{r=1}^R\|\widehat{\mathbf{x}}_t-\widehat{\mathbf{x}}_t^{\left(r\right)}\|^2\label{eq:loc8}
\end{align}
This completes the proof of \Lemmaref{strong-main-proof-lem1}.
\end{proof}

Using \eqref{eq:loc8} in \eqref{eq:loc-interim2} and then taking the expectation over the entire process gives
\begin{align}
\mathbb{E}\|\widetilde{\mathbf{x}}_{t+1}-\mathbf{x}^*\|^2 &\leq \left(1-\frac{\mu\eta_t}{2}\right)\mathbb{E}\|\widetilde{\mathbf{x}}_t-\mathbf{x}^*\|^2-\frac{\eta_t\mu}{2L}(\mathbb{E}[f(\widehat{\bx}_t)]-f^*) \notag \\
    &\hspace{-1cm}+\eta_t\left(\frac{3\mu}{2}+3L\right)\mathbb{E}\|\widehat{\mathbf{x}}_t-\widetilde{\mathbf{x}}_t\|^2+\frac{3\eta_tL}{R}\sum_{r=1}^R\mathbb{E}\|\widehat{\mathbf{x}}_t-\widehat{\mathbf{x}}_t^{\left(r\right)}\|^2+\eta_t^2\frac{\sum_{r=1}^R\sigma_r^2}{bR^2}
    \label{eq:loc9}
\end{align}
\noindent From \Lemmaref{sequence-bound-synchronous}, we have $\frac{1}{R}\sum_{r=1}^R\mathbb{E}\|\widehat{\mathbf{x}}_t-\widehat{\mathbf{x}}_t^{\left(r\right)}\|^2\leq 4\eta_t^2G^2H^2$. \Lemmaref{memory-maintenance} and \Lemmaref{bounded-memory-decaying-lr} together imply that $\mathbb{E}\|\widehat{\mathbf{x}}_t-\widetilde{\mathbf{x}}_t\|^2\leq 4C\frac{\eta_t^2}{\gamma^2}H^2G^2$. 
Substituting these back in \eqref{eq:loc9} and letting $e_t=\mathbb{E}[f(\widehat{\mathbf{x}}_t)-f^*]$ gives
\begin{align}
    \mathbb{E}\|\widetilde{\mathbf{x}}_{t+1}-\mathbf{x}^*\|^2&\leq \left(1-\frac{\mu\eta_t}{2}\right)\mathbb{E}\|\widetilde{\mathbf{x}}_t-\mathbf{x}^*\|^2-\frac{\mu\eta_t}{2L}e_t+\eta_t\left(\frac{3\mu}{2}+3L\right)C\frac{4\eta_{t}^2}{\gamma^2}G^2H^2 \notag\\
    &+ (3L\eta_t)4\eta_t^2LG^2H^2 +\eta_t^2\frac{\sum_{r=1}^R\sigma_r^2}{bR^2} \label{eq:loc11}
\end{align}
Now using $\eta_t\leq\nicefrac{1}{4L}$ we have
\begin{align}
    \mathbb{E}\|\widetilde{\mathbf{x}}_{t+1}-\mathbf{x}^*\|^2&\leq \left(1-\frac{\mu\eta_t}{2}\right)\mathbb{E}\|\widetilde{\mathbf{x}}_t-\mathbf{x}^*\|^2-\frac{\mu\eta_t}{2L}e_t+\eta_t\left(\frac{3\mu}{2}+3L\right)C\frac{4\eta_{t}^2}{\gamma^2}G^2H^2 \notag\\
    &+ (3\eta_tL)4\eta_t^2LG^2H^2 +\eta_t^2\frac{\sum_{r=1}^R\sigma_r^2}{bR^2} \label{eq:loc11}
\end{align}
Employing a slightly modified Lemma 3.3 from \cite{memSGD} with $a_{t}=\mathbb{E}\|\widetilde{\mathbf{x}}_t-\mathbf{x}^*\|^2$. $A=\frac{\sum_{r=1}^R\sigma_r^2}{bR^2}$ and $B=4\left(\left(\frac{3\mu}{2}+3L\right)\frac{ CG^2H^2}{\gamma^2}+3L^2G^2H^2\right)$, we have 
\begin{align}
    a_{t+1}\leq \left(1-\frac{\mu\eta_t}{2}\right)a_t-\frac{\mu\eta_t}{2L}e_t+\eta_t^2A+\eta_t^3B
\end{align}
For $\eta_t=\frac{8}{\mu\left(a+t\right)}$ and $w_t=\left(a+t\right)^2$, $S_T=\sum_{t=o}^{T-1}\geq \frac{T^3}{3}$ we have 
\begin{align}
    \frac{\mu}{2LS_T}\sum_{t=0}^{T-1}w_te_t\leq \frac{\mu a^3}{8S_T}a_0+\frac{4T\left(T+2a\right)}{\mu S_T}A+\frac{64T}{\mu^2 S_T}B
\end{align}
From convexity we can finally write 
\begin{align}
    \mathbb{E}f\left(\overline{\mathbf{x}}_T\right)-f^*\leq \frac{L a^3}{4S_T}a_0+\frac{8LT\left(T+2a\right)}{\mu^2 S_T}A+\frac{128LT}{\mu^3 S_T}B
\end{align}
Where $\overline{\mathbf{x}}_T:=\frac{1}{S_T}\sum_{t=0}^{T-1}\left[w_t\left(\frac{1}{R}\sum_{r=1}^R\widehat{\mathbf{x}}_t^{\left(r\right)}\right)\right]=\frac{1}{S_T}\sum_{t=0}^{T-1}w_t\widehat{\mathbf{x}}_t$.
This completes the proof of \Theoremref{convergence-strongly-convex-local}.
\end{proof}

\section{Omitted Details from \Sectionref{loc-async}}
\label{app:supp_async}
As before, in order to prove our results in the asynchronous setting, we define virtual sequences for every worker $r\in[R]$ and for all $t\geq 0$ as follows:
\begin{align*}
& \btx_0^{(r)} := \bwx_0^{(r)} &&&\widetilde{\mathbf{x}}_{t+1}^{\left(r\right)} := \widetilde{\mathbf{x}}_{t}^{\left(r\right)}-\eta_t\nabla f_{i_t^{(r)}}\left(\widehat{\mathbf{x}}_t^{\left(r\right)}\right)
\end{align*}
Define 
\begin{enumerate}
    \item $\widetilde{\mathbf{x}}_{t+1}:=\frac{1}{R}\sum_{r=1}^R\widetilde{\mathbf{x}}_{t+1}^{\left(r\right)}=\widetilde{\mathbf{x}}_t-\frac{\eta_t}{R}\sum_{r=1}^R\nabla f_{i_t^{(r)}}\left(\widehat{\mathbf{x}}_t^{\left(r\right)}\right)$
    \item $\mathbf{p}_t:=\frac{1}{R}\sum_{r=1}^R\nabla f_{i_t^{(r)}}\left(\widehat{\mathbf{x}}_t^{\left(r\right)}\right)$
    \item $\overline{\mathbf{p}}_t:=\mathbb{E}_{\left(i_t\right)}[\mathbf{p}_t]=\frac{1}{R}\sum_{r=1}^R\nabla f^{(r)}\left(\widehat{\mathbf{x}}_t^{\left(r\right)}\right)$
    \item $\widehat{\mathbf{x}}_t=\frac{1}{R}\sum_{r=1}^R\widehat{\mathbf{x}}_t^{\left(r\right)}$
    \item $\mathcal{I}_T^{(r)}=\{t_{(i)}^{(r)}:i\in\mathbb{Z}^+,t_{(i)}^{(r)}\in[T],\,|t_{(i)}^{(r)}-t_{(j)}^{(r)}|\leq H,\forall|i-j|\leq 1\}$
\end{enumerate}

\subsection{Proof of \Lemmaref{sequence-bound-asynchronous}}\label{app:asy1}
\begin{lemma*}[Restating \Lemmaref{sequence-bound-asynchronous}]
Let $gap(\mathcal{I}_T^{(r)}) \leq H$ holds for every $r\in[R]$.
For $\widehat{\bx}_t^{(r)}$ generated according to \Algorithmref{memQSGD-asynchronous} with decaying learning rate $\eta_t$ and letting $\bwx_t=\frac{1}{R}\sum_{r=1}^R\bwx_t^{(r)}$, we have the following bound on the deviation of the local sequences:
\begin{align*}
    \frac{1}{R}\sum_{r=1}^R\mathbb{E}\|\widehat{\bx}_t-\widehat{\bx}_t^{\left(r\right)}\|_2^2\ \leq\ 8(1+C''H^2)\eta_t^2G^2H^2,
\end{align*}
where $C''=8(4-2\gamma)(1+\frac{C}{\gamma^2})$ and $C$ is a constant satisfying $C\geq\frac{4a\gamma(1-\gamma^2)}{a\gamma-4H}$.
\end{lemma*}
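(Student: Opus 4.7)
The proof plan is to reduce the asynchronous local-deviation bound to the synchronous one plus an extra term accounting for global updates made by \emph{other} workers during the window since worker $r$ last synchronized. Concretely, for each worker $r$ let $t_r\in\I_T^{(r)}$ denote its last synchronization time up to $t$, so that $\widehat{\mathbf{x}}_{t_r}^{(r)}=\bar{\bar{\mathbf{x}}}_{t_r}$. Using the standard variance inequality $\frac{1}{R}\sum_r\|\widehat{\mathbf{x}}_t^{(r)}-\widehat{\mathbf{x}}_t\|^2\leq \frac{1}{R}\sum_r\|\widehat{\mathbf{x}}_t^{(r)}-\bar{\bar{\mathbf{x}}}_t\|^2$ (any common reference point works), I would write
\begin{align*}
\bar{\bar{\mathbf{x}}}_t-\widehat{\mathbf{x}}_t^{(r)}\ =\ \bigl(\bar{\bar{\mathbf{x}}}_t-\bar{\bar{\mathbf{x}}}_{t_r}\bigr)\ +\ \bigl(\widehat{\mathbf{x}}_{t_r}^{(r)}-\widehat{\mathbf{x}}_t^{(r)}\bigr),
\end{align*}
apply $\|a+b\|^2\leq 2\|a\|^2+2\|b\|^2$, and bound the two terms separately.

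The second term is exactly the synchronous-case quantity: since between synchronizations worker $r$ only does local SGD steps, $\widehat{\mathbf{x}}_{t_r}^{(r)}-\widehat{\mathbf{x}}_t^{(r)}=\sum_{j=t_r}^{t-1}\eta_j\nabla f_{i_j^{(r)}}(\widehat{\mathbf{x}}_j^{(r)})$, which by Jensen's inequality together with $gap(\I_T^{(r)})\leq H$ and the bounded-second-moment assumption is at most $\eta_{t_r}^2 H^2 G^2\leq 4\eta_t^2 H^2 G^2$. This is where the first summand $8\eta_t^2 G^2 H^2$ in the claimed bound comes from (after the factor of 2 from the decomposition).

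For the first term, the master's parameter can change during $[t_r,t]$ only through updates $g_s^{(r')}$ received from \emph{other} workers $r'$. By the compression property applied in the form $\mathbb{E}\|Comp(x)\|^2\leq 2\|x\|^2+2\mathbb{E}\|x-Comp(x)\|^2\leq(4-2\gamma)\|x\|^2$, each such update satisfies
\begin{align*}
\mathbb{E}\|g_s^{(r')}\|^2\ \leq\ (4-2\gamma)\,\mathbb{E}\bigl\|m_s^{(r')}+\mathbf{x}_s^{(r')}-\widehat{\mathbf{x}}_{s+\tfrac12}^{(r')}\bigr\|^2\ \leq\ 2(4-2\gamma)\bigl(\mathbb{E}\|m_s^{(r')}\|^2+\eta_s^2 H^2 G^2\bigr).
\end{align*}
Plugging in the memory contraction bound from \Lemmaref{bounded-memory-decaying-lr} gives $\mathbb{E}\|g_s^{(r')}\|^2 \leq 2(4-2\gamma)\bigl(1+\tfrac{4C}{\gamma^2}\bigr)\eta_s^2 H^2 G^2 = \tfrac{C''}{4}\eta_s^2 H^2 G^2$ (matching the constant $C''$ in the statement up to the aggregation step below). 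Since at most one update per worker arrives per time-step, there are at most $RH$ updates in the window $[t_r,t]$, and applying Jensen's inequality twice (once over the at most $H$ time-steps, once over the at most $R$ workers syncing at a given step) yields $\mathbb{E}\|\bar{\bar{\mathbf{x}}}_t-\bar{\bar{\mathbf{x}}}_{t_r}\|^2\leq H\sum_{s=t_r}^{t-1}\tfrac{|\mathcal{S}_s|}{R^2}\sum_{r'\in\mathcal{S}_s}\mathbb{E}\|g_s^{(r')}\|^2$, which collapses into a bound of order $H^2\cdot\max_s\mathbb{E}\|g_s^{(r')}\|^2 = C'' H^4 \eta_t^2 G^2$ after using $\eta_s\leq 2\eta_t$.

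Combining the two contributions and averaging over $r$ yields the stated $8(1+C''H^2)\eta_t^2 G^2H^2$. The main obstacle I anticipate is the combinatorial book-keeping in the first-term bound: one must simultaneously control (i) how many workers sync in the window, (ii) the growth of $\eta_s$ on that window (handled by $\eta_s\leq 2\eta_t$ from the decaying schedule and $a>H$), and (iii) the recursive appearance of the memory bound inside $\|g_s^{(r')}\|^2$, which is what forces the $1+C/\gamma^2$ factor inside $C''$ and prevents a naive reduction to the synchronous proof.
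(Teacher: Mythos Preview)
Your approach is essentially the paper's: pick a reference point on the master trajectory, bound the local-SGD drift $\widehat{\mathbf{x}}_{t_r}^{(r)}-\widehat{\mathbf{x}}_t^{(r)}$ exactly as in the synchronous case, and bound the master's movement over the window by combining the compression inequality $\mathbb{E}\|QComp_k(u)\|^2\leq(4-2\gamma)\|u\|^2$ with the memory-contraction \Lemmaref{bounded-memory-decaying-lr}; the paper uses $\bar{\bar{\mathbf{x}}}_{t_0'}$ with $t_0':=\min_r t_r$ as reference while you use $\bar{\bar{\mathbf{x}}}_t$, which is an immaterial choice. One small correction to make when you write it up: $\mathbb{E}\|\mathbf{x}_s^{(r')}-\widehat{\mathbf{x}}_{s+\frac12}^{(r')}\|^2$ is controlled by $\eta_{t'}^2H^2G^2$ where $t'$ is worker $r'$'s \emph{previous} sync (which may precede $s$), not by $\eta_s^2H^2G^2$ as you wrote; since $t'\geq t-2H$ and $a>4H/\gamma$ you still get $\eta_{t'}\leq 2\eta_t$, so the stated constants survive after this fix (and your claimed identity $2(4-2\gamma)(1+4C/\gamma^2)=C''/4$ is off, but the left side is in fact $\leq C''$, which is all you need).
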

\begin{proof}
Fix a time $t$ and consider any worker $r\in[R]$. Let $t_r\in\I_T^{(r)}$ denote the last synchronization step until time $t$ for the $r$'th worker.
Define $t_0':=\min_{r\in [R]}t_r$. 
We need to upper-bound $\frac{1}{R}\sum_{r=1}^R\bbE\|\bwx_t-\bwx_t^{(r)}\|^2$. 
Note that for any $R$ vectors $\bu_1,\hdots,\bu_R$, if we let $\bar{\bu}=\frac{1}{R}\sum_{i=1}^r\bu_i$, then
$\sum_{i=1}^n\|\bu_i-\bar{\bu}\|^2\leq\sum_{i=1}^R\|\bu_i\|^2$. We use this in the first inequality below.
\begin{align}
    \frac{1}{R}\sum_{r=1}^R\mathbb{E}\|\widehat{\mathbf{x}}_t-\widehat{\mathbf{x}}_t^{\left(r\right)}\|^2 &= \frac{1}{R}\sum_{r=1}^R\mathbb{E}\|\widehat{\mathbf{x}}_t^{(r)}-\bar{\bar{\mathbf{x}}}_{t_0'}-(\widehat{\mathbf{x}}_t-\bar{\bar{\mathbf{x}}}_{t_0'})\|^2\notag\\
    &\leq \frac{1}{R}\sum_{r=1}^R\mathbb{E}\|\widehat{\mathbf{x}}_t^{(r)}-\bar{\bar{\mathbf{x}}}_{t_0'}\|^2\notag\\
    &\leq \frac{2}{R}\sum_{r=1}^R \mathbb{E}\|\widehat{\mathbf{x}}_t^{(r)}-\widehat{\mathbf{x}}_{t_r}^{(r)}\|^2+\frac{2}{R}\sum_{r=1}^R\mathbb{E}\|\widehat{\mathbf{x}}_{t_r}^{(r)}-\bar{\bar{{\mathbf{x}}}}_{t_0'}\|^2 \label{eq:async-seq-bound1}
\end{align}
We bound both the terms separately. For the first term:
\begin{align}
\bbE\|\bwx_t^{(r)}-\bwx_{t_r}^{(r)}\|^2 &= \bbE\|\sum_{j=t_r}^{t-1}\eta_j\nabla f_{i_j^{(r)}}\left(\bwx_j^{(r)}\right)\|^2 \notag\\
&\leq (t-t_r)\sum_{j=t_r}^{t-1}\bbE\|\eta_j\nabla f_{i_j^{(r)}}\left(\bwx_j^{(r)}\right)\|^2 \notag\\
&\leq (t-t_r)^2\eta_{t_r}^2G^2 \notag\\
&\leq 4\eta_t^2H^2G^2. \label{eq:async-seq-bound2}
\end{align}
The last inequality \eqref{eq:async-seq-bound2} uses $\eta_{t_r}\leq 2\eta_{t_r+H}\leq2\eta_t$ and $t-t_r\leq H$.
To bound the second term of \eqref{eq:async-seq-bound1}, note that we have
\begin{align}
    \bar{\bar{\mathbf{x}}}_{t_r}^{(r)}=\bar{\bar{\mathbf{x}}}_{t_0'}-\frac{1}{R}\sum_{s=1}^R\sum_{j=t_0'}^{t_r-1}\mathbbm{1}\{j+1\in \mathcal{I}_T^{(s)}\}g_j^{(s)}. \label{eq:async-seq-bound3}
\end{align}
Note that $\bwx_{t_r}^{(r)}=\doublebarbx_{t_r}^{(r)}$, because at synchronization steps, the local parameter vector becomes equal to the global parameter vector. Using this, the Jensen's inequality, and that $\|\mathbbm{1}\{j+1\in\I_T^{(s)}\}g_j^{(s)}\|^2\leq\|g_j^{(s)}\|^2$, we can upper-bound \eqref{eq:async-seq-bound3} as
\begin{align}
\bbE\|\bwx_{t_r}^{(r)}-\doublebarbx_{t_0'}\|^2 &\leq \frac{(t_r-t_0')}{R}\sum_{s=1}^R\sum_{j=t_0'}^{t_r}\bbE\|g_j^{(s)}\|^2 \label{eq:async-seq-bound4}
\end{align}
Now we bound $\bbE\|g_j^{(s)}\|^2$ for any $j\in\{t_0',\hdots,t_r\}$ and $s\in[R]$:
Since $\bbE\|QComp_k(\bu)\|^2\leq B\|\bu\|^2$ holds for every $\bu$, with $B=(4-2\gamma),$\footnote{This can be seen as follows: $\bbE\|QC(\bu)\|^2\leq 2\bbE\|\bu-QC(\bu)\|^2+2\|\bu\|^2\leq 2(1-\gamma)\|\bu\|^2+2\|\bu\|^2$.} we have for any $s\in[R]$ that 
\begin{align}
\bbE\|g_j^{(s)}\|^2 &\leq B\bbE\|m_j^{(s)}+\bx_j^{(s)}-\bwx_{j+\frac{1}{2}}^{(s)}\|^2 \label{eq:async-seq-bound5} \\
&\leq 2B\bbE\|m_j^{(s)}\|^2 + 2B\bbE\|\bx_j^{(s)}-\bwx_{j+\frac{1}{2}}^{(s)}\|^2 \label{eq:async-seq-bound6}
\end{align}
Observe that the proof of \Lemmaref{bounded-memory-decaying-lr} does not depend on the synchrony of the network;
it only uses the fact that $gap(\I_T^{(s)})\leq H$ for any worker $s\in[R]$. Therefore, we can directly use \Lemmaref{bounded-memory-decaying-lr} to bound the first term in \eqref{eq:async-seq-bound2} as $\bbE\|m_j^{(s)}\|^2\leq4C\frac{\eta_j^2}{\gamma^2}H^2G^2$. In order to bound the second term of \eqref{eq:async-seq-bound2}, note that $\bx_j^{(s)}=\bwx_{t_s}^{(s)}$, which implies that $\|\bx_j^{(s)}-\bwx_{j+\frac{1}{2}}^{(s)}\|^2=\|\sum_{l=t_s}^j\eta_l\nabla f_{i_l^{(s)}}\left(\bwx_l^{(s)}\right)\|^2$. Taking expectation 
yields $\bbE\|\bx_j^{(s)}-\bwx_{j+\frac{1}{2}}^{(s)}\|^2\leq 4\eta_{t_s}^2H^2G^2\leq 4\eta_{t_0'}^2H^2G^2$, where in the last inequality we used that $t_0' \leq t_s$. Using these in \eqref{eq:async-seq-bound6} gives
\begin{align}
\bbE\|g_j^{(s)}\|^2 \leq 8B\left(1+\frac{C}{\gamma^2}\right)\eta_{t_0'}^2H^2G^2. \label{eq:async-seq-bound7}
\end{align}
Since $t_0'\leq t\leq t_0'+H$, we have $\eta_{t_0'}\leq 2\eta_{t_0'+H}\leq 2\eta_t$. 
Putting the bound on $\bbE\|g_j^{(s)}\|^2$ (after substituting $\eta_{t_0'}\leq 2\eta_t$ in \eqref{eq:async-seq-bound7}) in \eqref{eq:async-seq-bound4} gives 
\begin{align}
\bbE\|\bwx_{t_r}^{(r)}-\doublebarbx_{t_0'}\|^2\leq 32B\left(1+\frac{C}{\gamma^2}\right)\eta_{t}^2H^4G^2. \label{eq:async-seq-bound8}
\end{align} 
Putting this and the bound from \eqref{eq:async-seq-bound2} back in \eqref{eq:async-seq-bound1} gives
\begin{align*}
\frac{1}{R}\sum_{r=1}^R\bbE\|\bwx_t-\bwx_t^{(r)}\|^2 &\leq 8\eta_t^2H^2G^2 + 64B\left(1+\frac{C}{\gamma^2}\right)\eta_{t}^2H^4G^2 \\
&\leq 8\left[1+8BH^2\left(1+\frac{C}{\gamma^2}\right)\right]\eta_{t}^2H^2G^2.
\end{align*}
This completes the proof of \Lemmaref{sequence-bound-asynchronous}.
\end{proof}
\subsection{Proof of \Lemmaref{sequence-bound-asynchronous-fixed-lr}}\label{app:asy2}
\begin{lemma*}[Restating \Lemmaref{sequence-bound-asynchronous-fixed-lr}]
Let $gap(\mathcal{I}_T^{(r)}) \leq H$ holds for every $r\in[R]$.
By running \Algorithmref{memQSGD-asynchronous} with fixed learning rate $\eta$, we have
\begin{align*}
    \frac{1}{R}\sum_{r=1}^R\mathbb{E}\|\widehat{\bx}_t-\widehat{\bx}_t^{\left(r\right)}\|_2^2\ \leq\ (2+H^2C')\eta^2G^2H^2,
\end{align*}
where $C'=(\frac{16}{\gamma^2}-12)(4-2\gamma)$.
\end{lemma*}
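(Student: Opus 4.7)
The plan is to mirror the decomposition used in the proof of \Lemmaref{sequence-bound-asynchronous}, but to exploit the fixed learning rate in two places: (i) when bounding the local drift of a worker since its last synchronization, and (ii) when bounding the local memory through \Lemmaref{bounded-memory-fixed-lr} rather than \Lemmaref{bounded-memory-decaying-lr}. This is what leads to the cleaner constant $C' = (\tfrac{16}{\gamma^2}-12)(4-2\gamma)$ in place of the $H^4$ prefactor of \Lemmaref{sequence-bound-asynchronous}.

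Fix $t$, and for each $r\in[R]$ let $t_r$ denote the latest synchronization index in $\I_T^{(r)}$ with $t_r\leq t$, and set $t_0' := \min_{r\in[R]}t_r$. As in the decaying-learning-rate proof, I would first use the identity $\frac{1}{R}\sum_r\|\bu_r-\bar\bu\|_2^2\leq\frac{1}{R}\sum_r\|\bu_r-\bv\|_2^2$ with $\bv=\doublebarbx_{t_0'}$ and then split via the triangle inequality into two terms:
\begin{align*}
\frac{1}{R}\sum_{r=1}^R\mathbb{E}\|\bwx_t-\bwx_t^{(r)}\|_2^2 \ \leq\ \frac{2}{R}\sum_{r=1}^R\mathbb{E}\|\bwx_t^{(r)}-\bwx_{t_r}^{(r)}\|_2^2 + \frac{2}{R}\sum_{r=1}^R\mathbb{E}\|\bwx_{t_r}^{(r)}-\doublebarbx_{t_0'}\|_2^2.
\end{align*}
The first term is the intra-epoch drift: because $\bwx_t^{(r)}-\bwx_{t_r}^{(r)}=-\sum_{j=t_r}^{t-1}\eta\,\nabla f_{i_j^{(r)}}(\bwx_j^{(r)})$, Jensen's inequality and Assumption~2 give the clean bound $\eta^2 H^2 G^2$, contributing $2\eta^2 H^2 G^2$ in total.

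The second term is where the asynchrony and compression interact, and is the main step. Using $\bwx_{t_r}^{(r)}=\doublebarbx_{t_r}$ at synchronization and the update rule $\doublebarbx_{t_r}-\doublebarbx_{t_0'}=-\frac{1}{R}\sum_s\sum_{j=t_0'}^{t_r-1}\mathbbm{1}\{j+1\in\I_T^{(s)}\}g_j^{(s)}$, Jensen's inequality yields
\begin{align*}
\mathbb{E}\|\bwx_{t_r}^{(r)}-\doublebarbx_{t_0'}\|_2^2 \ \leq\ \frac{t_r-t_0'}{R}\sum_{s=1}^R\sum_{j=t_0'}^{t_r-1}\mathbb{E}\|g_j^{(s)}\|_2^2.
\end{align*}
I would then bound $\mathbb{E}\|g_j^{(s)}\|_2^2$ by expanding $g_j^{(s)}=QComp_k(m_j^{(s)}+\bx_j^{(s)}-\bwx_{j+\frac{1}{2}}^{(s)})$ and using the standard consequence $\mathbb{E}\|QComp_k(\bu)\|_2^2\leq(4-2\gamma)\|\bu\|_2^2$ together with $\|\mathbf{a}+\mathbf{b}\|_2^2\leq 2\|\mathbf{a}\|_2^2+2\|\mathbf{b}\|_2^2$. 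This gives
\begin{align*}
\mathbb{E}\|g_j^{(s)}\|_2^2 \ \leq\ 2(4-2\gamma)\left(\mathbb{E}\|m_j^{(s)}\|_2^2+\mathbb{E}\|\bx_j^{(s)}-\bwx_{j+\tfrac{1}{2}}^{(s)}\|_2^2\right).
\end{align*}
For the memory term I would invoke \Lemmaref{bounded-memory-fixed-lr} (valid since $gap(\I_T^{(s)})\leq H$ for each worker) to obtain $\mathbb{E}\|m_j^{(s)}\|_2^2\leq \frac{4\eta^2(1-\gamma^2)}{\gamma^2}H^2G^2$, and for the drift term I would again use the fixed-learning-rate drift bound $\mathbb{E}\|\bx_j^{(s)}-\bwx_{j+\frac{1}{2}}^{(s)}\|_2^2\leq \eta^2 H^2G^2$. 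Summing gives $\mathbb{E}\|g_j^{(s)}\|_2^2\leq 2(4-2\gamma)\bigl(\tfrac{4-3\gamma^2}{\gamma^2}\bigr)\eta^2H^2G^2=\tfrac{C'}{2}\eta^2 H^2G^2$ after noticing that $2(4-2\gamma)\cdot\tfrac{4-3\gamma^2}{\gamma^2}=\tfrac{1}{2}\bigl(\tfrac{16}{\gamma^2}-12\bigr)(4-2\gamma)=\tfrac{C'}{2}$.

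Combining these ingredients with the two $H$-factors picked up from the sums over $(t_r-t_0')$ and $j\in[t_0',t_r-1]$ produces a contribution of $C'H^4\eta^2 G^2$ from the second term, and thus
\begin{align*}
\frac{1}{R}\sum_{r=1}^R\mathbb{E}\|\bwx_t-\bwx_t^{(r)}\|_2^2 \ \leq\ 2\eta^2 H^2G^2 + C'H^4\eta^2 G^2 \ =\ (2+C'H^2)\eta^2 H^2G^2,
\end{align*}
as claimed. The main obstacle is the bookkeeping in the $\mathbb{E}\|g_j^{(s)}\|_2^2$ bound, and in particular verifying that the fixed-learning-rate memory lemma indeed collapses the $C/\gamma^2$ factor from \Lemmaref{sequence-bound-asynchronous} into $(1-\gamma^2)/\gamma^2$, which is precisely what produces the explicit closed form of $C'$.
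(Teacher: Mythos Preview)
Your proposal is correct and follows essentially the same route as the paper's proof: the same two-term decomposition via $\doublebarbx_{t_0'}$, the same bound $\mathbb{E}\|g_j^{(s)}\|_2^2\leq 2(4-2\gamma)\bigl(\mathbb{E}\|m_j^{(s)}\|_2^2+\eta^2H^2G^2\bigr)$, and the same use of \Lemmaref{bounded-memory-fixed-lr} to get the closed-form constant. Your algebraic identity $2(4-2\gamma)\cdot\tfrac{4-3\gamma^2}{\gamma^2}=\tfrac{C'}{2}$ matches the paper's $2B(\tfrac{4}{\gamma^2}-3)$ with $B=4-2\gamma$, and the two $H$-factors from $(t_r-t_0')$ and the inner sum combine exactly as you describe.
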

\begin{proof}From \eqref{eq:async-seq-bound5} and \eqref{eq:async-seq-bound6} and using the fact that $\mathbb{E}\|QComp_k(\mathbf{u})\|^2\leq B\|\mathbf{u}\|^2$ for every $\mathbf{u}$, where $B=(4-2\gamma)$, we have the following:
\begin{align}\mathbb{E}\|g_j^{(s)}\|^2&\leq 2B\mathbb{E}\|m_j^{(s)}\|^2+2B\eta^2H^2G^2\notag\\&\leq 8B\frac{(1-\gamma^2)\eta^2}{\gamma^2}H^2G^2+2\eta^2BH^2G^2\notag\\&=2B\left(\frac{4}{\gamma^2}-3\right)\eta^2H^2G^2\label{eq:update_bound_fixedlr}\end{align}
For a fixed learning rate $\eta$, using \eqref{eq:update_bound_fixedlr} and following similar analysis as in \eqref{eq:async-seq-bound2} we can bound  the first term in \eqref{eq:async-seq-bound1} as follows
\begin{align}
    \mathbb{E}\|\widehat{\mathbf{x}}_t^{(r)}-\widehat{\mathbf{x}}_{t_r}^{(r)}\|^2\leq \eta^2H^2G^2\label{eq:async_seq-bound1-fixedlr}
\end{align}
Similarly as in \eqref{eq:async-seq-bound3}-\eqref{eq:async-seq-bound7} we can bound the second term in \eqref{eq:async-seq-bound1} as follows
\begin{align}
    \bbE\|\widehat{\mathbf{x}}_{t_r}^{(r)}-\bar{\bar{{\mathbf{x}}}}_{t_0'}\|^2&\leq 2B\left(\frac{4}{\gamma^2}-3\right)\eta^2H^4G^2\label{eq:async_seq-bound2-fixedlr}
\end{align}
Using \eqref{eq:async_seq-bound1-fixedlr} and \eqref{eq:async_seq-bound2-fixedlr} in \eqref{eq:async-seq-bound1} we can show that 
\begin{align}
    \frac{1}{R}\sum_{r=1}^R\mathbb{E}\|\widehat{\mathbf{x}}_t-\widehat{\mathbf{x}}_t^{\left(r\right)}\|^2&\leq\left[2+4BH^2\left(\frac{4}{\gamma^2}-3\right)\right]\eta^2H^2G^2
\end{align}
This completes the proof of \Lemmaref{sequence-bound-asynchronous-fixed-lr}.
\end{proof}
\subsection{Proof of \Lemmaref{bounded-memory-decaying-lr-asynchronous}}\label{app:asy3}
\begin{lemma*}[Restating \Lemmaref{bounded-memory-decaying-lr-asynchronous}]
Let $gap(\mathcal{I}_T^{(r)}) \leq H$ holds for every $r\in[R]$.
If we run \Algorithmref{memQSGD-asynchronous} with a decaying learning rate $\eta_t$, then we have the following bound on the difference between the true and virtual sequences:
\begin{align*}
\mathbb{E}\|\widehat{\bx}_t-\btx_t\|_2^2\ &\leq\ C'\eta_{t}^2H^4G^2 + 12C\frac{\eta_{t}^2}{\gamma^2}G^2H^2,
\end{align*}
where $C'=192(4-2\gamma)\left(1+\frac{C}{\gamma^2}\right)$ and $C$ is a constant satisfying $C\geq\frac{4a\gamma(1-\gamma^2)}{a\gamma-4H}$.
\end{lemma*}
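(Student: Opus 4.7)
The plan is to start from the three-term decomposition that the paper already sets up in \eqref{async-memory-bound1_1},
\[
\widehat{\bx}_t-\btx_t \;=\; \underbrace{\Bigl[\tfrac{1}{R}\sum_{r=1}^R\widehat{\bx}_{t_r}^{(r)}-\doublebarbx_{t_0'}\Bigr]}_{T_1}\;+\;\underbrace{\bigl[\doublebarbx_{t_0'}-\doublebarbx_t\bigr]}_{T_2}\;+\;\underbrace{\Bigl[\doublebarbx_t-\tfrac{1}{R}\sum_{r=1}^R\btx_{t_r}^{(r)}\Bigr]}_{T_3},
\]
where $t_r\in\I_T^{(r)}$ is the last synchronization step for worker $r$ up to time $t$ and $t_0':=\min_r t_r$. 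Apply Jensen to get $\bbE\|\widehat{\bx}_t-\btx_t\|^2\leq 3\bbE\|T_1\|^2+3\bbE\|T_2\|^2+3\bbE\|T_3\|^2$, and then bound each piece separately.

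For $T_3$, I would use the identity $\doublebarbx_t-\tfrac{1}{R}\sum_r\btx_{t_r}^{(r)}=\tfrac{1}{R}\sum_r m_t^{(r)}$ (which is the content of the referenced equation \eqref{eq:async_memory-equivalence}) and then invoke \Lemmaref{bounded-memory-decaying-lr}, whose proof only uses $gap(\I_T^{(r)})\leq H$ for each worker and hence carries over to the asynchronous setting unchanged. Combined with Jensen's inequality across workers, this yields $\bbE\|T_3\|^2\leq \tfrac{4C\eta_t^2}{\gamma^2}H^2G^2$, giving the $12C\tfrac{\eta_t^2}{\gamma^2}G^2H^2$ contribution in the final bound after the factor of $3$ from Jensen.

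For $T_1$ and $T_2$, the key idea is that both are telescoping sums of master updates $g_j^{(s)}$ over a time window of length at most $H$. Specifically, since $\widehat{\bx}_{t_r}^{(r)}=\doublebarbx_{t_r}$ at synchronization times (by line~11 of \Algorithmref{memQSGD-asynchronous}), each summand in $T_1$ equals $\doublebarbx_{t_r}-\doublebarbx_{t_0'}$, which is the sum of the $g_j^{(s)}$'s broadcast by the master between $t_0'$ and $t_r-1$; similarly $T_2$ is the sum of updates from $t_0'$ through $t-1$. Using Jensen on both the time sum (of length at most $H$) and the worker average (involving at most $RH$ terms in total since each worker synchronizes at most $H$ times in this window) reduces everything to bounding $\bbE\|g_j^{(s)}\|^2$. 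For that I can reuse exactly the estimate \eqref{eq:async-seq-bound7} already derived in the proof of \Lemmaref{sequence-bound-asynchronous}, namely $\bbE\|g_j^{(s)}\|^2\leq 8B(1+C/\gamma^2)\eta_{t_0'}^2 H^2G^2$ with $B=4-2\gamma$, then use $\eta_{t_0'}\leq 2\eta_t$ to convert to $\eta_t^2$.

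The main obstacle will be the careful bookkeeping for $T_1$, because the window $[t_0',t_r]$ has different length for different workers and I have to correctly count how many $g_j^{(s)}$ terms contribute to the double sum over $s$ and $j$. Once the counts are in place, assembling $3\bbE\|T_1\|^2+3\bbE\|T_2\|^2\leq C'\eta_t^2H^4G^2$ with $C'=192(4-2\gamma)(1+C/\gamma^2)$ follows by collecting the factors $3$ (Jensen), $H$ (time Jensen), $R$ or $H$ (worker aggregation), $8B(1+C/\gamma^2)$ (the $g$-bound), and the $4$ from $\eta_{t_0'}^2\leq 4\eta_t^2$. Adding the $T_3$ contribution yields the claimed inequality.
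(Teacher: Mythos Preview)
Your proposal is correct and follows essentially the same approach as the paper: the same three-term decomposition with the factor $3$ from Jensen, the same memory-identity argument for $T_3$ combined with \Lemmaref{bounded-memory-decaying-lr}, and the same reduction of $T_1,T_2$ to the bound \eqref{eq:async-seq-bound7} on $\bbE\|g_j^{(s)}\|^2$ already established in the proof of \Lemmaref{sequence-bound-asynchronous}. The paper in fact reuses \eqref{eq:async-seq-bound8} directly for $T_1$, and for $T_2$ writes $\doublebarbx_t-\doublebarbx_{t_0'}$ as the same indicator-sum of $g_j^{(s)}$'s and obtains the identical bound $32(4-2\gamma)(1+C/\gamma^2)\eta_t^2H^4G^2$; combining $3\times 32+3\times 32=192$ and $3\times 4C/\gamma^2=12C/\gamma^2$ gives exactly the claimed constants.
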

\begin{proof}
Fix a time $t$ and consider any worker $r\in[R]$. Let $t_r\in\I_T^{(r)}$ denote the last synchronization step until time $t$ for the $r$'th worker. Define $t_0':=\min_{r\in [R]}t_r$. We want to bound $\bbE\|\bwx_t-\btx_t\|^2$. 
Note that in the synchronous case, we have shown in \Lemmaref{memory-maintenance} that $\bwx_t-\bwx_t=\frac{1}{R}\sum_{r=1}^Rm_t^{(r)}$. This does not hold in the asynchronous setting, which makes upper-bounding $\bbE\|\bwx_t-\btx_t\|^2$ a bit more involved.
By definition $\widehat{\mathbf{x}}_t-\widetilde{\mathbf{x}}_t=\frac{1}{R}\sum_{r=1}^R\left(\widehat{\mathbf{x}}_t^{(r)}-\widetilde{\mathbf{x}}_t^{(r)}\right)$. By the definition of virtual sequences and the update rule for $\bwx_t^{(r)}$, 
we also have $\widehat{\bx}_t-\btx_t =\frac{1}{R}\sum_{r=1}^R\left(\widehat{\bx}_{t_r}^{(r)}-\btx_{t_r}^{(r)}\right)$.
This can be written as
\begin{align}
\widehat{\bx}_t-\btx_t &=\left[\frac{1}{R}\sum_{r=1}^R\widehat{\bx}_{t_r}^{(r)}-\bar{\bar{\bx}}_{t_0'}\right]+\left[\bar{\bar{\bx}}_{t_0'}-\bar{\bar{\bx}}_t\right]+\left[\bar{\bar{\bx}}_t-\frac{1}{R}\sum_{r=1}^R\btx_{t_r}^{(r)}\right] \label{async-memory-bound1}
\end{align}
Applying Jensen's inequality and taking expectation gives
\begin{align}
\bbE\|\widehat{\bx}_t-\btx_t\|^2 &\leq \left[\frac{3}{R}\sum_{r=1}^R\bbE\|\widehat{\bx}_{t_r}^{(r)}-\bar{\bar{\bx}}_{t_0'}\|^2\right] + \left[3\bbE\|\bar{\bar{\bx}}_{t_0'}-\bar{\bar{\bx}}_t\|^2\right] + \left[3\bbE\|\bar{\bar{\bx}}_t-\frac{1}{R}\sum_{r=1}^R\btx_{t_r}^{(r)}\|^2\right] \label{eq:async-memory-bound2}
\end{align}
We bound each of the three terms of \eqref{eq:async-memory-bound2} separately.
We have upper-bounded the first term earlier in \eqref{eq:async-seq-bound8}, which is
\begin{align}
\bbE\|\bwx_{t_r}^{(r)}-\doublebarbx_{t_0'}\|^2\leq 32B\left(1+\frac{C}{\gamma^2}\right)\eta_{t}^2H^4G^2, \label{eq:async-memory-bound21}
\end{align}
where $B=(4-2\gamma)$. 
To bound the second term of \eqref{eq:async-memory-bound2}, note that
\begin{align}
    \bar{\bar{\bx}}_t &=\bar{\bar{\bx}}_{0}-\frac{1}{R}\sum_{r=1}^R\sum_{j=0}^{t_r-1}\mathbbm{1}\{j+1\in \mathcal{I}_T^{(r)}\}g_j^{(r)} \label{eq:async-memory-bound22}\\
    &=\bar{\bar{\bx}}_{t_0'}-\frac{1}{R}\sum_{r=1}^R\sum_{j=t_0'}^{t_r-1}\mathbbm{1}\{j+1\in \mathcal{I}_T^{(r)}\}g_j^{(r)} \label{eq:async-memory-bound3}
\end{align}
By applying Jensen's inequality, using $\|\mathbbm{1}\{j+1\in\I_T^{(r)}\}g_j^{(r)}\|^2\leq\|g_j^{(r)}\|^2$, and taking expectation, we can upper-bound \eqref{eq:async-memory-bound3} as
\begin{align*}
\bbE\|\doublebarbx_{t_0'}-\doublebarbx_t\|^2 &\leq  \frac{(t_r-t_0')}{R}\sum_{r=1}^R\sum_{j=t_0'}^{t_r}\bbE\|g_j^{(r)}\|^2
\end{align*}
Using the bound on $\bbE\|g_j^{(r)}\|^2$'s from \eqref{eq:async-seq-bound8} gives
\begin{align}
\bbE\|\doublebarbx_{t_0'}-\doublebarbx_t\|^2 &\leq 32B\left(1+\frac{C}{\gamma^2}\right)\eta_{t}^2H^4G^2. \label{eq:async-memory-bound35}
\end{align}
To bound the last term of \eqref{eq:async-memory-bound2}, note that 
\begin{align}
\btx_{t_r}^{(r)}=\bar{\bar{\bx}}_0-\sum_{j=0}^{t_r-1}\eta_j\nabla f_{i_j^{(r)}}\left(\widehat{\mathbf{x}}_j^{\left(r\right)}\right) \label{eq:async-memory-bound4}
\end{align}
From \eqref{eq:async-memory-bound22} and \eqref{eq:async-memory-bound4}, we can write
\begin{align}\label{eq:async-memory-bound5}
    \bar{\bar{\bx}}_t-\frac{1}{R}\sum_{r=1}^R\btx_{t_r}^{(r)} 
    &=\frac{1}{R}\sum_{r=1}^R\left[\sum_{j=0}^{t_r-1}\eta_j\nabla^{\left(r\right)}f_{\left(i_j\right)}\left(\widehat{\mathbf{x}}_j^{\left(r\right)}\right)-\sum_{j=0}^{t_r-1}\mathbbm{1}\{j+1\in \mathcal{I}_T^{(r)}\}g_j^{(r)}\right]
    \end{align}
Let $t_r^{(1)}$ and $t_r^{(2)}$ be two consecutive synchronization steps in $\I_T^{(r)}$. Then, by the update rule of $\bwx_t^{(r)}$, we have $\bwx_{t_r^{(1)}}^{(r)}-\bwx_{t_r^{(2)}-\frac{1}{2}}^{(r)}=\sum_{j=t_r^{(1)}}^{t_r^{(2)}-1}\nabla f_{i_j^{(r)}}\left(\bwx_j^{(r)}\right)$. Since $\bx_{t_r^{(1)}}^{(r)}=\bwx_{t_r^{(1)}}^{(r)}$ and the workers do not modify their local $\bx_t^{(r)}$'s in between the synchronization steps, we have $\bx_{t_r^{(2)}-1}^{(r)}=\bx_{t_r^{(1)}}^{(r)}=\bwx_{t_r^{(1)}}^{(r)}$.
Therefore, we can write
\begin{align}\label{eq:async-memory-bound66}
\bx_{t_r^{(2)}-1}^{(r)}-\bwx_{t_r^{(2)}-\frac{1}{2}}^{(r)}=\sum_{j=t_r^{(1)}}^{t_r^{(2)}-1}\nabla f_{i_j^{(r)}}\left(\bwx_j^{(r)}\right).
\end{align}
   Using \eqref{eq:async-memory-bound66} for every consecutive synchronization steps, we can equivalently write \eqref{eq:async-memory-bound5} as
    \begin{align}
    \bar{\bar{\bx}}_t-\frac{1}{R}\sum_{r=1}^R\btx_{t_r}^{(r)}&=\frac{1}{R}\sum_{r=1}^R\left[\sum_{\substack{j:j+1\in\mathcal{I}_T^{(r)}\\j\leq t_r-1}}\left(\bx_j^{(r)}-\widehat{\bx}_{j+\frac{1}{2}}^{(r)}-g_j^{(r)}\right)\right]\notag\\
    &=\frac{1}{R}\sum_{r=1}^Rm_{t_r}^{(r)} \notag\\
    &=\frac{1}{R}\sum_{r=1}^R m_t^{(r)}\label{eq:async_memory-equivalence}
\end{align}
In the last inequality, we used the fact that the workers do not update their local memory in between the synchronization steps.
For the reasons given in the proof of \Lemmaref{sequence-bound-asynchronous}, we can directly apply \Lemmaref{bounded-memory-decaying-lr} to bound the local memories and obtain $\bbE\|\frac{1}{R}\sum_{r=1}^R m_t^{\left(r\right)}\|^2\leq \frac{1}{R}\sum_{r=1}^R\bbE\|m_t^{(r)}\|^2\leq 4C\frac{\eta_{t}^2}{\gamma^2}G^2H^2$. This implies
\begin{align}\label{eq:async-memory-bound6}
\bbE\| \bar{\bar{\bx}}_t-\frac{1}{R}\sum_{r=1}^R\btx_{t_r}^{(r)}\|^2 \leq 4C\frac{\eta_{t}^2}{\gamma^2}G^2H^2.
\end{align}
Putting the bounds from \eqref{eq:async-memory-bound21}, \eqref{eq:async-memory-bound35}, and \eqref{eq:async-memory-bound6} in \eqref{eq:async-memory-bound2} and using $B=(4-2\gamma)$ give
\begin{align*}
\bbE\|\widehat{\bx}_t-\btx_t\|^2 &\leq 192(4-2\gamma)\left(1+\frac{C}{\gamma^2}\right)\eta_{t}^2H^4G^2 + 12C\frac{\eta_{t}^2}{\gamma^2}G^2H^2.
\end{align*}
This completes the proof of \Lemmaref{bounded-memory-decaying-lr-asynchronous}.
\end{proof}

\subsection{Proof of \Lemmaref{bounded-memory-fixed-lr-asynchronous}}\label{app:asy4}
\begin{lemma*}[Restating \Lemmaref{bounded-memory-fixed-lr-asynchronous}]
Let $gap(\mathcal{I}_T^{(r)}) \leq H$ holds for every $r\in[R]$.
If we run \Algorithmref{memQSGD-asynchronous} with a fixed learning rate $\eta$, we have
\begin{align*}
\mathbb{E}    \|\widehat{\bx}_t-\btx_t\|_2^2\ &\leq\ 6C'\eta^2H^4G^2+\frac{12\eta^2(1-\gamma^2)}{\gamma^2}G^2H^2,
\end{align*}
where $C'=(4-2\gamma)\left(\frac{8}{\gamma^2}-6\right)$.
\end{lemma*}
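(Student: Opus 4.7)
The plan is to mirror the proof of \Lemmaref{bounded-memory-decaying-lr-asynchronous} line-by-line, swapping each decaying-learning-rate ingredient for its fixed-learning-rate counterpart. First I would reuse the three-way decomposition and Jensen's inequality to obtain
\begin{align*}
\bbE\|\widehat{\bx}_t-\btx_t\|_2^2\ &\leq\ \frac{3}{R}\sum_{r=1}^R\bbE\|\widehat{\bx}_{t_r}^{(r)}-\doublebarbx_{t_0'}\|_2^2\\
&\quad+ 3\,\bbE\|\doublebarbx_{t_0'}-\doublebarbx_t\|_2^2 + 3\,\bbE\Big\|\doublebarbx_t-\tfrac{1}{R}\sum_{r=1}^R\btx_{t_r}^{(r)}\Big\|_2^2,
\end{align*}
where $t_r\in\mathcal{I}_T^{(r)}$ is the last synchronization of worker $r$ up to time $t$ and $t_0':=\min_{r\in[R]}t_r$, both satisfying $t-t_r,\,t-t_0'\leq H$ by the gap assumption.

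The only step that needs the fixed-learning-rate replacement is the upper bound on $\bbE\|g_j^{(s)}\|_2^2$. Using $\bbE\|QComp_k(\mathbf{u})\|_2^2\leq B\|\mathbf{u}\|_2^2$ with $B=4-2\gamma$ together with \Lemmaref{bounded-memory-fixed-lr} (rather than \Lemmaref{bounded-memory-decaying-lr}) exactly as done in \eqref{eq:update_bound_fixedlr} inside the proof of \Lemmaref{sequence-bound-asynchronous-fixed-lr}, I get
\[
\bbE\|g_j^{(s)}\|_2^2\ \leq\ 2B\bbE\|m_j^{(s)}\|_2^2+2B\eta^2H^2G^2\ \leq\ 2B\left(\tfrac{4}{\gamma^2}-3\right)\eta^2H^2G^2\ =\ C'\eta^2H^2G^2.
\]
Plugging this into the derivation of \eqref{eq:async-memory-bound21} and \eqref{eq:async-memory-bound35}, together with the observation that $t_r-t_0'\leq H$ and the inner sum has at most $H$ terms, yields
\[
\bbE\|\widehat{\bx}_{t_r}^{(r)}-\doublebarbx_{t_0'}\|_2^2\ \leq\ C'\eta^2H^4G^2, \quad\quad \bbE\|\doublebarbx_{t_0'}-\doublebarbx_t\|_2^2\ \leq\ C'\eta^2H^4G^2.
\]

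For the third bracket I reuse the identity~\eqref{eq:async_memory-equivalence}, which is purely structural and holds independently of the learning-rate schedule, giving $\doublebarbx_t-\tfrac{1}{R}\sum_{r=1}^R\btx_{t_r}^{(r)}=\tfrac{1}{R}\sum_{r=1}^R m_t^{(r)}$. Jensen's inequality followed by \Lemmaref{bounded-memory-fixed-lr} then yields
\[
\bbE\Big\|\doublebarbx_t-\tfrac{1}{R}\sum_{r=1}^R\btx_{t_r}^{(r)}\Big\|_2^2\ \leq\ \tfrac{1}{R}\sum_{r=1}^R\bbE\|m_t^{(r)}\|_2^2\ \leq\ \tfrac{4\eta^2(1-\gamma^2)}{\gamma^2}H^2G^2.
\]
Summing the three bounded pieces, each multiplied by the factor $3$ from the initial Jensen step, produces $6C'\eta^2H^4G^2+\tfrac{12\eta^2(1-\gamma^2)}{\gamma^2}G^2H^2$, which is exactly the claimed bound. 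No genuine obstacle is expected here: the proof is essentially bookkeeping once the right substitutions are in place. The main care point is simply tracking constants through the Jensen applications and the $\|g_j^{(s)}\|_2^2$ bound so that the factor $2B(\tfrac{4}{\gamma^2}-3)$ materializes as the stated $C'$.
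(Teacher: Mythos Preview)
Your proposal is correct and follows essentially the same approach as the paper: the same three-term decomposition from \eqref{eq:async-memory-bound2}, the fixed-learning-rate bound \eqref{eq:update_bound_fixedlr} on $\bbE\|g_j^{(s)}\|_2^2$, and \Lemmaref{bounded-memory-fixed-lr} together with \eqref{eq:async_memory-equivalence} for the memory term. The constants you track match exactly, since $C'=(4-2\gamma)(\tfrac{8}{\gamma^2}-6)=2B(\tfrac{4}{\gamma^2}-3)$.
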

\begin{proof}
For a constant learning rate the first term in \eqref{eq:async-memory-bound2} has been bounded earlier in \eqref{eq:async_seq-bound2-fixedlr}. Following similar steps as in \eqref{eq:async-memory-bound3} we would have 
\begin{align}
    \bbE\|\bar{\bar{\bx}}_{t_0'}-\bar{\bar{\bx}}_t\|^2&\leq 2B\left(\frac{4}{\gamma^2}-3\right)\eta^2H^4G^2. \label{eq:async_bound_rand}
\end{align}
Finally, using \eqref{eq:async_seq-bound2-fixedlr},\eqref{eq:async_memory-equivalence}, \Lemmaref{bounded-memory-fixed-lr} and \eqref{eq:async_bound_rand} in \eqref{eq:async-memory-bound2} we have 
\begin{align}
    \bbE\|\widehat{\bx}_t-\btx_t\|^2&\leq 12B\left(\frac{4}{\gamma^2}-3\right)\eta^2H^4G^2+\frac{12\eta^2(1-\gamma^2)}{\gamma^2}G^2H^2,
\end{align}
where $B=(4-2\gamma)$. This completes the proof of \Lemmaref{bounded-memory-fixed-lr-asynchronous}.
\end{proof}

\section{Omitted Details from \Sectionref{expmt}}
\label{app:supp_expmts}
As mentioned in \Footnoteref{scaled-vs-unscaled}, here we compare the performance of Qsparse-local-SGD with scaled and unscaled composed operator $QTop_k$ in the non-convex setting.
We will see that even though the scaled $QTop_k$ from \Lemmaref{composed-quantizer} works better than unscaled $QTop_k$ from \Lemmaref{composed-compression} theoretically (see \Remarkref{scaled-vs-unscaled}), our experiments show the opposite phenomena, that the unscaled $QTop_k$ works at least as good as the scaled $QTop_k$, and strictly better in some cases. 
We can attribute this to the fact that scaling the composed operator is a sufficient condition to obtain better convergence results, which does not 
necessarily mean that in practice also it does better.
Therefore, we perform our experiments in the non-convex setting \Sectionref{expmt} with unscaled $QTop_k$.

\begin{figure*}[ht!]
\centering
\begin{subfigure}{0.49\textwidth}
\centering
\includegraphics[scale=0.55]{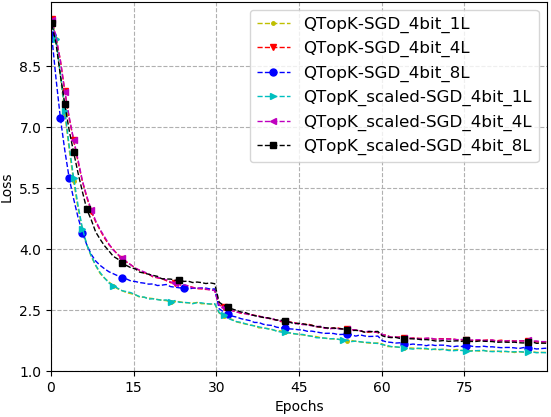}
\captionsetup{justification=centering}
\caption{Training loss vs epochs }
\label{fig:mid_lo-iter_a_q}
\centering
\end{subfigure}
\begin{subfigure}{0.49\textwidth}
\centering
\includegraphics[scale=0.55]{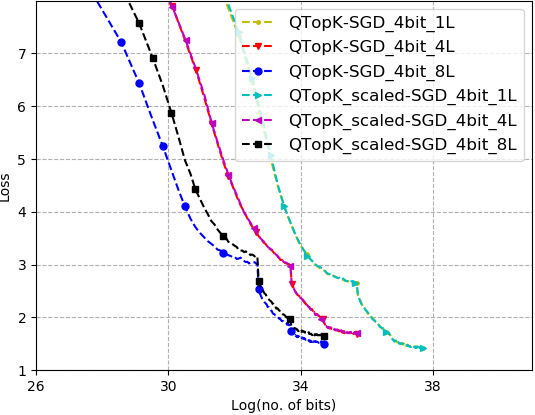}
\captionsetup{justification=centering}
\caption{Training loss vs $\textrm{log}_2$ of communication budget }
\label{fig:mid_lo-bit_a_q}
\centering
\end{subfigure}\\ \vspace{0.5cm}
\begin{subfigure}{0.49\textwidth}
\centering
\includegraphics[scale=0.55]{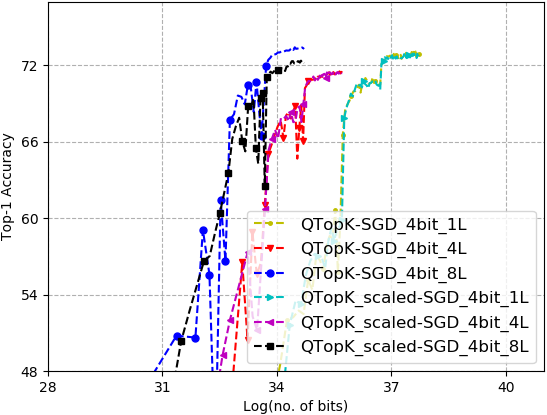}
\captionsetup{justification=centering}
\caption{top-1 accuracy \cite{lapin} for schemes in \Figureref{mid_lo-iter_a_q} }
\label{fig:te_1_a_q}
\centering
\end{subfigure}
\begin{subfigure}{0.49\textwidth}
\centering
\includegraphics[scale=0.55]{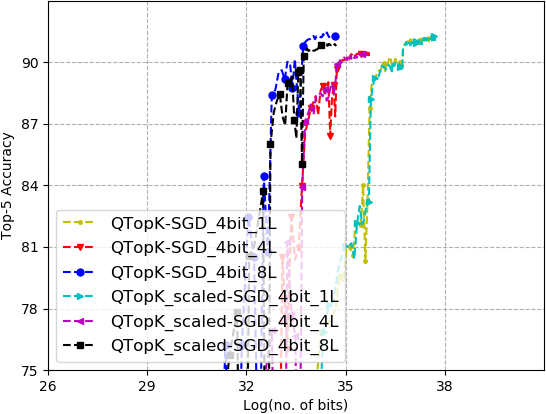}
\captionsetup{justification=centering}
\caption{top-5 accuracy \cite{lapin} for schemes in \Figureref{mid_lo-iter_a_q} }
\label{fig:te_5_a_q}
\centering
\end{subfigure}
\caption{\Figureref{mid_lo-iter_a_q}-\ref{fig:te_5_a_q} demonstrate the comparable performance of \emph{Qsparse-local-SGD} in the non-convex setting with scaled and unscaled $QTop_k$ operators from \Lemmaref{composed-compression} and \Lemmaref{composed-quantizer}, respectively.}
\label{fig:fig1_q}
\end{figure*}

We give plots for the above-mentioned comparison in \Figureref{fig1_q}. 
From \cite{QSGD}, we know that for quantized SGD, without any form of error compensation, the dominating term in the convergence rate is affected by the variance blow-up induced due to stochastic quantization; however, with error compensation, we recover rates matching vanilla SGD despite compression and infrequent communication \Sectionref{main_results_sync}. In \Figureref{fig1_q}, 
$QTop_k$ refers to QSGD composed with the $Top_k$ operator as in \Lemmaref{composed-compression}, and when used with the subscript \emph{scaled}, we introduce a scaling factor of $(1+\beta_{k,s})$ as in \Lemmaref{composed-quantizer}. 
Let $L$ denote the number of local iterations in between two synchronization indices. 
Observe that to achieve a certain target loss or accuracy, both the composed operators perform almost equally in terms of the number of bits transmitted when $L=0,4$, but unscaled operator performs better when $L=8$. 
Therefore, we restrict our use of composed operator in the non-convex setting to the unscaled $QTop_k$ from \Lemmaref{composed-compression}.

\bibliographystyle{alpha}
\bibliography{reference}
\end{document}